\documentclass[onefignum,onetabnum, preprint]{siamonline220329}
\usepackage{braket,amsfonts}
\usepackage{xr}
%% Used in table example below
\usepackage{array}
%% Used in table and figure examples below
% \usepackage[caption=false]{subfig}
%% Used for papers with subtables created with the subfig package
%\captionsetup[subtable]{position=bottom}
%\captionsetup[table]{position=bottom}

%% Used for PgfPlots example, shown in the "Figures" section below.
\usepackage{pgfplots}
\usepackage{bm}
\usepackage{amssymb}
% \usepackage{mathtools}
% \mathtoolsset{showonlyrefs}
%% Used for creating new theorem and remark environments
\newsiamthm{example}{Example}
\newsiamthm{claim}{Claim}
\newsiamremark{remark}{Remark}
\newsiamremark{hypothesis}{Hypothesis}
\crefname{hypothesis}{Hypothesis}{Hypotheses}

% \newsiamthm{theorem}{Theorem}
% \newsiamthm{corollary}{Corollary}
% \newsiamthm{lemma}{Lemma}
% \newsiamthm{proposition}{Proposition}
% \newsiamthm{example}{Example}
% \newsiamthm{definition}{Definition}
\newsiamthm{assumption}{Assumption}

%% Algorithm style, could alternatively use algpseudocode
\usepackage{algorithm}
\usepackage[noend]{algpseudocode}

%% For figures
\usepackage{epstopdf}
%\mathtoolsset{showonlyrefs}

%% For referencing line numbers
\Crefname{ALC@unique}{Line}{Lines}

%% For creating math operators
\usepackage{amsopn}

%% ------------------------------------------------------------------
%% Macros for in-document examples. These are not meant to reused for
%% SIAM journal papers.
%% ------------------------------------------------------------------
\usepackage{xspace}
\usepackage{bold-extra}
\usepackage[most]{tcolorbox}

\colorlet{texcscolor}{blue!50!black}
\colorlet{texemcolor}{red!70!black}
\colorlet{texpreamble}{red!70!black}
\colorlet{codebackground}{black!25!white!25}
\usepackage{hhline}
% % \tikzstyle{every node}=[font=\LARGE]
\usetikzlibrary{chains}
\usetikzlibrary{spy}
\usepackage{caption}
\usepackage{multirow}

% \tikzstyle{label}=[color=white,
% fill=darkgray,opacity=.8,
% rectangle,rounded corners,
% draw,minimum height=0.35cm]
% \usepackage{verbatim}
% % \usepackage{algorithm}
% \usepackage{tikz}
% \renewcommand{\thefigure}{\arabic{figure}} 
% \usepackage{graphicx}
% \usepackage{hyperref}
% \usepackage[toc,page]{appendix} 
% \usepackage{enumitem} 
% \usepackage{soul}
% %\usepackage{pythonhighlight}
% %\usepackage{minted}
\usepackage{mathtools}
% % \usepackage{subcaption}
\newcommand{\E}{{\mathbb E}}
\newcommand{\PZ}{{\mathbb P_Z}}
\newcommand{\Pdata}{{\mathbb P_{\mathrm{data}}}}
% \usepackage{color}
% \mathtoolsset{showonlyrefs}
\newcommand{\R}{\mathbb{R}}

\graphicspath{{images/}{./}}

\title{Multilevel Diffusion: Infinite Dimensional Score-Based Diffusion Models for Image Generation \thanks{\funding{NY and LR's work was supported in part by NSF awards DMS 1751636, DMS 2038118, AFOSR grant FA9550-20-1-0372, and US DOE Office of Advanced Scientific Computing Research Field Work Proposal 20-023231. PH and GS gratefully acknowledge funding within  the DFG-SPP 2298 ,,Theoretical Foundations of Deep Learning''. }}}

\author{Paul Hagemann \thanks{TU Berlin (E-mail: {\{hagemann, steidl\}}@math.tu-berlin.de, mildenberger@tu-berlin.de)} \and Sophie Mildenberger\footnotemark[2] \and Lars Ruthotto \thanks{Department of Mathematics, Emory University, Atlanta, GA (E-mail: {\{lruthotto, tyang31\}@emory.edu})}  \and Gabriele Steidl \footnotemark[2] \and \hspace{1cm} Nicole Tianjiao Yang \footnotemark[3]}  
\usepackage{tikz-cd}
\usepackage{adjustbox}
\usepackage{subfigure}

\externaldocument{ex_supplement}

\begin{document}
\maketitle

\begin{tcbverbatimwrite}{tmp_\jobname_abstract.tex}
\begin{abstract}
Score-based diffusion models (SBDM) have recently emerged as state-of-the-art approaches for image generation. 
Existing SBDMs are typically formulated in a finite-dimensional setting, where images are considered as tensors of finite size.
This paper develops SBDMs in the infinite-dimensional setting, that is, we model the training data as functions supported on a rectangular domain. 
In addition to the quest for generating images at ever-higher resolutions, our primary motivation is to create a well-posed infinite-dimensional learning problem that we can discretize consistently on multiple resolution levels.
We thereby intend to obtain diffusion models that generalize across different resolution levels and improve the efficiency of the training process.
We demonstrate how to overcome two shortcomings of current SBDM approaches in the infinite-dimensional setting. 
First, we modify the forward process using trace class operators to ensure that the latent distribution is well-defined in the infinite-dimensional setting and derive the reverse processes for finite-dimensional approximations.
Second, we illustrate that approximating the score function with an operator network is beneficial for multilevel training. After deriving the convergence of the discretization and the approximation of multilevel training, we demonstrate some practical benefits of our infinite-dimensional SBDM approach on a synthetic Gaussian mixture example, the MNIST dataset, and a dataset generated from a nonlinear 2D reaction-diffusion equation.
% After deriving the forward process in the infinite-dimensional setting and reverse processes for finite approximations, we show some controls on the data distribution and show that for well-behaved score functions, multilevel gives good warm-starts. We provide the design of prior distributions and the first promising results on MNIST and Fashion-MNIST, underlining our developed theory.
\end{abstract}

\begin{keywords}
Generative modeling, score-based diffusion models, infinite-dimensional SDEs, neural operator 
\end{keywords}

\begin{MSCcodes}
60H10, 65D18
\end{MSCcodes}
\end{tcbverbatimwrite}
\input{tmp_\jobname_abstract.tex}

%------------------------------------------
\section{Introduction}

Score-based diffusion models (SBDMs), first introduced in \cite{pmlr-v37-sohl-dickstein15}, have achieved impressive results in high dimensional image generation \cite{DDPM, songestimating}. 
The papers \cite{huangVarScore, SongEtAl2020}  provided the underlying theoretical foundations by leveraging the probabilistic foundations of diffusion models. 
In \cite{song2021maximum}, it was shown that for each diffusion model, there is a continuous normalizing flow that produces the same marginals, enabling exact likelihood matching.

Most, if not all, existing SBDMs for image generation interpret the training data as tensors that consist of a finite number of pixels or voxels and a few channels.  
This finite-dimensional viewpoint and the resulting learning schemes have recently emerged as state-of-the-art approaches for image generation and produced impressive results; see, e.g., \cite{rombach,SongEtAl2020}.

In this paper, we model the training data as functions supported on a rectangular domain and extend SBDMs to function spaces, in our case, a separable Hilbert space. There is keen interest in generating high-resolution images; hence, an ill-posed infinite-dimensional formulation could limit progress in this area.
Further, similar abstractions from discrete to continuous have, for example, led to new neural network architectures motivated by ODEs and PDEs~\cite{E:2017kz, HaberRuthotto2017, chen2018neural, ruthotto2020deep} and created the field of PDE-based image processing, which has generated new insights, models, and algorithms in the last decades~\cite{chan2003variational}. 
Third, a well-posed infinite-dimensional problem and a suitable discretization can have practical advantages. For example, it allows one to approximate the solutions of the infinite-dimensional problem by using a coarse-to-fine hierarchy of resolution levels, thereby lowering computational costs.

As shown below, the extension and interpretation of SBDMs in the infinite limit is nontrivial, and we tackle several theoretical challenges.

 \paragraph{Contributions and main results} 
Existing SBDMs use a standard Gaussian as a latent distribution, which is not well defined in the infinite-dimensional setting.
To overcome this issue, we choose a Gaussian with a trace class covariance operator as the latent distribution and modify the forward and reverse process accordingly. We show that the finite-dimensional forward stochastic differential equation possesses a well-defined limit as the solution of a Hilbert space-valued stochastic differential equation. The reverse equation is trickier to handle in the presence of the infinite-dimensional score. We circumvent this technical issue by considering the limit of the finite-dimensional discrete stochastic differential equations and the reverse SDE theory \cite{FOLLMER198659}. With this approach, we retain weak convergence guarantees. 
% For a discussion which Hilbert space valued stochastic differential equation the reverse limit solves, we refer to \cite{millet1989time, jakiw_diffusion}.

We give an approximation result for replacing the reverse score with a learned neural network and show that under Lipschitz conditions, the path measures of the forward and reverse SDE are close; see \cref{thm:main}. To motivate multilevel training, we also prove in \cref{thm:multilevel} that training on coarse scores yields good approximations on finer levels.
% Note  that by Example \ref{example_gaussians}, the assumptions on uniform Lipschitz score across resolutions are satisfied for a certain class of Gaussian data distributions.

% We conduct extensive experiments on multiple datasets, and our experiments suggest the need to change the latent distribution and the score network approximation in accordance with the theoretical requirements. 
Following our theoretical guideline on constructing well-posed infinite-dimensional SBDM, we first show the limitations of standard U-Nets, which can be sensitive to image resolution as they interpret their input images as finite-dimensional tensors and not as functions. Therefore, we use operator-valued NNs to approximate the score functions. We experimentally compare different trace class priors (FNO, Bessel, and a combined prior of FNO and Laplacian) to the standard Gaussian.
% To avoid the invertibility of FNO and Laplacian prior, we propose to use two different regularizations of the Laplacian prior, i.e., Bessel prior and combined prior.
Our experiments show that operator-based architectures and trace-class priors yield SBDMs that generate similar quality images at the training resolution and improve generalization across resolutions; see \cref{sec:numerics}. 

\paragraph{Related work}

Infinite-dimensional extensions of generative adversarial networks (GANs) have recently shown promise for generating function-valued data such as images. 
For example, GANOs \cite{GANOs} replace the latent distribution by Gaussian random fields and introduce a continuous loss. In \cite{dupontetal} the infinite-dimensional representation of images is combined with an adversarial loss. Furthermore, \cite{karras2018progressive} uses a layer-adding strategy to enable multilevel training. The recent paper \cite{sinDiff} trains a diffusion model on a single image view by a hierarchical ansatz, which generates first coarse and then finer images based on a single image. This is another way to achieve coarse-to-fine image generation. 

The ideas in our paper most closely resemble the themes of the concurrent works \cite{func_diff,kerrigan,nvidia_func_diff,jakiw_diffusion}. While \cite{jakiw_diffusion} considers a Langevin SDE, we take a time schedule into account. Furthermore, our proof of the Wasserstein distance on the path measures uses some ideas in \cite{jakiw_diffusion}. While their work is focused on defining the infinite-dimensional score object, we provide in-depth proofs of approximations on different levels and operate in finite dimensions to ensure the Lipschitz assumptions. 

An infinite-dimensional extension of the DDPM framework~\cite{DDPM} and promising results for generating time series data is presented in \cite{kerrigan}. In contrast to this work, we prefer the SBDM as it also leads to a continuous process in time.
 The infinite-dimensional SBDM frameworks in \cite{func_diff,nvidia_func_diff,jakiw_diffusion} and our approach model the generation problem in Hilbert spaces, employ trace class noise and contain similar theoretical results with some subtle differences.
 Similar to~\cite{nvidia_func_diff} we see benefits of approximating the score function with a Fourier neural operator network.
 Compared to~\cite{nvidia_func_diff}, which solves the training problem in Hilbert space, we construct sequences of finite-dimensional approximations by solving suitable training problems in Euclidean space and show their convergence. The recent paper~\cite{func_diff} considers a different kind of score model numerically, and they employ the notion of reverse solution of \cite{millet1989time}. They show a sampling theorem, which discusses the condition that one can reconstruct an infinite-dimensional function from a countable set of points. Another recent work \cite{bond2023diff} uses the truncated Gaussian kernel and a U-Net-inspired multiscale architecture. It demonstrates good performance on super-resolution and inpainting problems.
 
 What sets our work apart from the above papers is our focus on classical image generation problems, albeit with relatively simple datasets, and our emphasis on multilevel training. We provide the theoretical guarantee of the convergence of the discretization of the infinite-dimensional diffusion model and demonstrate the viability of multilevel training on the path of the forward SDE. Moreover, we believe the experiment on different choices of prior distributions sheds some light on future improvements in infinite-dimensional diffusion models.
 
A multiscale diffusion approach based on Haar-Wavelets and non-uniform diffusion models is presented in \cite{Multiscale_Wavelet_Diff}.
This method uses different score models for each diffusion scale and anneals each pixel at a different speed. 
The numerical results suggest some advantages of exploiting different scales in terms of training time.
While our multilevel approach considers a coarse-to-fine hierarchy of convergent discretizations and, importantly, we use a trace class operator to model the latent distributions and achieve a well-posed infinite-dimensional problem; see also discussion in \cite{NonIsotrDiff}.
A coarse-to-fine diffusion approach was also proposed in \cite{rissanen,lee} with different choices of the forward operator. A heat dissipation forward operator was presented in~\cite{rissanen}, whereas \cite{lee} suggested a blurring operator that diffuses different frequencies at different speeds. The idea of training at different resolutions without infinite-dimensional motivation has also been employed in \cite{hoogeboom_multiscale}. A recent wave of infinite-dimensional generative models has also been applied to inverse problems, namely \cite{alberti2023continuous, baldassari2023conditional}.

The paper is organized as follows.
In \cref{sec:score_finite}, we consider the finite-dimensional Euclidean setting and define an SBDM that maps the data distribution to a Gaussian with zero mean and general covariance matrix.
Then, in the \cref{SDE-H}, we interpret and analyze the SBDM in the infinite-dimensional Hilbert space setting, present our discretization approach, and show their consistency. In \cref{sec: theorytoexp}, we demonstrate the design of diffusion models that enable the operations on function space, mainly in terms of the neural network architecture and the prior distribution of the forward SDEs.
In \cref{sec:numerics}, we provide experiments with different covariance operators, network architectures, and resolution scales using the MNIST dataset and PDE pushforward operators.
% and an imaging dataset motivated by material science. 
Finally, in \cref{eq:discuss}, we discuss our findings and identify open questions.
% To make the paper self-contained, we added an appendix which contains basics on Hilbert space valued Wiener processes.

%-------------------------------------------------
\section{Finite-dimensional Score-Based Diffusion Models}
\label{sec:score_finite}
%-------------------------------------------------
Following the general steps of \cite{huangVarScore, SongEtAl2020, song2021maximum}, 
we present the forward and reverse process of a score-based diffusion model that maps the data distribution 
$\mathbb{P}_{\rm data}$ to the latent distribution $\mathbb{P}_Z$ on $\R^d$ for some finite dimension $d$. 
Our presentation in \cref{sderd} lays  the foundation 
for the infinite-dimensional case by generalizing the SBDM 
for Gaussians with general covariance matrices.
In \cref{motiv_trace}, we show why the common choice of an identity covariance matrix 
and other covariance matrices whose trace grows in the dimension does not yield 
a well-posed infinite-dimensional limit.

\subsection{Finite-dimensional Stochastic differential equations}\label{sderd}

The main idea of SBDMs consists of constructing a pair of stochastic differential equations, namely a forward and a reverse one, such that the forward process transforms the data into noise, and the reverse process transforms the noise into data. 

The forward SDE computes a chain of random variables $X_t \in \R^d$, $t \in [0,T]$ 
by solving
\begin{equation} 
\label{eq}
{\rm d} X_t = f(t,X_t) \, {\rm d} t + g(t) \, {\rm d} W_t^Q
\end{equation}
with some initial condition 
$X_0 \sim \mathbb{P}_0 := \mathbb{P}_{{\rm{data}}}$, noise process $W_t^Q = \sqrt{Q} W_t$, where $W_t$ is a standard Brownian motion and $Q\in\R^{d\times d}$  is a symmetric 
positive definite covariance matrix, and, similar to \cite{huangVarScore,SongEtAl2020}, we choose
\begin{equation}\label{our}
f(t,X_t) \coloneqq -\frac{1}{2}\alpha_t X_t,\quad g(t,X_t) = g(t) \coloneqq \sqrt{\alpha_t}.
\end{equation}
Here, $\alpha_t \in (0,\infty)$ is a positive and increasing schedule.
With the above choices, the final distribution $X_T$ is near  $Y_0 \sim \PZ$. By $p_t$ we denote the density of $X_t \sim \mathbb{P}_{X_t}$ at $t \in [0,T]$ with respect to the Lebesgue measure.
 It is most common to use $Q = \mathrm{Id}$ (the identity matrix on $\R^d$). However, we show that this choice is inadequate in the infinite-dimensional setting. 
% In general, we have measurable functions
% $f:[0,T] \times \mathbb R^d \to \mathbb R^d$
% and 
% $g:[0,T] \times \mathbb R^d \to \mathbb R$.
% In our case, the forward process is usually fixed, as ``creating noise from data is easy"  \cite{SongEtAl2020}. 

% Such schedules were in particular considered in \cite{huangVarScore,SongEtAl2020}.

Under certain assumptions, cf. \cite{Anderson1982, haussmann1986time}, the reverse process $Y_t := X_{T-t}$ satisfies 
\begin{equation}
\label{eq_rev_finite}
{\rm d} Y_t = \left(-f( T-t, Y_t) +  g(T-t)^2 Q \nabla \log p_{T-t}(Y_t)\right) \, {\rm d} t + g(T-t) \, {\rm d} W_t^Q,
\end{equation}
which is, in contrast to the forward SDE, a nonlinear SDE whose drift depends on the gradient of the \emph{score}, $\nabla \log p_{t}$, the log density of $X_t$.
The reverse SDE can be derived from the standard SBDM used in~\cite{huangVarScore, SongEtAl2020, song2021maximum} with the Brownian motion $W_t^{\mathrm{Id}}$, by setting $g(t) := \sqrt{\alpha_t Q}$ in the standard SBDM.

To reverse the forward SDE, i.e., to go from the latent distribution $Y_0 \sim \PZ$ to $Y_T$ near $\Pdata$, 
we need to learn the score of the forward process.
In \cite{DDPM, huangVarScore, SongEtAl2020}, it is proposed to approximate the score with the neural network $s_{\theta}(t,x(t))$  and train its weights by minimizing the expected least-square error between the output of the network and the true score via
$$
\min_{\theta} \mathbb{E}_{t \sim U[0,T]} \mathbb{E}_{x(t) \sim \mathbb{P}_{t} }\left[ \Vert s_\theta(t,x(t)) -Q \nabla \log p_t(x(t)) \Vert^2 \right].
$$
Note that there are different ways to define this loss if $Q\neq \mathrm{Id}$. For example, we could also optimize $s_{\theta}$ in such a way that $Q\ s_{\theta} (t,x(t))\approx Q \nabla \log p_t(x(t))$. Since the score function cannot be observed in practice, \cite{hyvarinen_score, vincent_dsm} showed that the above minimization problem is equivalent to $\min_{\theta} L_{\rm DSM}(\theta)$ with
$$
 L_{\rm DSM} (\theta) = \mathbb{E}_{x(0) \sim \Pdata, t \sim U[0,T]} \mathbb{E}_{x(t) \sim \mathbb{P}_{X_t|X_0 = x(0)} }\left[ \Vert s_\theta(t,x(t)) - Q \nabla \log p_t(x(t)|x(0)) \Vert^2 \right],
$$
which does not involve the score $\nabla \log p_t$ itself, but instead the conditional distribution. In our case \eqref{our} the conditional distribution is Gaussian, i.e.,
$\mathbb{P}_{X_t|X_0 = x(0)} = \mathcal{N}(\widetilde{\alpha_t} x(0), \widetilde{\beta_t} Q)$.
% $\mathbb{P}_{X_t|X_0 = x(0)} = \mathcal{N}(a_t^{-1} x(0), (1-\frac{1}{a_t^2})Q)$. 
where $\widetilde{\alpha_t} = a_t^{-1}$, $\widetilde{\beta_t} = (1-\frac{1}{a_t^2})$, and $a_t = e^{\frac{1}{2}\int_0^t \alpha_r \, {\rm d} r}$. 
This can be seen either by discretization~\cite{DDPM} or by our \cref{var}, which uses the variation of constants formula. This implies
\begin{equation}
\begin{aligned}
\nabla \log p_t(x(t)|x(0)) &=
\nabla \left(-\tfrac12 \left(x(t) - \widetilde \alpha_t x(0) \right)^T \widetilde \beta_k^{-1} Q^{-1}\left(x(t) - \widetilde \alpha_t x(0) \right) \right)\\
&=- \widetilde \beta_t^{-1} Q^{-1}\left(x(t) - \widetilde \alpha_t x(0) \right).
\end{aligned}
\end{equation}
Plugging this into the loss function, we get
\begin{equation}
\begin{aligned}
L_{DSM} (\theta) & = \mathbb{E}_{x(0) \sim \Pdata, t \sim U[0,T]} \mathbb{E}_{x(t) \sim \mathbb{P}_{X_t|X_0 = x(0)} }
\left[ \left\| s_\theta(t,x(t)) + \frac{
x(t) - \widetilde \alpha_t x(0)}{\widetilde{\beta_t}} \right\|^2 \right]. 
\end{aligned}
\end{equation}
Setting  $\eta \coloneqq  \widetilde \beta_t^{-1/2}(x(t) -\widetilde{\alpha_t} x(0) )  \sim \mathcal N(0, Q)$, this can be further simplified to 
\begin{equation}
\begin{aligned}
\label{eq:LDSM}
L_{DSM} (\theta) & =  \mathbb{E}_{x(0) \sim \Pdata, t \sim U[0,T]} \mathbb{E}_{x(t) \sim \mathbb{P}_{X_t|X_0 = x(0)} }
\left[ \left\| s_\theta(t,x(t)) + \widetilde{\beta_t}^{-\frac12}\eta \right\|^2 \right],
\end{aligned}
\end{equation}
see also \cite{SongEtAl2020, NonIsotrDiff}. 
% Note that $\widetilde{\beta_t}$ vanishes when $t$ approaches $0$. 
% In the experiment, we consider the \textcolor{blue}{weighted loss $\widetilde{\beta_t} L_{DSM} (\theta)$, where $L_{DSM}$ is defined in \eqref{eq:LDSM} similar to \cite[Section 6]{huangVarScore}}.
%   \begin{equation} \begin{aligned}\label{eq:LDSM2}
% L_{DSM} (\theta) & =  \mathbb{E}_{x(0) \sim \Pdata, t \sim U[0,T]} \mathbb{E}_{x(t) \sim \mathbb{P}_{X_t|X_0 = x(0)} }
% \left[ \left\| \widetilde{\beta_t}^{\frac12} s_\theta(t,x(t)) + \eta \right\|^2 \right],
%    \end{aligned}\end{equation}

% Thus we sample the truncated noise $\eta_n \sim \mathcal N(0, Q_n)$, the loss function in the function space $H_n$ is defined as
% \[
% \min_{\theta} \mathbb{E}_{x(0) \sim \Pdata, t \sim U[0,T]} \mathbb{E}_{x(t) \sim \mathbb{P}_{X_t|X_0 = x(0)} }
% \left[ \left\| \frac{\sqrt{a_t^2 - 1}}{a_t} s_\theta(t,x(t)) + \eta_n \right\|^2 \right].
% \]
% Here we use $s_\theta(\cdot,\cdot)$ which takes values from $[0,T] \times H_n$ to denote the score $\tilde f^n(\cdot,\cdot)$ in \eqref{reverse_learned} for the simplicity of notations. We multiply the loss by $1-\frac{1}{a_t^2}$ similar to \cite{huangVarScore}.

\begin{remark}
When discretizing in time, the forward and reverse models
become a pair of Markov chains. A rigorous framework of these schemes also in the case of non-absolutely continuous distributions and its relation to (stochastic) normalizing flow architectures was considered in \cite{HHS2022,HHS2023}. For a finite number of time steps, the DDPM framework \cite{DDPM,pmlr-v37-sohl-dickstein15,SongEtAl2020} is recovered. 
In particular, these works minimize the joint Kullback--Leibler divergence between a forward path and the reverse path as a loss function.
\end{remark}

%-------------------------------------------------------------------------
\subsection{Motivation for trace class Gaussian measures}
\label{motiv_trace}
We briefly discuss that the common choice of $Q={\rm Id}$, while leading to a well-defined SBDM problem in finite dimensions, is not suitable for the infinite-dimensional extension of SBDM.
The key issue is that there is no standard Gaussian random variable in infinite dimensions. 
To see why, consider the Euclidean norm of the standard Gaussian random variable $X^d$ in $\mathbb{R}^d$, then it holds that 
 \begin{equation}\label{eq:N01example}
 \mathbb{E}[\Vert X^d \Vert_2^2] =  \mathrm{tr}(\mathrm{Id}) = d,    
 \end{equation}
 see also \cite{vershynin2018high}.
 In particular, for $d \rightarrow \infty$ as in the function space limit, we see that $\mathbb{E}[\Vert X^d \Vert^2] \rightarrow \infty$. This is one way to see why the standard Gaussian distribution becomes problematic in infinite dimensions; see, e.g., \cite{hairer2009introduction,stuart2010inverse} for in-depth discussions of this issue.

This motivates the use of so-called \textit{trace class operators} in infinite dimensions, i.e., operators whose trace remains bounded as $d\to\infty$. %\cite{hairer2009introduction,StuartActaNumerica}.
% as choosing $Q = \mathrm{Id}$ will not yield a viable solution as $d \rightarrow \infty$. 
As can be seen in~\eqref{eq:N01example}, if $Q$ is not trace class, then the $\ell^2$-sum of the entries of $X^d$  goes to $\infty$ as the resolution approaches $\infty$. 
% As two alternatives to $Q={\rm Id}$ that are trace class for two-dimensional images, we consider the inverse Laplacian, $Q = -\Delta^{-1}$~\cite[Example 6.28]{StuartActaNumerica}, and construct a covariance matrix with an FNO; see section \ref{sub:latent} for details.  We compare sample plots as well as $\ell^2$ sums in Fig. \ref{comp_gaussians}.
% It can be seen that the trace class operators show more structure and look more similar across resolutions. However, the intensity in the pixels decays for larger resolutions, where the sums of the trace class operators are bounded.

% Therefore we give up the finite dimensional setting in the next section and introduce a function-valued perspective on diffusion models.

% \begin{figure}
%     \centering
%     \includegraphics[scale = 0.31]{images/l2_standard_gauss.pdf}
%     \includegraphics[scale = 0.31]{images/l2_laplace_gauss.pdf}
%     \includegraphics[scale = 0.31]{images/l2_fno_gauss.pdf}
%     \caption{\textcolor{blue}{use tikzpicture}The sum of the squared entries of standard Gaussian (left) and Gaussians with trace class covariances $-\Delta^{-1}$ (middle) and FNO-based one (right) over different resolutions.}
%     \label{l2_sum}
% \end{figure}

%--------------------------------------
\section{Score-Based Diffusion Models on Infinite-Dimensional Hilbert Spaces}\label{SDE-H}
%--------------------------------------
In this section, we consider the SBDM problem for $H$-valued random variables, 
where $H$ is a separable Hilbert space.
The necessary preliminaries on Gaussian distributions on $H$, 
trace class operators $Q \in L(H)$, and  $Q$-Wiener processes are given in the supplementary material. 
One of the key ideas is to express the processes in terms of the spectral decomposition of $Q$. 
In \cref{sdehil}, we deal with the solution of the forward process
and its approximation by finite-dimensional discretizations using the truncated spectral decomposition of $Q$.
We prove an explicit formula for the discretization error and show convergence.
In \cref{sub:discr}, we derive reverse SDEs for the discretized forward process
and prove Wasserstein error bounds between the learned approximations of the reverse process 
and the data distribution.

Note that in this section, we provide outlines of the proofs to streamline the discussion; for full detail, we provide references to the supplementary material.

%----------------------------------------------------------------------
\subsection{Forward SDE and its Discretization}
\label{sdehil}

Let $(H,\lVert \cdot\rVert)$ be a real separable Hilbert space, $K(H)$ the space of compact operators on $H$, 
and $Q \in K(H)$ a trace class operator. 
By $H_Q\coloneqq Q^\frac12 H$, we denote the Cameron-Martin space of $Q$ and
by $L_2(H_Q,H)$ the space of Hilbert-Schmidt operators from $H_Q$ into $H$.
For a terminal time $T>0$, consider continuous drift and diffusion functions
$f \colon  [0,T] \times H \to H$
and
$g \colon [0,T] \to L_2(H_Q,H)$, respectively. We need the notion of strong and weak solutions of SDEs. For a given random variable $X_0 \in H$ on some probability space 
$(\Omega, \mathcal F, \mathbb P)$ and a $Q$-Wiener process $W^Q$ on $(\Omega, \mathcal F, \mathbb P)$ that is independent of $X_0$, 
we call a random process $(X_t)_{t \in [0,T]}$ on  
% $(\Omega, \mathcal F, \mathbb P)$
a \emph{strong solution} of \eqref{eq}
% \begin{equation}\label{eq}
%     {\rm d} X_t = f(t,X_t) \, {\rm d} t + g(t) \, {\rm d} W^Q_t
% \end{equation}
with initial condition $X_0$, 
if $(X_t)_{t \in [0,T]}$ has continuous sample paths, satisfies $\mathbb P$-almost surely
\begin{equation}\label{sol}
    X_t = X_0 + \int_0^t f(s,X_s) \, {\rm d} s + \int_0^t g(s) \, {\rm d} W^Q_s , \qquad t \in [0,T]
\end{equation}
and is adapted to the smallest complete filtration $\mathbb F=(\mathcal F)_{t \geq 0}$ in which $X_0$ is $\mathcal{F}_0$-measurable and $W^Q$ is $\mathbb F$-adapted. 
It is easy to check that, due to the independence of $X_0$ and $W^Q$, the process $W^Q$ is also a $Q$-Wiener process on the \emph{filtered probability space} $(\Omega, \mathcal F, \mathbb F, \mathbb P)$. 
% Note that since the drift and diffusion coefficients are continuous, the integrals in the previous equality are well-defined without any boundedness assumptions which simplifies the notion of strong solutions compared to the definition in \cite[Def.~3.1]{GawMan10}.
 Let $(e_k)_{k\in\mathbb{N}}$ be an orthonormal basis of eigenfunctions of $Q$ corresponding to eigenvalues $(\lambda_k)_{k\in\mathbb{N}}$. By \cite[Theorem 3.17]{scheutzowLectureNotes} an $H$-valued random process $W^Q$ on $(\Omega, \mathcal F, \mathbb P)$ is a $Q$-Wiener process if and only if there exist mutually independent standard Wiener processes $(\beta_k)_{k \in \mathbb{N}}$ on $(\Omega, \mathcal F, \mathbb P)$ such that 
\begin{equation}\label{eq:q}
    W^{Q}_t = \sum_{k \in \mathbb{N}} \sqrt{\lambda_k} \, \beta_k(t) \, e_k,
\end{equation} 
where the series on the right-hand side, the so-called \emph{Karhunen–Lo\`eve decomposition} of $W^Q$, is convergent in $L^2\left(\Omega,\mathcal F,\mathbb P;\mathcal C([0,T],H) \right)$.

We call a triple $((\Omega,\mathcal F,\mathbb F,\mathbb P),(X_t)_{t \in [0,T]},W^Q)$ a 
\emph{weak solution} of \eqref{eq} with initial condition $\mathbb P_0$ if $(\Omega,\mathcal F,\mathbb F,\mathbb P)$ is a filtered probability space with normal filtration, 
$W^Q$ is a $Q$-Wiener process on $(\Omega,\mathcal F,\mathbb F,\mathbb P)$, and 
$(X_t)_{t \in [0,T]}$ is $\mathbb F$-adapted, satisfies $X_0 \sim \mathbb P_0$, 
has continuous sample paths, and fulfills $\mathbb P$-almost surely \eqref{sol}. 
Note that the filtered probability space and the driving $Q$-Wiener process are part of the notion of weak solutions and not chosen a priori as is in the case of strong solutions. Clearly, any strong solution corresponds to a weak solution. However, the reverse is not true (\cite[Ex.~3.7]{scheutzowWT4}).
% , i.e., the filtration $\mathbb F$ of a weak solution $((\Omega,\mathcal F,\mathbb F,\mathbb P),(X_t)_{t \in [0,T]},W^Q)$ can be strictly larger then the completed filtration generated by $W^Q$ and $X_0$. 
% In fact the classical example of Tanaka's SDE, cf. e.g. \cite[Ex.~3.7]{scheutzowWT4}, shows that there exist SDEs that admit weak but not strong solutions.

We define the forward process by choosing the schedule $\alpha \in \mathcal{C}^1([0,T];\mathbb R_{> 0})$, the drift 
$f(t,x) \coloneqq \frac{1}{2}\alpha_t x$, 
 the diffusion
$g(t) \coloneqq \sqrt{\alpha_t}$, i.e., $g(t): H_Q \to H$ with $y \mapsto \sqrt{\alpha_t} y$.
Thus we have
\begin{equation}\label{eq:SDE}
    {\rm d} X_t = -\frac{1}{2}\alpha_t X_t \, {\rm d} t + \sqrt{\alpha_t} \, {\rm d} W^Q_t
\end{equation}
with initial condition $X_0 \sim \Pdata$, where $\Pdata$ is the data distribution. 
The following theorem establishes the existence of a unique strong solution of \eqref{eq:SDE} and 
provides an explicit formula for this solution. The existence and uniqueness statements of the theorem are based on \cite[Theorem~3.3]{GawMan10}, and the explicit formula can be derived by a variation of constants type argument. The full proof is  given in the supplementary material. %\cref{forward} 

\begin{theorem}[Variation of constants formula]\label{var}
    Let $X_0 \in H$ be a random variable and let $W^Q$ be a $Q$-Wiener process that is independent of $X_0$. 
		Then the SDE \eqref{eq:SDE} has a unique strong solution $(X_t)_{t \in [0,T]}$, 
		which is given $\mathbb{P}$-almost surely for all $t \in [0,T]$ by
    \begin{equation}\label{eq:var}
        X_t = a_t^{-1}  \Big( X_0 + \int_0^t  a_s \, \sqrt{\alpha_s} \, {\rm d} W^Q_s\Big), 
				\qquad a_t \coloneqq e^{\frac{1}{2}\int_0^t \alpha_r \, {\rm d} r}.
    \end{equation}
 If $\mathbb E \left[ \lVert X_0 \rVert^2\right]<\infty$, then it holds $\mathbb E \left[ \sup_{t \in [0,T]} \lVert X_t\rVert^2 \right]<\infty$. Let  $W^Q$ have the Karhunen–Lo\`eve decomposition \eqref{eq:q}.
Then, for all $t \in [0,T]$, the stochastic integral
\begin{equation}\label{eq:B}
    B_t \coloneqq \int_0^t  a_s \, \sqrt{\alpha_s} \, {\rm d} W^Q_s = \sum_{k \in \mathbb N} \sqrt{\lambda_k}\, 
		\Big(\int_0^t  a_s \, \sqrt{\alpha_s} \, {\rm d}\beta_k(s)\Big) \, e_k,
\end{equation}
is a centered $(a_t^2-1)Q$-Gaussian. 
\end{theorem}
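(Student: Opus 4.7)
The plan is to prove the three assertions in the theorem separately, with the explicit formula taking center stage and the other statements following from it.

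For existence and uniqueness of a strong solution, I would verify that the drift $f(t,x) = -\tfrac12 \alpha_t x$ and the diffusion $g(t) = \sqrt{\alpha_t}\,\mathrm{Id}_{H_Q \to H}$ meet the hypotheses of \cite[Theorem 3.3]{GawMan10}. Since $\alpha \in \mathcal{C}^1([0,T];\mathbb{R}_{>0})$, the drift is globally Lipschitz in $x$ (with constant $\tfrac12\|\alpha\|_\infty$) and of linear growth, while $g(t)$ is deterministic, continuous in $t$, and satisfies $\lVert g(t) \rVert_{L_2(H_Q,H)}^2 = \alpha_t \,\mathrm{tr}(Q) < \infty$, which is the required Hilbert--Schmidt growth bound. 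This yields the unique strong solution with continuous sample paths.

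For the explicit formula \eqref{eq:var}, the strategy is a variation of constants argument via the product rule for the deterministic factor $a_t$. I would define $Y_t \coloneqq a_t X_t$ and apply It\^o's formula (the deterministic--stochastic product rule, since $a_t$ is $\mathcal{C}^1$ and $a_t' = \tfrac12 \alpha_t a_t$) to obtain $\mathrm{d} Y_t = a_t' X_t\,\mathrm{d}t + a_t\,\mathrm{d}X_t = \tfrac12 \alpha_t a_t X_t\,\mathrm{d}t + a_t(-\tfrac12 \alpha_t X_t\,\mathrm{d}t + \sqrt{\alpha_t}\,\mathrm{d}W^Q_s) = a_t \sqrt{\alpha_t}\,\mathrm{d}W^Q_t$. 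Integrating and using $a_0 = 1$ yields $Y_t = X_0 + B_t$, which upon dividing by $a_t$ gives the claim. Conversely, starting from \eqref{eq:var} one checks directly that the process solves \eqref{eq:SDE}, so the formula does indeed describe the unique strong solution.

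For the second moment bound, I would use $a_t \geq 1$ (as $\alpha > 0$) together with $\lVert X_t \rVert^2 \leq 2\lVert X_0 \rVert^2 + 2\lVert B_t \rVert^2$. Since $B_t$ is an $H$-valued square-integrable martingale, Doob's maximal inequality gives $\mathbb{E}[\sup_{t\in[0,T]} \lVert B_t \rVert^2] \leq 4\,\mathbb{E}[\lVert B_T \rVert^2]$, and the It\^o isometry combined with the identity $\int_0^T a_s^2 \alpha_s\,\mathrm{d}s = a_T^2 - 1$ yields $\mathbb{E}[\lVert B_T \rVert^2] = (a_T^2-1)\,\mathrm{tr}(Q) < \infty$.

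For the Karhunen--Lo\`eve representation and the Gaussianity of $B_t$, I would substitute the expansion \eqref{eq:q} of $W^Q$ into the definition of the stochastic integral and use the bilinearity together with the convergence in $L^2(\Omega;\mathcal{C}([0,T],H))$ to interchange the integral and the sum, yielding the series in \eqref{eq:B}. Each coordinate process $\xi_k(t) \coloneqq \int_0^t a_s\sqrt{\alpha_s}\,\mathrm{d}\beta_k(s)$ is a Wiener integral of a deterministic function, hence a centered real Gaussian with variance $\int_0^t a_s^2 \alpha_s\,\mathrm{d}s = a_t^2 - 1$. Independence of the $\beta_k$ propagates to joint Gaussianity of the truncated sums, and the series limit is again Gaussian. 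Its covariance operator acts on $h \in H$ by $\sum_{k} \lambda_k (a_t^2 - 1)\langle h, e_k\rangle e_k = (a_t^2-1) Q h$, giving the stated $(a_t^2-1)Q$-Gaussian law.

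The main obstacle I anticipate is the careful justification of the exchange of the series and stochastic integral in \eqref{eq:B}, and of the product rule for $Y_t = a_t X_t$ in the Hilbert-space-valued setting; both require verifying the appropriate $L^2$-convergence and adaptedness so that the standard finite-dimensional manipulations carry over.
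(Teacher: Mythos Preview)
Your proposal is correct and follows the same overall architecture as the paper: existence and uniqueness via \cite[Thm.~3.3]{GawMan10}, then the explicit formula, then the Karhunen--Lo\`eve expansion and Gaussianity of $B_t$ via the componentwise Wiener integrals with variance $a_t^2-1$.

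The one genuine methodological difference is in how the explicit formula \eqref{eq:var} is obtained. You apply the product rule directly to the $H$-valued process $Y_t = a_t X_t$; the paper instead works component by component, applying the scalar It\^o formula \cite[Thm.~2.9]{GawMan10} to each real-valued functional $F^k(t,x)=a_t\langle x,e_k\rangle$ and then reassembling. Your route is shorter and more transparent, at the cost of needing an $H$-valued It\^o/product rule (which you correctly flag as the step to justify); the paper's route stays entirely within the scalar It\^o calculus and so sidesteps that issue, but requires the extra bookkeeping of matching up the components of $B_t$ via \cite[Lem.~2.8, Ex.~2.9]{GawMan10}. A second minor difference: you derive the bound $\mathbb{E}[\sup_t\|X_t\|^2]<\infty$ from the explicit formula and Doob's inequality, whereas the paper simply reads it off as part of the conclusion of \cite[Thm.~3.3]{GawMan10}. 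Both are valid; the paper's is shorter, yours is more self-contained.
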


Next, we discretize the $H$-valued forward process defined by \eqref{eq:SDE} 
by truncating the Karhunen–Lo\`eve decomposition \eqref{eq:q} of $W^Q$.  
Restricting the forward SDE \eqref{eq:SDE} onto $H_n \coloneqq {\rm span}\{e_1,\ldots,e_n\}$
yields the SDE
\begin{equation}     \label{finiteforward}
    {\rm d} X^n_t = -\frac{1}{2} \alpha_t X^n_t \, {\rm d} t + \sqrt{\alpha_t} \, {\rm d} W^{Q_n}_t, 
\end{equation}
with initial condition $X_0^n\coloneqq \sum_{k=1}^n \langle X_0,e_k \rangle \, e_k$ and 
\begin{equation} \label{eq:n}
W^{Q_n}_t \coloneqq 
 \sum_{k=1}^n \sqrt{\lambda_k} \, \beta_k(t) \, e_k.
\end{equation}
Setting 
$Q_n \coloneqq \sum_{k=1}^n \lambda_k \, \langle e_k, \cdot \rangle e_k$, we note that $W^{Q_n}$ is a $Q_n$-Wiener process, which justifies the notation. By \cref{var}, the restricted equation \eqref{finiteforward} has a unique strong solution that satisfies $\mathbb P$-almost surely
    \begin{equation}\label{eq:finvar}
        X^n_t = a_t^{-1} \, ( X^n_0 + B^n_t ), \quad
        B^n_t \coloneqq \int_0^t a_s \, \sqrt{\alpha_s}  \, {\rm d} W^{Q_n}_s 
				= \sum_{k=1}^n  \sqrt{\lambda_k}  \, \int_0^t a_s \, \sqrt{\alpha_s} \, {\rm d} \beta_k(s) \, e_k.
    \end{equation}
Since $X^n_t$ is simply the projection of  $X_t$ onto $H_n$, 
we expect that it is a good approximation of $X_t$ for large $n$. 
The next theorem gives an explicit bound on the approximation error and is similar to the argument in \cite[Theorem 1]{jakiw_diffusion}.

\begin{theorem}[Convergence of forward process]\label{forConv}
    Let $W^Q$ be a $Q$-Wiener process with trace class operator $Q$ having the Karhunen–Lo\`eve decomposition \eqref{eq:q} and let $W^{Q_n}$ be its truncation \eqref{eq:n}. 
		If $(X_t)_{t \in [0,T]}$ denotes the solution of the SDE \eqref{eq:SDE} 
		and $(X_t^n)_{t \in [0,T]}$ the solution of the approximate problem \eqref{finiteforward}, 
		then
    \begin{equation} \label{eq:drei}
        \mathbb E \left[\lVert X_t -X_t^n \rVert^2 \right] 
				\le 2 \, a_t^{-2}\, \mathbb E \left[ \lVert X_0-X_0^n \rVert^2 \right]+ 2\,  (1-a_t^{-2}) \, \mathrm{tr}(Q-Q_n), \qquad t \in [0,T],
    \end{equation}
    and
    \begin{equation*}
        \mathbb E \Big[\sup_{t \in [0,T]}\lVert X_t -X_t^n \rVert^2 \Big]  \le 2 \, \mathbb E \left[ \lVert X_0-X_0^n \rVert^2 \right]
				+ 8 \, \big(a_T^2-1\big) \, \mathrm{tr}(Q-Q_n).
    \end{equation*}
    If $X_0 \in H$ fulfills $\mathbb E \left[ \|X_0\|^2 \right] < \infty$,
		then $\mathbb E \left[ \sup_{t \in [0,T]} \lVert X_t -X_t^n \rVert^2 \right] \to 0$ as $n \to \infty$.
\end{theorem}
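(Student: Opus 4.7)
The plan is to combine the variation of constants formula from \cref{var} with the Itô isometry and Doob's maximal inequality. Since both $X_t$ and $X_t^n$ are expressed via the same closed form, the difference reduces to two explicit terms: a deterministic contribution from the initial conditions and a stochastic contribution from the Wiener process truncation.

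First, I subtract the representations
\[
X_t = a_t^{-1}(X_0 + B_t), \qquad X_t^n = a_t^{-1}(X_0^n + B_t^n)
\]
given in \eqref{eq:var} and \eqref{eq:finvar}, to obtain $X_t - X_t^n = a_t^{-1}\bigl((X_0 - X_0^n) + (B_t - B_t^n)\bigr)$. Applying the elementary inequality $\lVert u + v\rVert^2 \le 2\lVert u\rVert^2 + 2\lVert v\rVert^2$ and taking expectations yields
\[
\mathbb E\bigl[\lVert X_t - X_t^n\rVert^2\bigr] \le 2\,a_t^{-2}\, \mathbb E\bigl[\lVert X_0 - X_0^n\rVert^2\bigr] + 2\,a_t^{-2}\,\mathbb E\bigl[\lVert B_t - B_t^n\rVert^2\bigr].
\]
The key computation is then the Itô isometry for the tail $B_t - B_t^n = \sum_{k>n}\sqrt{\lambda_k}\bigl(\int_0^t a_s\sqrt{\alpha_s}\,\mathrm d\beta_k(s)\bigr)e_k$. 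Using orthonormality of $(e_k)$ and independence of the $\beta_k$, I would compute
\[
\mathbb E\bigl[\lVert B_t - B_t^n\rVert^2\bigr] = \sum_{k>n}\lambda_k \int_0^t a_s^2\alpha_s\,\mathrm d s = \mathrm{tr}(Q - Q_n)\int_0^t a_s^2\alpha_s\,\mathrm d s,
\]
and observe that $\frac{\mathrm d}{\mathrm d s}(a_s^2) = a_s^2\alpha_s$, so the integral evaluates to $a_t^2 - 1$. Multiplying by $a_t^{-2}$ produces the prefactor $1 - a_t^{-2}$, which gives \eqref{eq:drei}.

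For the supremum bound, I would again split using $\lVert u+v\rVert^2\le 2\lVert u\rVert^2+2\lVert v\rVert^2$ and note that $a_t^{-1}$ is monotonically decreasing with $a_0 = 1$, so $a_t^{-2}\le 1$ on $[0,T]$. This handles the initial-condition term directly. For the stochastic term, the process $(B_t - B_t^n)_{t\in[0,T]}$ is a continuous, square-integrable $H$-valued martingale (being a stochastic integral against a $Q$-Wiener process), so Doob's maximal inequality in the Hilbert-space form gives
\[
\mathbb E\Bigl[\sup_{t\in[0,T]}\lVert B_t - B_t^n\rVert^2\Bigr] \le 4\,\mathbb E\bigl[\lVert B_T - B_T^n\rVert^2\bigr] = 4\,(a_T^2 - 1)\,\mathrm{tr}(Q - Q_n),
\]
which yields the second inequality after combining.

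Finally, for the convergence statement, $\mathbb E[\lVert X_0 - X_0^n\rVert^2]\to 0$ as $n\to\infty$ because $X_0^n$ is the orthogonal projection of $X_0$ onto $H_n$ (dominated convergence applies since $\mathbb E[\lVert X_0\rVert^2]<\infty$), and $\mathrm{tr}(Q - Q_n) = \sum_{k>n}\lambda_k\to 0$ since $Q$ is trace class. Both terms in the supremum bound thus vanish. The step I view as the main technical point is the careful application of Doob's inequality in the $H$-valued setting together with the Itô isometry for truncated $Q$-Wiener integrals; the remaining calculations are essentially deterministic manipulations of $a_t$.
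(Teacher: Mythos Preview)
Your proposal is correct and follows essentially the same approach as the paper: subtract the variation-of-constants representations, apply $\lVert u+v\rVert^2\le 2\lVert u\rVert^2+2\lVert v\rVert^2$, evaluate the second moment of the truncated stochastic integral (the paper does this by recognizing it as a centered $(a_t^2-1)(Q-Q_n)$-Gaussian via \cref{var}, you do it directly via the It\^o isometry on the Karhunen--Lo\`eve tail, which is equivalent), and then invoke Doob's maximal inequality together with $a_t^{-2}\le 1$ for the uniform bound. The convergence argument via dominated convergence and the trace-class property is also identical.
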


\begin{proof} 
    By applying \cref{var} to the initial value $X_0-X_0^n$ and the $(Q-Q_n)$-Wiener process $W^Q-W^{Q_n}$, 
		we obtain that
    \begin{equation*}
        \delta B_t^n \coloneqq \int_0^t a_s \, \sqrt{\alpha_s} \,  {\rm d} (W_s^Q-W_s^{Q_n})
    \end{equation*}
    is a centered $(a_t^2-1)(Q-Q_n)$-Gaussian and
    \begin{equation*}
        \mathbb E \left[ \left\lVert \delta B_t^n \right \rVert^2 \right] = (a_t^2-1) \, \mathrm{tr}(Q-Q_n).
    \end{equation*} 
    Utilizing the variation of constants formula \eqref{eq:var}, we conclude
  \begin{equation} \begin{aligned}
        X_t-X_t^n 
				&= a_t^{-1} \Big(X_0-X_0^n + \int_0^t  a_s \,\sqrt{\alpha_s} \, {\rm d} W^Q_s
				-\int_0^t a_s \,\sqrt{\alpha_s} \, {\rm d} W^{Q_n}_s\Big)\\
        & = a_t^{-1} \Big(X_0-X_0^n + \delta B_t^n \Big).
   \end{aligned}\end{equation}
    In summary, we get
  \begin{equation} \begin{aligned}
        \mathbb E \left[\lVert X_t -X_t^n \rVert^2 \right] 
				&  \le 2 \, a_t^{-2} \, \mathbb E \left[ \lVert X_0-X_0^n \rVert^2 \right]+ 2 \,  a_t^{-2}\, \mathbb E \left[\left\lVert \delta B_t^n \right\rVert^2\right]\\
        & \le 2 \, a_t^{-2}\, \mathbb E \left[ \lVert X_0-X_0^n \rVert^2 \right]+ 2\,  (1- a_t^{-2}) \, \mathrm{tr}(Q-Q_n),
   \end{aligned}\end{equation}
    which proves \eqref{eq:drei}.
    Furthermore, since $ \delta B_t^n$ is a square-integrable martingale, 
		Doob's maximal inequality \cite[Thm~2.2]{GawMan10} yields
    \begin{equation*}
        \mathbb E \Big[ \sup_{t \in [0,T]} \left\lVert \delta B_t^n \right\rVert^2\Big] 
				\le 4 \, \mathbb E \left[ \left\lVert \delta B_T^n \right\Vert^2 \right] 
				= 4 \, \big(a_T^2-1\big) \, \mathrm{tr}(Q-Q_n).
    \end{equation*}
    This proves
    \begin{equation*}
        \mathbb E \Big[\sup_{t \in [0,T]}\lVert X_t -X_t^n \rVert^2 \Big]  
				\le 2 \, \mathbb E \left[ \lVert X_0-X_0^n \rVert^2 \right]+ 8 \, \big(a_T^2-1\big) \, \mathrm{tr}(Q-Q_n).
    \end{equation*} 
    For any $\omega \in \Omega$, we have $\Vert X_0(\omega) - X_0^n(\omega) \Vert^2 = \sum_{k=n+1}^{\infty} \vert \langle X_0(\omega), e_k \rangle \vert^2$ which goes to zero. Thus, the assertion follows by dominated convergence since $\mathbb E \left[ \|X_0\|^2 \right] < \infty$. 
\end{proof}

Next, we estimate the approximation error with respect to the Wasserstein-2 distance~\cite[Def~6.1]{villani2009optimal} that for two measures $\mu,\nu$ on a separable metric space $(K,d)$ is defined by 
\begin{equation*}
W_2(\mu,\nu) 
\coloneqq \Big (\min_{\pi\in\Pi(\mu,\nu)} \int d(x,y)^2\, d\pi(x,y)\Big)^{1/2} 
= \Big (\min_{X \sim \mu, Y \sim \nu} \mathbb{E}(d(X,Y)^2)\Big)^{1/2},
\end{equation*}
where $\Pi(\mu,\nu)$ denotes the set of measures on $K \times K$ with marginals $\mu$ and $\nu$, i.e. 
$\pi(B \times K) = \mu(B)$, $\pi(K\times B) = \nu(B)$ for all $B \in \mathcal B(K)$, where $\mathcal B(K)$ is the Borel $\sigma$-algebra of $K$.
The Wasserstein-2 distance is a metric on the space of probability measures on $K$ having finite second moments. The following corollary, which is an immediate consequence of \cref{forConv},
establishes convergence of the forward process with respect to the Wasserstein-2 distance. That is, we consider the Wasserstein-2 distance between the probability measures of the forward process and the approximated forward process on $(H, \|\cdot\|)$.

\begin{corollary}\label{cor:f}
Let $X_0 \in H$ be a random variable fulfilling
$\mathbb E \left[\lVert X_0\rVert^2\right]<\infty$.
Let $(X_t)_{t \in [0,T]}$ denote the solution of the SDE \eqref{eq:SDE} 
and $(X_t^n)_{t \in [0,T]}$ the solution of the approximate problem \eqref{finiteforward} with corresponding laws
$\mathbb P_{X_t}$ and $\mathbb P_{\smash{X_t^n}}$.
Then it holds for $t \in [0,T]$ that
\begin{equation*}
\begin{aligned}
        W_2^2(\mathbb{P}_{\smash{X_t^n}}, \mathbb{P}_{X_t}) 
		& \le         \mathbb E \Big[\lVert X_t -X_t^n \rVert^2 \Big] \\
  & \le
		2 \,a_t^{-2} \, \mathbb E \left[ \lVert X_0-X_0^n \rVert^2 \right]+ 2\,  (1-a_t^{-2}) \, \mathrm{tr}(Q-Q_n) \to 0 \text{ as } n\to \infty.
\end{aligned}
\end{equation*}
\end{corollary}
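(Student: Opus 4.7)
The plan is to exploit the fact that the statement is essentially a direct consequence of \cref{forConv}, combined with the variational characterization of the Wasserstein-2 distance. First I would observe that $X_t$ and $X_t^n$ are constructed on the \emph{same} probability space $(\Omega, \mathcal F, \mathbb P)$, from the same initial condition $X_0$ and the same underlying family of independent scalar Brownian motions $(\beta_k)_{k\in\mathbb N}$ appearing in the Karhunen--Lo\`eve expansion \eqref{eq:q}. Consequently, the pushforward of $\mathbb P$ under the map $\omega \mapsto (X_t(\omega), X_t^n(\omega))$ is a coupling $\pi \in \Pi(\mathbb P_{X_t}, \mathbb P_{X_t^n})$.

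Next, by the definition of the Wasserstein-2 distance as an infimum over couplings, evaluated at the particular coupling $\pi$ above, we immediately obtain
\begin{equation*}
W_2^2(\mathbb{P}_{X_t^n}, \mathbb{P}_{X_t}) \le \mathbb E\bigl[\lVert X_t - X_t^n\rVert^2\bigr].
\end{equation*}
The first displayed inequality in the corollary is thereby established; the second inequality is just the pointwise bound \eqref{eq:drei} from \cref{forConv}, which applies since $\mathbb E[\lVert X_0\rVert^2]<\infty$.

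For the convergence to zero as $n\to\infty$, I would argue term by term. Since $(e_k)_{k\in\mathbb N}$ is an orthonormal basis and $X_0^n$ is the orthogonal projection of $X_0$ onto $H_n$, Parseval's identity gives $\lVert X_0 - X_0^n\rVert^2 = \sum_{k>n}|\langle X_0,e_k\rangle|^2 \to 0$ pointwise, and dominated convergence (with dominating variable $\lVert X_0\rVert^2$, integrable by hypothesis) yields $\mathbb E[\lVert X_0 - X_0^n\rVert^2] \to 0$. For the second summand, $\mathrm{tr}(Q-Q_n) = \sum_{k>n}\lambda_k \to 0$ since $Q$ is trace class. The prefactors $2 a_t^{-2}$ and $2(1-a_t^{-2})$ are bounded in $n$ for each fixed $t$, so the bound vanishes.

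I do not anticipate a real obstacle here: the coupling step is a standard one-line observation, and the quantitative bounds have already been done in \cref{forConv}. The only subtlety worth flagging in the write-up is making explicit that the identical construction of $X_t$ and $X_t^n$ on $(\Omega,\mathcal F,\mathbb P)$ gives a valid coupling, so that the $L^2$ estimate on paths transfers directly to a bound on $W_2$ on the laws.
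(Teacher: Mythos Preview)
Your proposal is correct and follows exactly the approach the paper takes: the paper simply states that the corollary is an immediate consequence of \cref{forConv}, and your write-up spells out precisely the two ingredients---the natural coupling coming from the common probability space and the pointwise bound \eqref{eq:drei}---together with the convergence argument already contained in the proof of \cref{forConv}.
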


%-------------------------------------------------------------------
\subsection{Reverse Process}\label{sub:discr}

In the finite-dimensional setting, the reverse process $Y_t\coloneqq X_{T-t}$, $t \in [0,T]$, 
is the solution of a reverse SDE whose drift depends on the score function; see our discussion in \cref{sec:score_finite}. 
This score is defined via the Lebesgue density of the finite-dimensional forward process. Since Lebesgue densities no longer exist in the infinite-dimensional Hilbert space setting, the definition of the score is not straightforward. 
Some results on time reversibility of infinite-dimensional diffusion processes are discussed in \cite{FOLLMER198659,millet1989time,jakiw_diffusion}. 

In this paper, we circumvent the issue of time-reversal of the infinite-dimensional 
forward process by discretizing the forward process 
and working only with time-reversals of the discretizations. 
As discussed in \cref{sec:score_finite}, 
the time reversal of a diffusion process in $\mathbb R^n$ is well understood. 
Thus, it is natural to relate the discretized forward SDE \eqref{finiteforward} 
to an equivalent SDE in $\mathbb R^n$ via coordinate-vectors 
and then transfer the reverse SDE in $\mathbb R^n$ back to an SDE in $H_n$. 
To this end, we define for $n \in \mathbb N$, the isometric isomorphism
\begin{equation*}
    \iota_n \colon H_n \to \mathbb R^n, \qquad \iota_n(x)\coloneqq (\langle x,e_k\rangle)_{k=1}^n.
\end{equation*}
Using the coordinate process $\hat{X}^n_t \coloneqq \iota_n(X^n_t)$, we can formulate a reverse SDE 
to the restricted forward SDE \eqref{finiteforward} under mild assumptions.

\begin{theorem}[Discretized reverse process]  \label{lem:reverse} 
Let $W^Q$ be a $Q$-Wiener process with injective trace class operator $Q$ 
having the Karhunen–Lo\`eve decomposition \eqref{eq:q},
and let $W^{Q_n}$ be its truncation \eqref{eq:n}. 
Let $X_t^n$ be the solution of the approximate problem \eqref{finiteforward}.
If the random variable $\hat{X}_0^n \in \mathbb R^n$ admits a Lebesgue density $p_0^n$, then $\hat{X}^n_t \in \mathbb R^n$ 
admits a Lebesgue density $p_t^n$ and
it holds for $t \in (0,T]$ that
     \begin{equation}\label{eq:sco}
        \nabla \log p_t^n(z)
				=-\frac{a_t}{(a_t^2-1)}\Big(\frac{(\lambda_k)^{-1} \, 
				\left( p^n_0* (z_k b^n_t) \right)(a_t \, z)}{(p^n_0* b^n_t)(a_t z)}\Big)_{k=1}^n, 
    \end{equation}
    where $*$ denotes the convolution and
		 \begin{equation}\label{bt}
		b_t^n(z) \coloneqq e^{-\frac{(a_t^2-1)^{-1}}{2} \, z^\intercal \, \mathrm{diag}(\lambda_1,...,\lambda_n)^{-1} z}, \quad t \in (0,T].
		\end{equation}
		In particular, the discretized score
    \begin{equation}\label{eq:score}
        f^n(t,\cdot)\coloneqq \alpha_{T-t} \, Q_n \big(\iota_n^{-1} \circ \nabla \log p_{T-t}^n \circ \iota_n \big), \qquad t \in [0,T), 
    \end{equation}
    is well-defined, and Lipschitz continuous in the second variable uniformly on 
		$[0,T-\delta]\times \{x \in H_n \mid \lVert x\rVert\le N\}$ for all $\delta>0$ and $N \in \mathbb N$.
		The discretized reverse SDE
    \begin{equation}\label{reverse_disc}
        {\rm d} Y^n_t = \Big(\frac{1}{2} \alpha_{T-t} \, Y_t^n +f^n(t,Y^n_t) \Big) \, {\rm d} t + \sqrt{\alpha_{T-t}} 
				\, {\rm d} \hat{W}^{Q_n}_t,
    \end{equation}
    with initial condition $Y_0^n \sim \mathbb P_{X_T^n}$ admits a weak solution 
	 $((\hat{\Omega}, \hat{\mathcal F},\hat{\mathbb F}, \hat{\mathbb P}),(Y^n_t)_{t \in [0,T)},\hat{W}^{Q_n})$
		such that $(Y^n_t)_{t \in [0,T)}$ is equal to $(X_{T-t}^n)_{t \in [0,T)}$ in distribution. Further, $(\hat{\Omega}, \hat{\mathcal F},\hat{\mathbb F}, \hat{\mathbb P})$ can be chosen as an extension of $(\Omega,\mathcal F, \mathbb F,\mathbb P)$, where $\mathbb F$ denotes the completion of the natural filtration of $(X_{T-t}^n)_{t\in[0,T)}$, and $(Y^n_t)_{t \in [0,T)}$ is the canonical extension of $(X_{T-t}^n)_{t \in [0,T)}$ to this extended probability space, cf. \cite[Def.~3.13]{scheutzowWT4}.
  Furthermore, solutions of \eqref{reverse_disc} are unique in law, i.e., any weak solution with initial distribution $\mathbb P_{X_T^n}$ is equal to $(X_{T-t}^n)_{t \in [0,T)}$ in distribution. 
  % In particular, any weak solution $((\hat{\Omega}, \hat{\mathcal F},\hat{\mathbb F}, \hat{\mathbb P}),(Y^n_t)_{t \in [0,T)},\hat{W}^{Q_n})$ of \eqref{reverse_disc} satisfies $\mathbb E \big[\sup_{t \in [0,T)} \lVert Y^n_t \rVert^2 \big]<\infty$. 
\end{theorem}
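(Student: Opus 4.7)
The plan is to derive the density and score from \cref{var}, establish regularity of the discretized score, apply a finite-dimensional time-reversal theorem to obtain the weak solution, and combine local Lipschitz control with Yamada--Watanabe to get uniqueness in law. Writing the coordinate forward process as $\hat X_t^n = a_t^{-1}(\hat X_0^n+\hat B_t^n)$ via \cref{var}, with $\hat B_t^n\sim\mathcal N(0,(a_t^2-1)\mathrm{diag}(\lambda_1,\ldots,\lambda_n))$ independent of $\hat X_0^n$, injectivity of $Q$ guarantees $\lambda_k>0$ for $k\le n$, hence $\hat B_t^n$ admits a Gaussian density proportional to $b_t^n$. By independence, $\hat X_0^n+\hat B_t^n$ has density $C(t)(p_0^n*b_t^n)$, and scaling by $a_t$ gives $p_t^n(z)=a_t^n C(t)(p_0^n*b_t^n)(a_tz)$. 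Formula \eqref{eq:sco} then follows by component-wise differentiation of $\log p_t^n$, using $\partial_{z_k}b_t^n(z)=-z_k/((a_t^2-1)\lambda_k)\,b_t^n(z)$ and differentiation under the convolution integral, justified because $b_t^n$ and its gradient are Schwartz.

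\textbf{Lipschitz regularity of $f^n$.} Strict positivity of $b_{T-t}^n$ forces $(p_0^n*b_{T-t}^n)(w)>0$ for every $w$, and joint smoothness of the Gaussian kernel in $(t,w)$ on $[0,T)\times\mathbb R^n$ together with dominated convergence yields that $\nabla\log p_{T-t}^n$ is jointly $C^\infty$ there. Since $a_s\ge a_\delta>1$ for $s\ge\delta$, the prefactor $a_{T-t}/(a_{T-t}^2-1)$ is bounded on $[0,T-\delta]$. A mean-value bound on the Jacobian of $f^n$ over the compact set $[0,T-\delta]\times\{\lVert z\rVert\le N\}$ then furnishes the asserted uniform Lipschitz constant after pulling back through the isometry $\iota_n$.

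\textbf{Weak existence via finite-dimensional time reversal.} The coordinate SDE has affine drift and constant diffusion $\sqrt{\alpha_t}\,\mathrm{diag}(\sqrt{\lambda_k})$, its marginals $p_t^n$ are smooth and strictly positive for $t>0$, and \cref{var} ensures $\mathbb E[\lVert\hat X_t^n\rVert^2]<\infty$. These are exactly the hypotheses of the classical time-reversal theorem of Anderson or Haussmann--Pardoux, which produces, on an extension of $(\Omega,\mathcal F,\mathbb F,\mathbb P)$, Brownian motions driving $(\hat X_{T-t}^n)_{t\in[0,T)}$ with reverse drift $\tfrac12\alpha_{T-t}z+\alpha_{T-t}\mathrm{diag}(\lambda_k)\nabla\log p_{T-t}^n(z)$. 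Transporting these Brownian motions through $\iota_n^{-1}$ assembles a $Q_n$-Wiener process $\hat W^{Q_n}$ so that the canonical extension $Y_t^n\coloneqq X_{T-t}^n$ solves \eqref{reverse_disc}.

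\textbf{Uniqueness in law and main obstacle.} Fix $\delta>0$. By the previous paragraph the drift of \eqref{reverse_disc} is locally Lipschitz in space uniformly on $[0,T-\delta]$, so a stopping-time/Grönwall argument gives pathwise uniqueness up to first exit from $\{\lVert x\rVert\le N\}$; non-explosion of the constructed solution (its law equals that of the $L^2$-bounded process $X_{T-t}^n$) then permits $N\to\infty$. Yamada--Watanabe delivers uniqueness in law on $[0,T-\delta]$, and letting $\delta\downarrow 0$ extends it to $[0,T)$. The hardest part will be handling the singularity of the reverse drift as $t\uparrow T$---where $b_{T-t}^n$ concentrates and $\nabla\log p_{T-t}^n$ may blow up if $p_0^n$ is irregular---which is precisely why the statement is confined to $[0,T)$ and why the whole plan works on $[0,T-\delta]$ before passing to the limit.
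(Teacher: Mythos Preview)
Your proposal is correct and follows essentially the same route as the paper: derive $p_t^n$ from the variation-of-constants representation $\hat X_t^n=a_t^{-1}(\hat X_0^n+\hat B_t^n)$, differentiate under the convolution to obtain \eqref{eq:sco}, use boundedness of the Hessian on compacts for the local Lipschitz claim, invoke the Haussmann--Pardoux time-reversal theorem for weak existence on an extended space, and conclude with local-Lipschitz pathwise uniqueness plus Yamada--Watanabe. The paper differs only in bookkeeping---it verifies the $L^2_{\mathrm{loc}}$ integrability condition of Haussmann--Pardoux explicitly and passes through the martingale-problem/weak-solution correspondence rather than citing the reverse SDE directly---but the substantive argument is the same.
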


The following proof contains the main ideas. For convenience, we have further detailed some parts in an extended
proof in the the supplementary material. %\cref{3.4}.

\begin{proof} 
1.  First, we show that $p^n_t$ has the form \eqref{eq:sco} which then implies that $f^n$ in \eqref{eq:score} is well-defined
with the desired properties.
We first check that $\hat{X}_t^n$ has the density
    \begin{align*}
        p^n_t(z) \coloneqq  \frac{a_t^n(\lambda_1...\, \lambda_n)^{-1/2}}{\left(2\pi(a_t^2-1) \right)^{n/2}} \, 
				(p^n_0*b^n_t)(a_t \, z).
    \end{align*}
Then, using regularity properties, we conclude that $f^n$ as in \eqref{eq:score} is well-defined and Lipschitz continuous in the second variable uniformly on $[0,T-\delta]\times \{x \in H_n\mid \lVert x\rVert\le N\}$.    \\[1ex] 
 2.   Next, we transfer \eqref{finiteforward} to an SDE on $\mathbb R^n$ 
to obtain a reverse equation. Let $\beta_k(t)$, $k \in \mathbb N$ be as in \eqref{eq:q}.
Then 
$\bm{\beta}^n_t\coloneqq (\beta_k(t))_{k=1}^n$ 
is an $n$-dimensional Brownian motion and  $(\hat{X}^n_t)_{t \in [0,T]}$ is the unique strong solution of 
       $ {\rm d} \hat{X}^n_t = - \frac{1}{2} \alpha_t \, \hat{X}^n_t \, {\rm d} t + \sqrt{\alpha_t} \, \mathrm{diag}(\sqrt{\lambda_1},...,\sqrt{\lambda_n}) \, {\rm d} \bm{\beta}^n_t$    with initial value $\hat{X}_0^n$. We set $\Lambda = \mathrm{diag}(\lambda_1,...,\lambda_k).$ We conclude that$(\hat{X}^n_{T-t})_{t \in [0,T)}$ 
		is a weak solution of 
  \begin{equation}
  \label{eq:revF}
  \begin{aligned}
        {\rm d} \hat{Y}^n_t 
				=
				\Big( \frac{1}{2} \alpha_{T-t} \hat{Y}^n_t + \alpha_{T-t} \ \Lambda \nabla \log p^n_{T-t}(\hat{Y}^n_t)\Big) \, {\rm d}t + \sqrt{\alpha_{T-t}} \,\sqrt{\Lambda} \, {\rm d} {\bm{\hat{\beta}}}^n_t
    \end{aligned}
    \end{equation}
    with initial condition $\hat{Y}^n_0 \sim \mathbb P_{\hat{X}^n_T}$, by \cite[Thm~2.1]{haussmann1986time}. Thus, $(\hat{Y}^n_t)_{t \in [0,T)}$ is equal to 
		$(\hat{X}^n_{T-t})_{t \in [0,T)}$ in distribution. 
	% The process $\hat{W}^{Q_n}_t \coloneqq \sum_{k=1}^n \sqrt{\lambda_k} \,\hat{\beta}^n_k(t) \, e_k$
		% is a $Q_n$-Wiener process on $(\hat{\Omega}, \hat{\mathcal F}, \hat{\mathbb F},\hat{\mathbb P})$. 
		We set $Y^n_t \coloneqq \iota_n^{-1}(\hat{Y}_t^n).$ 
		Clearly, $(Y_t^n)_{t \in [0,T)}$ and $(X_{T-t}^n)_{t \in [0,T)}$ 
		are equal in distribution. 
		Since $((\hat{\Omega}, \hat{\mathcal F}, \hat{\mathbb F}, \hat{\mathbb P}),(\hat{Y}^n_t)_{t \in [0,T)},\hat{\bm{\beta}}^n)$ 
		is a weak solution of \eqref{eq:revF}, the process $(Y^n_t)_{t \in [0,T)}$ 
		has continuous sample-paths and is adapted to $\hat{\mathbb F}$. 
		Further, since $Y^n_t$ fulfills \eqref{sol} for the setting  \eqref{eq:revF}, we have 
		$\hat{\mathbb P}$-almost surely for all $t \in [0,T)$ that  $Y^n_t$ is equal to

\begin{align*}
        Y^n_0 + \int_0^t \frac{1}{2} \alpha_{T-s} \, Y^n_s 
				+ \underbrace{\alpha_{T-s} \, Q_n \big(\iota_n^{-1} \circ \nabla \log p_{T-t}^n \circ \iota_n\big)(Y_s^n)}_{=f^n(s,Y^n_s)} \, {\rm d} s 
				+ \int_0^t \sqrt{\alpha_{T-s}}  \, {\rm d}  \hat{W}^{Q_n}_s
   \end{align*}
Thus  $((\hat{\Omega}, \hat{\mathcal F}, \hat{\mathbb F}, \hat{\mathbb P}),(Y^n_t)_{t \in [0,T)},\hat{W}^{Q_n})$ 
		is a weak solution of \eqref{reverse_disc} with $Y_0^n \sim \mathbb P_{X^n_T}$. 

We can analogously show that $((\hat{\Omega}, \hat{\mathcal F}, \hat{\mathbb F}, \hat{\mathbb P}),(\hat{Y}^n_t)_{t \in [0,T)},\hat{\bm{\beta}}^n)$ with $\hat{Y}^n_t\coloneqq \iota_n(Y_t^n)$ and $\hat{\bm \beta}^n_t \coloneqq \big( (\sqrt{\lambda_k})^{-1} \langle\hat{W}^{Q_n}_t,e_k\rangle \big)_{k=1}^n$ yields a weak solution of \eqref{eq:revF}. Thus, showing uniqueness in law of \eqref{eq:revF} is sufficient to prove uniqueness in law of \eqref{reverse_disc}. Since the drift and diffusion coefficients of \eqref{eq:revF} are Lipschitz continuous in the second variable on sets of the form $[0,T-\delta] \times \{z \in \mathbb R^n\mid \lVert z\rVert \le N\}$, we know by \cite[Lem.~12.4]{russo22} and \cite[Definition~12.1]{russo22} that pathwise uniqueness holds on $[0,T-\delta]$. A limiting argument $\delta \to 0$ yields pathwise uniqueness on $[0,T)$.
\end{proof}

As a consequence of the theorem, we obtain the following corollary.

\begin{corollary}\label{cor:r}
Let $X_0 \in H$ be a random variable fulfilling $\mathbb E \left[\lVert X_0\rVert^2\right]<\infty$. 
Further, let $X_t^n$ be the solution of the approximate problem \eqref{finiteforward}, and let
$\hat{X}_0^n \in \mathbb R^n$ admit a Lebesgue density. Then  we have for
$t \in [0,T)$ that
\begin{equation*}
    W_2^2( \mathbb{P}_{Y_t^n}, \mathbb{P}_{X_{T-t}}) 
		\le 
		2 \, a_{T-t}^{-2} \, \mathbb E \left[ \lVert X_0-X_0^n \rVert^2 \right]
		+ 2\,  (1-a_{T-t}^{-2}) \, \mathrm{tr}(Q-Q_n) \to 0 \text{ as } n\to \infty.
\end{equation*}
\end{corollary}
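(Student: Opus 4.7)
The plan is to reduce the claim to \cref{cor:f} by exploiting the distributional identity established in \cref{lem:reverse}. The hypothesis that $\hat{X}_0^n$ admits a Lebesgue density ensures we are in the setting of \cref{lem:reverse}, whose conclusion includes that $(Y^n_t)_{t\in[0,T)}$ and $(X^n_{T-t})_{t\in[0,T)}$ are equal in distribution. In particular, for each fixed $t \in [0,T)$, the marginal laws agree: $\mathbb{P}_{Y_t^n} = \mathbb{P}_{X_{T-t}^n}$. This is the key structural observation that trivializes the problem, since we can replace the reverse process in the Wasserstein distance with the forward process at time $T-t$.

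Next, I would simply apply \cref{cor:f} at time $s \coloneqq T-t \in (0,T]$, which yields
\begin{equation*}
W_2^2(\mathbb{P}_{X_{T-t}^n}, \mathbb{P}_{X_{T-t}}) \le 2\, a_{T-t}^{-2}\, \mathbb{E}\left[\|X_0 - X_0^n\|^2\right] + 2\,(1 - a_{T-t}^{-2})\, \mathrm{tr}(Q - Q_n).
\end{equation*}
Combining with $\mathbb{P}_{Y_t^n} = \mathbb{P}_{X_{T-t}^n}$ immediately gives the stated bound.

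For the convergence as $n \to \infty$, I would invoke two facts already established or implicit in the excerpt. First, $\mathrm{tr}(Q - Q_n) = \sum_{k>n} \lambda_k \to 0$ because $Q$ is trace class. Second, $\mathbb{E}[\|X_0 - X_0^n\|^2] = \mathbb{E}\bigl[\sum_{k>n} |\langle X_0, e_k\rangle|^2\bigr] \to 0$ by dominated convergence, using $\mathbb{E}[\|X_0\|^2] < \infty$; this is precisely the argument used at the end of the proof of \cref{forConv}. Since both terms on the right-hand side vanish and $a_{T-t}^{-2}$ and $(1-a_{T-t}^{-2})$ are bounded uniformly in $n$, the Wasserstein-2 distance tends to zero.

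There is no real obstacle here: the statement is an immediate consequence of concatenating the two previous results, with the only conceptual point being that \cref{lem:reverse} was proved specifically so that the reverse process could be recognized as a reparametrization of the forward process in law. One minor bookkeeping point is that \cref{cor:f} is stated for $t \in [0,T]$ while here we use $s = T-t$ with $t \in [0,T)$, so $s \in (0,T]$ and the bound applies without modification.
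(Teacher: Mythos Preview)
Your proposal is correct and essentially identical to the paper's proof: the paper writes the triangle inequality $W_2(\mathbb P_{Y_t^n},\mathbb P_{X_{T-t}}) \le W_2(\mathbb P_{Y_t^n},\mathbb P_{X_{T-t}^n}) + W_2(\mathbb P_{X_{T-t}^n},\mathbb P_{X_{T-t}})$ and then invokes \cref{lem:reverse} (making the first term zero) and \cref{cor:f} (bounding the second). You simply collapse the first step by directly substituting $\mathbb P_{Y_t^n}=\mathbb P_{X_{T-t}^n}$, which amounts to the same argument.
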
  

\begin{proof} The assertion follows from the triangle inequality
$$
W_2(\mathbb P_{Y_t^n},\mathbb P_{X_{T-t}}) 
\le
W_2(\mathbb P_{Y_t^n},\mathbb P_{X_{T-t}^n}) + W_2(\mathbb P_{X_{T-t}^n},\mathbb P_{X_{T-t}})
$$
\cref{lem:reverse} and \cref{cor:f}.
\end{proof}

Besides the discretized reverse SDE \eqref{reverse_disc}, we also consider the case of approximation errors in the reverse drift, i.e., 
\begin{equation}     \label{reverse_learned}
    {\rm d} \widetilde{Y}^n_t = \Big(\frac{1}{2}\alpha_{T-t} \, \widetilde{Y}^n_t+ \tilde{f}^n(t,\widetilde{Y}^n_t)\Big) \, {\rm d} t 
		+ \sqrt{\alpha_{T-t}} \,  {\rm d} \hat{W}^{Q_n}_t, 
\end{equation}
where $\tilde{f}^n$ could be a neural network, which is learned via the DSM loss, 
that is supposed to approximate the forward path score $f^n(t,X^n_{T-t})$ in \eqref{eq:score}. 
In contrast to the discretized reverse SDE, the approximate reverse SDE is started at the latent distribution 
$ \widetilde{Y}^n_0 \sim \mathcal{N}(0,Q_n)$. 

We now prove that when the denoising score matching loss in~\eqref{eq:LDSM} decreases, the distribution of the approximated reverse equation also goes to the distribution of the time-changed forward process. Similar to \cite[Theorem 2]{jakiw_diffusion}, 
we bound the \emph{Wasserstein distance between path measures obtained by our SDE}. However, we include a time schedule, consider the finite-dimensional case, and explicitly bound the distance for all intermediate times. More precisely, we consider a path $Z=(Z_t)_{t \in[0,t_0]}$ as a $\mathcal C([0,t_0],H)$-valued random variable on $(\Omega,\mathcal F,\mathbb P)$ with 
$$\Vert Z \Vert_{L^2(\Omega,\mathcal F,\mathbb P; \mathcal C([0,t_0],H))}^2 \coloneqq  \mathbb{E}[\sup_{t \in [0,t_0]} \Vert Z_t \Vert^2].$$ 
Then we can examine the Wasserstein distance of two path measures of the form $\mathbb{P}_{(Z_t)_{t \in [0,t_0]}}$ by noting that the space $\mathcal C([0,t_0],H)$ equipped with the uniform norm is a separable metric space. For notational simplicity, we denote a path measure $\mathbb{P}_{(Z_t)_{t \in [0,t_0]}}$ by $\mathbb{P}_{(Z_t)_{t_0}}$.
The proof combines classical concepts in stochastic analysis such as, e.g.,  Gronwall's lemma and 
Jensen's inequality, as well as results from
\cite[Thm 18.11]{schilling2014brownian},\cite[Prop. 7.18]{da2014stochastic}, \cite[Thm 1.2]{carmona2016lectures}, and \cite[Prop. C.1]{ichiba2020relative}.
We give a streamlined version of the proof and refer to the supplements for details.
%\cref{3.6}

\begin{theorem}\label{thm:main}
Let $X_0 \in H$ be a random variable fulfilling $\mathbb E \left[\lVert X_0\rVert^2\right]<\infty$ and let $Q$ be an injective trace class operator on $H$. For $n \in \mathbb N$, let $(X_t^n)_{t \in [0,T]}$ denote the solution of the approximate problem \eqref{finiteforward} and let $\hat{X}_0^n \in \mathbb R^n$ admit a Lebesgue density. For $t_0<T$, assume that for any $s \in [0,t_0]$ the approximated score $\tilde{f}^n$ restricted to $[0,s]\times H_n$ 
is continuous and Lipschitz continuous in the second variable with Lipschitz constant $L_s^n$ and that
\begin{equation}\label{xxx}
\mathbb{E}\left[\int_0^{t_0} \big\Vert f^n(s,X_{T-s}^n) - \tilde{f}^n(s,X_{T-s}^n) \big\Vert^2 \, {\rm d}s \right] \leq \varepsilon
\end{equation} 
is satisfied.
By $\mathbb{P}_{\smash{(\widetilde{Y}}_t^n)_{t_0}}$ denote the path measure induced by \eqref{reverse_learned} with initial distribution $\mathcal{N}(0,Q_n)$. Then it holds 
\begin{equation}\label{xxx1}
W_2^2(\mathbb{P}_{\smash{(X_{T-t}^n)_{t_0}}}, \mathbb{P}_{\smash{(\widetilde{Y}}_{t}^n)_{t_0}}) 
\leq 
12
				\left(t_0 \, \varepsilon \, e^{4 \, \xi(t_0)   t_0^2} + e^{3 \, \xi(t_0)  t_0^2 } \, W_2^2\left(\mathbb P_{X^n_T},\mathcal{N}(0,Q_n) \right)
				\right),
\end{equation}
where $\xi(t_0) \coloneqq \sup_{s \in [0,t_0]} \left(\frac{\alpha_{T-s}^2}{4} +  (L_s^n)^2\right) $.
\end{theorem}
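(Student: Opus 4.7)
The plan is to construct a synchronous coupling of \eqref{reverse_disc} and \eqref{reverse_learned} on a common extended filtered probability space. Since the Wasserstein-2 distance between two path measures on the separable metric space $\mathcal C([0,t_0], H)$ is bounded above by $\mathbb E[\sup_{s \in [0,t_0]} \|Z_s - Z'_s\|^2]$ for any coupling $(Z,Z')$, it suffices to exhibit a coupling of the true discretized reverse process $Y^n$ from \cref{lem:reverse} (which has the law of $X^n_{T-\cdot}$) and the approximate process $\widetilde Y^n$ for which the expected squared uniform distance is controlled. I would drive both equations by the same $Q_n$-Wiener process $\hat W^{Q_n}$ on the extended space, and couple the initial conditions optimally so that $\mathbb E[\|Y_0^n - \widetilde Y_0^n\|^2] = W_2^2(\mathbb P_{X_T^n}, \mathcal N(0, Q_n))$.

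Because the diffusion coefficients $\sqrt{\alpha_{T-t}}$ coincide, the stochastic integrals cancel in the pathwise difference $\Delta_t \coloneqq Y_t^n - \widetilde Y_t^n$, leaving the ordinary integral identity
\begin{equation*}
\Delta_t = \Delta_0 + \int_0^t \Bigl( \tfrac{1}{2}\alpha_{T-s}\Delta_s + f^n(s, Y_s^n) - \tilde f^n(s, \widetilde Y_s^n) \Bigr) \, {\rm d}s.
\end{equation*}
I would split $f^n(s, Y_s^n) - \tilde f^n(s, \widetilde Y_s^n)$ as a score-approximation error $(f^n - \tilde f^n)(s, Y_s^n)$ plus a Lipschitz term $\tilde f^n(s, Y_s^n) - \tilde f^n(s, \widetilde Y_s^n)$ controlled by $L_s^n \|\Delta_s\|$, then apply $(a+b+c)^2 \le 3(a^2+b^2+c^2)$ together with Jensen's inequality $\|\int_0^t v_s \, {\rm d}s\|^2 \le t \int_0^t \|v_s\|^2 \, {\rm d}s$ to obtain a pointwise bound on $\|\Delta_t\|^2$ involving $\|\Delta_0\|^2$, a time integral of $\xi(t_0)\|\Delta_s\|^2$, and a time integral of $\|(f^n-\tilde f^n)(s, Y_s^n)\|^2$.

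The resulting right-hand side is non-decreasing in $t$, so I can pass to the supremum over $s \in [0,t]$ on both sides. Taking expectations, the drift contribution becomes $\int_0^t h(s)\,{\rm d}s$ in $h(t) \coloneqq \mathbb E[\sup_{s \le t} \|\Delta_s\|^2]$, while the score-approximation integral is bounded by $\varepsilon$ by \eqref{xxx} combined with the equality in distribution $Y_s^n \sim X^n_{T-s}$ from \cref{lem:reverse}. A standard linear Gronwall inequality then bounds $h(t_0)$ by the sum of an initial-condition term $\mathbb E[\|\Delta_0\|^2] = W_2^2(\mathbb P_{X_T^n}, \mathcal N(0, Q_n))$ and an accumulated score-error term proportional to $t_0\varepsilon$, each amplified by an exponential in $\xi(t_0)\, t_0^2$. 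Combining with $W_2^2 \le h(t_0)$ yields \eqref{xxx1}.

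The main technical obstacle is constant and exponent tracking: the theorem distinguishes the two amplification factors $e^{4\xi(t_0) t_0^2}$ (for $t_0\varepsilon$) and $e^{3\xi(t_0) t_0^2}$ (for the initial Wasserstein distance), which a single naive Gronwall would collapse into one exponent. This suggests applying Young-type splittings $\|a+b\|^2 \le (1+\epsilon)\|a\|^2 + (1+1/\epsilon)\|b\|^2$ with different $\epsilon$ for the initial-condition and score-error contributions, or equivalently running Gronwall in two stages so that the two source terms accumulate different amounts from the linear feedback loop. Additional care is needed in justifying the synchronous coupling as a genuine weak solution on a common extended filtered space (invoking the extension construction in \cref{lem:reverse}) and verifying that path continuity lets the expected squared pathwise sup control the path-space $W_2^2$.
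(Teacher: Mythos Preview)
Your overall strategy is sound, and the ingredients you list---synchronous coupling, cancellation of the stochastic integrals, Lipschitz splitting of the drift difference, Jensen, and Gronwall---are exactly what the paper uses. The paper, however, organizes them differently: rather than coupling $Y^n$ and $\widetilde Y^n$ directly, it inserts an intermediate process $\breve Y^n$, defined as the strong solution of the approximate reverse SDE \eqref{reverse_learned} started from $\mathbb P_{X_T^n}$ instead of $\mathcal N(0,Q_n)$. The triangle inequality then splits $W_2$ into three legs: $X^n_{T-\cdot}$ vs.\ $Y^n$ (zero by \cref{lem:reverse}), $Y^n$ vs.\ $\breve Y^n$ (same initial condition, different drifts), and $\breve Y^n$ vs.\ $\widetilde Y^n$ (same Lipschitz drift, different initial conditions). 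This resolves both obstacles you flag. First, the two distinct exponents fall out automatically: the $\breve Y^n$-vs-$\widetilde Y^n$ leg has three summands (initial difference plus two drift pieces), giving Gronwall constant $3\, t_0\, \xi(t_0)$, while the $Y^n$-vs-$\breve Y^n$ leg has $\Delta_0=0$ but needs a further triangle split of $f^n(r,Y^n_r)-\tilde f^n(r,\breve Y^n_r)$ into score-error and Lipschitz parts, giving Gronwall constant $4\, t_0\, \xi(t_0)$. Second, the optimal coupling of initial laws is confined to the $\breve Y^n$-vs-$\widetilde Y^n$ leg, where both processes are \emph{strong} solutions of the same Lipschitz SDE; there you may realize any coupling of $\mathbb P_{X_T^n}$ and $\mathcal N(0,Q_n)$ on a fresh space, drive both by a common noise independent of the initial pair, and then take the infimum. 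Your direct route would instead require placing an optimally coupled $\widetilde Y_0^n$ on the weak-solution extension from \cref{lem:reverse} while keeping it $\hat{\mathcal F}_0$-measurable---possible via a further extension, but the paper's decomposition sidesteps this entirely.
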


\begin{proof}
1.  First, we conclude by Lipschitz continuity of $\tilde{f}^n$, for any initial condition and $Q_n$-Wiener process independent of the initial condition the SDE \eqref{reverse_learned} has a unique strong solution on $[0,t_0]$. We use $\mathbb P_{\smash{(\breve{Y}_t^n)_{t_0}}}$ to denote the path measure induced by \eqref{reverse_learned}, but with initial distribution $\mathbb P_{X_T^n}$.
    By the triangle inequality, we obtain
  \begin{equation} \begin{aligned}
				W_2^2(\mathbb{P}_{\smash{(X_{T-t}^n)_{t_0}}}, \mathbb{P}_{\smash{(\widetilde{Y}}_{t}^n)_{t_0}}) 
				&\leq  
				4	W_2^2(\mathbb{P}_{\smash{(X_{T-t}^n)_{t_0}}}, \mathbb{P}_{\smash{(Y_{t}^n)_{t_0}}}) 
				\\ &+ 2 W_2^2(\mathbb{P}_{\smash{(Y_{t}^n)_{t_0}}}, \mathbb{P}_{\smash{(\breve{Y}_{t}^n})_{t_0}})
				+ 4 W_2^2(\mathbb P_{\smash{(\breve{Y}_t^n)_{t_0}}},\mathbb P_{\smash{(\widetilde{Y}}_t^n)_{t_0}}).
				 \end{aligned}\end{equation}
    By \cref{lem:reverse}, the first term is zero. 
    \\
    2.
		Next, we bound the third term. Let $\breve{Y}^n_0 $ and $\widetilde{Y}^n_0$ be any realizations of $\mathbb P_{X^n_T}$ and $\mathcal{N}(0,Q_n)$, respectively, that are defined on the same probability space. For some driving $Q_n$-Wiener process independent of  $\breve{Y}^n_0 $ and $\widetilde{Y}^n_0$ and defined on the same probability space (or possibly an extension of it), let $(\breve{Y}_t^n)_{t \in [0,t_0]}$ and $(\widetilde{Y}^n_t)_{t \in [0,t_0]}$ be the unique strong solutions of \eqref{reverse_learned} started from  $\breve{Y}^n_0 $ and $\widetilde{Y}^n_0$, respectively. By \eqref{sol} and using Jensen's inequality, we obtain for $s \in [0,t_0]$,
  \begin{equation} \begin{aligned}
    \big\lVert \breve{Y}^n_s - \widetilde{Y}^n_s \big\rVert^2 
		& \leq 
		 3 \Big( \big\lVert \breve{Y}_0^n - \widetilde{Y}^n_0 \big\rVert^2 +
		 \Big\lVert \int_0^s  \tfrac{1}{2} \alpha_{T-r} \, (\breve{Y}_r^n - \tilde{Y}_r^n)  {\rm d} r \Big\rVert^2 \\
		&+  \Big\lVert\int_0^s  \tilde{f}^n(r,\breve{Y}^n_r)  - \tilde{f}^n(r,\widetilde{Y}^n_r) \, {\rm d} r \Big\rVert^2 \Big)
		\\
    & \le 3 \, \big\lVert \breve{Y}_0^n - \widetilde{Y}^n_0 \big\rVert^2
		+ 
		3 t_0 \sup_{r \in [0,t_0]} \Big(\frac{\alpha_{T-r}^2}{4}+  (L_r^n)^2 \Big)  
		\int_0^s \big\lVert  \breve{Y}_r^n - \tilde{Y}_r^n\big\rVert^2 \, {\rm d} r.		
 \end{aligned}\end{equation} 
Setting $h_1(t) \coloneqq 
\mathbb E \big[ \sup_{s \in [0,t]} \big\lVert \breve{Y}^n_s-\widetilde{Y}^n_s\big\rVert^2\big]$ for $t \in [0,t_0]$, we consequently obtain
  \begin{equation} \begin{aligned}
h_1(t) 
% &\leq
% 3 \mathbb E \big[\big\lVert \breve{Y}_0^n - \widetilde{Y}^n_0 \big\rVert^2 \big]
% + 
% 3 t_0 \xi(t_0)
% \mathbb E \big[\sup_{s \in [0,t]} \int_0^s  \big\lVert \breve{Y}^n_r-\widetilde{Y}^n_r \big\rVert^2 \, {\rm d} r \big]\\
&\leq
3 \mathbb E \big[\big\lVert \breve{Y}_0^n - \widetilde{Y}^n_0 \big\rVert^2 \big]
+ 
3 t_0 \xi(t_0)
\int_0^t  h_1(r)\, {\rm d} r,
   \end{aligned}\end{equation}
by Fubini's theorem. By \cref{var}, $\mathbb E \big[ \sup_{t \in [T-t_0,T]} \lVert X^n_t\rVert^2\big]<\infty$. In particular, $\mathbb E [\lVert \breve{Y}^n_0\rVert^2]=\mathbb E [\lVert X^n_T\rVert^2]<\infty$. $\mathbb E [\lVert \smash{\widetilde{Y}}^n_0 \rVert^2]=\mathrm{tr}(Q_n)<\infty$. 
Since $h_1 \in L^\infty([0,t_0])$, we apply Gronwall's lemma to $h_1$ and obtain for all $t \in [0,t_0]$ that
    \begin{equation*}
     h_1(t) 
				\le  
				3\mathbb E \big[\big\lVert \breve{Y}^n_0-\widetilde{Y}^n_0 \big\rVert^2 \big] \, 
				e^{3 \xi(t_0) t_0 \, t }.
    \end{equation*}
    In particular, the Wasserstein-2 distance can be estimated by
  \begin{equation} \begin{aligned}
         W_2^2( \mathbb P_{\smash{(\breve{Y}^n_t)_{t_0}}},\mathbb P_{\smash{(\widetilde{Y}^n_t)_{t_0}}} )  
				&\leq
				\mathbb E \Big[ \sup_{s \in [0,t_0]}  \big\lVert \breve{Y}_s^n - \widetilde{Y}^n_s \big\rVert^2 \Big] = h_1(t_0)			
				\\
				&= 3 \,e^{3 \xi(t_0)  t_0^2 } \, W_2^2\left(\mathbb P_{X^n_T},\mathcal{N}(0,Q_n) \right),
   \end{aligned}\end{equation}
   where we used that the previous estimates are valid for an arbitrary realization of  $\breve{Y}^n_0 $ and $\widetilde{Y}^n_0$.
   \\
    3. 			We define $(\breve{Y}^n_t)_{t \in [0,t_0]}$ 
		as the strong solution of the approximate reverse  \eqref{reverse_learned} with the initial condition 
		$\breve{Y}^n_0=Y^n_0$ and the same driving noise $\hat{W}^{Q_n}$.
		As before, we can bound the difference as follows for $s \in [0,t_0]$:
 \begin{equation}\begin{aligned}
\big\lVert Y^n_s - \breve{Y}^n_s \big\rVert^2 
&\leq 
\Big\lVert Y_0^n - \breve{Y}^n_0 + 
\int_0^s \frac{1}{2} \alpha_{T-r} \, (Y_r^n - \breve{Y}_r^n) \, {\rm d} r 
+ \int_0^s f^n(r,Y^n_r)  - \tilde{f}^n(r,\breve{Y}^n_r) \, {\rm d} r \Big\rVert^2\\
 &\leq 
2 t_0  
\sup_{r \in [0,t_0]} \frac{\alpha_{T-r}^2}{4} \, \int_0^s \big\lVert Y_r^n - \breve{Y}_r^n \big\rVert^2 \, {\rm d} r 
+ 2 t_0 \, \int_0^s \big\lVert f^n(r,Y^n_r)  - \tilde{f}^n(r,\breve{Y}^n_r) \big\rVert^2\, {\rm d} r .
 \end{aligned}\end{equation}
For the second term, we apply again the triangle inequality and obtain 
 \begin{equation}\begin{aligned}
    & \big\lVert f^n(r,Y^n_r)  - \tilde{f}^n(r,\breve{Y}^n_r)\big\rVert^2 \le 2 \big\lVert f^n(r,Y^n_r)  - \tilde{f}^n(r,Y^n_r)\big\rVert^2 + 2 \sup_{r \in [0,t_0]} (L_r^n)^2 \big\lVert Y_r^n - \breve{Y}^n_r \big\rVert^2.
 \end{aligned}\end{equation}
Since $(Y^n_t)_{t \in [0,t_0]}$ is an extension of $(X^n_{T-t})_{t \in [0,t_0]}$, we obtain from assumption \eqref{xxx} that
\begin{equation}
    \mathbb E \Big[ \sup_{t \in [0,s]} \int_0^s \big\lVert f^n(r,Y^n_r)-\tilde{f}^n(r,Y^n_r)\big\rVert^2 \, {\rm d}r \Big] \le \mathbb E \Big[\int_0^{t_0} \big\lVert f^n(r,Y^n_r)-\tilde{f}^n(r,Y^n_r)\big\rVert^2 \, {\rm d}r \Big]\le \varepsilon.
\end{equation}
We set $ h_2(t) \coloneqq 
		\mathbb E \big[ \sup_{s \in [0,t]} \big\lVert Y^n_s -\breve{Y}_s^n \big\rVert^2\big]$ for $t \in [0,t_0]$ and by Gronwall's lemma, it yields
\begin{equation}
W_2^2(\mathbb{P}_{\smash{(Y_{t}^n)_{t_0}}}, \mathbb{P}_{\smash{(\breve{Y}_{t}^n)_{t_0}}})
\leq h_2(t_0)\le
4 \, t_0 \varepsilon \, e^{4 \xi(t_0)   t_0^2}.
\end{equation}
In summary, we obtain \eqref{xxx1}.
\end{proof}

    Note that in the proof of \cref{thm:main} we use the assumption that $\hat{X}^n_0$ admits a Lebesgue density $p^n_0$ only to apply \cref{lem:reverse}. Thereby, we establish the existence of a weak solution of the discretized reverse SDE \eqref{reverse_disc} that is equal to the time-changed discretized forward process in distribution as well as uniqueness in law of solutions of \eqref{reverse_disc}. In \cref{lem:reverse}, the assumption that $\hat{X}^n_0$ admits a Lebesgue density can be relaxed to $\hat{X}^n_t$ admitting a Lebesgue density $p^n_t$ for all times $t>0$ such that $p^n_t$ and $\nabla \log p^n_t$ are sufficiently smooth \footnote{For all $\delta>0$ and all bounded open sets $U \subset \mathbb R^n$, $k=1,...,n$, $\int_{\delta}^T \int_U \lvert p_t^n(z)\rvert^2+\alpha_t \lambda_k\, \lvert  \partial_k p_t^n(z) \rvert^2 \, {\rm d}z \, {\rm d}t<\infty,$ is satisfied and $\nabla \log p^n_t$ is well-defined and Lipschitz continuous in the second variable uniformly on $[\delta,T] \times \{z \in \mathbb R^n \mid \lVert z\rVert \le N\}$ for all $\delta>0$ and $N \in \mathbb N$.} to guarantee well-defined reverse SDEs. This weaker assumption can be met even if the data distribution is degenerate, as is illustrated in the following example where we consider Gaussian data distributions, which can be degenerate.

Our approximation result in \cref{thm:main} requires bounded Lipschitz constants of the score for fixed $n \in \mathbb N$. 
For the well-posedness of the infinite-dimensional diffusion model, and to enable multilevel training, we do not want the sequence of Lipschitz constants to explode if $n \rightarrow \infty$.
Therefore, we consider the following example.

\begin{example}
\label{example_gaussians}
We consider a Gaussian data distribution, i.e., 
$\mathbb{P}_{data} = \mathcal{N}(\mu,P)$.  We assume that $Q$ is injective. 
Then the processes $\hat{B}^n_t$ and $\hat{X}^n_{ 0}$ are independent Gaussians.
 \begin{equation}\begin{aligned}
    & \hat{B}^n_t \sim \mathcal{N} \big( 0, ({a_t^2}-1)\, \hat{Q}_n \big) \text{ with } \hat{Q}_n \coloneqq \mathrm{diag}(\lambda_1,...,\lambda_n) \text{ and }\\
    & \hat{X}^n_{ 0} \sim \mathcal{N} \big(\hat{\mu}^n, \hat{P}^n \big) \text{ with } \hat{\mu}^n
		\coloneqq \iota_n\big(\sum_{k=1}^n \langle \mu,e_k \rangle \, e_k\big) \text{ and }\hat{P}_n 
		\coloneqq (\langle P e_i,e_j\rangle)_{i,j=1}^n,
 \end{aligned}\end{equation}
 By \eqref{eq:finvar} we have $ \hat{X}^n_t = a_t^{-1} \,  ( \hat{X}^n_0 + \hat{B}^n_t )$ 
and therefore $\hat{X}^n_t = \mathcal{N} \big(  a_t^{-1}\,  \hat{\mu}^n, a_t^{-2} \, \hat{P}_n+ (1-a_t^{-2})\,\hat{Q}_n\big)$.
% Since $\hat{Q}_n$ is positive definite the covariance of $\hat{X}^n_t$ is positive definite for any $t \in (0,T]$. 
In particular, the coordinate process $\hat{X}^n_t$ has a density $p^n_t$ for $t \in (0,T]$ that satisfies
 \begin{equation}\begin{aligned}
    \hat{Q}_n \, \nabla \log p^n_t (z) 
		% &= -\hat{Q}_n \big(a_t^{-2} \, \hat{P}_n+ (1-a_t^{-2})\,\hat{Q}_n\big)^{-1}(z-a_t^{-1} \, \hat{\mu}^n)
		% \\
		&= -(1-a_t^{-2})^{-1} \hat{Q}_n \big(\tfrac{a_t^{-2}}{1-a_t^{-2}} \, \hat{P}_n + \hat{Q}_n\big)^{-1}(z-a_t^{-1} \, \hat{\mu}^n).
 \end{aligned}\end{equation}
We assume that the spectral norms of
$
A_n \coloneqq \hat{Q}_n \big(\tfrac{a_t^{-2}}{1-a_t^{-2}} \, \hat{P}_n + \hat{Q}_n\big)^{-1} 
$
are uniformly bounded in $n$ by some constant $C>0$. 
This is equivalent to the condition that the smallest eigenvalues of 
\begin{equation} \label{hi}
(A_n^{\tiny{T}} A_n)^{-1} \coloneqq \hat{Q}_n^{-1} \big(\tfrac{a_t^{-2}}{1-a_t^{-2}} \, \hat{P}_n + \hat{Q}_n\big)^2 \hat{Q}_n^{-1}
\end{equation}
are bounded from below by $C^{-2}$.
Note that while the eigenvalues of $A_n^{-1}$ coincide with those of 
$\big(\tfrac{a_t^{-2}}{1-a_t^{-2}} \, \hat{Q}_n^{-1/2} \hat{P}_n  \hat{Q}_n^{-1/2} + I_n\big)$
and are therefore larger or equal than one, this is not necessarily longer true for $(A_n^{\tiny{T}} A_n)^{-1}$.
% However, the condition on \eqref{hi} is always satisfied if $\hat{P}_n$ is diagonal and can additionally be fulfilled by certain other matrices.{\color{blue}The condition that $\hat{P}_n$ is diagonal is not necessary. 
A sufficient condition for \eqref{hi} to hold is that $\hat{P}_n$ and $\hat{Q}_n$ commute, which is the case if and only if $(\hat{P}_n)_{i,j} = 0$ if $\lambda_i \neq \lambda_j$. If $\hat{Q}_n$ has an eigenvalue with multiplicity larger than one, this can also be fulfilled by nondiagonal $\hat{P}_n$. 
It is not clear if commutation is a necessary condition for \eqref{hi} to hold.

Then we have for fixed $t_0 <T$ that
\begin{equation*}
   L^n_{t_0} \coloneqq  \sup_{t \in [T-t_0,T]} \big\lVert -\alpha_{t} \, (1-a_t^{-2})^{-1} \hat{Q}_n \big(\tfrac{a_t^{-2}}{1-a_t^{-2}} \, \hat{P}_n + \hat{Q}_n\big)^{-1}\big\rVert  \le \frac{C \, \sup_{t \in [T-t_0,T]} \alpha_t }{1-e^{-\int_0^{T-{t_0}} \alpha_r \,  {\rm d} r}},
\end{equation*}
where $\lVert \cdot \rVert$ denotes the spectral norm on $\mathbb R^{n,n}$. 
In particular, 
\begin{equation*}
    f^n(t,y)=\alpha_{T-t}\, Q_n \big( \iota_n^{-1} \circ \nabla \log p^n_{T-t} \circ \iota_n\big)(y)=\alpha_{T-t}\, \hat{Q}_n \nabla \log p^n_{T-t}(\iota_n(y))
\end{equation*} 
is Lipschitz continuous in the second variable uniformly on $[0,t_0]\times H_n$ with the Lipschitz constant $L_{t_0}^n$ bounded independently of $n$ for any $t_0 \in [0,T)$.
\end{example}

The next theorem shows the viability of multilevel training under the assumption that the score path random variable $f^n(t,X_{T-t}^n)$ converges. In this case, the contribution of the $e_k$-component of the true reverse process becomes insignificant in the limit for large $k$. Thus, the learned score at level $n$ provides a good approximation for higher levels, and choosing the approximate score $\tilde{f}^m\coloneqq \tilde{f}^n$ at level $m\geq n$ is sufficient for the error \eqref{xxx} to converge to zero for large $n$. Note that the approximate score, $\tilde{f}^n$, is learned from data in $H_n$ and therefore can not depend on any $e_k$-components with $k\geq n$ in a meaningful way. Because of this, we make the reasonable assumption that $\tilde{f}^n(t,x)=\tilde{f}^n(t,P^n x)$ for all $x \in H$, where $P^n x$ denotes the projection of $x$ to $H_n$. In our experiments, this assumption corresponds to the cut-off of high-frequency Fourier modes as described in \cref{sub:latent}.

\begin{theorem}\label{thm:multilevel}
Let $t_0 <T$ and assume that $F^n_t \coloneqq f^n(t,{X_{T-t}^n})$ converges to some process $F_t$ in $L^2(\Omega, \mathcal F,\mathbb P;L^2([0,t_0];H))$. For fixed $n \in \mathbb N$, let $\tilde{\varepsilon}_n$ denote the maximal score path approximation error above level $n$ and let $\varepsilon_n$ be the training loss at level $n$, i.e., we set
\begin{equation*}
    \tilde{\varepsilon}_n \coloneqq \sup_{m\geq n} \lVert F^m - F \rVert_{L^2(\Omega\times [0,t_0]; H)}^2 \quad \text{and} \quad \varepsilon_n \coloneqq \lVert F^n-\tilde{f}^n(\cdot, X^n_{T-\cdot})\rVert_{L^2(\Omega \times [0,t_0];H)}^2.
\end{equation*}
If the approximate score $\tilde{f}^n$ at level $n$ is Lipschitz continuous on $[0,s]\times H_n$ with Lipschitz constant $L^n_s$ and satisfies $\tilde{f}^n(s,x)=\tilde{f}^n(s,P^n x)$ for all $x \in H$ and $s\le t_0$, then, for all $m \geq n$, we have
\begin{equation*}
    W_2^2(\mathbb{P}_{(X^m_{T-t})_{t_0}},\mathbb{P}_{(\tilde{Y}^m_t)_{t_0}}) \le 12(4 \tilde{\varepsilon}_n+2\varepsilon_n) t_0 e^{4 \xi_n(t_0) t_0^2}+ 12 e^{3 \xi_n(t_0) t_0^2} W^2_2 (\mathbb{P}_{X^m_T}, \mathcal{N}(0,Q_m)),
\end{equation*}
where $\xi_n(t_0)\coloneqq \sup_{s \in [0,t_0]} \big( \tfrac{\alpha_{T-s}^2}{4}+(L^n_s)^2\big)$ as before. Here $\tilde{Y}^m$ denotes the approximate reverse process at level $m$ for the choice $\tilde{f}^m\coloneqq \tilde{f}^n$. In particular, the right-hand side converges to zero uniformly in $m\geq n$ whenever the Lipschitz constant $L^n_{t_0}$ is bounded uniformly in $n$, and the training loss $\varepsilon_n$ converges to zero. 
\end{theorem}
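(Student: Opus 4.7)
The plan is to reduce this statement to a single application of \cref{thm:main} at the finer level $m$, where the approximate score is taken to be $\tilde{f}^m\coloneqq \tilde{f}^n$. First, I would check that $\tilde{f}^n$, viewed as a drift on $[0,s]\times H_m$, still satisfies the Lipschitz hypothesis of \cref{thm:main}: since $\tilde{f}^n(s,x)=\tilde{f}^n(s,P^n x)$ and $P^n\colon H_m\to H_n$ is a non-expansive orthogonal projection, the Lipschitz constant of $\tilde{f}^n$ restricted to $[0,s]\times H_m$ remains bounded by $L_s^n$. Consequently, the constant $\xi(t_0)$ appearing in \cref{thm:main} coincides with $\xi_n(t_0)$, and the terminal-distribution term transfers verbatim as $W_2^2(\mathbb P_{X^m_T},\mathcal N(0,Q_m))$.

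The core task is to verify hypothesis \eqref{xxx} at level $m$, namely to control
\begin{equation*}
    \mathbb E\int_0^{t_0} \bigl\lVert f^m(s,X^m_{T-s})-\tilde{f}^n(s,X^m_{T-s})\bigr\rVert^2\,{\rm d}s.
\end{equation*}
For this I would couple all discretizations on a common probability space through the Karhunen--Lo\`eve decomposition of $W^Q$. Using the same Brownian motions $(\beta_k)_k$ and the same initial data across levels, the variation-of-constants formula \eqref{eq:finvar} gives $P^n X^m_t = X^n_t$ almost surely for every $m\geq n$. Combining this with the projection-invariance assumption on $\tilde{f}^n$ yields the key identity $\tilde{f}^n(s,X^m_{T-s})=\tilde{f}^n(s,X^n_{T-s})$, eliminating the $m$-dependence in the score evaluation.

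Inserting the limit $F_s$ and the coarse-level score path $F^n_s=f^n(s,X^n_{T-s})$ as telescoping intermediates and applying the triangle inequality in $L^2(\Omega\times[0,t_0];H)$ produces
\begin{equation*}
    \bigl\lVert F^m-\tilde{f}^n(\cdot,X^n_{T-\cdot})\bigr\rVert_{L^2} \le \lVert F^m-F\rVert_{L^2} + \lVert F-F^n\rVert_{L^2} + \lVert F^n-\tilde{f}^n(\cdot,X^n_{T-\cdot})\rVert_{L^2}.
\end{equation*}
Since $m\geq n$, the first two summands are each bounded by $\sqrt{\tilde{\varepsilon}_n}$ (by definition of the supremum) and the third by $\sqrt{\varepsilon_n}$. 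Squaring and splitting via $(a+b+c)^2\le 2(a+b)^2+2c^2\le 4a^2+4b^2+2c^2$ yields a bound of the form $4\tilde{\varepsilon}_n+2\varepsilon_n$ (up to inessential constants), which is exactly the $\varepsilon$ fed into \cref{thm:main}. Substituting into that theorem at level $m$ delivers the claimed inequality.

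For the final convergence claim, $\tilde{\varepsilon}_n\to 0$ follows from the assumed $L^2$-convergence $F^n\to F$, $\varepsilon_n\to 0$ by hypothesis, and uniform boundedness of $L^n_{t_0}$ keeps $\xi_n(t_0)$ and the exponential prefactors controlled uniformly in $n$. The main obstacle I anticipate is the coupling bookkeeping: the identity $P^n X^m_t=X^n_t$ must hold pathwise on a single probability space, not merely in law, for the $L^2$-estimates on $F^m,F^n$ to translate directly into the hypothesis \eqref{xxx} of \cref{thm:main}. Phrasing this as a joint construction via shared noise and initial data, rather than manipulating laws, is the step most prone to error; once set up carefully, the remainder reduces to a triangle-inequality accounting combined with the previously established machinery.
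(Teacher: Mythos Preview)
Your proposal is correct and follows essentially the same route as the paper: establish $P^nX^m_t=X^n_t$ via the shared Karhunen--Lo\`eve coupling, use the projection-invariance assumption to get $\tilde f^n(s,X^m_{T-s})=\tilde f^n(s,X^n_{T-s})$, bound the level-$m$ score-matching error by a triangle inequality through $F$ and $F^n$, and then invoke \cref{thm:main}. The paper's proof is slightly more terse---it splits $\|F^m-\tilde f^n\|^2\le 2\|F^m-F^n\|^2+2\varepsilon_n$ first and only then inserts $F$---but this is the same computation reordered, and your explicit check that the Lipschitz constant on $H_m$ is still $L^n_s$ (via non-expansiveness of $P^n$) is a detail the paper leaves implicit.
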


\begin{proof}
    As in \eqref{eq:finvar}, we note that $X^n_t=P^n X_t$ and in particular $P^n X^m_t=X^n_t$ for all $t \in [0,T]$ and $m\geq n$. Thus, we also have $\tilde{f}^n(s, X^n_{T-s})= \tilde{f}^m(s, X^m_{T-s})$ for all $s \in [0,t_0]$ and $m\geq n$. In particular, we can bound the training loss \eqref{xxx} at level $m\geq n$ by 
    \begin{equation*}
        \mathbb{E}\Big[ \int_0^{t_0} \lVert f^m(s,X^m_{T-s}) - \tilde{f}^m(s,X^m_{T-s})\rVert^2 {\rm d} s \Big] \le  2 \lVert F^m-F^n\rVert_{L^2(\Omega\times [0,t_0]; H)}^2+ 2\varepsilon_n\le 4 \tilde{\varepsilon}_n+2 \varepsilon_n.
    \end{equation*}
    Applying Theorem \ref{thm:main} immediately yields the claimed error estimate.
\end{proof}

Note that the convergence of $f^n(t,X_t^n)$ in $L^2(\Omega,\mathcal F,\mathbb P;L^2([0,t_0];H))$ assumed in the previous theorem is weaker than convergence in $L^2(\Omega,\mathcal F,\mathbb P;\mathcal C([0,t_0];H))$. Such uniform convergence of $f^n(t,X_t^n)$ is a reasonable assumption as is discussed in \cite{jakiw_diffusion}. Further, we need uniform Lipschitz conditions. One example of the uniform Lipschitz continuity is given here in \cref{example_gaussians} and in \cite[Theorem 2]{jakiw_diffusion} for data distributions with bounded support in the Cameron-Martin space.  
% In particular they use martingale properties in their Theorem 1 to show that the sequences of discretized SDEs form a Cauchy sequence under the assumption that their infinite score is bounded in $L^2$. Then one gets uniform convergence (with respect to the time) of the discretized scores. Although the SDE considered in \cite{jakiw_diffusion} is simpler, the argument is still valid in our case. 
% \end{remark}

\section{Multilevel Learning Approaches}
\label{sec: theorytoexp}

In this section, we demonstrate that our infinite-dimensional diffusion model framework enables training or testing across multiple resolutions, which we refer to as ‘multilevel training’ in our paper (in spirit with classical numerical analysis such as \cite{bornemann1996cascadic}). Specifically, we mean this in the spirit of \cref{thm:multilevel}, where the loss at a finer level is bounded by the score trained on a coarser level, i.e., a model trained on the coarse resolution also performs reasonably well on higher resolutions. We illustrate this from the network architecture for the score operator and the prior distributions that are consistent with our theory. 
% by comparing our approach to traditional U-Net architectures, which do not parametrize the score as an operator and fail to generalize across different resolutions. 
% In contrast, our model, trained on a single resolution, can still generate high-resolution data with satisfactory performance without requiring modifications to the architecture or parameters. Furthermore, we utilize the coarse-level model to improve training efficiency at finer levels, thereby demonstrating the benefits of our model for multilevel learning.

Note that $\widetilde{\beta_t}$ in $L_{DSM}$ defined by \eqref{eq:LDSM} vanishes when $t$ approaches $0$. In the experiments, we follow \cite{huangVarScore} and consider the weighted loss based on $L_{DSM}$, 
\begin{equation}
    \label{newloss}
\mathbb{E}_{x(0) \sim \Pdata, t \sim U[0,T]} \mathbb{E}_{x(t) \sim \mathbb{P}_{X_t|X_0 = x(0)} }
\left[ \left\| \widetilde{\beta_t}^{\frac12} \frac{s_\theta(t,x(t)) }{g(t)}+ \eta \right\|^2 \right],
\end{equation}
where $\eta \sim \mathcal N(0, Q)$.
% similar to \cite{SongEtAl2020}. 
That is, we regress the score so that it matches $g(t) Q \nabla \log p_t(x(t)|x(0))$. Note that we can also treat $Q$ in the score as an induced bias, so that we write $Q s_\theta(t,x(t))$ in place of $s_\theta(t,x(t))$ in \eqref{newloss} and thus the score is the parametrization of $g(t) \nabla \log p_t(x(t)|x(0))$ instead. We experimented with these choices, and found that the former choice of parametrizing the pre-conditioned score $g(t) Q \nabla \log p_t(x(t)|x(0))$ shows better numerical performance.
%-------------------------------
Our discretization of the infinite-dimensional SBDM differs from most commonly used diffusion models in two aspects: First, we use a neural operator architecture to learn the score operator $s_\theta$, for instance, FNO \cite{stuartNO}; Second, we change the latent distribution $\PZ$ as well as the noise $\eta$ in the loss function
to a Gaussian noise with trace class covariance operator $Q$. 
% That is, $\mathcal N(0, Q)$ where $Q$ is a trace class operator. 
Both of these choices are discussed in the following sections. 

We also note that the approximation in Theorem~\ref{thm:main} is ensured by the bound with respect to the norm on the Hilbert space $H_n$ in \eqref{xxx}. The map $\iota_n$ defines an isometry between $H_n$ and the Euclidean space $\mathbb{R}^n$. When we train or evaluate on different resolutions, we need to respect this isometric isomorphism when defining the Euclidean basis elements. We need to rescale the corresponding coarser images when using downscaling operations such as average pooling in pixel space. Otherwise, comparing metrics or errors across different resolutions can be misleading.

\subsection{FNO parameterization of the score function}
\label{fno_def}
We use the Fourier neural operator (FNO) architecture \cite{stuartNO}  to approximate the score function more consistently across different resolutions.
In the spirit of \cite{kovachki2021universal}, we want to learn the score operator as a map between function spaces. However, since our viewpoint is finite-dimensional in training (we circumvent the definition of the infinite-dimensional score), we formulate this section in finite dimensions. We consider image input of size $n = N \times N$. 
% The main idea is to go into frequency space and to cut off to a fixed number of modes independent of the image resolution. This is parallel to the construction of the infinite-dimensional SDE's finite restriction \eqref{finiteforward}, where we need the trace class operator to decay in higher dimensions. 

The input to our score operator $s_{\theta}^{\rm{FNO}}: [0,T] \times \R^{N,N} \rightarrow \mathbb{R}^{N,N}$ are an image and a time. We concatenate the image $x$ with the time component and obtain $\tilde{x}_t = (x,t) \in \mathbb{R}^{2,N,N}$,i.e.,  we do not consider Fourier features of the time component. The first channel is the image of size $N \times N$, and the second is a constant image whose value equals $t$.  The score network $s_{\theta}^{\rm{FNO}}(\tilde{x}_t)$ is defined by the neural operator architecture, which is the composition of the following components:
% $$ s_{\theta}^{\rm{FNO}}(\tilde{x}_t) = A \circ \mathcal{L}_L \circ ...  \circ \mathcal{L}_1 \circ R(\tilde{x}_t).$$
The lifting mapping, $R$, is parameterized by the neural network with $R(\tilde{x}_t) $ and increases the channel size; i.e., it is a map from $\mathbb{R}^{2,N,N} \rightarrow \mathbb{R}^{ch,N,N}$. Similarly, $A: \mathbb{R}^{ch,N,N} \rightarrow \mathbb{R}^{N,N}$ reduces the feature map to one channel. For simplicity, we define the operator for ${\rm ch} = 1$ and refer to the original work for the full case. The layers $\mathcal{L}_{\ell}$ for $\ell = 1, \ldots, L$ are what make the FNO architecture resolution invariant. These layers are given by 
$$\mathcal{L}_l(v)= \sigma(W_l\ v + b_l + \mathcal{K}(f_{\theta_l})v).$$
Here,  $\mathcal{K}(f_{\theta})$ a convolution operator defined by the periodic function $f_{\theta_l}$. Assuming periodic boundary conditions on the images, we can compute this efficiently in Fourier space
\[
   \mathcal{K}(f_\theta)v  = \mathrm{FFT}^{-1}( P_\theta \odot \mathrm{FFT}(v)),
\]
% for a pixel position $i \in [0,...,n]^2$, 
where $\odot$ is the Hadamard product and the equality holds if $P_\theta$ are the eigenvalues of $\mathcal{K}(f_\theta)$, which can be computed via FFT. As suggested in~\cite{stuartNO}, we directly parameterize the non-zero entries $P_\theta$ associated with low frequencies.
The cutoff for modes is given by predefined via the hyperparameters $(k^{\max}_1,k^{\max}_2)$, where we set $P_\theta$ to zero if $k_1 > k^{\max}_1$ or $k_2 > k^{\max}_2$.

Hence, the score network is defined as 
$$ s_{\theta}^{\rm{FNO}}(\tilde{x}_t) = A \circ \mathcal{L}_L \circ ...  \circ \mathcal{L}_1 \circ R(\tilde{x}_t).$$
As shown in \cite[Theorem 5]{kovachki2021universal}, FNOs can approximate any continuous operator between Sobolev spaces to arbitrary accuracy when $k_{\max}$ is sufficiently large.

\subsection{Prior distributions}\label{sub:latent}

We outline the several prior distributions used in our experiments.

Firstly, we consider the standard Gaussian distribution as a benchmark to investigate the role of trace-class operators in multilevel training. That is, for $N \times N$ images, we define the prior distribution as $x \sim \mathcal{N}(0,I_{N^2})$. The identity operator is not a trace-class operator, and there is no decay in the eigenvalues. We set this up as the benchmark prior.

Secondly, the FNO prior is implemented by a randomly initialized spectral convolution layer, and it is applied to a standard Gaussian. The main idea is to transfer images to the frequency domain and to apply a "low-pass" filter to these. First, we choose a maximum number of modes, apply the Fourier transform to our input $x$. Second,  we cut off (or set them to 0) all the modes of higher order, and perform a spectral convolution like in \cite{stuartNO}. 
    % This means that we perform a complex multiplication in frequency space, which translates to a convolution. Then we translate our image back to the "time"-domain via the inverse Fourier transform. Since all the operations involved are linear, the push-forward of this operation applied to a standard Gaussian, is indeed a Gaussian. The cutoff operation can be seen as linear as this amounts to setting some of the weights of to zero. 
    Experimentally, we randomly initialize $\varphi$ and impose the conjugate symmetry, although it could also be learned. As a result, we construct the FNO prior via 
    \[A_{\rm FNO}(x) = \mathrm{FFT}^{-1}(\varphi \odot \ \mathrm{FFT}(x)),\]
    where $x \sim \mathcal{N}(0,I_{N^2})$.
    
Thirdly, we obtain the negative inverse Laplacian $-\Delta^{-k}$ as a covariance kernel by using the standard five-point stencil.
% in the function space, which has eigenvalues of magnitude $\lambda_i \sim \frac{1}{i^2}$. This is discretized via a five-point stencil
% $\theta = \begin{bmatrix}
%                 0 & -1 &0\\
%                 -1 & 4 & -1 \\
%                 0 & -1 & 0
%                 \end{bmatrix}.$
% More formally let $K \in \mathbb{R}^{3,3}$ denote the five point stencil, i.e., 
    Assuming periodic boundary conditions on the images, as explained in the previous section, we can compute the eigenvalues of the Laplacian, denoted by 
    $\lambda(-\Delta)$,  with FFT and efficiently invert it in Fourier space by
    \begin{equation}
        A_{\rm Lap}(x) = \mathrm{FFT}^{-1}( (\lambda(-\Delta))^{-k/2} \odot \mathrm{FFT}(x)),
    \end{equation}
    % To be precise $B$ is a block circulant matrix with circulant blocks (BCCB) and therefore it can be diagonalized by the Fourier transform; see \cite{hansen2006deblurring} for details.
    % To compute the eigenvalues of the $B$, we apply the FFT to the first column of $B$.
    % Thus, we take its Fourier transform $\mathrm{FFT}(B)$ and then perform the mapping $x \mapsto \frac{1}{\sqrt(x)}$ element-wise to it to satisfy the 
% 
to  sample efficiently from  $\mathcal{N}(0, -\Delta^{-k})$, by first  sampling $x$ from standard Gaussian distribution and then apply $A_{\mathrm{Lap}}.$
% The distribution can be multiplied by a scalar $\gamma > 0$. We discuss its selection in \cref{sec:sub-mnistsetup}. 
    % $P_{\theta}$ in Algorithm~\ref{algo: 1} corresponds to $\frac{1}{\sqrt{|\mathrm{FFT}(B)|}}$ here.

Now, we can easily adapt the FNO prior $A_{\mathrm{FNO}}$ and the negative inverse Laplacian prior $A_{\rm lap}(x)$ to the following two priors that are theoretically sound and yield good multilevel performance in the numerical experiments. We present numerical evidence in \cref{sec:numerics}.
\begin{itemize}
    \item We propose a new prior that combines the FNO prior and Laplacian prior and reads $A_{\mathrm{Comb}} = \gamma_0 A_{\mathrm{FNO}}+\gamma_1 A_{\rm Lap} $. Here, the hyperparameter $\gamma_0, \gamma_1 \geq 0$ controls the influence of each of those priors. We recover $A_{\rm lap}(x)$ when $\gamma_0 = 0$, and $A_{\mathrm{FNO}}$ when $\gamma_1 = 0$. This yields a linear map. Therefore, applying $A_{\mathrm{Comb}}$ to a standard Gaussian yields a Gaussian. Furthermore, the influence of the inverse Laplacian takes care that we indeed obtain a non-degenerate covariance. 

    \item We consider the Bessel potential operator \cite{aronszajn1961theory}, \cite[Example 6.17]{stuart2010inverse} (which is also used in \cite{nvidia_func_diff}) and define the following Bessel prior
        \begin{equation}
        \label{eq: bessel}
        A_{\rm Bes}(x) = \mathrm{FFT}^{-1}( (\lambda(\gamma_2 I -\Delta))^{-k/2} \odot \mathrm{FFT}(x)),
    \end{equation}
to sample efficiently from $\mathcal{N}(0, (\gamma_2 I -\Delta)^{-k})$, where $x \sim \mathcal{N}(0,I_{N^2})$. We take $\gamma_2 > 0$ and $k > 1$ to ensure that we have a positive-definite, trace-class covariance operator.  We recover $A_{\rm lap}(x)$ when $\gamma_2 = 0$.
\end{itemize}

% \textcolor{blue}{In summary, we define four prior distributions $x \sim \mathcal{N}(0,I_{n^2})$,  $A_{\rm FNO}(x)$, $A_{\rm Comb}(x)$, $A_{\rm Bes}(x)$, where $x$ diverges when the dimension goes to infinity, and FNO prior is degenerate. The class of non-degenerate trace class operator is desirable theoretically and is expected to yield better experimental performance, making the Bessel prior and the combined prior above suitable choices.}

\section{Experiments} \label{sec:numerics}

In this section, we show the benefits of multilevel learning with neural operator architectures and trace class covariance operators through numerical experiments. We focus on the following three tests in our experiments.
\begin{itemize}
\item In \cref{sec: toy}, we use a synthetic Gaussian mixture dataset that allows comparing the estimated score to the true score. The result motivates our choice of the downsampling method and our use of the neural operator as the score network architecture.
\item In \cref{5.1}, we use the MNIST dataset to show that our choices of architectures and priors enable super-resolution; i.e., the models can generate higher resolution images than their training resolutions. With consistent downsampling, the loss at a higher resolution also
decreases when training at a coarse resolution. 
\item In \cref{5.3}, we demonstrate for MNIST that the weights trained on a coarse resolution can serve as a warm start for training at higher resolutions. 
\item In \cref{5.2}, we use the same model setup and parameter count to train 2D reaction-diffusion data at different resolutions. We show that the same model setup yields similar performances when retraining at different resolutions and using model weights at a coarse resolution to warm start fine-resolution training. 
\end{itemize}
The Python code is available at \url{https://github.com/PaulLyonel/multilevelDiff.git}.

\subsection{Gaussian Mixture}
\label{sec: toy}
In this section, we verify many of our design choices on a synthetic Gaussian mixture dataset with known ``ground truth" score function, $Q \nabla \log p_t(x)$. This is used in many theoretical considerations on diffusion models and has been recently used for generative modeling in \cite{scarvelis2023diff} by smoothing out the ``empirical score". We draw the data from a Gaussian mixture model $\mathbb{P}_{data} = \frac{1}{L} \sum_{i=1}^L \mathcal{N}(f_i, Q)$ consisting of $L$ Gaussians with different means. The prior distribution is $\mathcal{N}(0, Q)$, so the data and the prior share the same covariance. We can derive that $\mathbb{P}_{X_t} = \frac{1}{L}\sum_{i=1}^L \mathcal{N}(\tilde{\alpha}_t f_i, \tilde{\beta}_t Q + \tilde{\alpha}_t^2 Q)$, using the results in \cref{sec:score_finite}. Hence, we can efficiently track $g^2(t) Q \nabla \log p_t(x)$ and check the performance on different resolutions in terms of the approximation of the score. Here, we consider $L = 2$, the Gaussian mixture mean functions are $f_1(y_1,y_2) = (y_2,y_2)$ and $f_2(y_1,y_2) = (1-y_1, 1-y_1)$, where $(y_1,y_2)$ is any point in the $N \times N$ mesh on $[0,1]^2$. To ensure they possess enough regularity we consider $Q^{1/2} f_1$ and $Q^{1/2}f_2$ as the means instead. Thus, in this section, we demonstrate that our proposed consistent downsampling, which cuts off the high frequencies outperforms standard average pooling. We show empirically that the operator architecture has advantages over the U-Net. 

We calculate the score via autograd and use the closed-form formula for $\mathbb{P}_{X_t}$ to get the difference $\mathbb{E}_{t,x(t)}[\Vert s_{\theta}(t,x(t)) - Q \nabla \log p_t(x(t)) \Vert^2]$. Note that due to our parametrization we instead report the difference $\mathbb{E}_{t,x(t)}[\Vert s_{\theta}(t,x(t)) - Q g(t) \nabla \log p_t(x(t)) \Vert^2]$. We train a U-Net with Fourier downsampling, a FNO with standard average pooling and a FNO with Fourier downsampling which induces a different scaling due to the isometry. We use the combined prior in this experiment. For this we draw samples at resolution 64 and use the different downsamplings/operators. We track the score difference at two different resolutions and obtain the graphs in Figure \ref{fig:toy}. We can see that the U-Net fails to generalize to higher dimensions, and that the standard average pooling (without scaling) has consistent higher score errors than the Fourier downsampling. Further we can see that the downsampled data at resolution 32 may not perfectly represent the ground truth at the same resolution, as we observe an optimal stopping time (epochs) after which the approximation error on resolution 64 increases in \cref{fig:toy}(c).

This motivates the downsampling and network choices in the following experiments.
\begin{figure}
\centering
\subfigure[U-Net + Fourier Downsampling]{\includegraphics[width=0.3\textwidth]{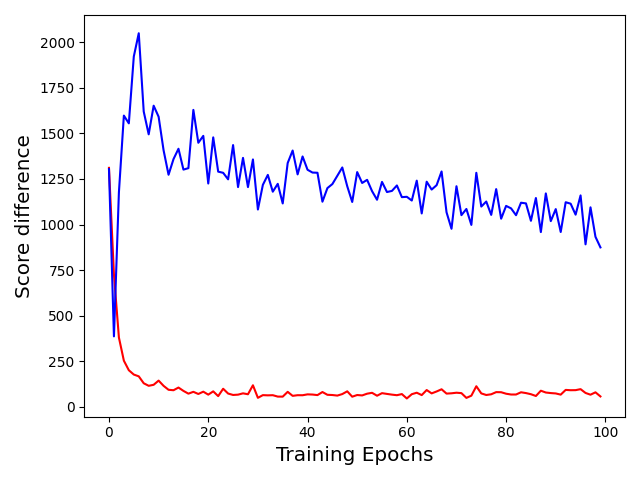}}
\subfigure[FNO + Pooling]{\includegraphics[width=0.3\textwidth]{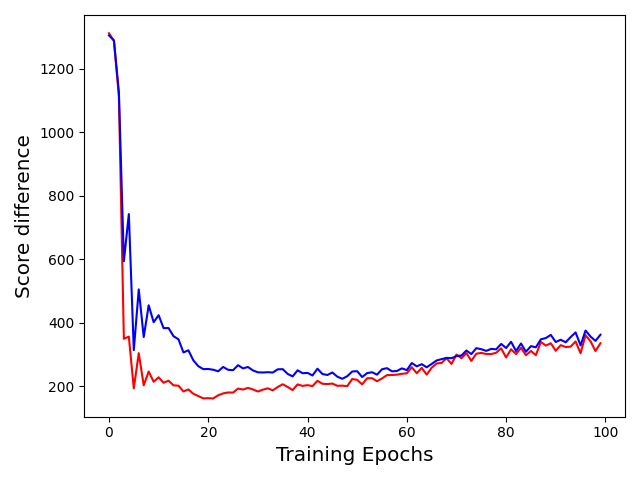}}
\subfigure[FNO + Fourier Downsampling]{\includegraphics[width=0.3\textwidth]{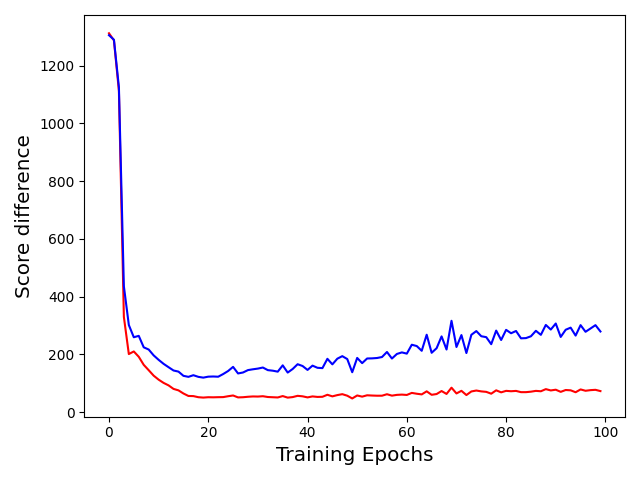}}
\caption{Approximation of the true score at different resolutions (red is resolution 32, blue is resolution 64). Note that the models are trained on the downsampled images, which are distinct from the resolution 32 images.}
\label{fig:toy}
\end{figure}

%\begin{figure}
%\centering
%\subfigure[Samples at resolution 32]{\includegraphics[width=0.24\textwidth]{}}
%\subfigure[Ground truth at resolution 32]{\includegraphics[width=0.24\textwidth]{}}
%\subfigure[Samples at resolution 64]{\includegraphics[width=0.24\textwidth]{}}
%\subfigure[Ground truth at resolution 64]{\includegraphics[width=0.24\textwidth]{}}
%\caption{Images of the true GMM distribution and the generated images at the end of the training.}
%\label{fig:gmm_img}
%\end{figure}

\subsection{MNIST experiment: Super-resolution}
\label{5.1}

Here, we explain the different design choices we make in the following experiments using the MNIST dataset \cite{726791}.
\paragraph{Operator Networks} While U-Nets \cite{ronneberger2015u} have contributed to the success of diffusion models by incorporating an implicit coarse-to-fine representation, they are not operators and, in particular, their action is sensitive to the resolution of the input. In our experiments, U-Nets trained on $32 \times 32$ images fail to generate images resembling the MNIST dataset at higher resolutions; see~\cref{fig:mnist_unet}.

\begin{figure}
\centering
\subfigure[U-Net resolution 32]{\includegraphics[width=0.3\textwidth]{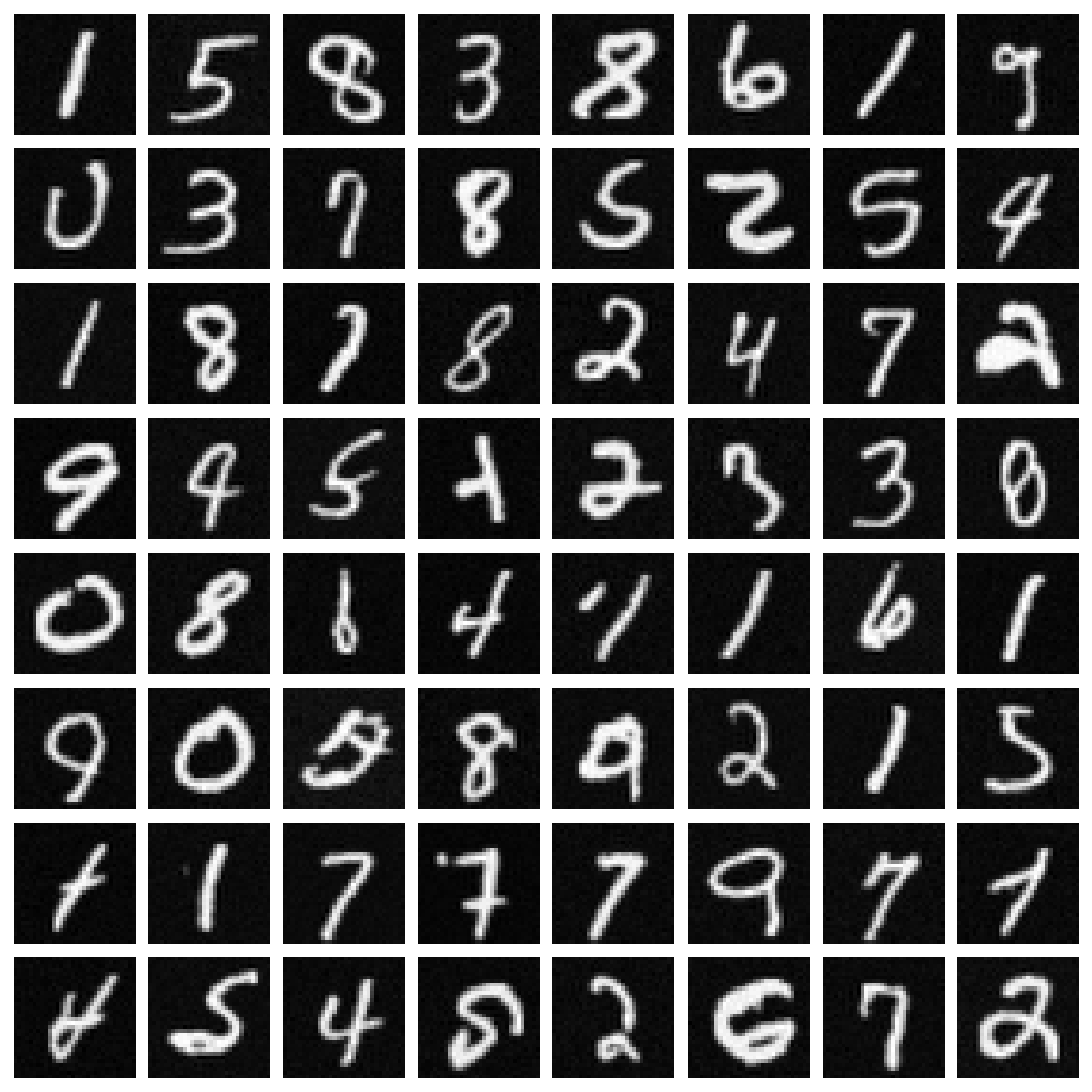}}
\subfigure[U-Net resolution 50]{\includegraphics[width=0.3\textwidth]{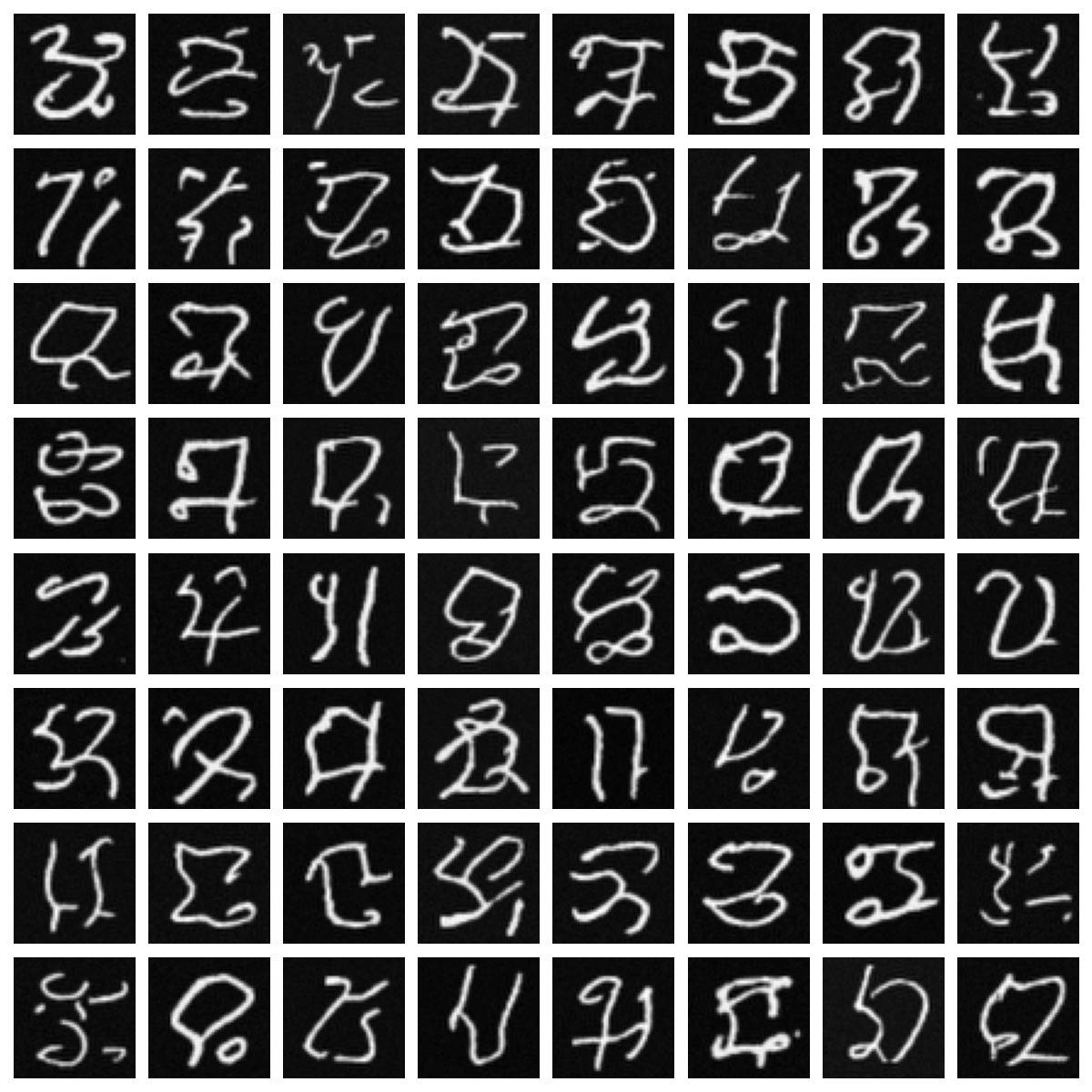}}
\subfigure[U-Net resolution 64]{\includegraphics[width=0.3\textwidth]{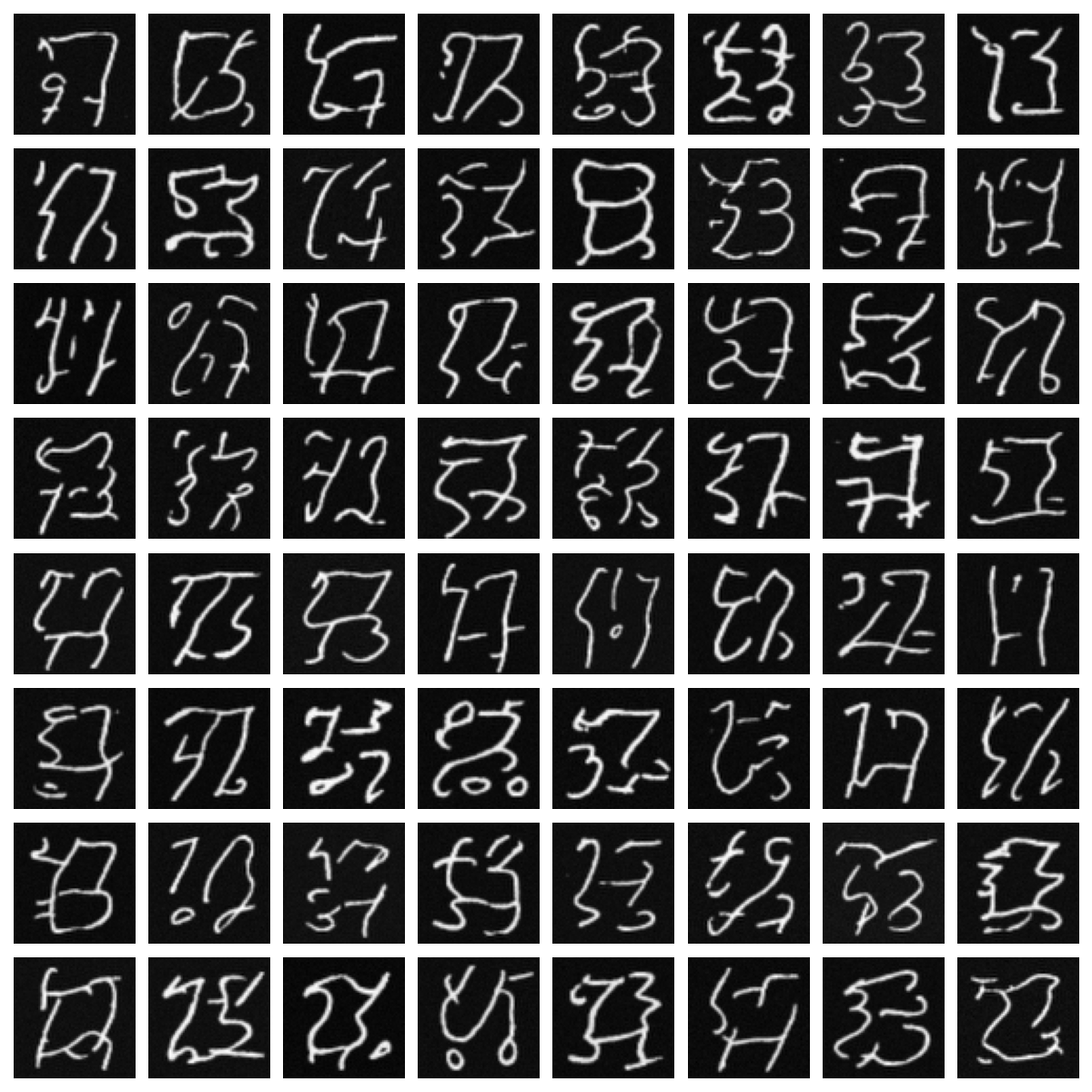}}
\caption{MNIST images generated by a U-Net trained with standard Gaussian prior at resolutions 32 at resolutions 32, 50, 64 (left to right).}
\label{fig:mnist_unet}
\end{figure}

\paragraph{Comparison of different priors} We train an FNO \cite{stuartNO} with the priors described in~\cref{sub:latent} using $32\times32$ images and assess the ability of the models to generate images at higher resolutions. We train all models for 200 epochs and evaluate their sliced Wasserstein distance every 10 epochs. We generate images using the model with the lowest DSM loss on a validation set at resolution 32. 
We visualize samples for the FNO models trained with different priors in the rows of \cref{fig:comp_prior_mnist} and provide loss curves at the coarse and fine resolution. The convergence plots indicate that coarse-level training with FNO also decreases the loss at higher resolutions.

The FNO and standard Gaussian show a much less diverse generation than the combined prior on resolutions 50 and 64. This can also be seen in \cref{fig:sw_mnist}, where we show how the sliced Wasserstein \cite{stein2023exposing} distance evaluated on 2000 samples against the test set progresses during training. We rescale the generated 64 images to $[0,1]$ before calculating the sliced Wasserstein \cite{stein2023exposing} metric. Here, we can see that the combined prior performs best and that the Laplacian and U-Nets do not generalize to higher dimensions. The Laplacian prior yields unsatisfactory images in MNIST experiments; see \cref{fig:comp_prior_mnist}. This can be explained by the quick decay of the singular values, and therefore, one might need a different norm for training rather than a standard $L^2$-norm, see, e.g., \cite{jakiw_diffusion} for a discussion of the use of the Cameron Martin norm. This is a special case of the Bessel prior \eqref{eq: bessel}, which performs well in our experiment described in \cref{5.2}.

To gauge the diversity of the samples, we compute the Vendi diversity \cite{friedman2022vendi} score for the models with the lowest DSM loss at training resolution for the one run. In our experiments, the combined prior (7.8), FNO prior (8.8), and standard Gaussian (7.05) yield similar results at resolution 64, where the FNO prior takes the lead. However, judging from the images, we can see a slight mode collapse in all models and that the FNO and standard Gaussian produce unreasonable images. In summary, the combined prior produces the visually most appealing images.

\begin{figure}
\centering
\subfigure[Inverse Laplacian resolution 32]{\includegraphics[width=0.2\textwidth]{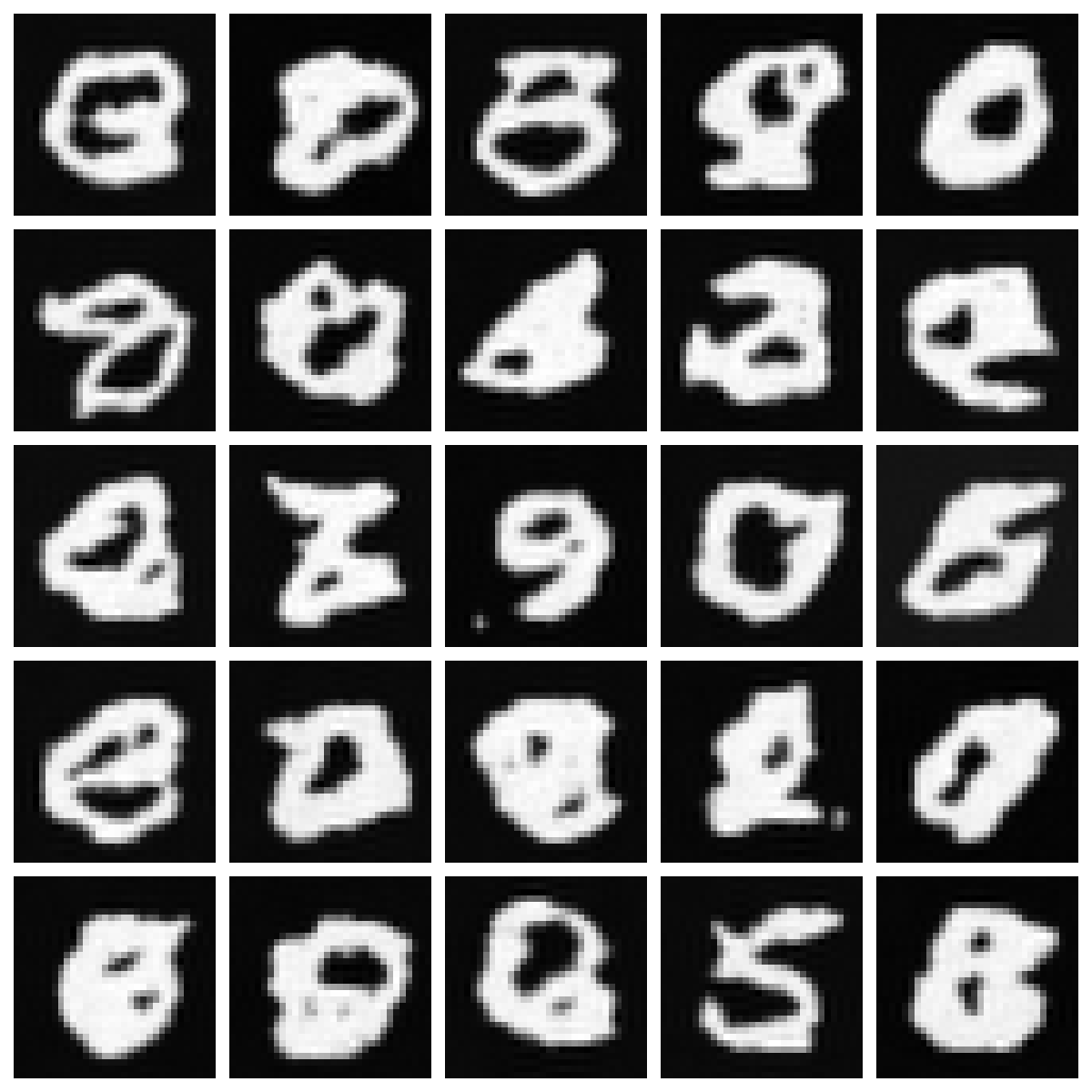}}
\subfigure[Inverse Laplacian resolution 50]{\includegraphics[width=0.2\textwidth]{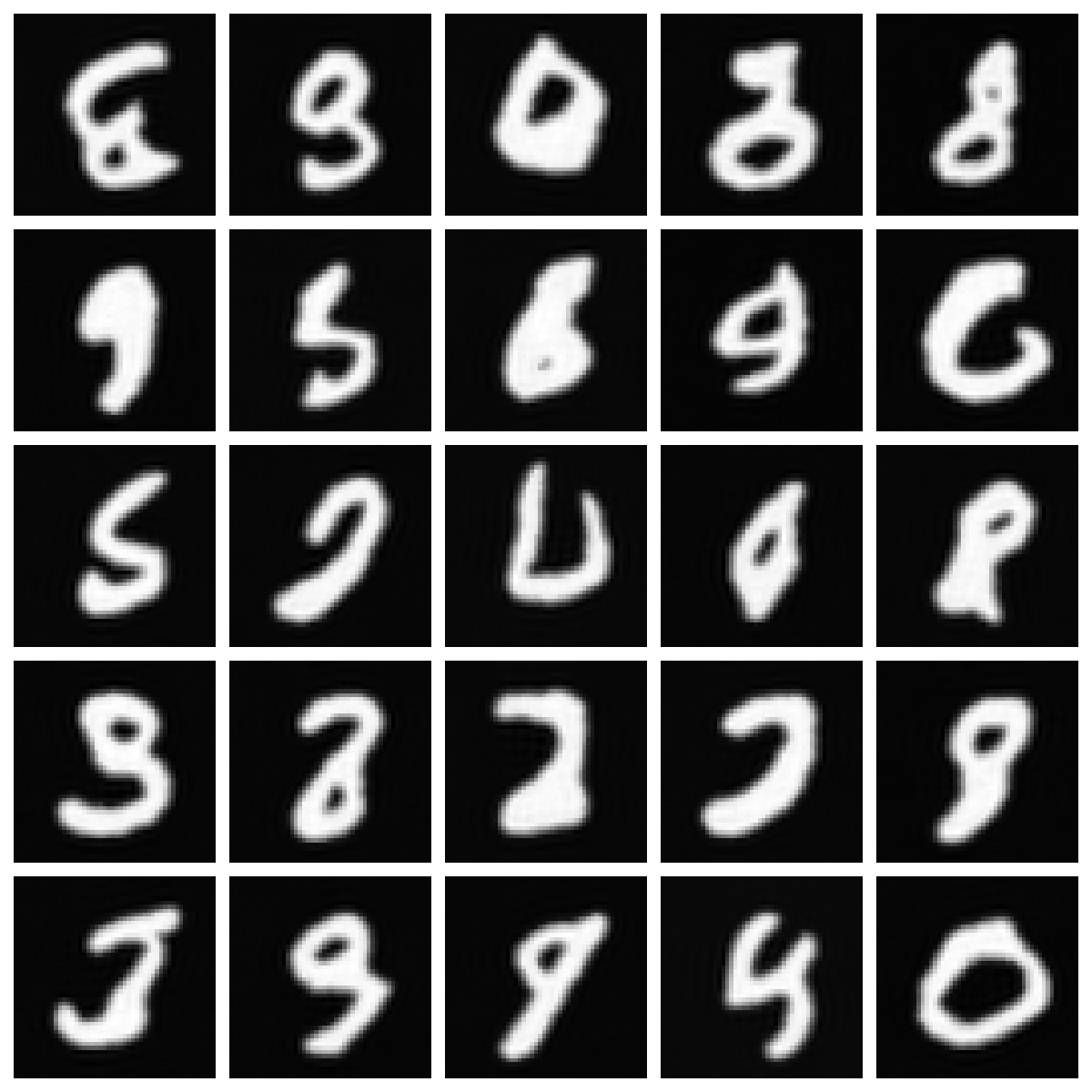}}
\subfigure[Inverse Laplacian resolution 64]{\includegraphics[width=0.2\textwidth]{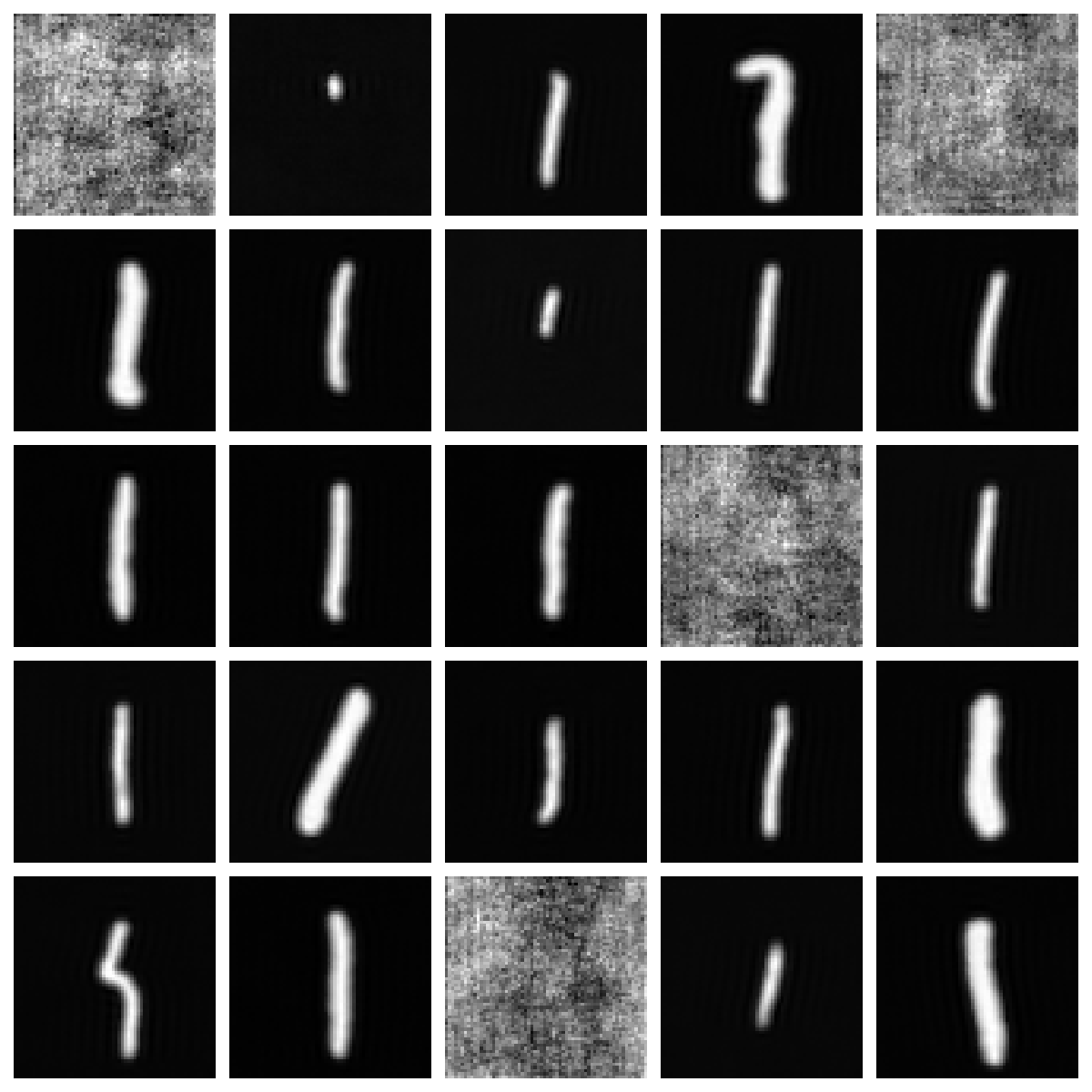}}
\subfigure[Laplacian Loss Curve]{\includegraphics[width=0.24\textwidth]{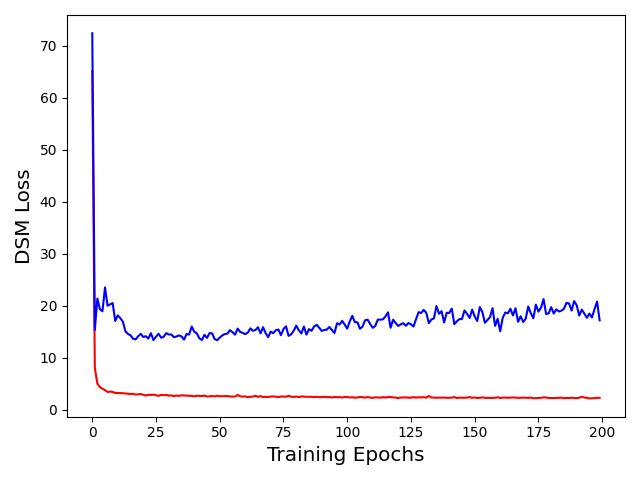}}

\subfigure[Standard resolution 32]{\includegraphics[width=0.2\textwidth]{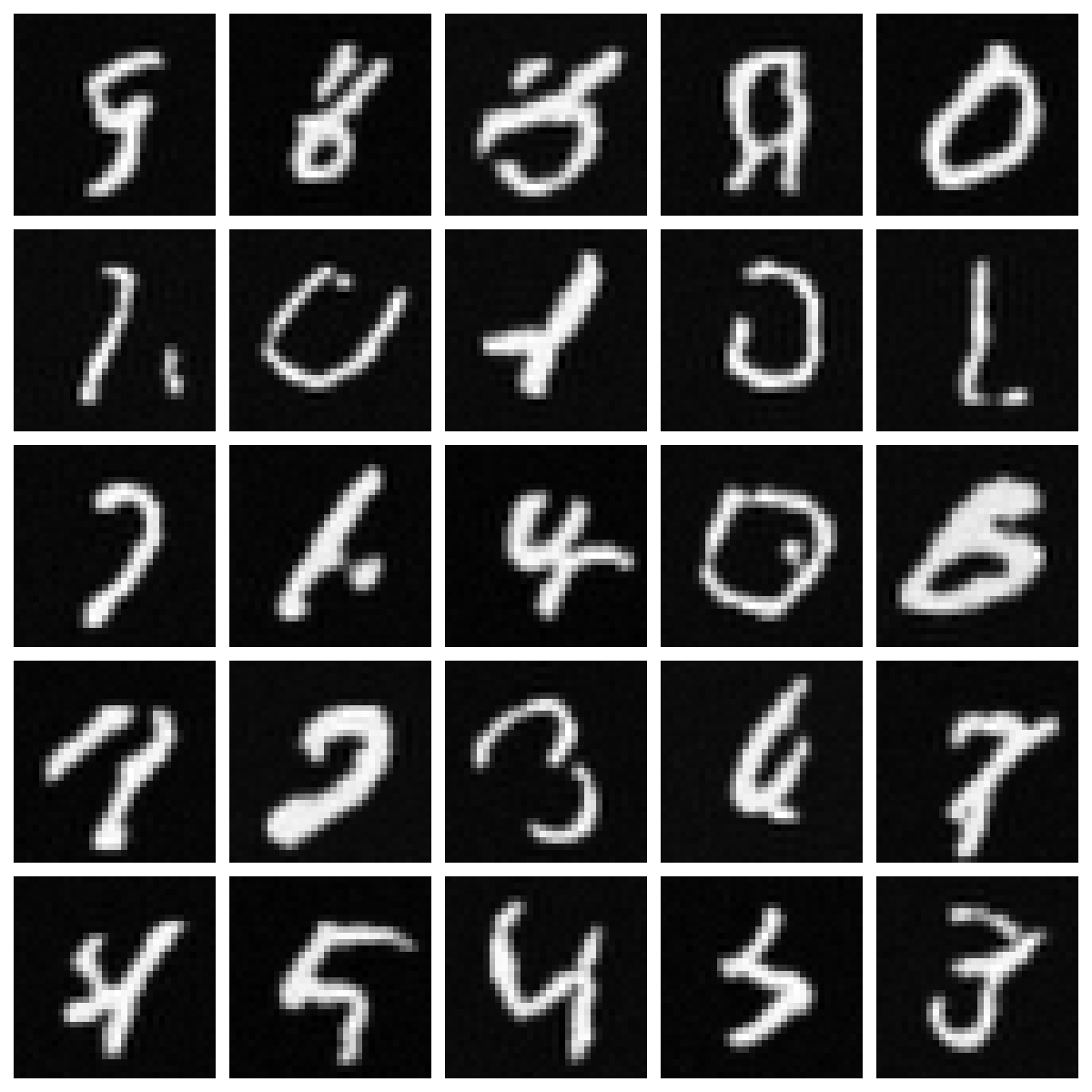}}
\subfigure[Standard resolution 50]{\includegraphics[width=0.2\textwidth]{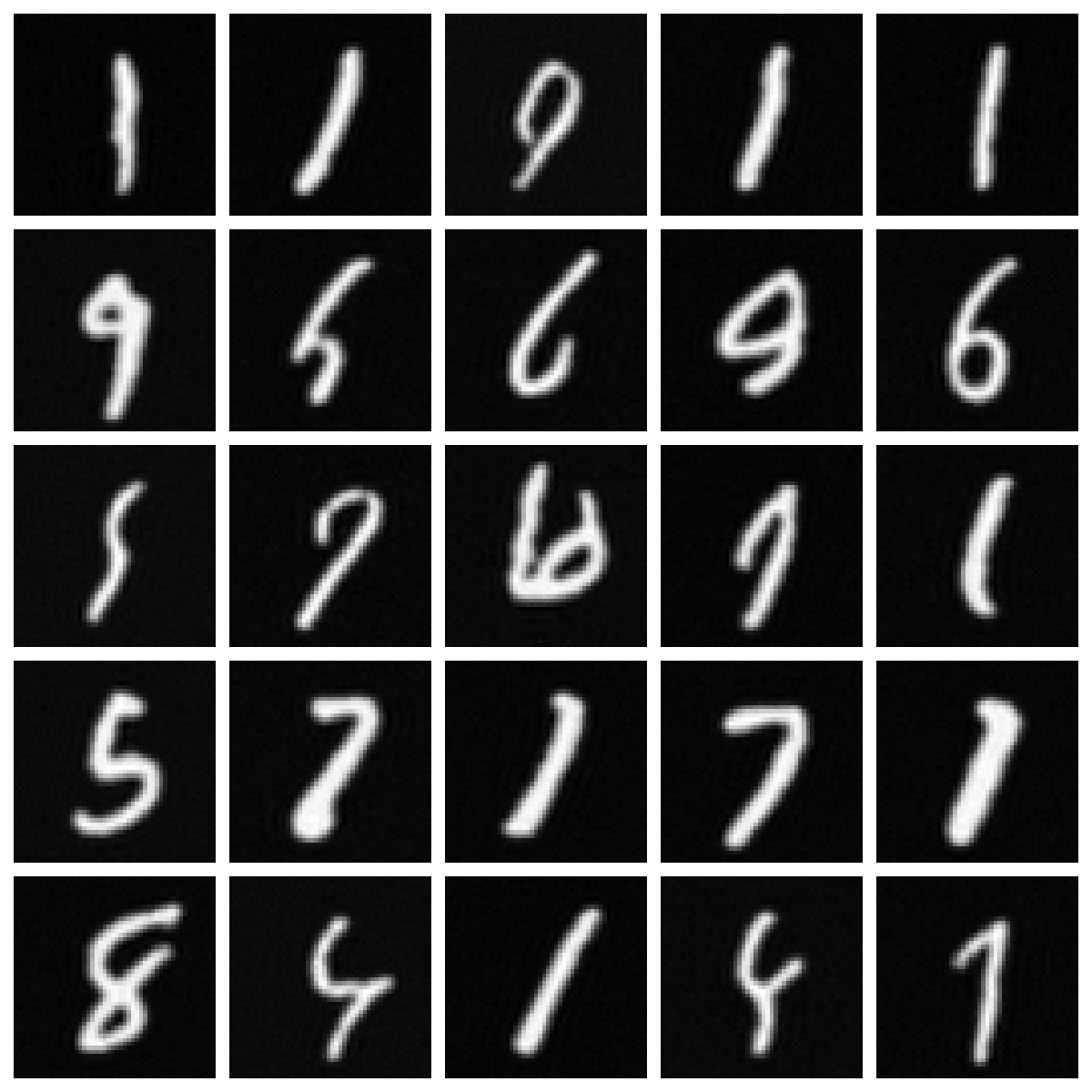}}
\subfigure[Standard resolution 64]{\includegraphics[width=0.2\textwidth]{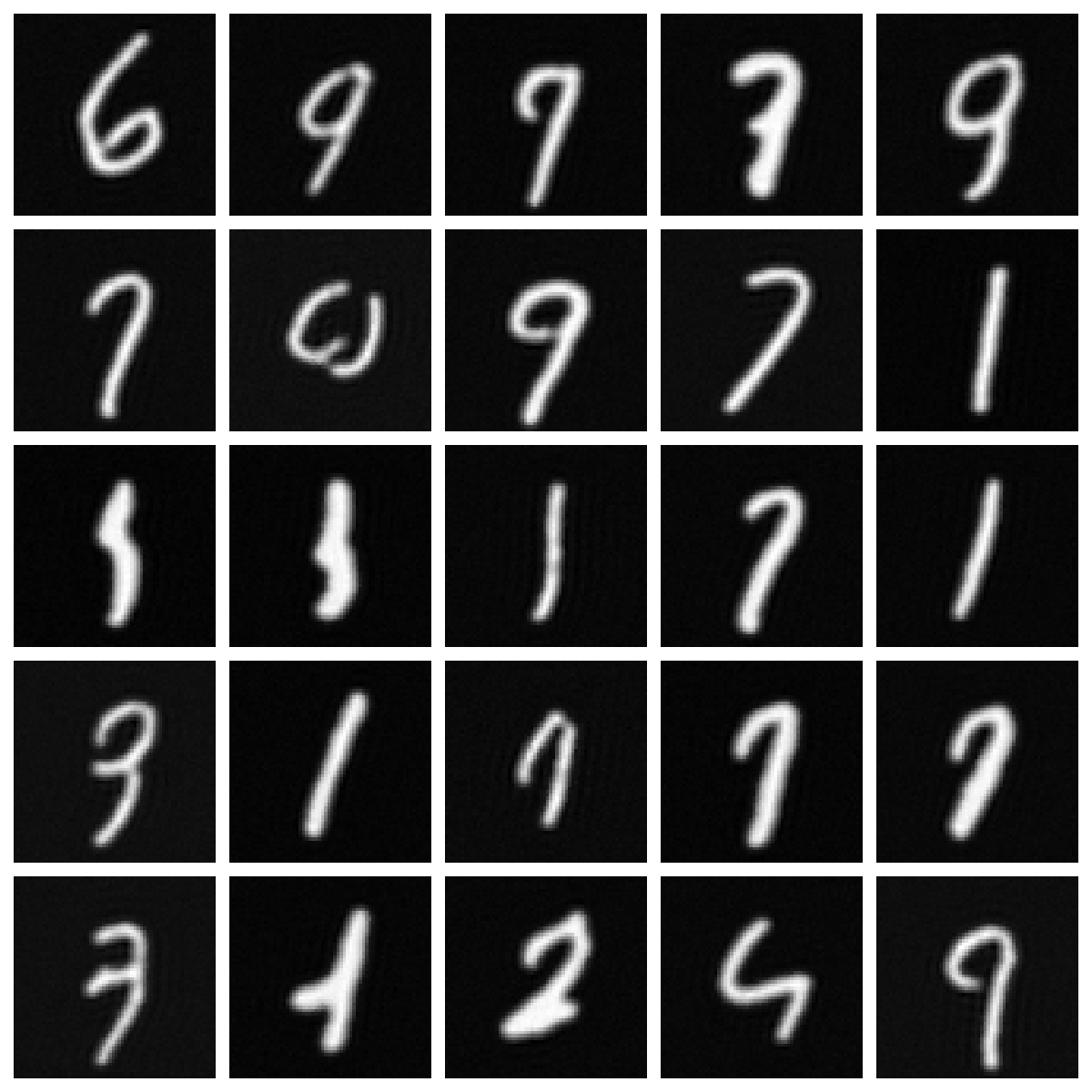}}
\subfigure[Standard Loss Curve]{\includegraphics[width=0.24\textwidth]{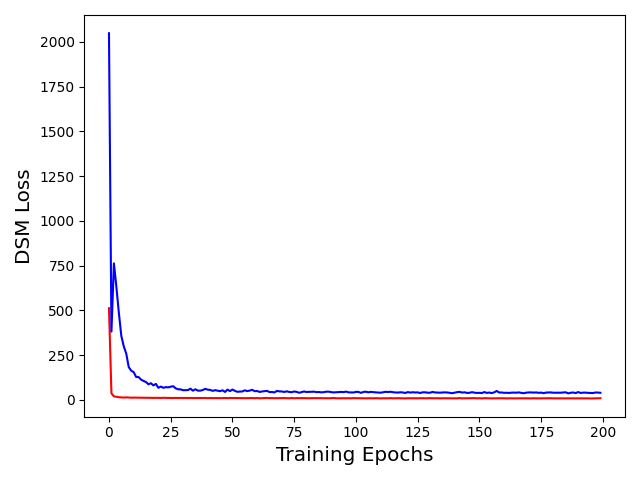}}

\subfigure[FNO resolution 32]{\includegraphics[width=0.2\textwidth]{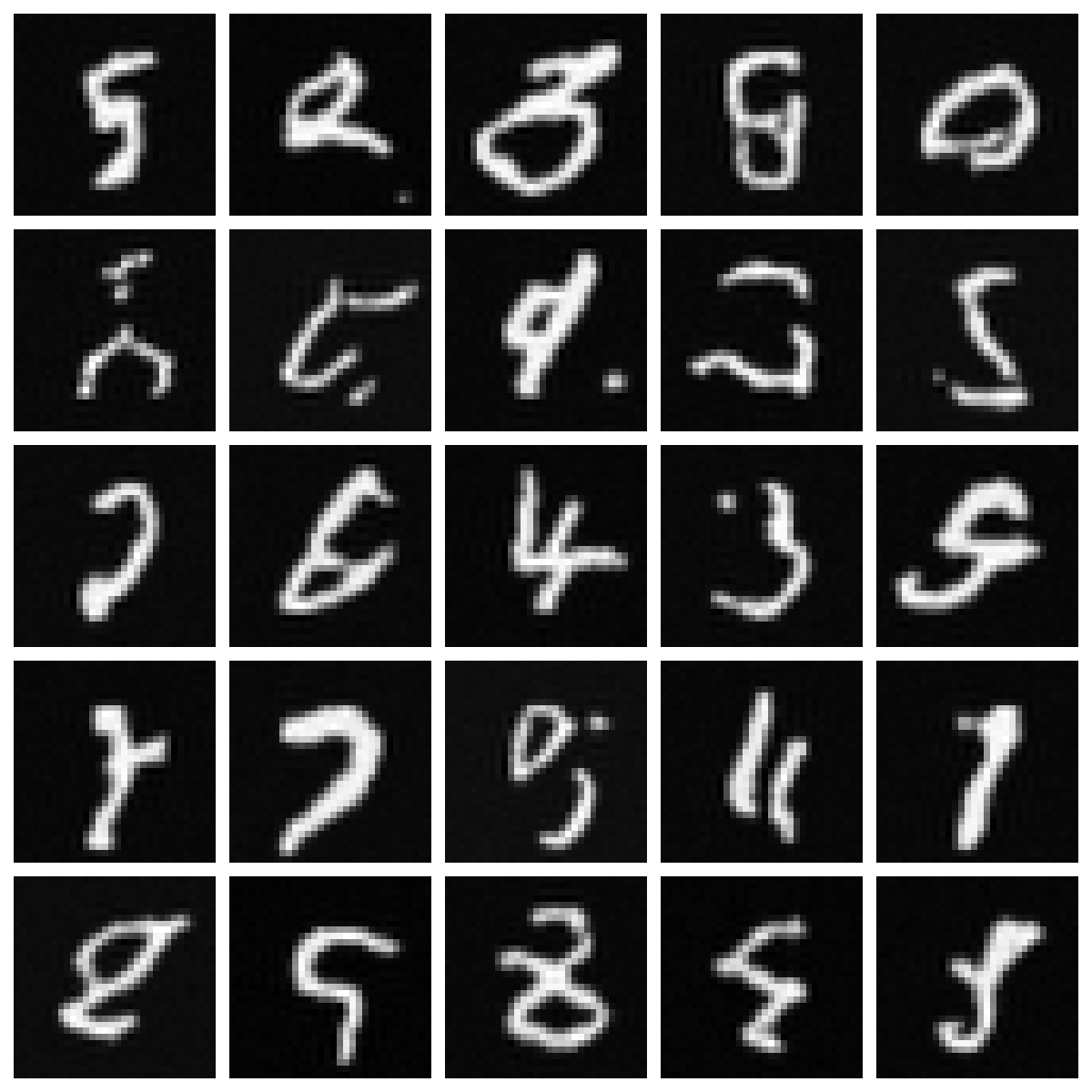}}
\subfigure[FNO resolution 50]{\includegraphics[width=0.2\textwidth]{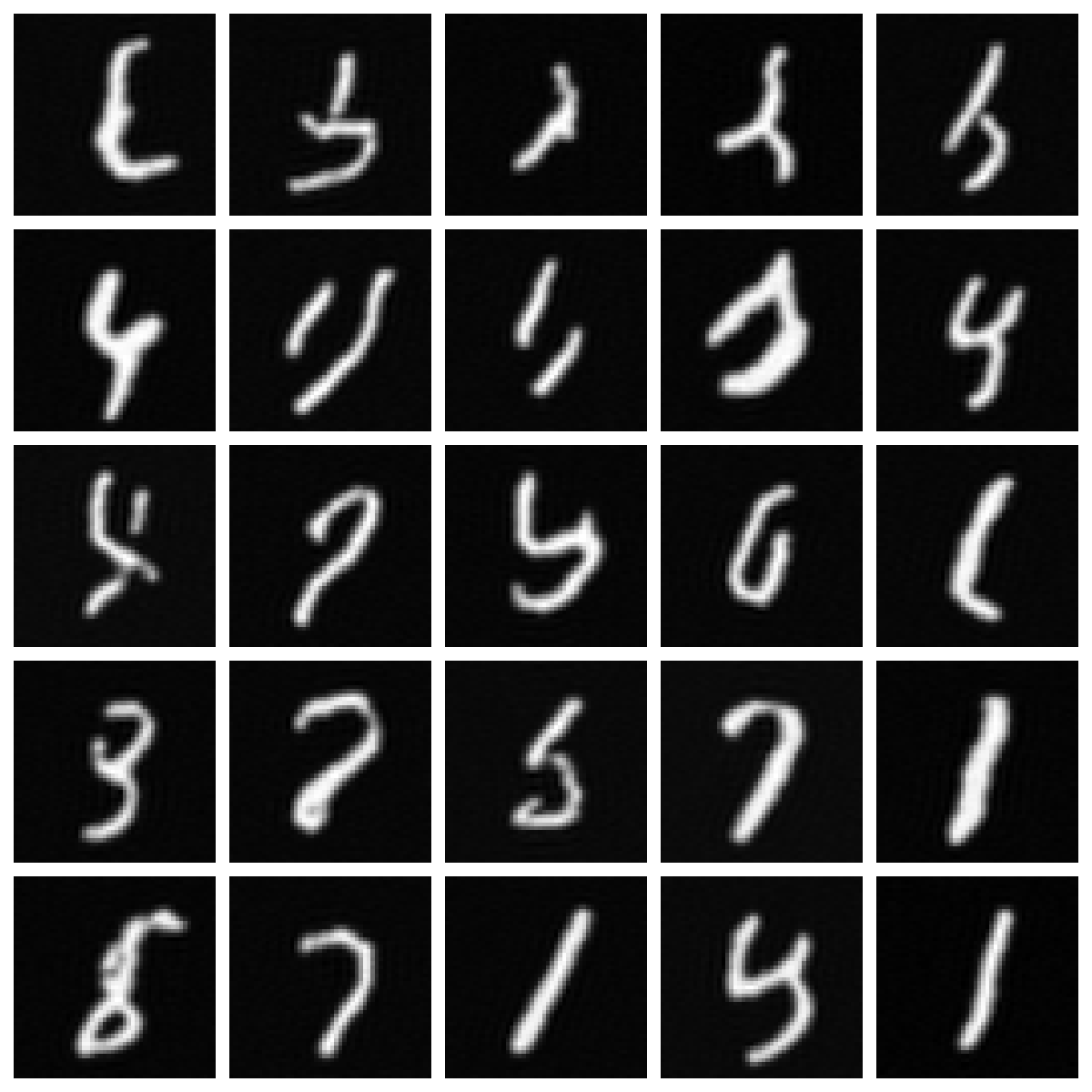}}
\subfigure[FNO resolution 64]{\includegraphics[width=0.2\textwidth]{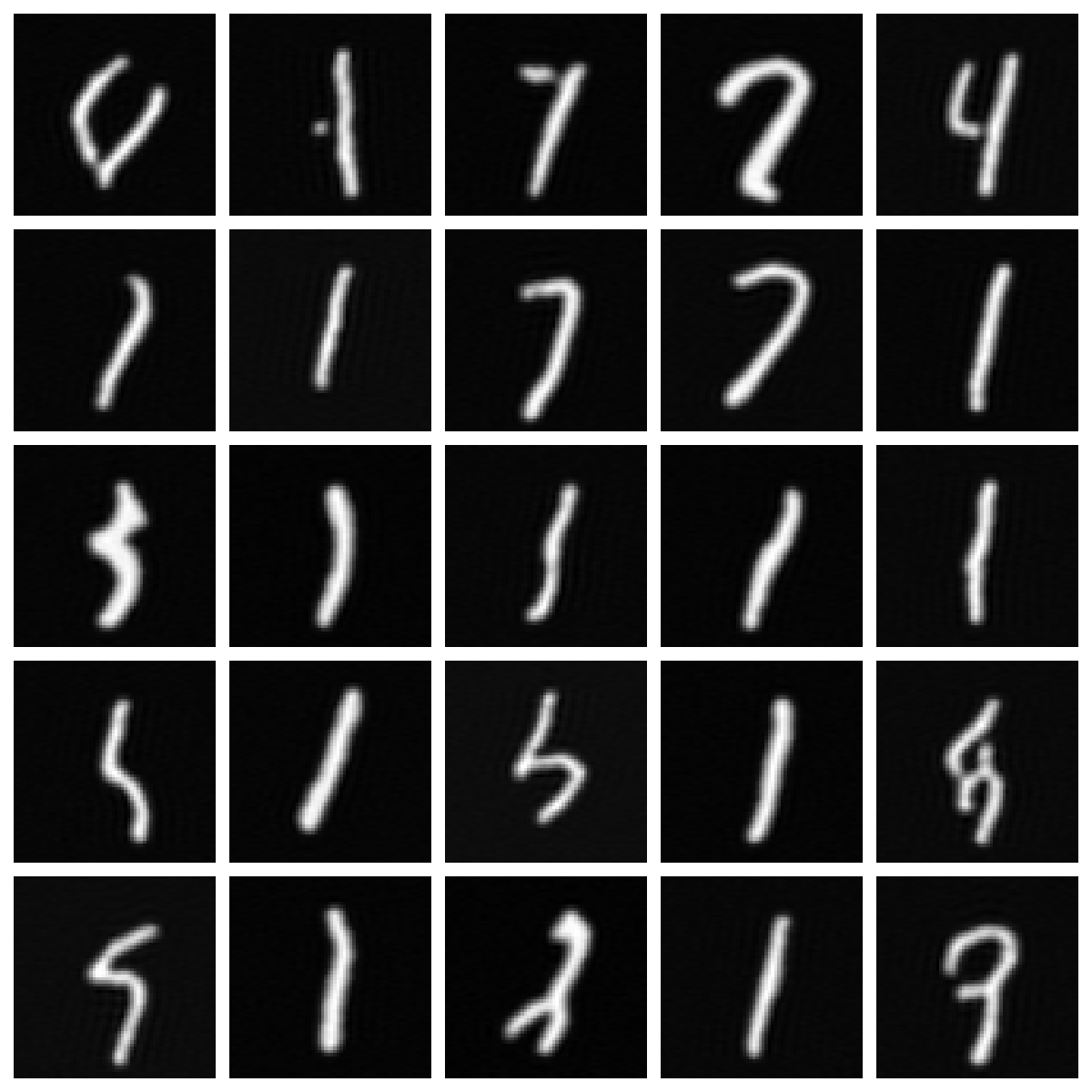}}
\subfigure[FNO Loss Curve]{\includegraphics[width=0.24\textwidth]{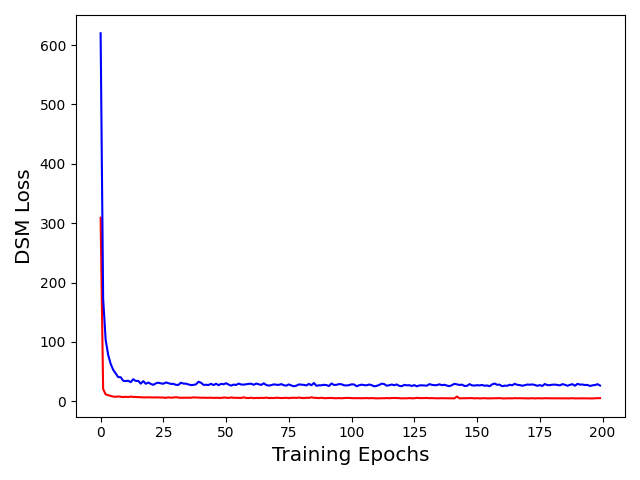}}

\subfigure[Combined resolution 32]{\includegraphics[width=0.2\textwidth]{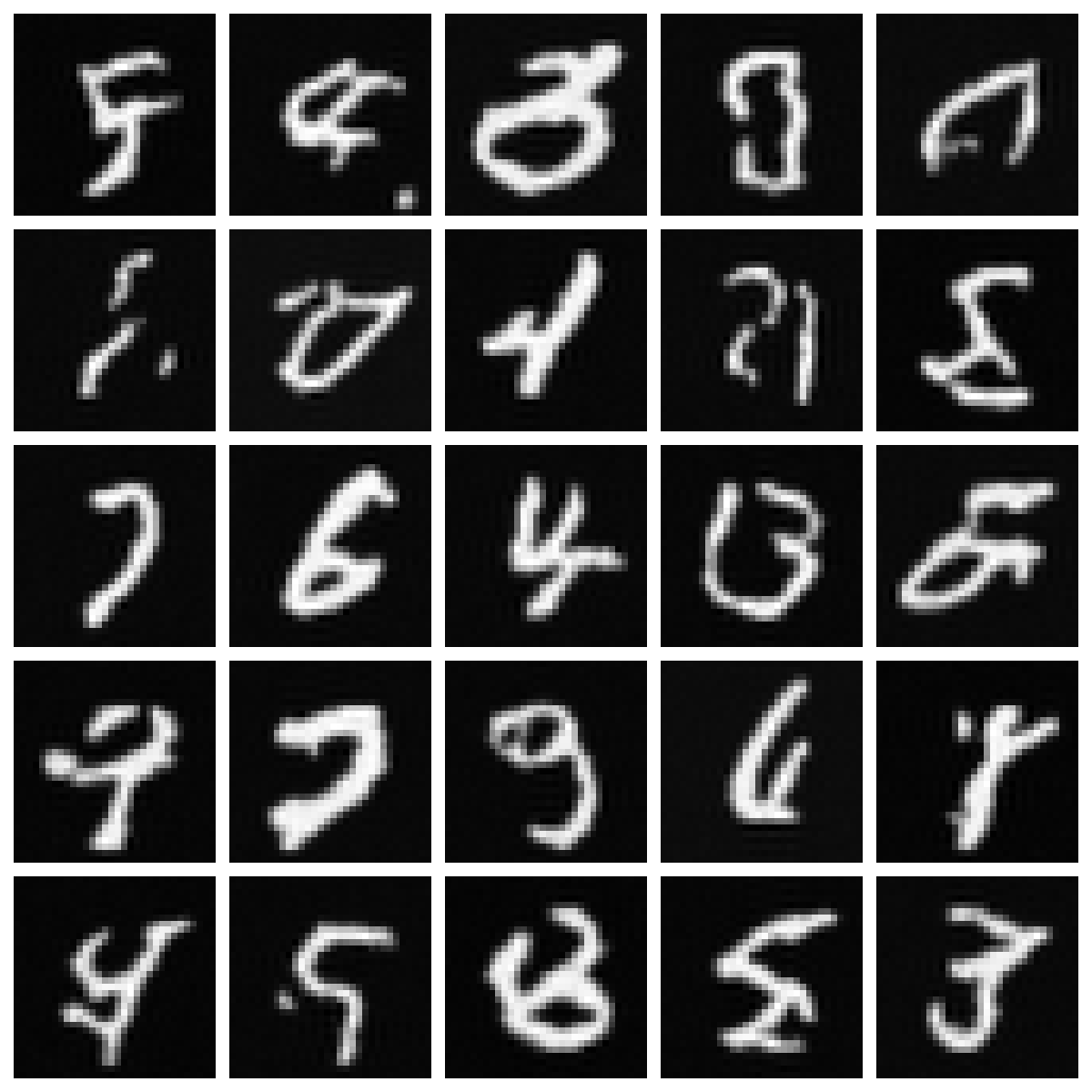}}
\subfigure[Combined resolution 50]{\includegraphics[width=0.2\textwidth]{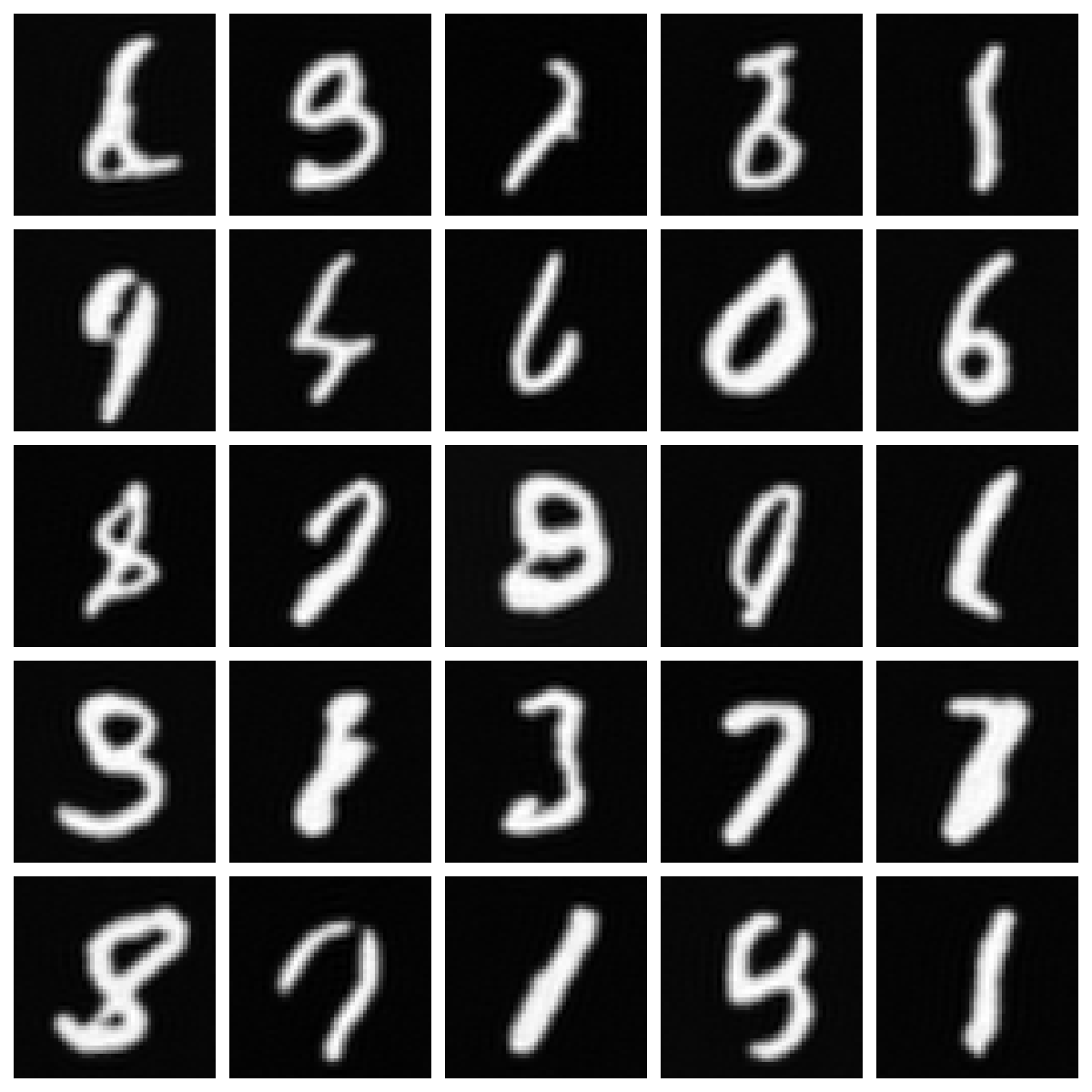}}
\subfigure[Combined resolution 64]{\includegraphics[width=0.2\textwidth]{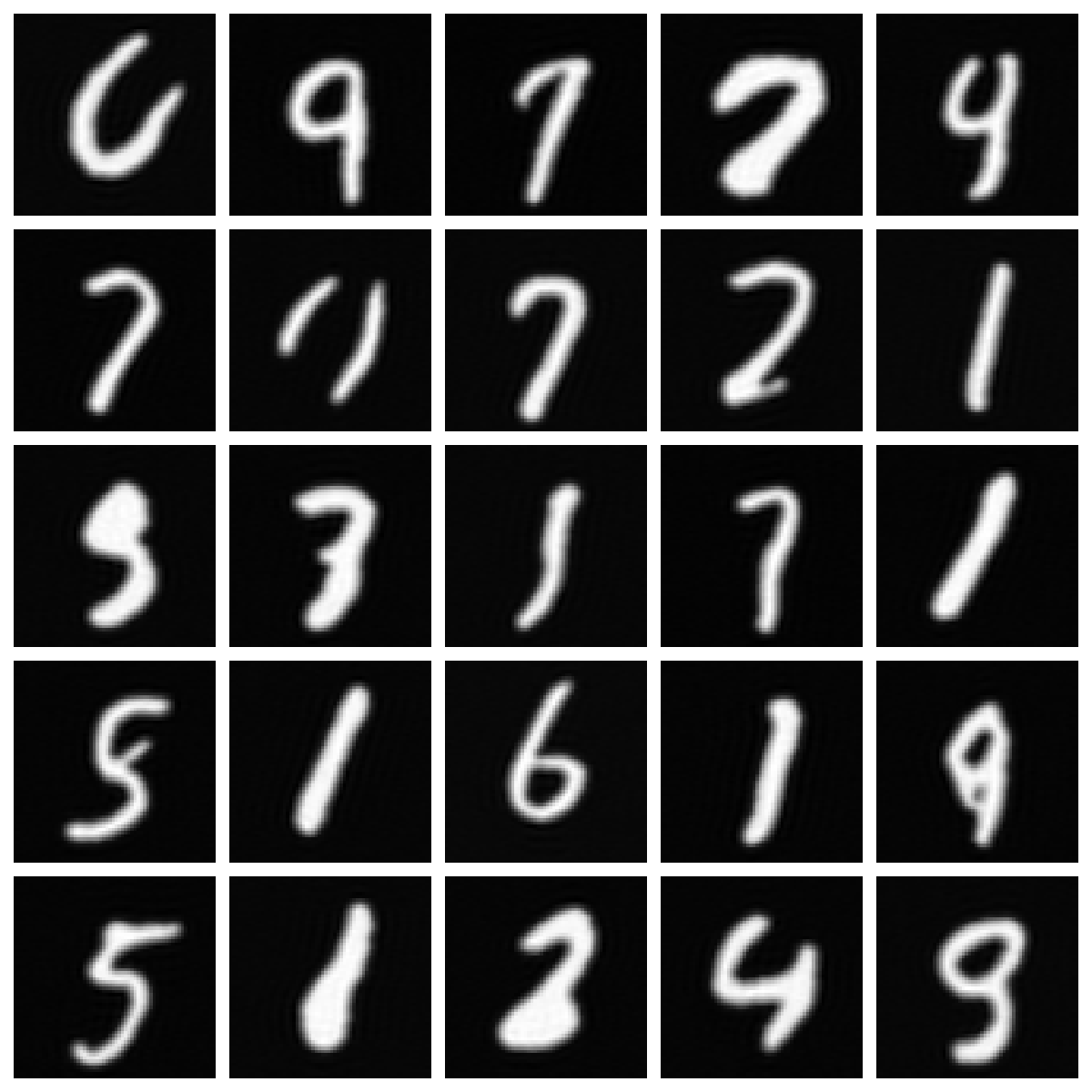}}
\subfigure[Combined Loss Curve]{\includegraphics[width=0.24\textwidth]{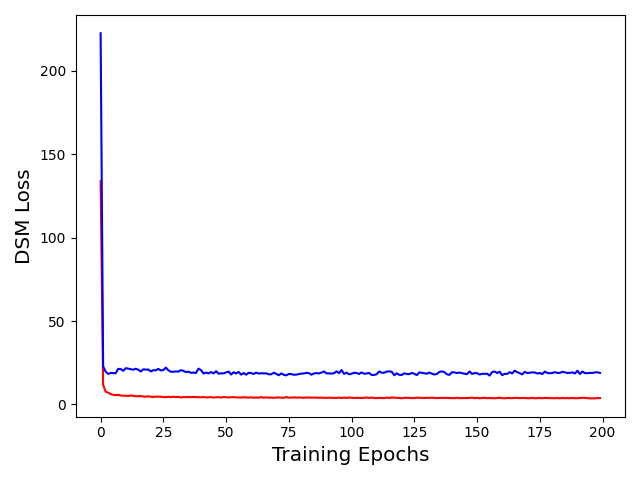}}

\caption{MNIST images generated by FNO and different priors at resolutions 32, 50, and 64 and loss curves for the MNIST example: training resolution loss (red) and loss at resolution 64 (blue). }
\label{fig:comp_prior_mnist}
\end{figure}

\begin{figure}
\center
\subfigure[Sliced Wasserstein losses for the different priors over the epochs.]{\includegraphics[width=0.4\textwidth]{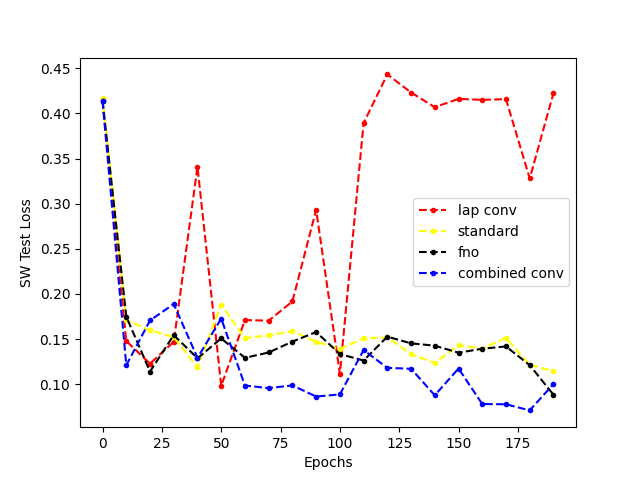}}
\subfigure[Sliced Wasserstein losses for the different priors over the epochs for a different seed.]{\includegraphics[width=0.4\textwidth]{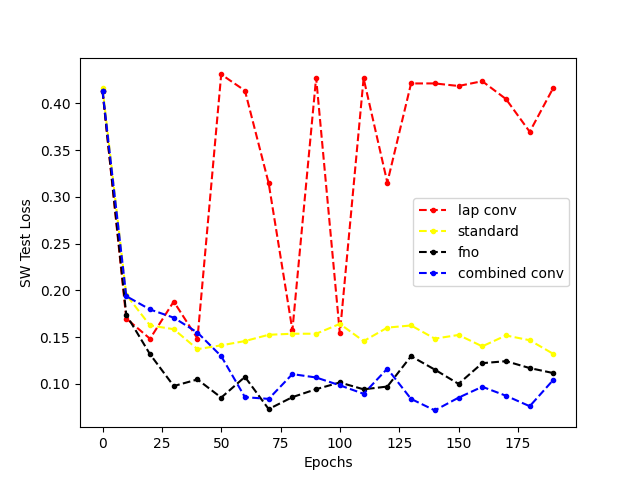}}
\caption{Sliced Wasserstein curves for the MNIST example.}
\label{fig:sw_mnist}
\end{figure}

\subsection{Multilevel training: MNIST}
\label{5.3}
We demonstrate that the weights trained on coarse images using our multilevel diffusion framework provide good initial guesses for fine-level training and lead to reductions in training time.
\begin{figure}
\includegraphics[width=0.47\textwidth]{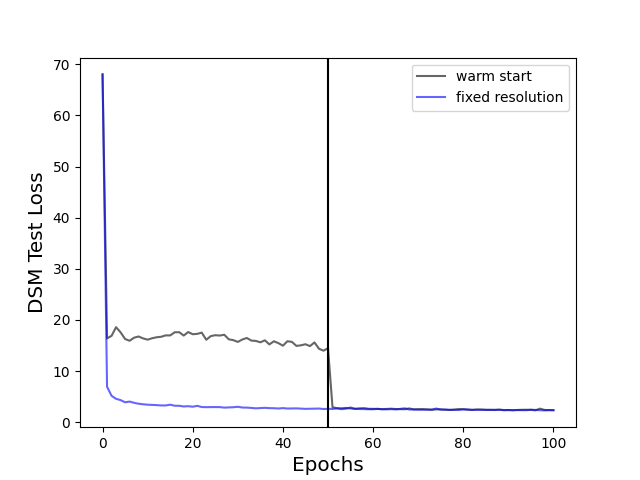}
\includegraphics[width=0.47\textwidth]{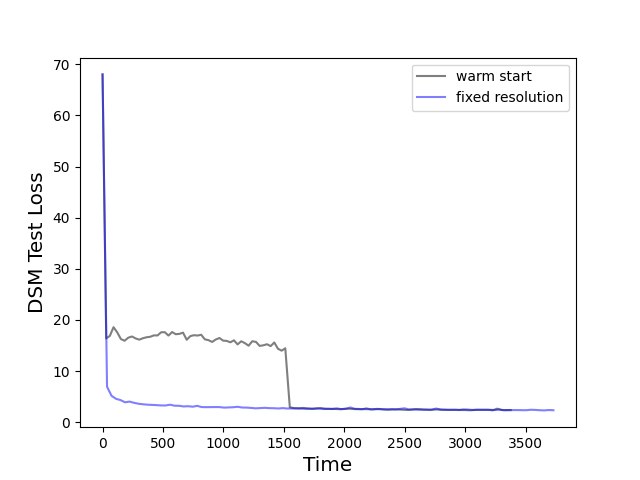}
\caption{DSM loss curve on the MNIST test set. Left: warm starting method versus the cold start in terms of epochs. Right: warm start vs cold start in terms of time. The black vertical line depicts the training at the finer resolution, i.e., from epoch 50 we train on $64 \times 64$.}
\label{fig:mnist_loss}
\end{figure}

To this end, we rescale MNIST to resolution $64 \times 64$ and compare training a given FNO on the $64\times 64$ images for 100 epochs to a coarse-to-fine training starting with  50 epochs on resolution $32 \times 32$ followed by 50 epochs on the full resolution.  Here we chose the combined prior with a smaller number of Fourier modes.  In \cref{fig:mnist_loss}, one can see the loss curves with a warm start and fixed resolution. The left plot shows that both strategies lead to comparable test losses while the multilevel scheme leads to a reduction of training times. In our example, fine-level training took about 9 percent longer than the multilevel strategy; see right subplot in \cref{fig:mnist_loss}. Here, we only account for training time (and not evaluation time).  
\begin{figure}
\center
\subfigure[Samples at resolution 128 at fixed resolution]{\includegraphics[width=0.3\textwidth]{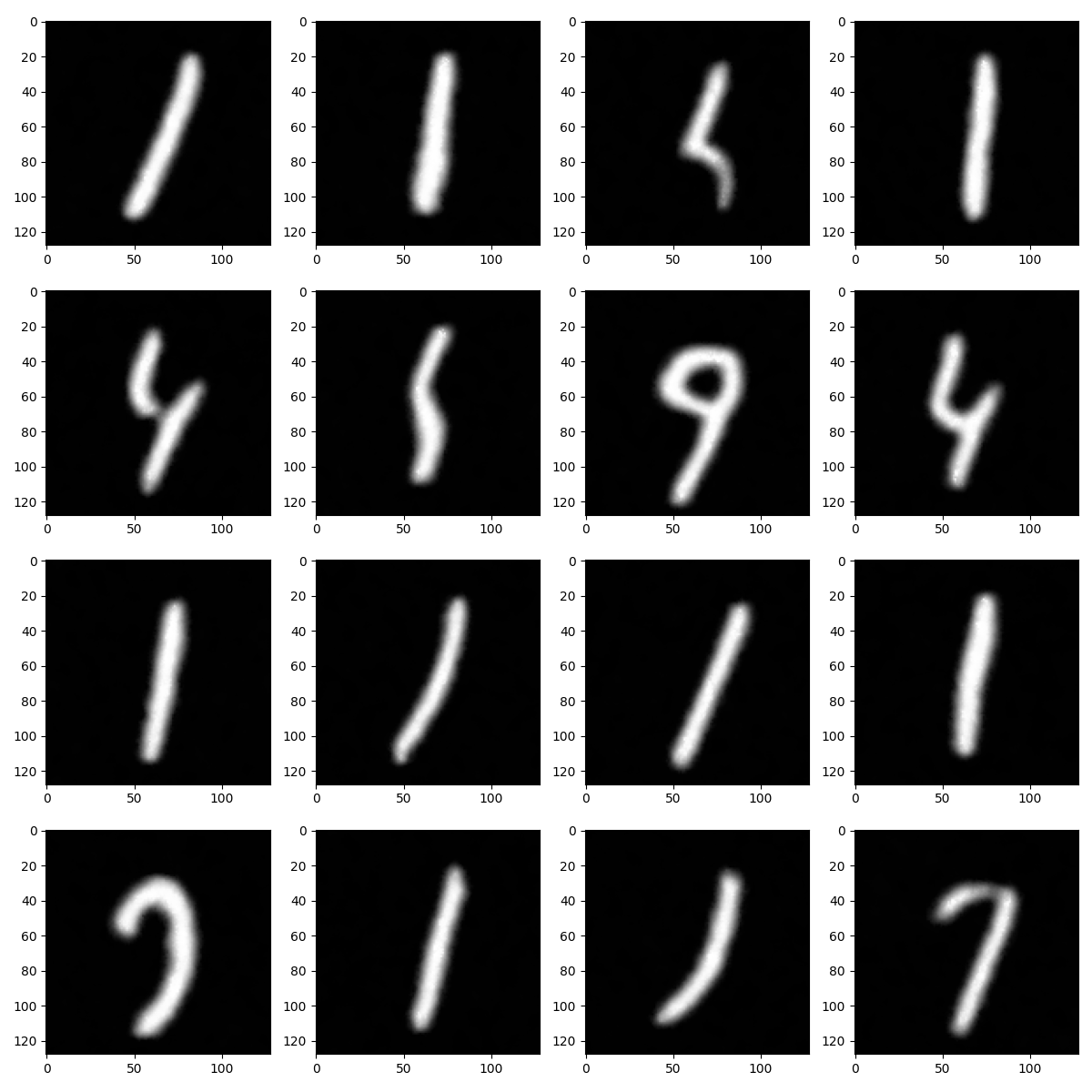}}
\subfigure[Samples at resolution 128 with multilevel warmstart]{\includegraphics[width=0.3\textwidth]{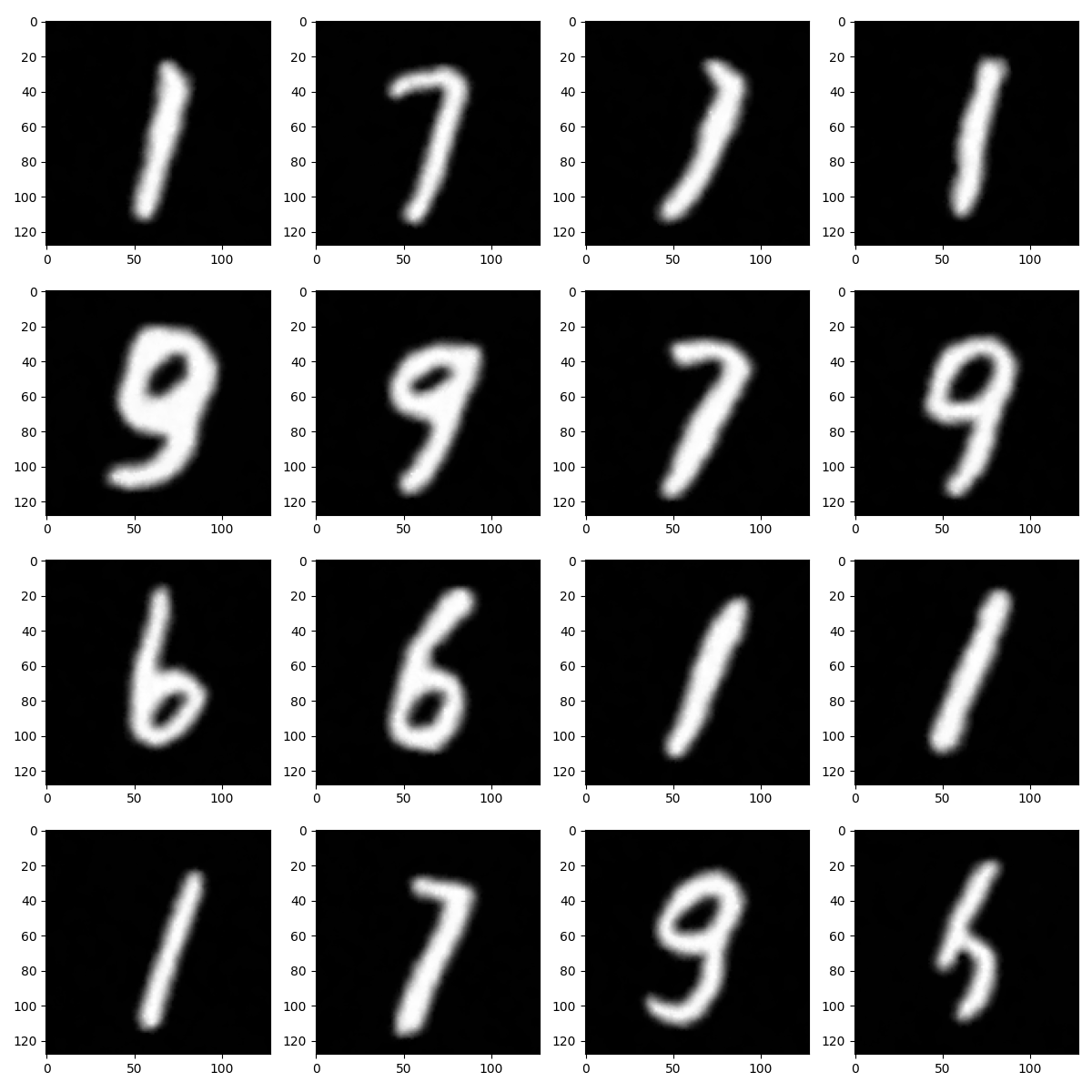}}
\caption{Samples generated using the model trained on $64\times64$ MNIST images (left) and the model trained with a multilevel strategy on resolutions $32\times32$ and $64\times 64$ (right). }
\label{fig:warmstart_mnist}

\end{figure}

Although both strategies achieve a similar test loss, visual inspection of samples generated at resolution $128\time128$ provided in \cref{fig:warmstart_mnist} suggests that the multilevel strategy improves sample quality in this experiment.

\subsection{2D Reaction-Diffusion Equation}
\label{5.2}

We corroborate our findings from the above experiments using the dataset obtained from a nonlinear PDE described in \cite[Section D.9]{takamoto2023pdebench}. 
The target distribution is the push-forward of a Gaussian through a 2D reaction-diffusion equation with Neumann boundary condition. 
Due to the nonlinear reaction function, the target distribution is non-Gaussian.
To obtain samples, we use a similar setup to \cite[Section D.9]{takamoto2023pdebench} and discretize $y_1 \in (-1,1)$, $y_2 \in (-1,1)$ using a regular mesh with 128 cells in each direction and 101 equidistant time steps for the temporal variable $t \in (0,5]$. The results presented in the section use the solution at the 50th time step.

\paragraph{Operator architecture}
Using the U-Net architecture to parameterize the score function for the PDE dataset leads to drawbacks similar to those above. 
In \cref{fig:pde3}, we compare the U-Net architecture to an FNO architecture trained on coarse images of size $32\times32$ with the same Bessel prior. 
The DSM losses evaluated using 300 validation images at the different resolutions decay more consistently with the FNO architecture than those of the U-Net. 
For the latter, the DSM loss on the finest resolution increases during training.  
Hence, we now focus on the FNO architecture.  See the supplementary materials for more details on the hyperparameter search. 

\begin{figure}[t]
% \begin{center}
\centering
\begin{tikzpicture}[scale=1]

\node[] at (-0.4,0.8){\includegraphics[width=.245\textwidth]{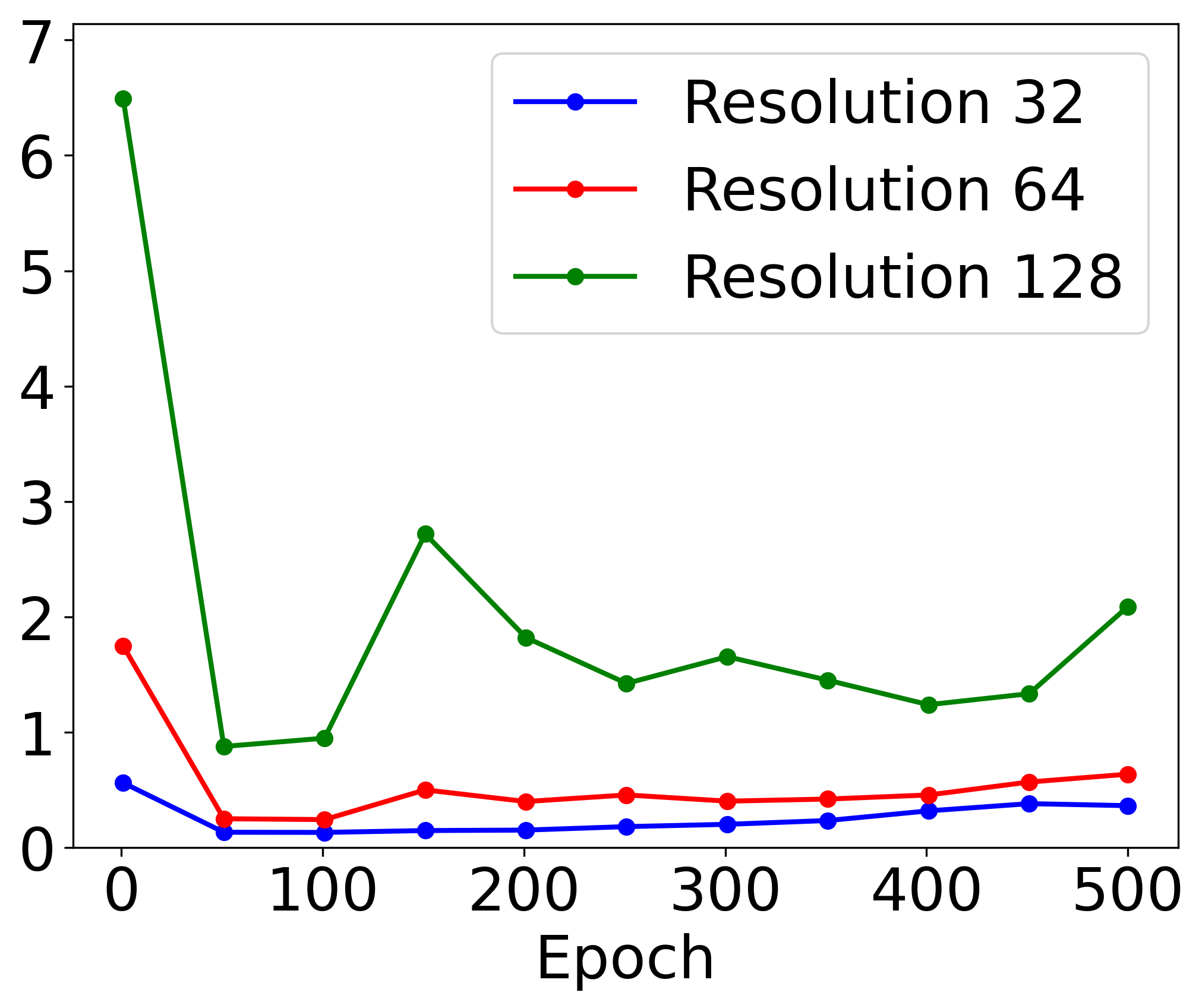}};
\node[label,rotate=90, font=\sffamily] at (-2.6,1.2){FNO Loss};
\node[] at (3.5,1){\includegraphics[trim={1.2cm 1.2cm 1.2cm 1.2cm},clip,width=.22\textwidth]{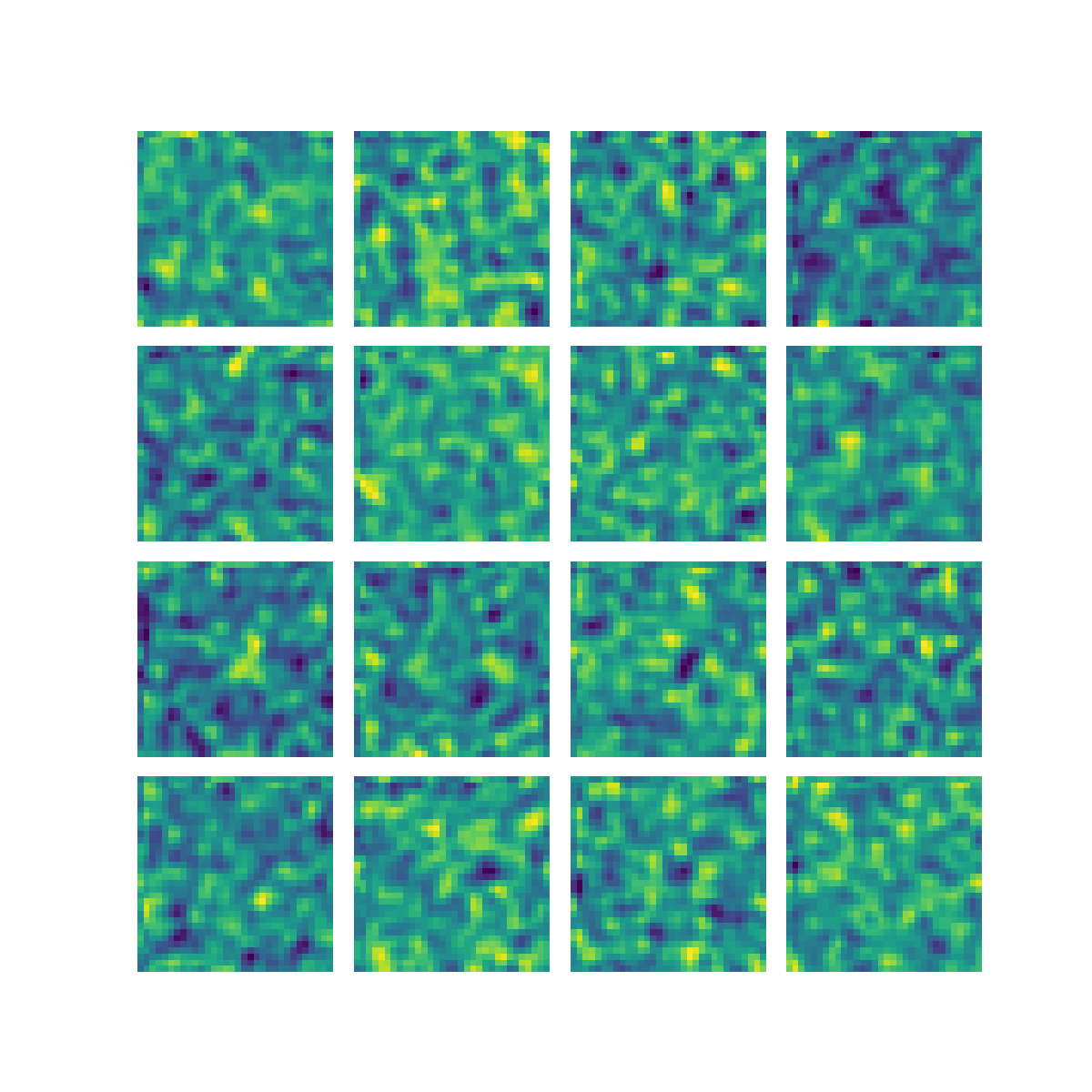}};

\node[] at (6.8,1) {\includegraphics[trim={1.2cm 1.2cm 1.2cm 1.2cm},clip, width=.22\textwidth]{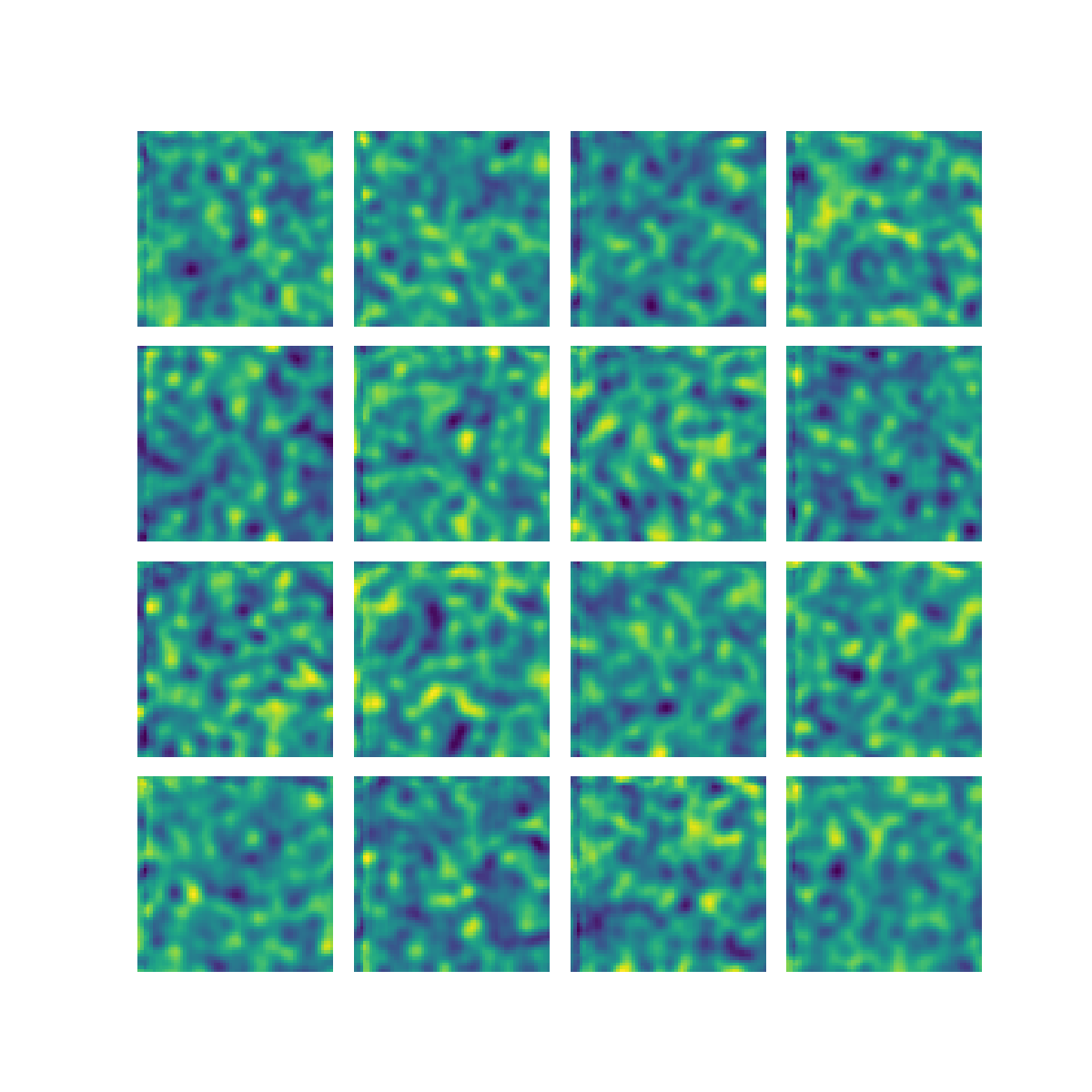}};

\node[] at (10.1,1) {\includegraphics[trim={1.2cm 1.2cm 1.2cm 1.2cm},clip, width=.22\textwidth]{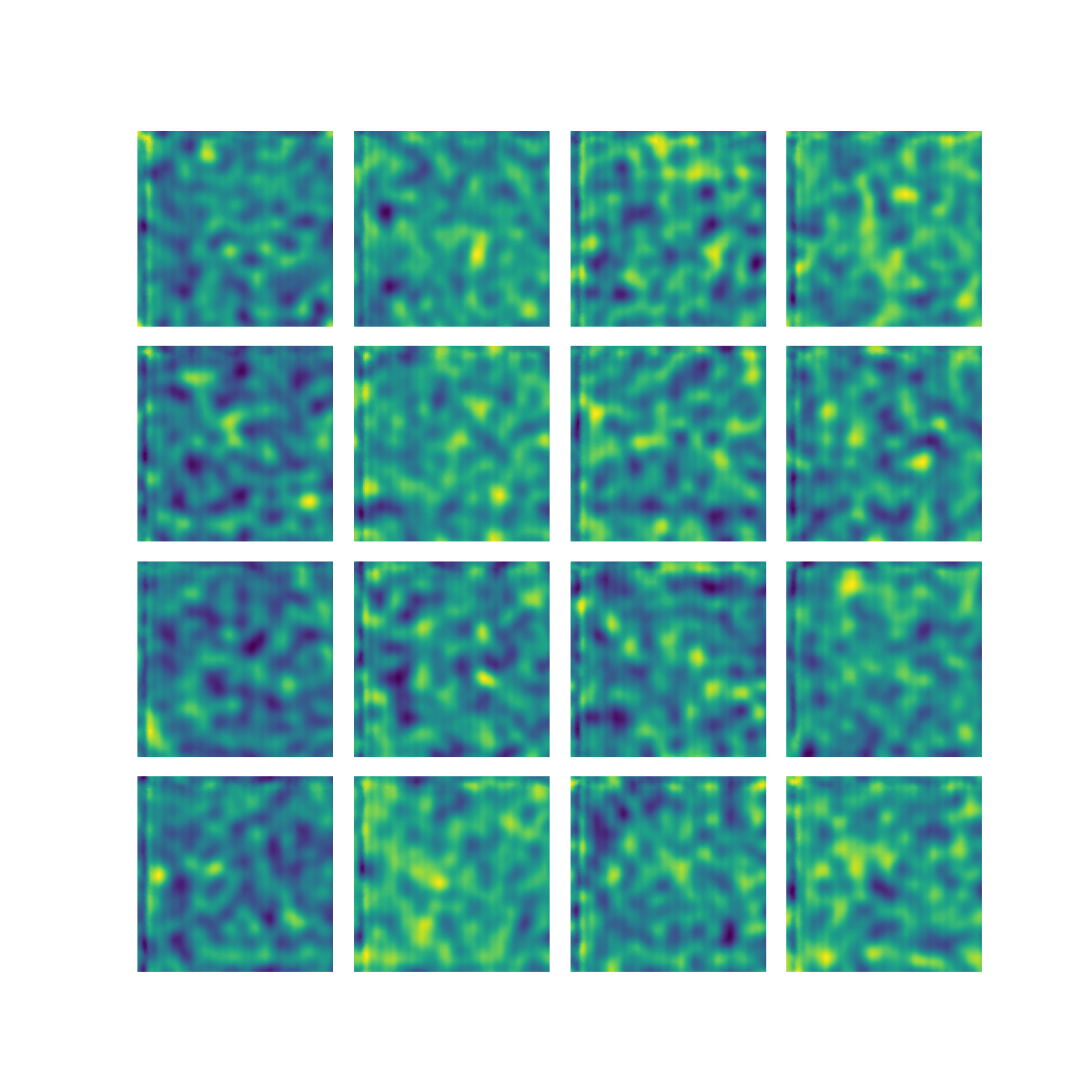}};

\node[] at (-0.4,-3.2){\includegraphics[width=.26\textwidth]{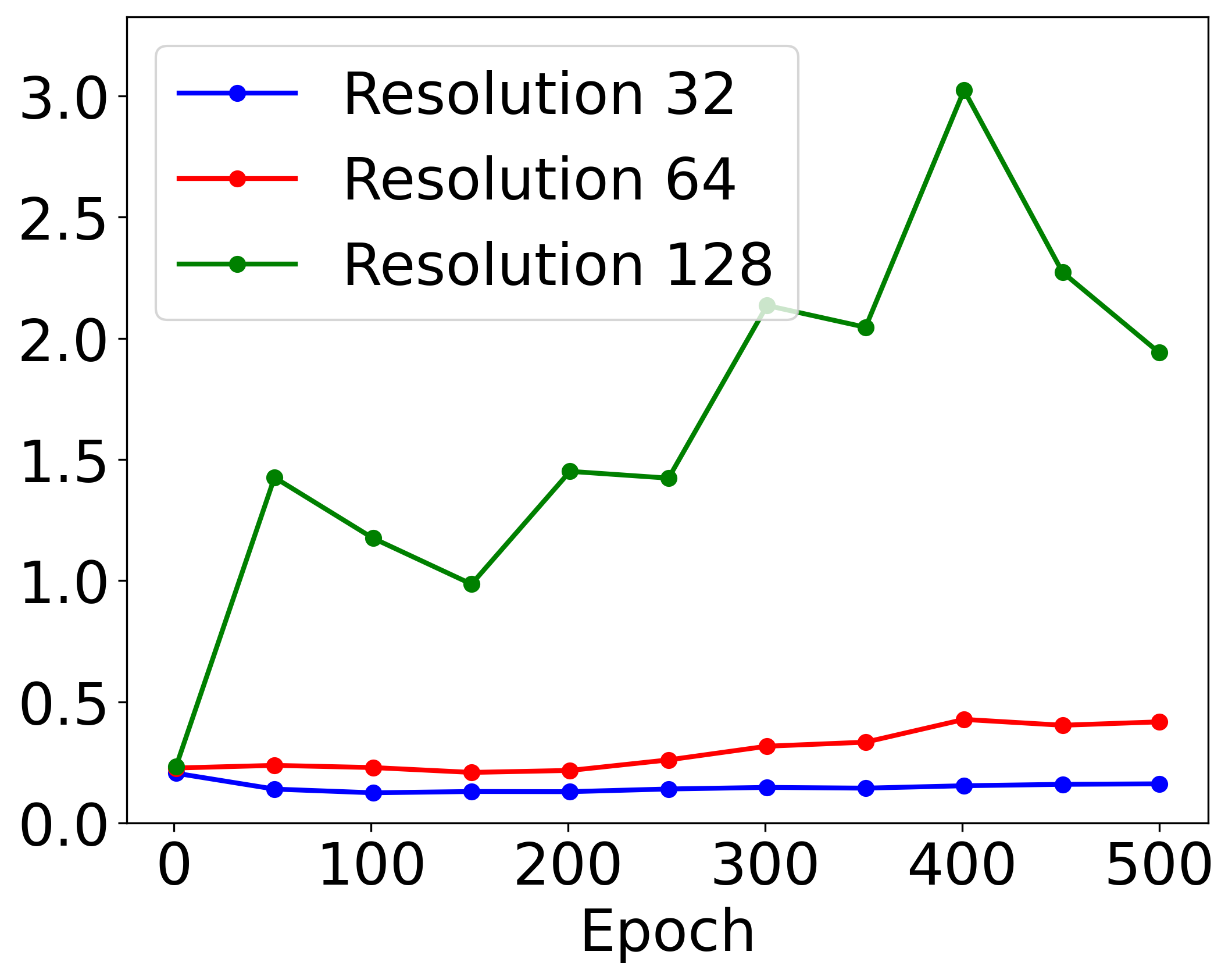}};
\node[label, font=\sffamily] at (3.5,-5){32};
\node[label,rotate=90, font=\sffamily] at (-2.6,-3.2){U-Net Loss};
\node[] at (3.5,-3){\includegraphics[trim={1.2cm 1.2cm 1.2cm 1.2cm},clip,width=.22\textwidth]{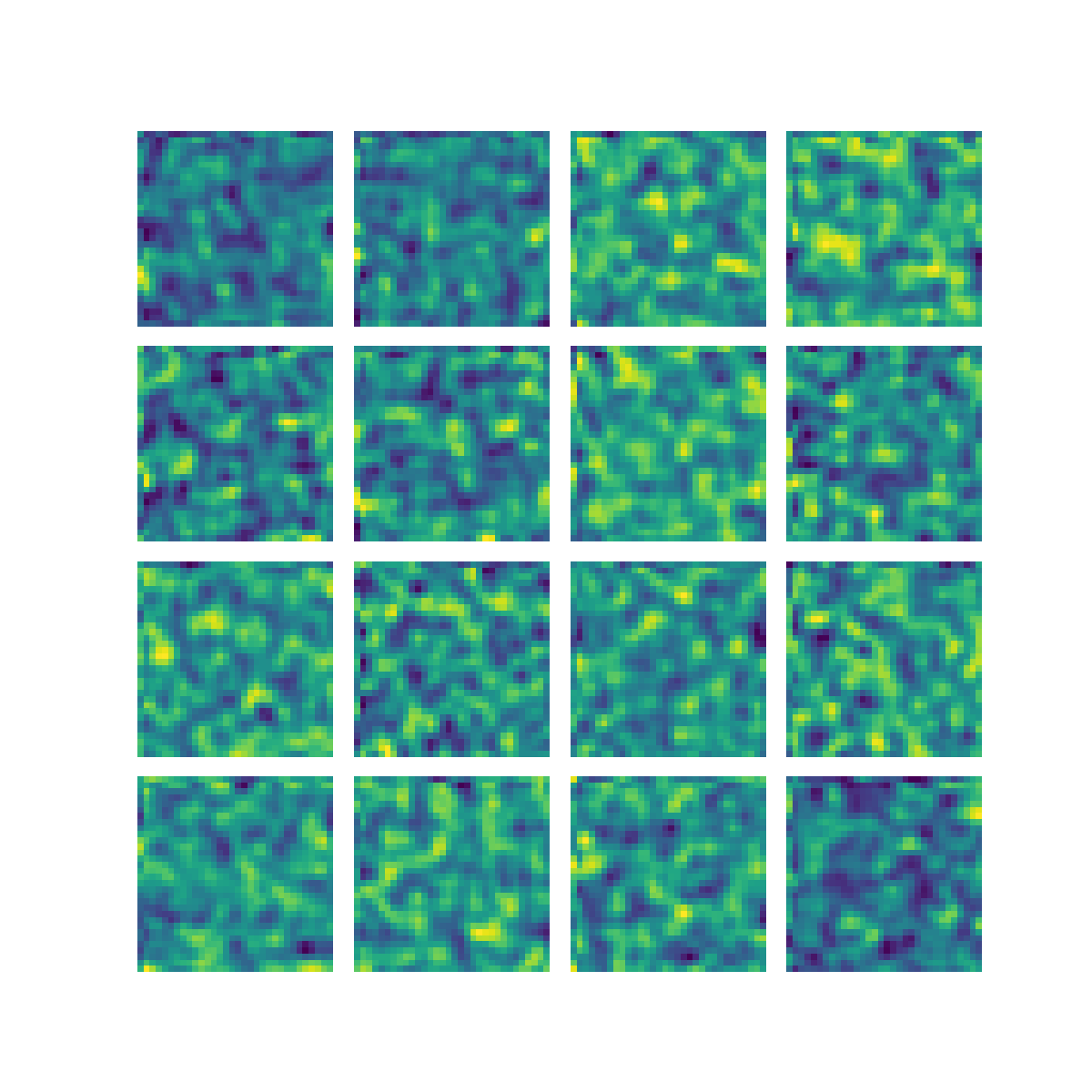}};
\node[label, font=\sffamily] at (3.5,-5){32};

\node[] at (6.8,-3) {\includegraphics[trim={1.2cm 1.2cm 1.2cm 1.2cm},clip, width=.22\textwidth]{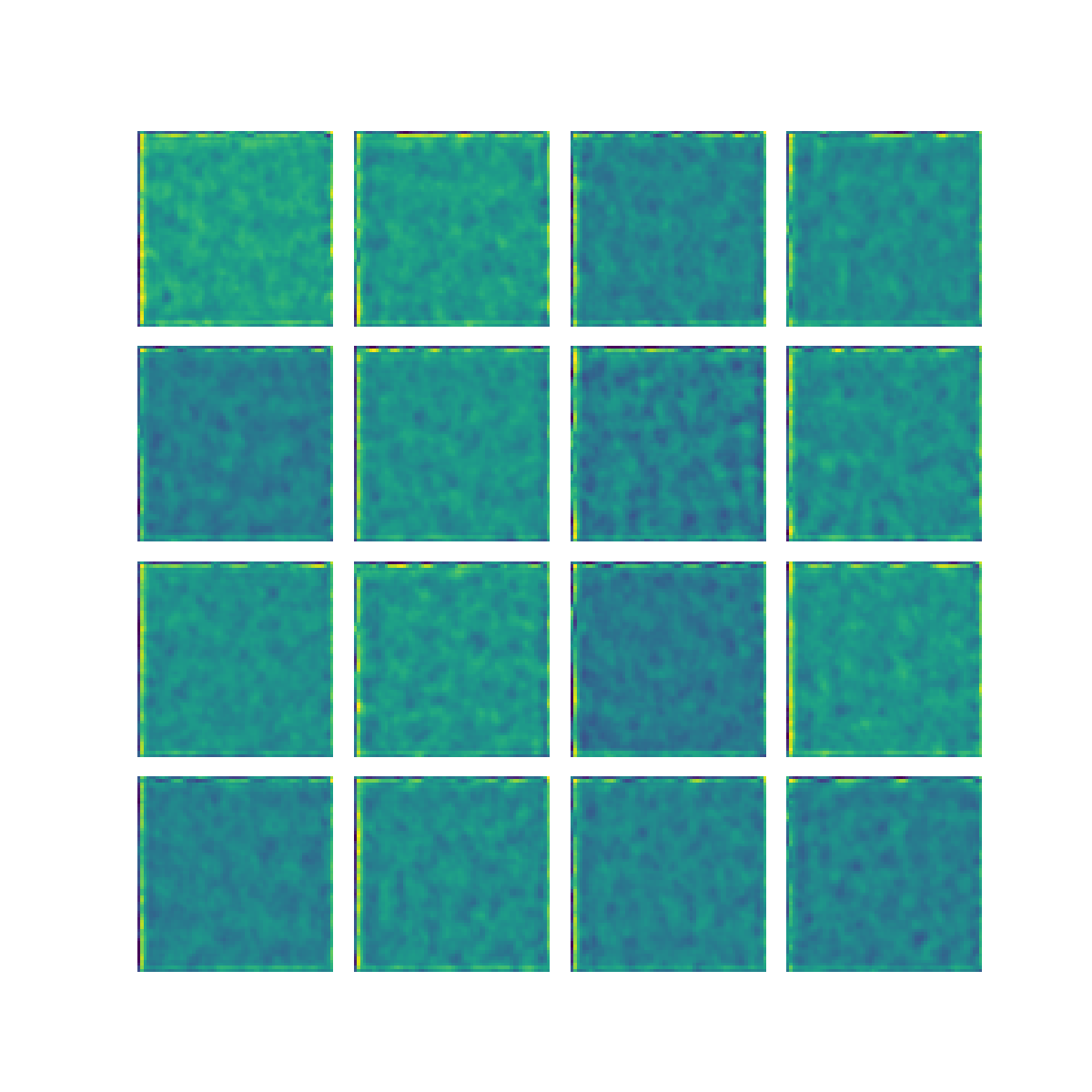}};
\node[label, font=\sffamily] at (6.8,-5){64};
\node[] at (10.1,-3) {\includegraphics[trim={1.2cm 1.2cm 1.2cm 1.2cm},clip, width=.22\textwidth]{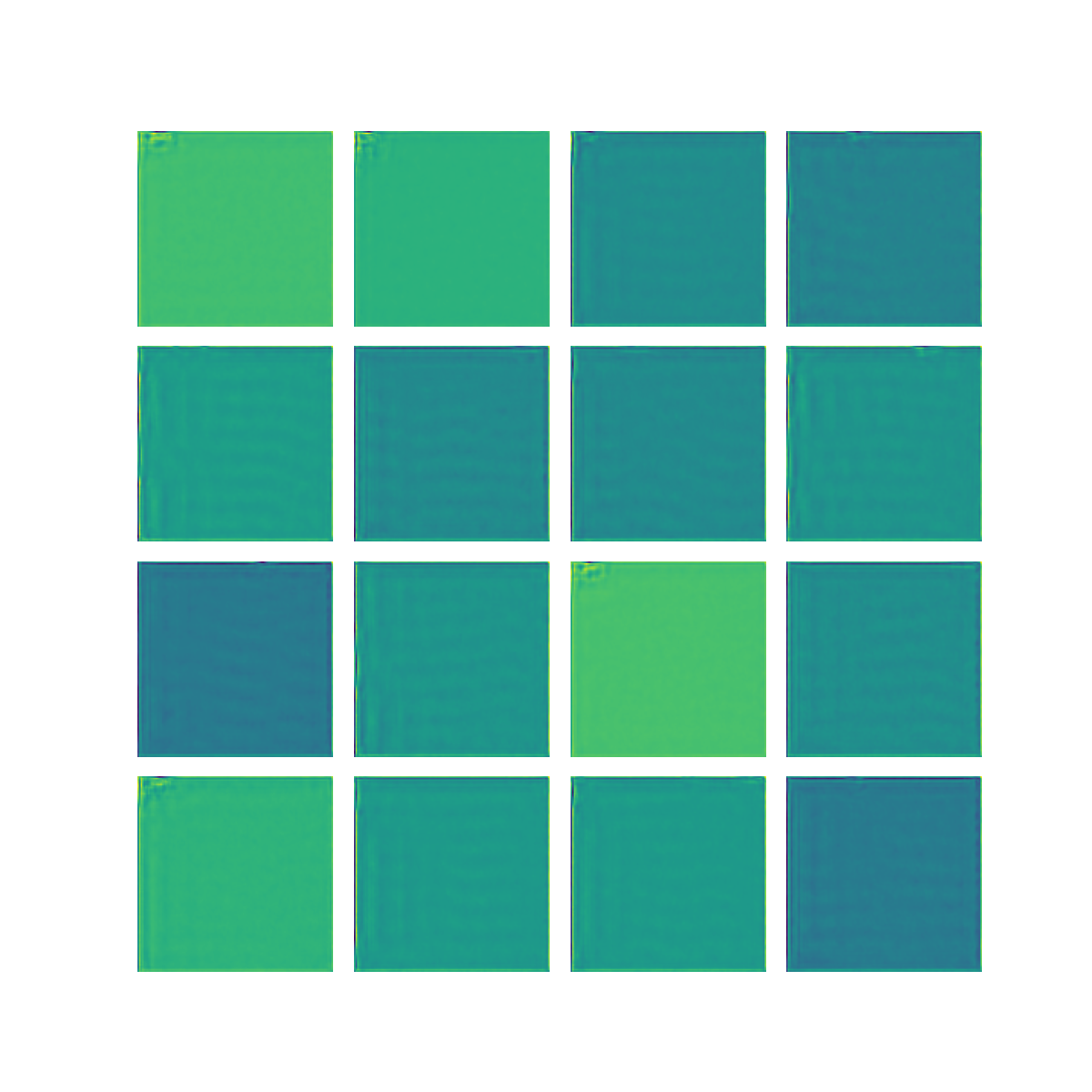}};
\node[label, font=\sffamily] at (10.2,-5){128};

\end{tikzpicture}

\caption{Results of FNO (Top row) and U-Net (Bottom row) architectures trained at resolution $32 \times 32$ using the Bessel prior \eqref{eq: bessel} with $\gamma_2 = 8$, $k = 1.1$. Left column: Loss curves evaluated at different resolutions from the FNO and U-Net models trained at resolution $32 \times 32$. Other columns: The generated image samples at  resolutions $32\times32$, $64\times64,$ and $128\times128$.
}
\label{fig:pde3}
% \end{center}
\end{figure} %of{figure}

\paragraph{Comparison of different priors}
We compare the performance of models trained with different priors on resolution $32\times 32$ to produce samples at higher resolutions. 
As for the other datasets, the standard Gaussian prior does not generalize well to higher resolutions; see, e.g., the samples of size $256 \times 256$ in \cref{fig:pde2}.
The trace-class covariance operators offer the flexibility of adjusting the eigenvalue decay and achieving a better intensity level in the generated images. In particular, we select suitable values for the scaling factor $\gamma_0$, $\gamma_1$, and $\gamma_2$, and the fractional power $k$ defined in \cref{sub:latent}, as we need appropriate choices to satisfy the regularity conditions in \cref{thm:main} and \cref{thm:multilevel}. These hyperparameters in the trace class operator influence the regularity of its associated Gaussian random field, and thus the smoothness of the SDE solutions. 
The combined prior with $\gamma_0 =0$ (i.e., the Laplacian prior) and the Bessel prior give better results in terms of sliced Wasserstein metric evaluated at resolution $128$.
 We include the hyperparameter search table and more details in the supplements. 
\begin{figure}[t]
% \begin{center}
\centering
\begin{tikzpicture}[scale=1]

\node[] at (3,1.2){\includegraphics[trim={1.2cm 1.2cm 1.2cm 1.2cm},clip,width=.2\textwidth]{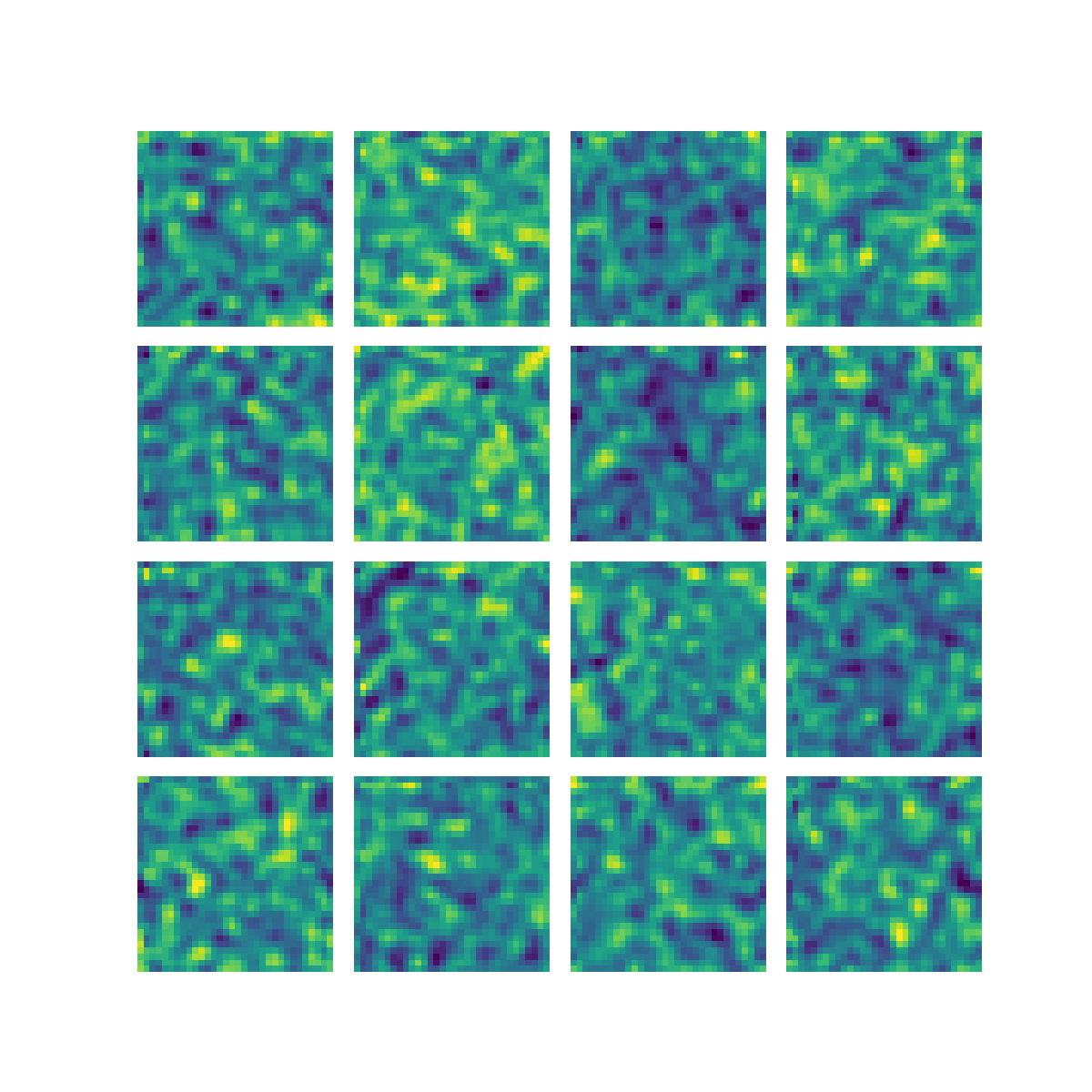}};
\node[label, font=\sffamily] at (3,2.7){32};

\node[] at (6,1.2) {\includegraphics[trim={1.2cm 1.2cm 1.2cm 1.2cm},clip, width=.2\textwidth]{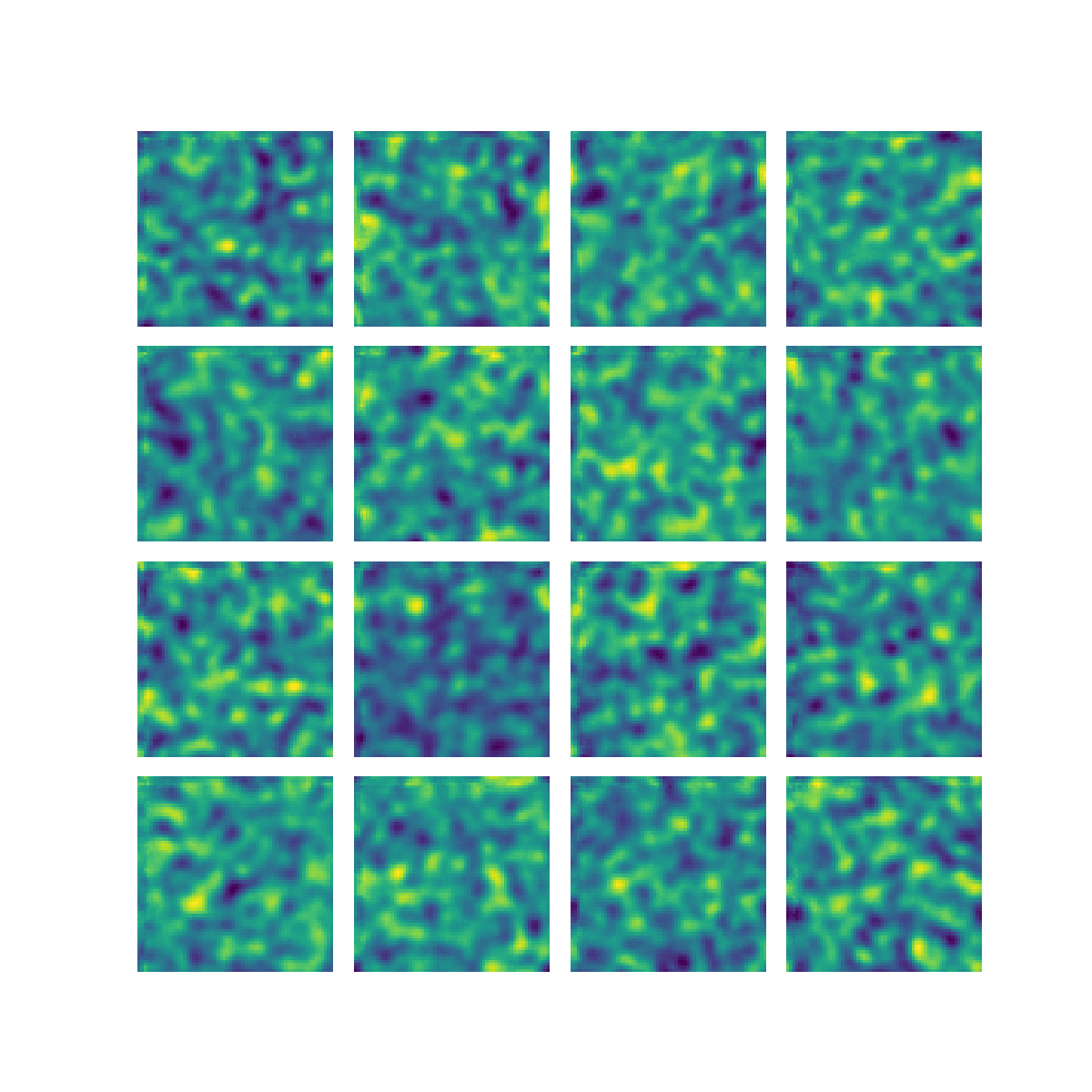}};
\node[label, font=\sffamily] at (6,2.7){64};

\node[] at (9,1.2) {\includegraphics[trim={1.2cm 1.2cm 1.2cm 1.2cm},clip, width=.2\textwidth]{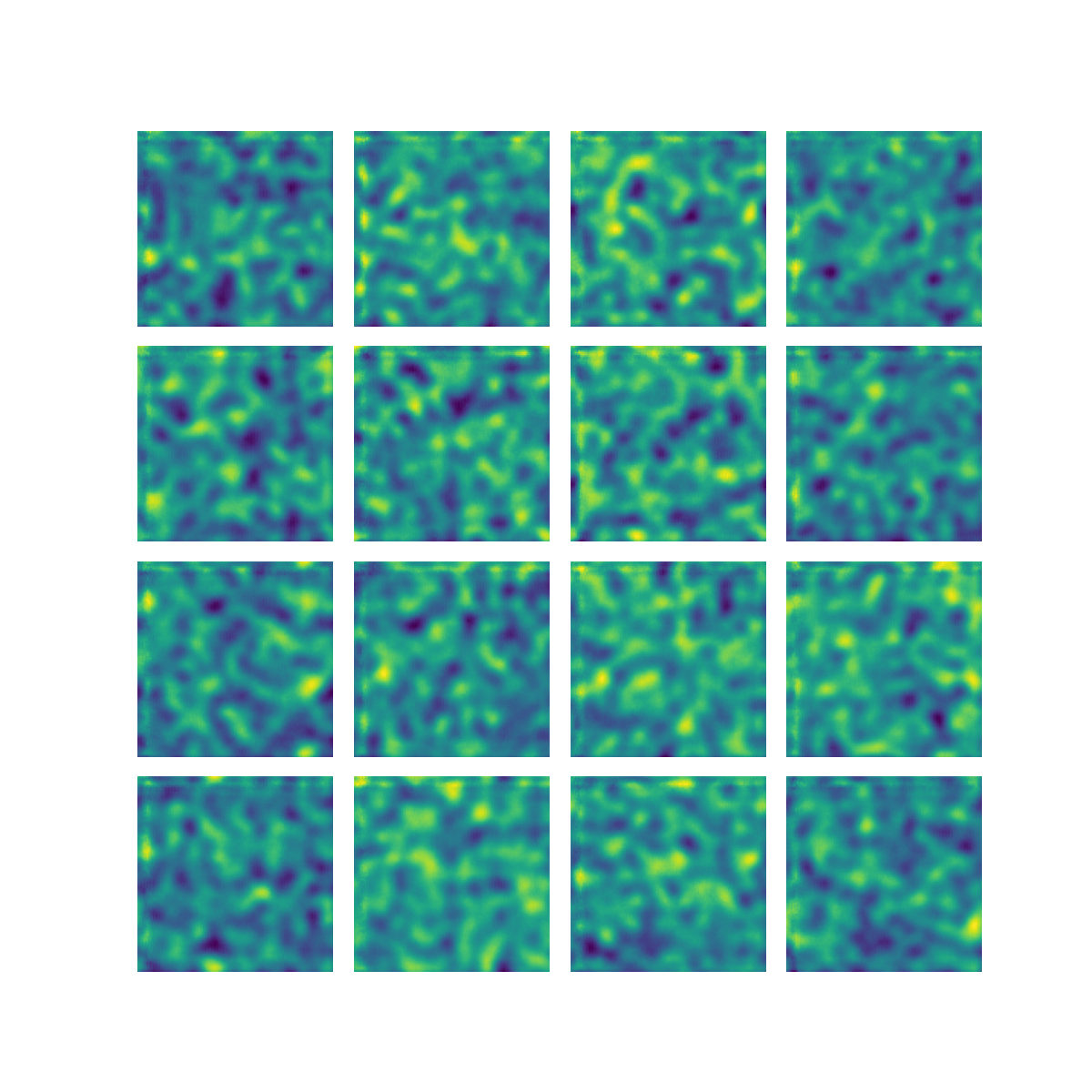}};
\node[label, font=\sffamily] at (9,2.7){128};

\node[] at (12,1.2) {\includegraphics[trim={1.2cm 1.2cm 1.2cm 1.2cm},clip, width=.2\textwidth]{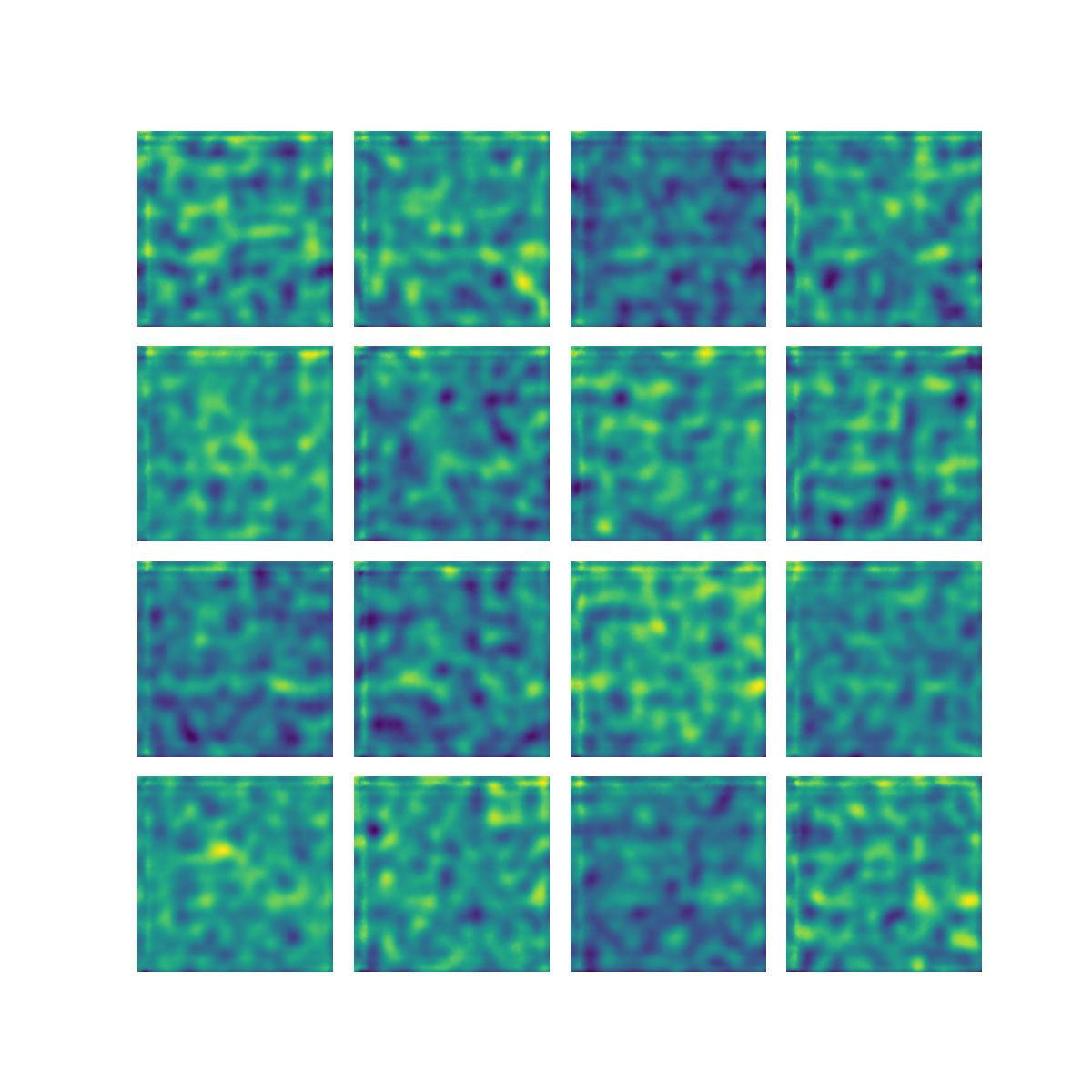}};
\node[label, font=\sffamily] at (12,2.7){256};

\node[label,rotate=90, font=\sffamily] at (0.5,1){Bessel};

\node[] at (3,-1.8){\includegraphics[trim={1.2cm 1.2cm 1.2cm 1.2cm},clip,width=.2\textwidth]{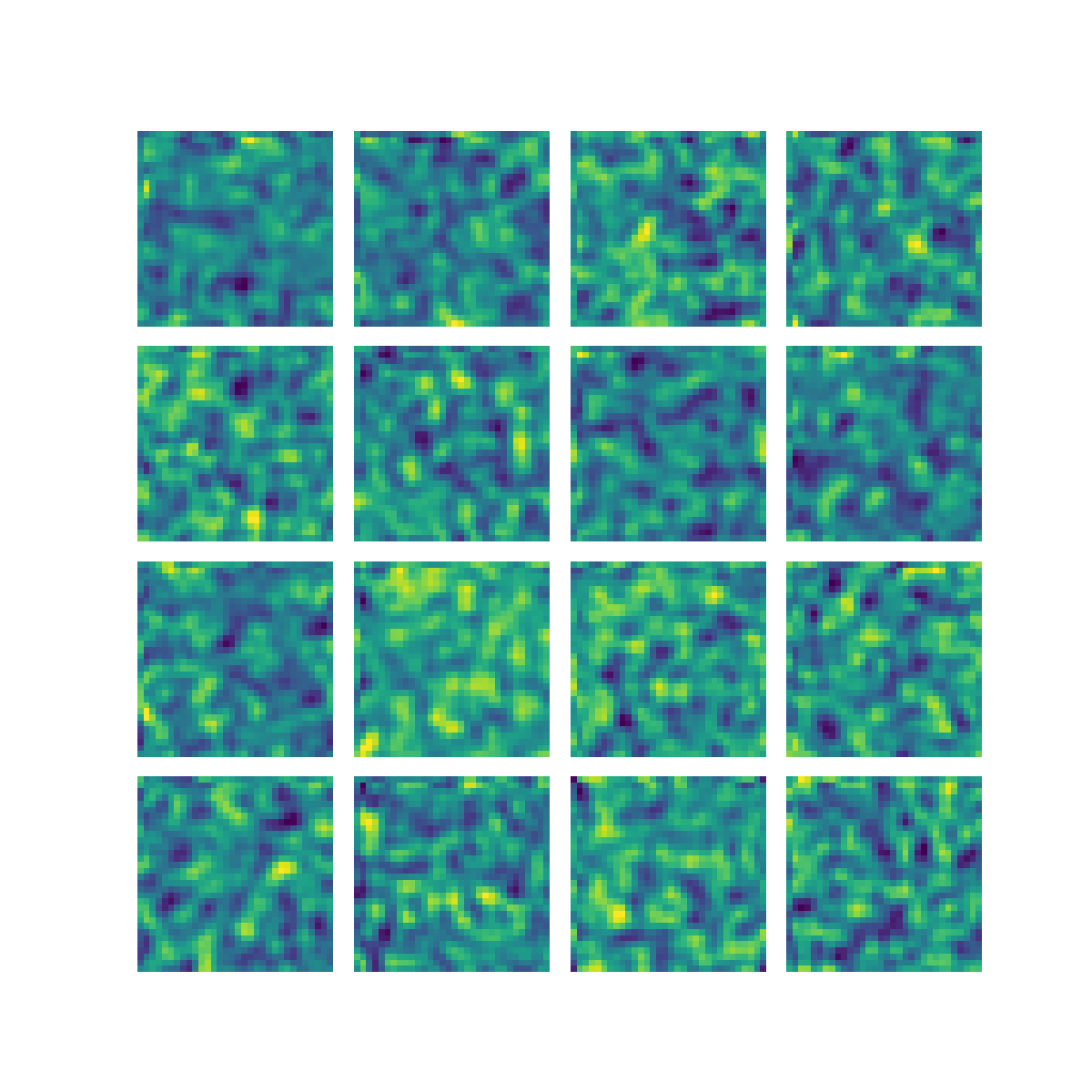}};

\node[] at (6,-1.8) {\includegraphics[trim={1.2cm 1.2cm 1.2cm 1.2cm},clip, width=.2\textwidth]{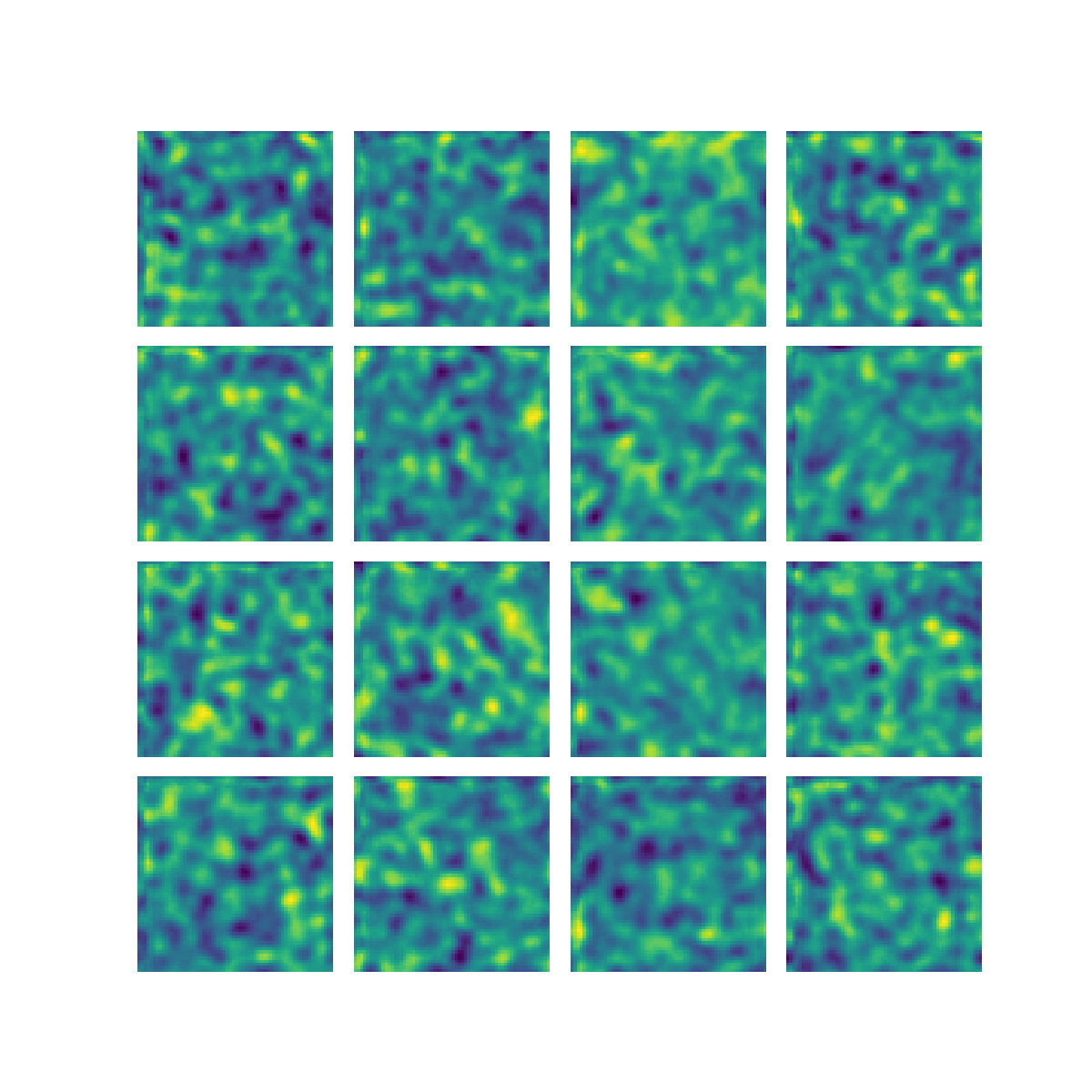}};

\node[] at (9,-1.8) {\includegraphics[trim={1.2cm 1.2cm 1.2cm 1.2cm},clip, width=.2\textwidth]{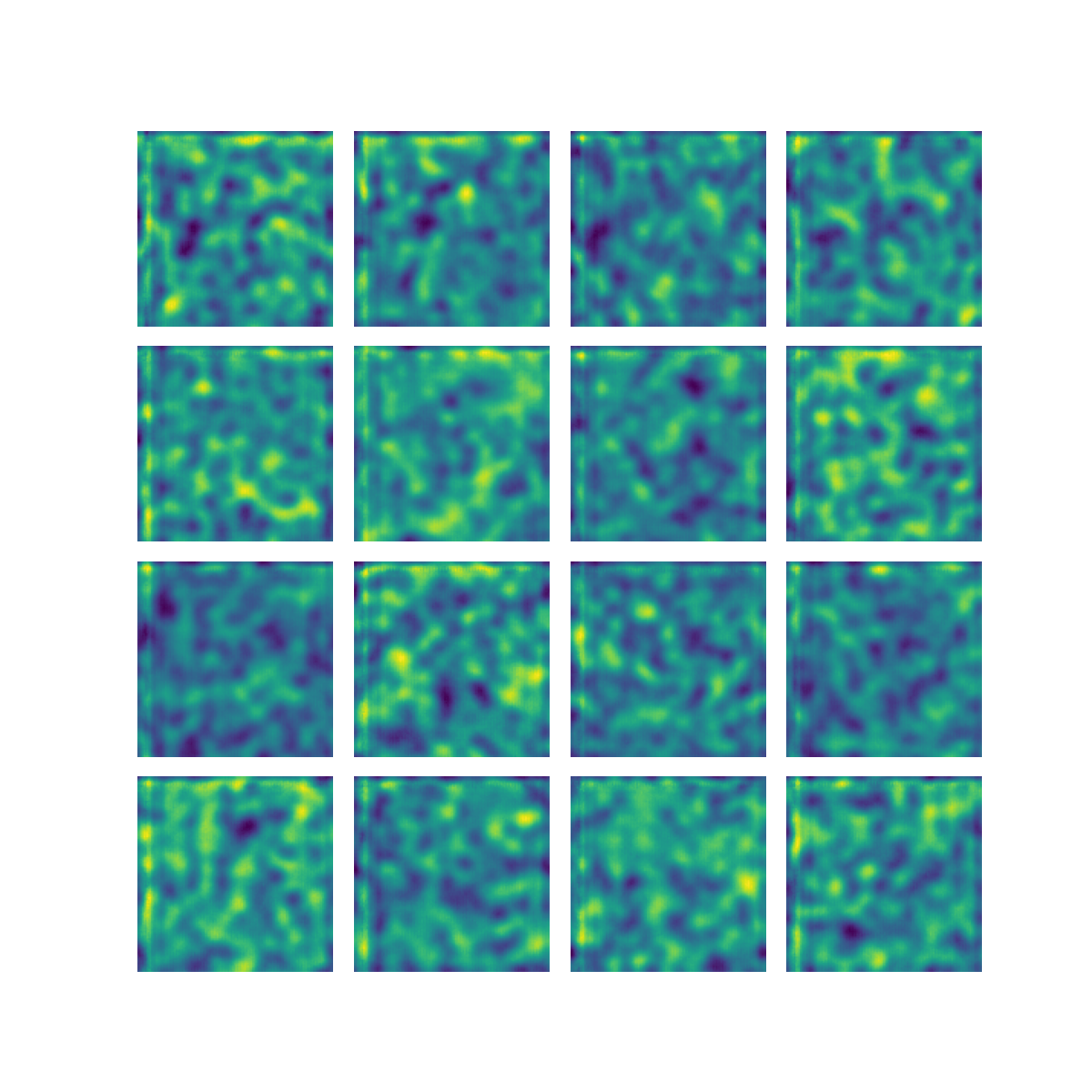}};

\node[] at (12,-1.8) {\includegraphics[trim={1.2cm 1.2cm 1.2cm 1.2cm},clip, width=.2\textwidth]{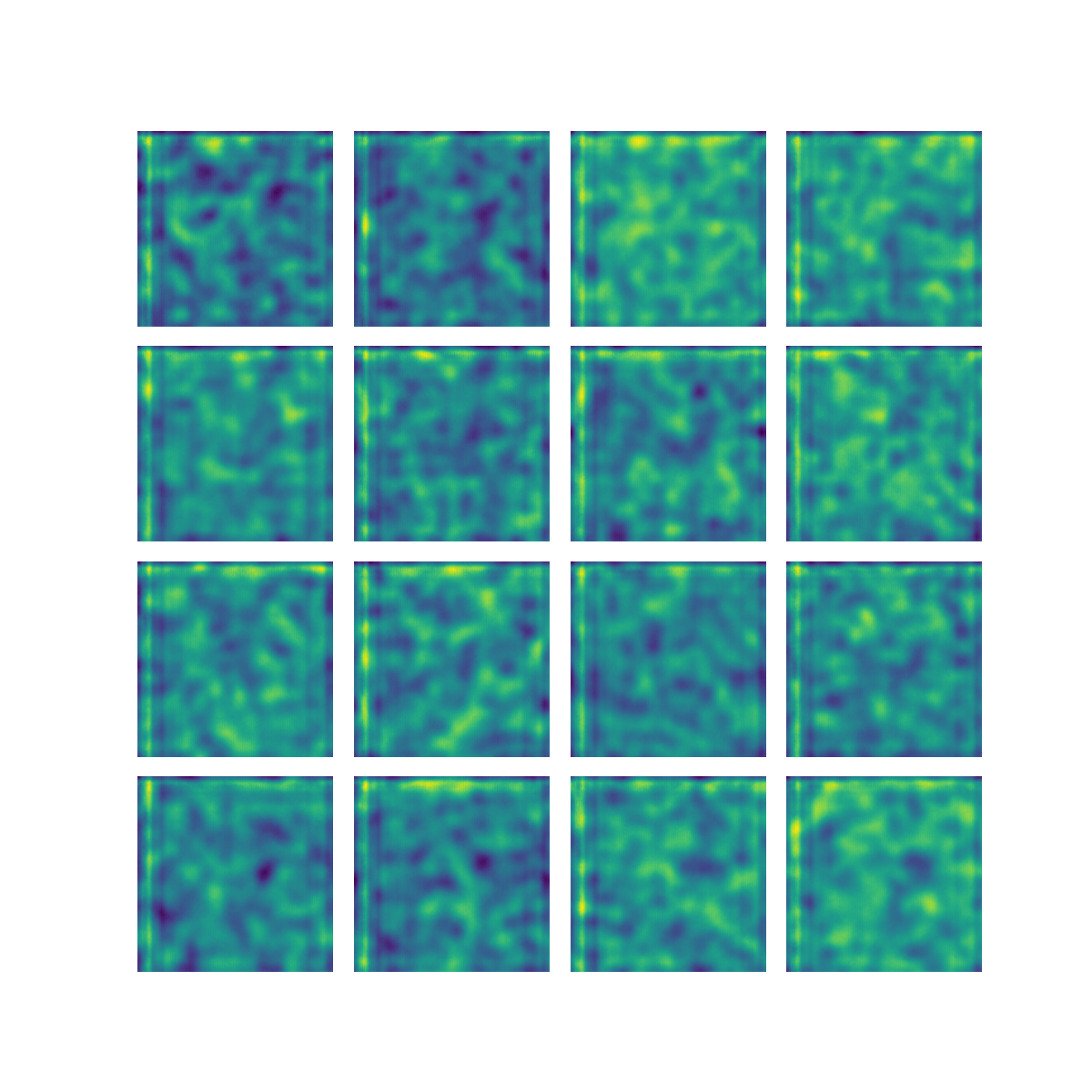}};

\node[label,rotate=90, font=\sffamily] at (0.5,-1.8){Combined};

\node[] at (3,-4.8){\includegraphics[trim={1.2cm 1.2cm 1.2cm 1.2cm},clip,width=.2\textwidth]{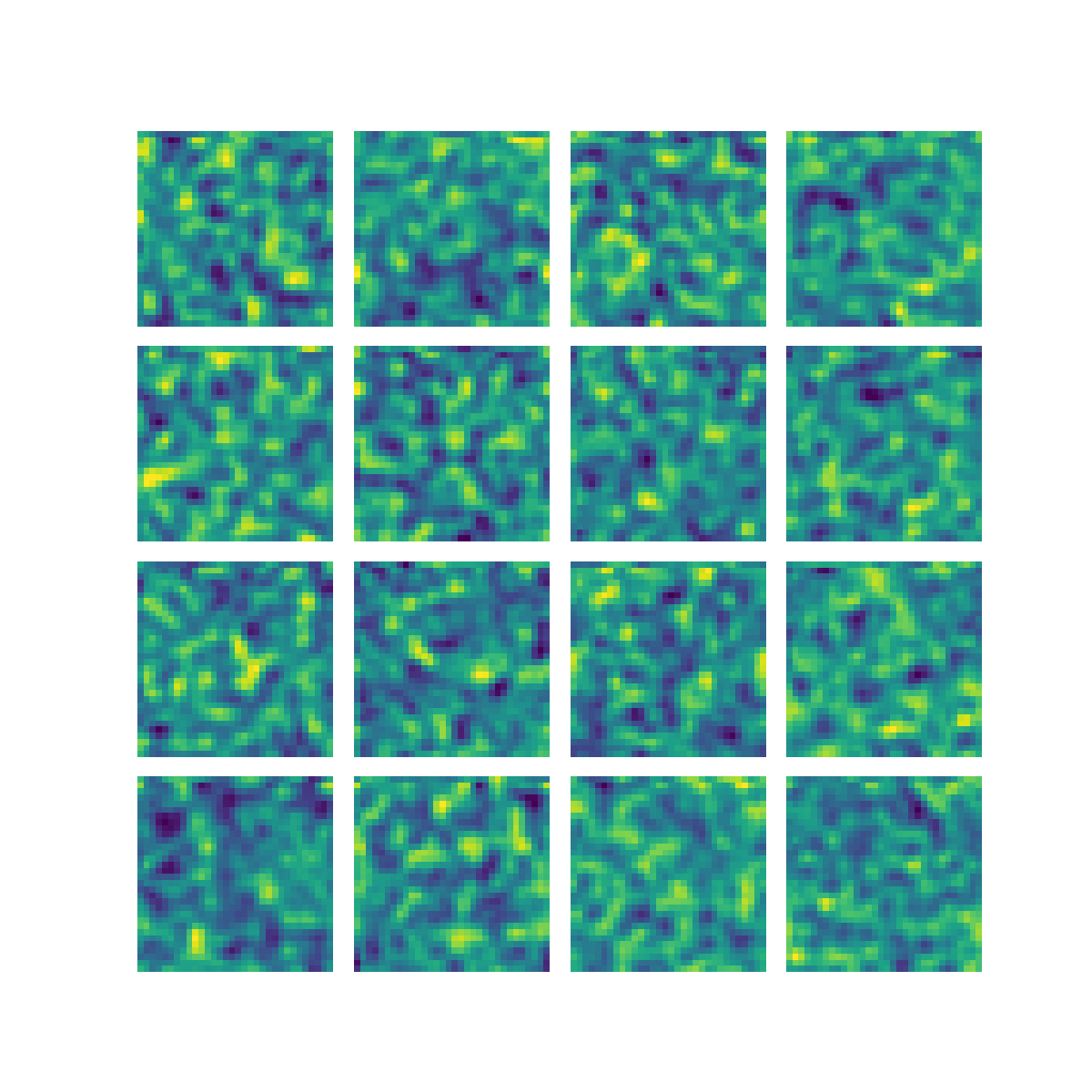}};

\node[] at (6,-4.8) {\includegraphics[trim={1.2cm 1.2cm 1.2cm 1.2cm},clip, width=.2\textwidth]{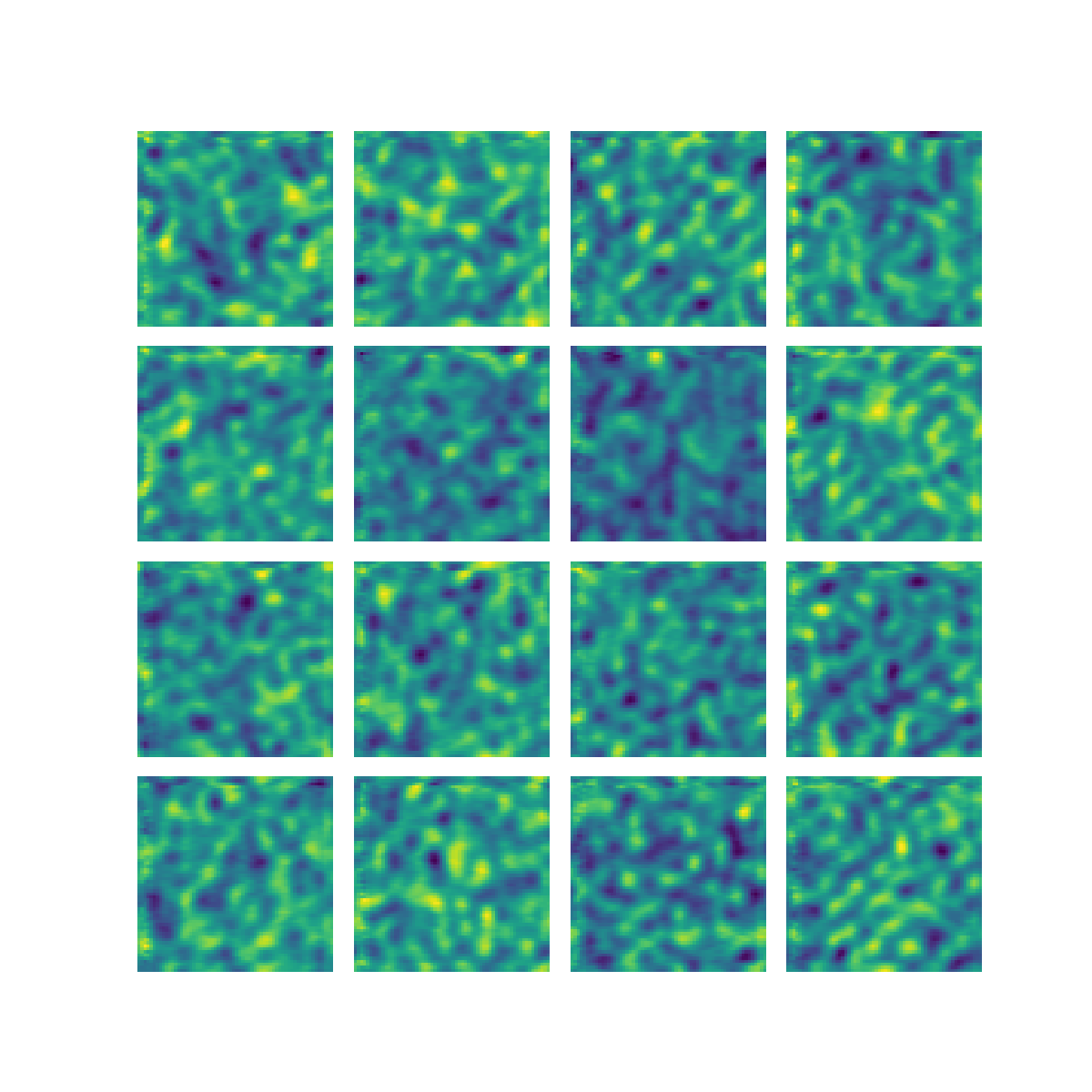}};

\node[] at (9,-4.8) {\includegraphics[trim={1.2cm 1.2cm 1.2cm 1.2cm},clip, width=.2\textwidth]{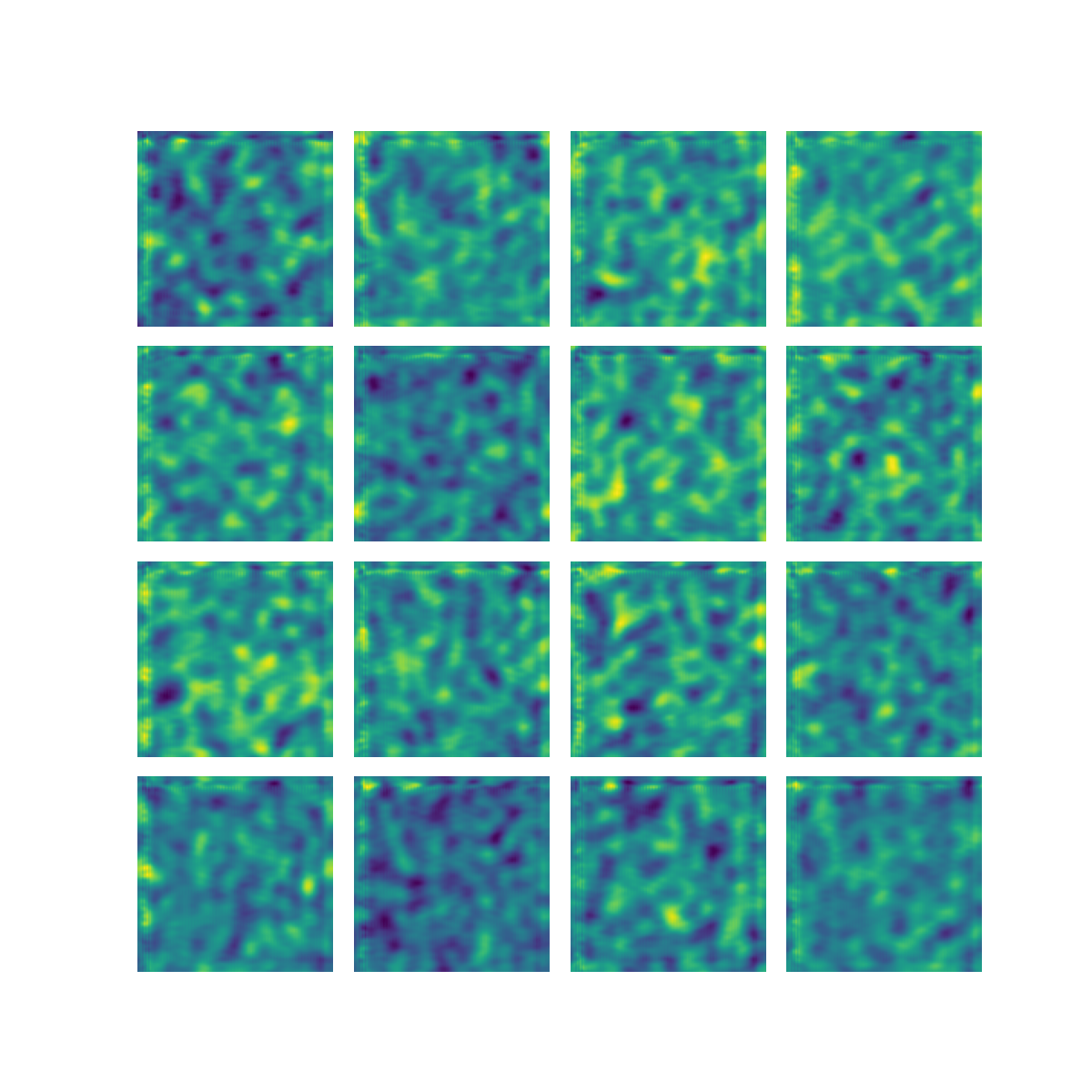}};

\node[] at (12,-4.8) {\includegraphics[trim={1.2cm 1.2cm 1.2cm 1.2cm},clip, width=.2\textwidth]{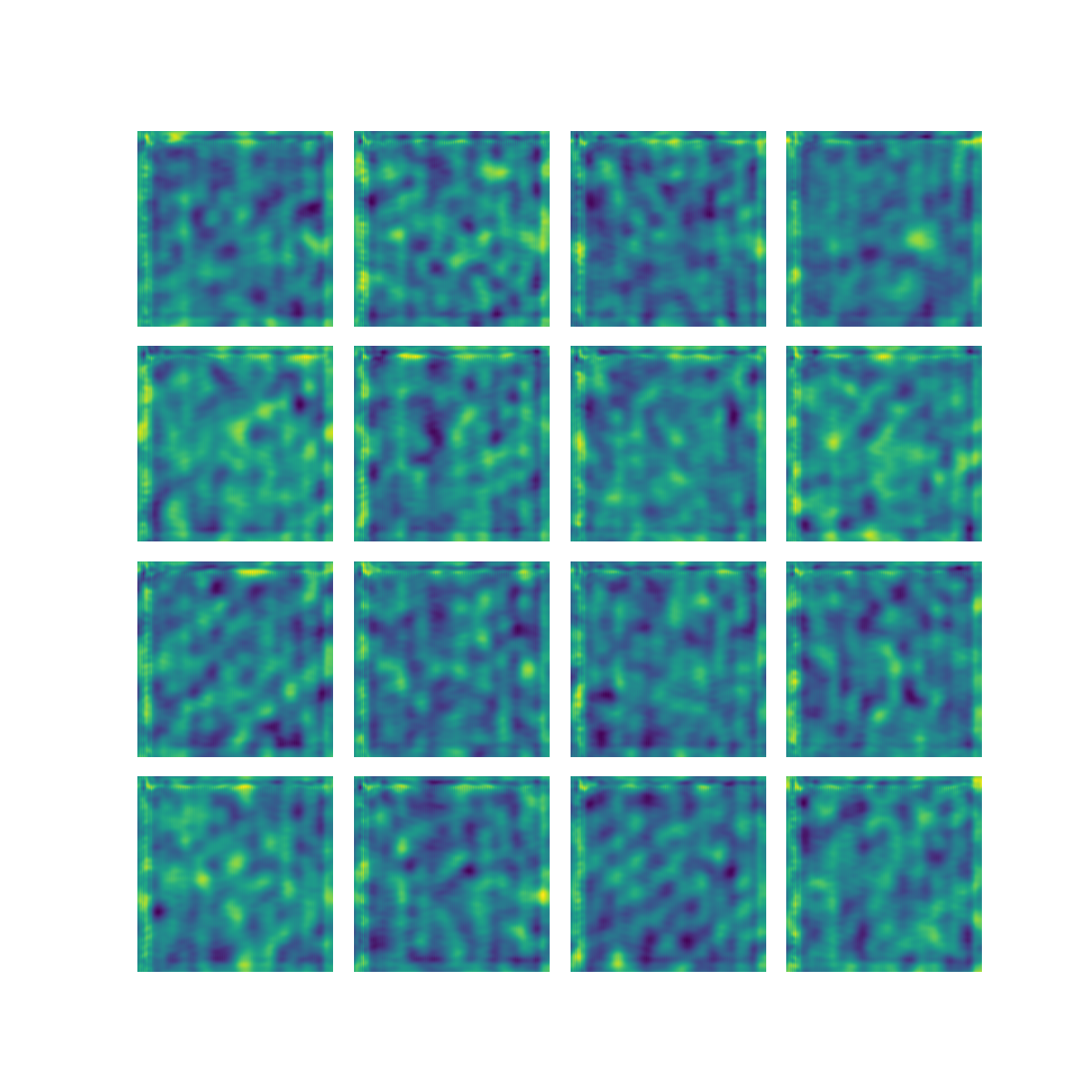}};

\node[label,rotate=90, font=\sffamily] at (0.5,-4.6){FNO};

\node[] at (3,-7.8){\includegraphics[trim={1.2cm 1.2cm 1.2cm 1.2cm},clip,width=.2\textwidth]{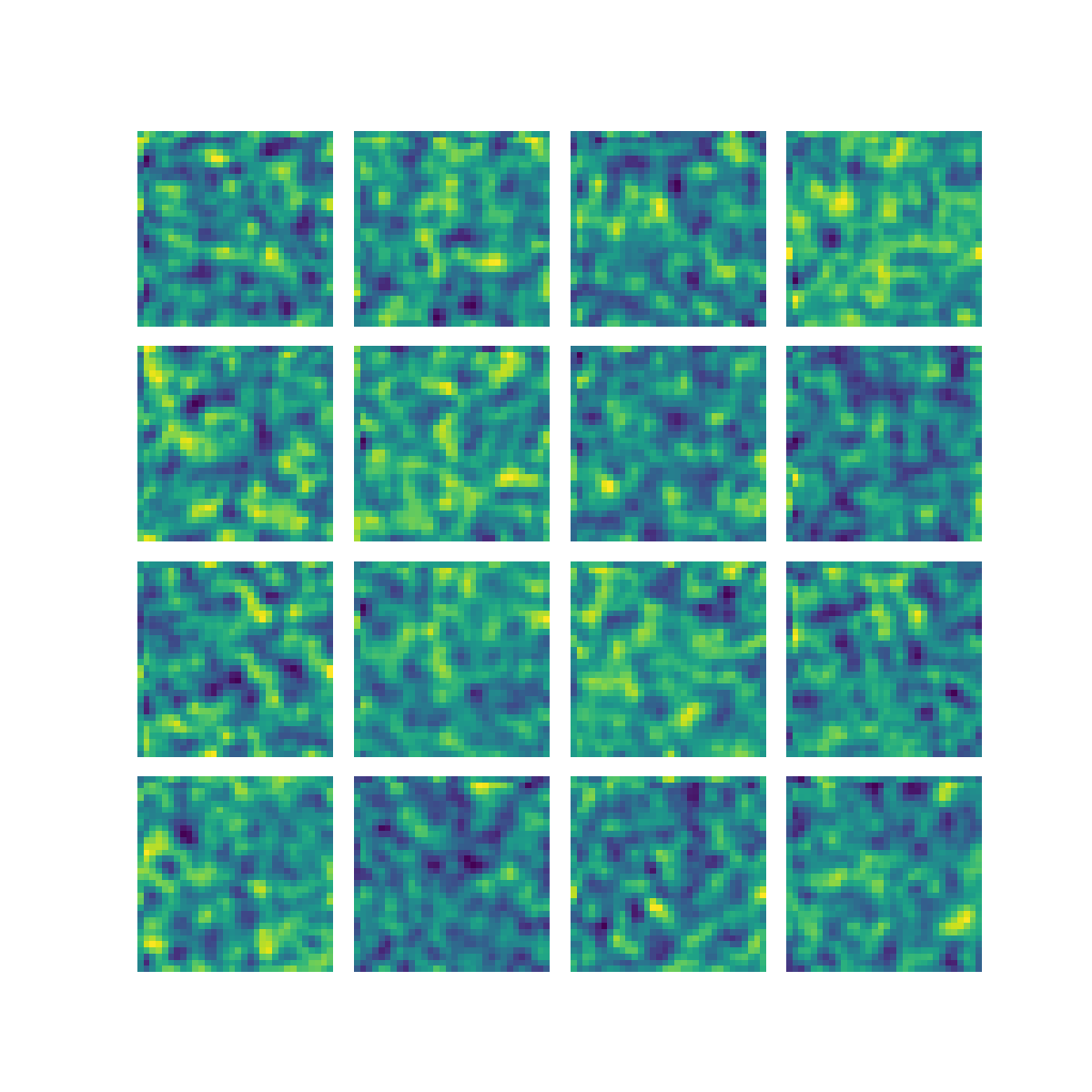}};

\node[] at (6,-7.8) {\includegraphics[trim={1.2cm 1.2cm 1.2cm 1.2cm},clip, width=.2\textwidth]{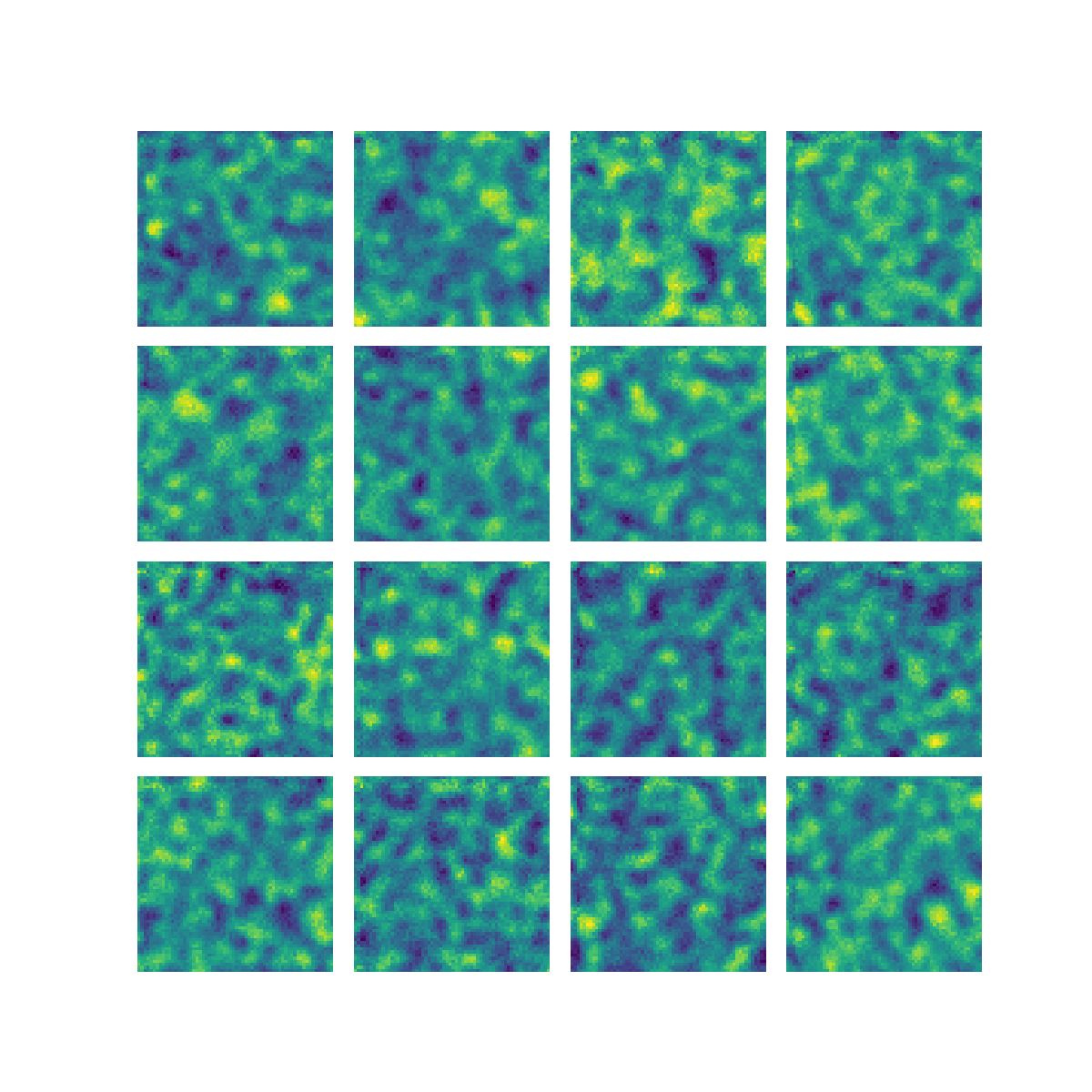}};

\node[] at (9,-7.8) {\includegraphics[trim={1.2cm 1.2cm 1.2cm 1.2cm},clip, width=.2\textwidth]{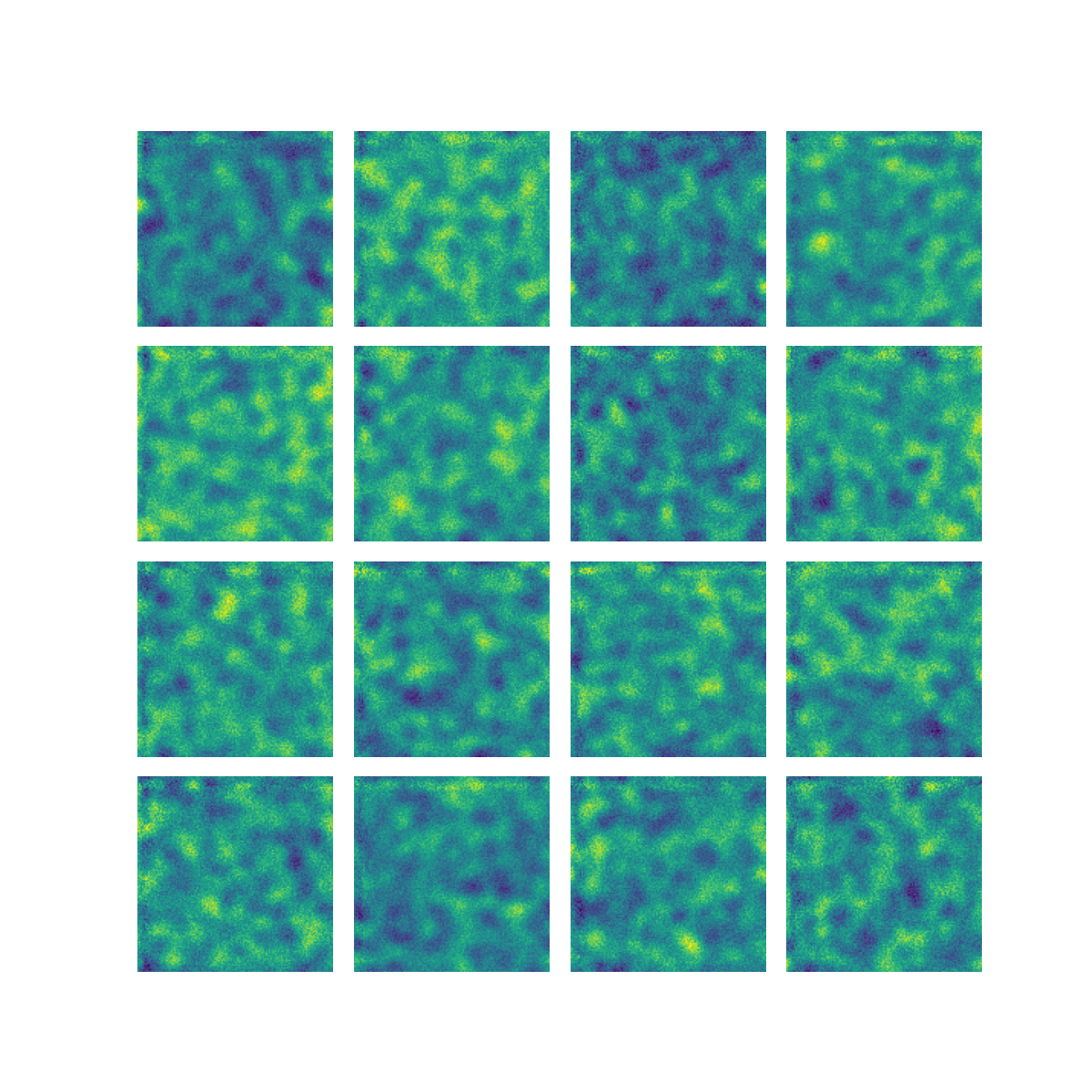}};

\node[] at (12,-7.8) {\includegraphics[trim={1.2cm 1.2cm 1.2cm 1.2cm},clip, width=.2\textwidth]{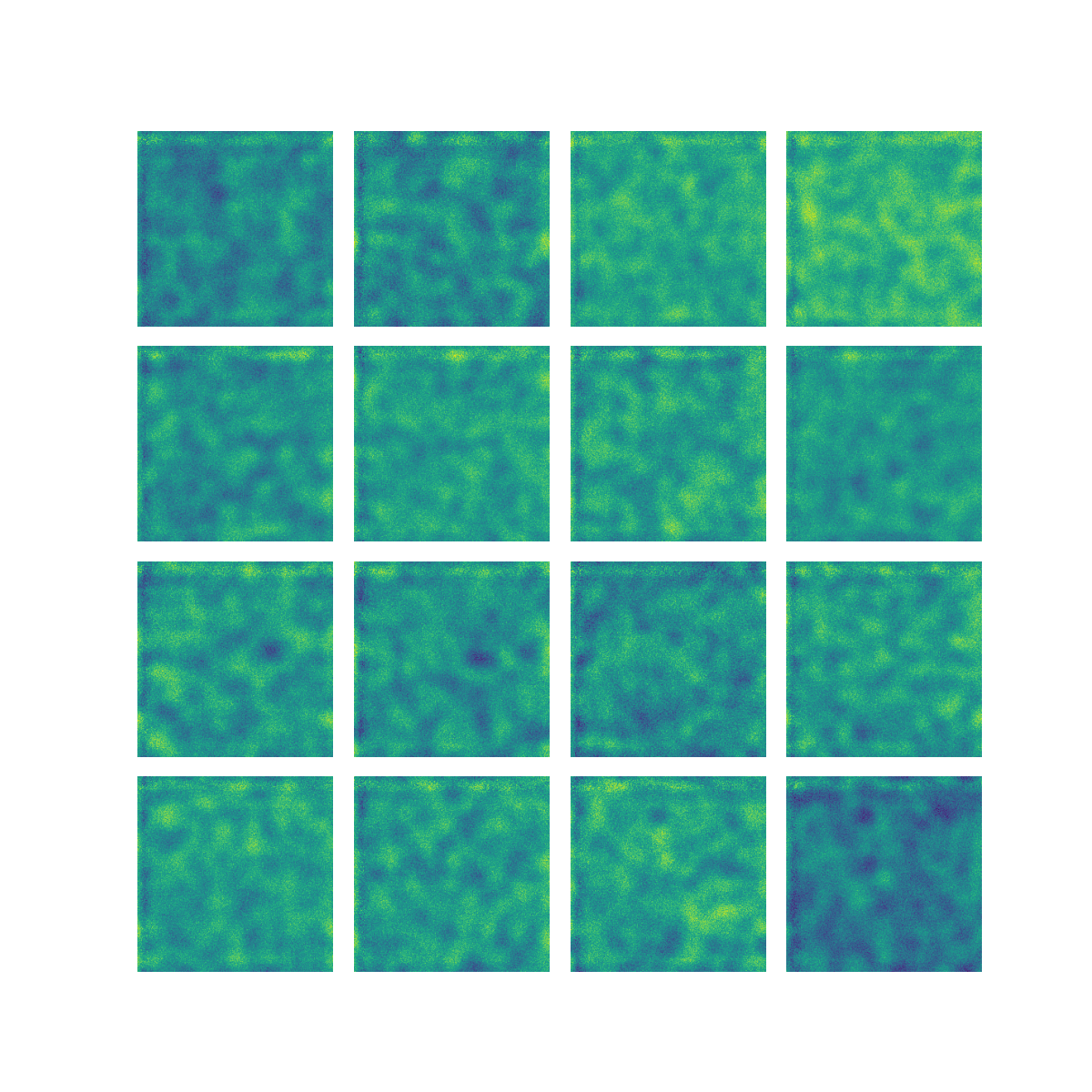}};

\node[label,rotate=90, font=\sffamily] at (0.5,-7.6){Standard};

\end{tikzpicture}

\caption{PDE solution images trained from FNO generative model with different priors at resolution 32. Illustration of generated images at resolution $32 \times 32$, $64 \times 64$, $128 \times 128$ and $256 \times 256$. We record the sliced Wasserstein distance (SW) between learned and true distribution, evaluated at $128 \times 128$ every 50 epochs, and take the average of the best sliced Wasserstein distance over four runs for the four priors, respectively.
Top: Bessel prior ($\gamma_2 = 8$, $k=1.$1). SW$= 0.044$. 
Second row: Combined prior ($\gamma_0 = 0$,$\gamma_1 = 8$, $k=1.1$). SW$= 0.027$. 
Third row: FNO prior, SW$= 0.123$. 
Bottom: Standard Gaussian prior. SW$= 0.053$. 
% Standard error is the standard deviation divided by $\sqrt{5}$.
% (a) Gaussian priors in U-Net architecture. (b) Gaussian priors in FNO architecture.
}
\label{fig:pde2}
% \end{center}
\end{figure} %of{figure}

\paragraph{Multilevel training}
We experimentally show that weights obtained from coarse-level training on resolution $32\times 32$ can be used to warmstart fine-level training at resolutions $64\times 64$ and $128\times128$.
In \cref{fig:pdewarm}, we compare the loss values and sliced Wasserstein distances during fine-level training when initializing the weights using coarse-level results or random weight initialization.
While the loss curves appear similar and ultimately achieve similar values, using the coarse-level weights leads to lower sliced Wasserstein distances, especially in the first few epochs.
% We see that incorporating the trained score at coarse resolution can further help with training at high resolution, as the warm-starting training generates reasonably good images at an earlier epoch. This is reflected by the sliced Wasserstein results in \cref{fig:pdewarm}. With neural operator and suitable covariance operator, these experiments suggest a multilevel performance in that the model is not sensitive to the different data discretizations, and that coarse resolution information helps with fine resolution training.
\begin{figure}[t]
% \begin{center}
\centering
\begin{tikzpicture}[scale=1]

\node[] at (-0.6,-3){\includegraphics[width=.24\textwidth]{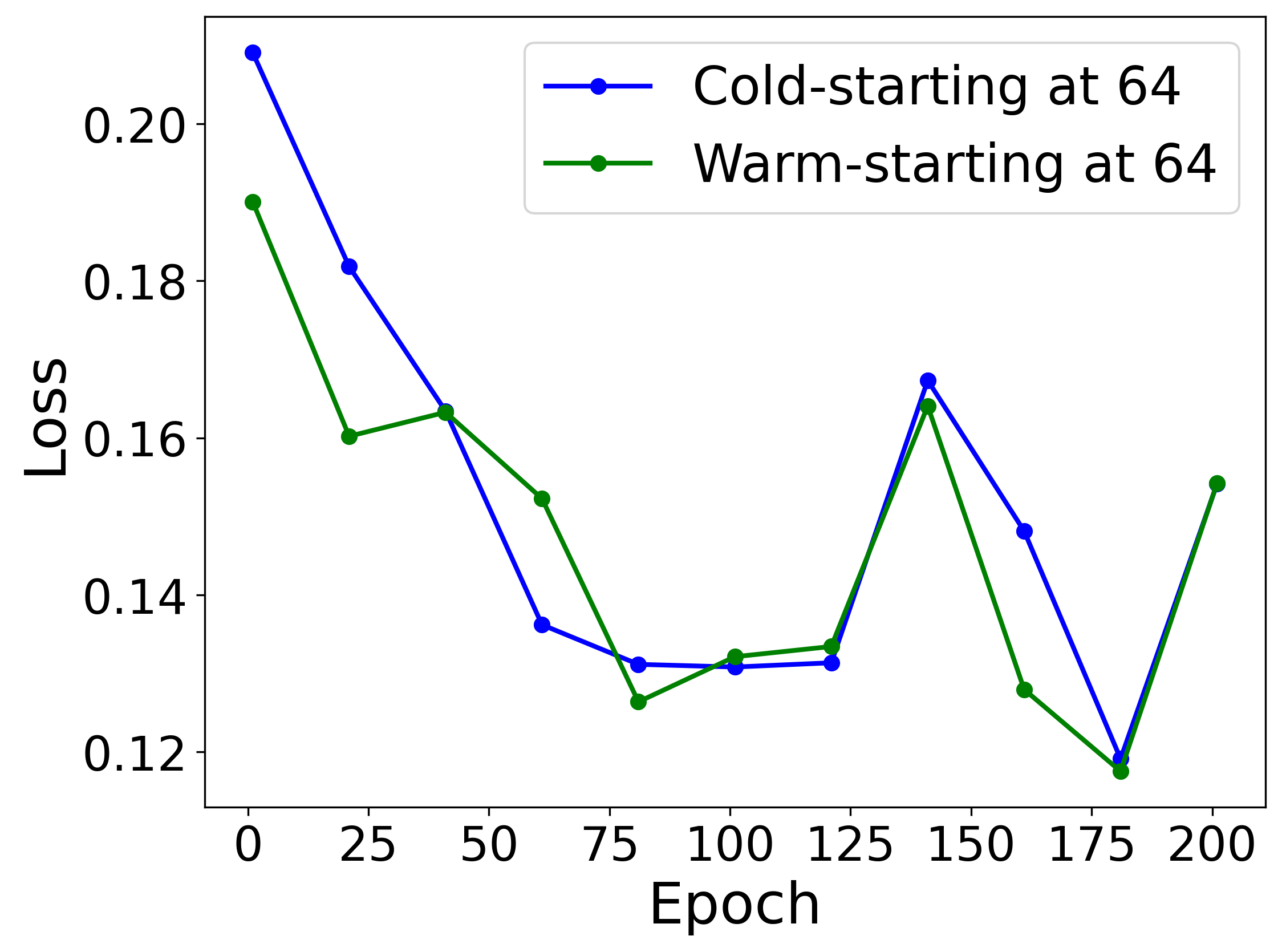}};
% \node[label, font=\sffamily] at (3.5,-5){32};
% \node[label,rotate=90, font=\sffamily] at (-2.6,-3.2){U-Net Loss};
\node[] at (3.2,-3){\includegraphics[width=.24\textwidth]{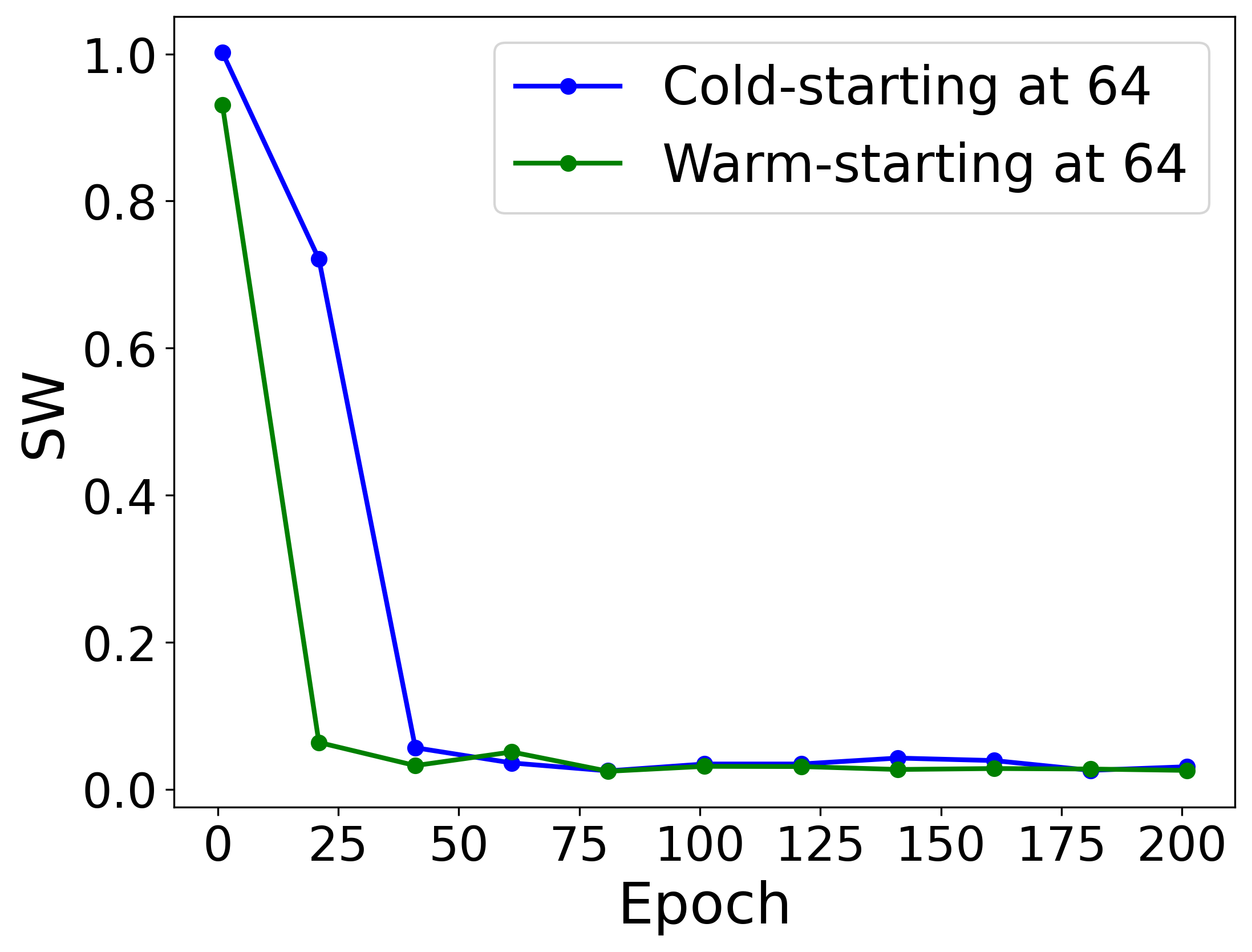}};
% \node[label, font=\sffamily] at (3.5,-5){32};

\node[] at (7,-3) {\includegraphics[width=.24\textwidth]{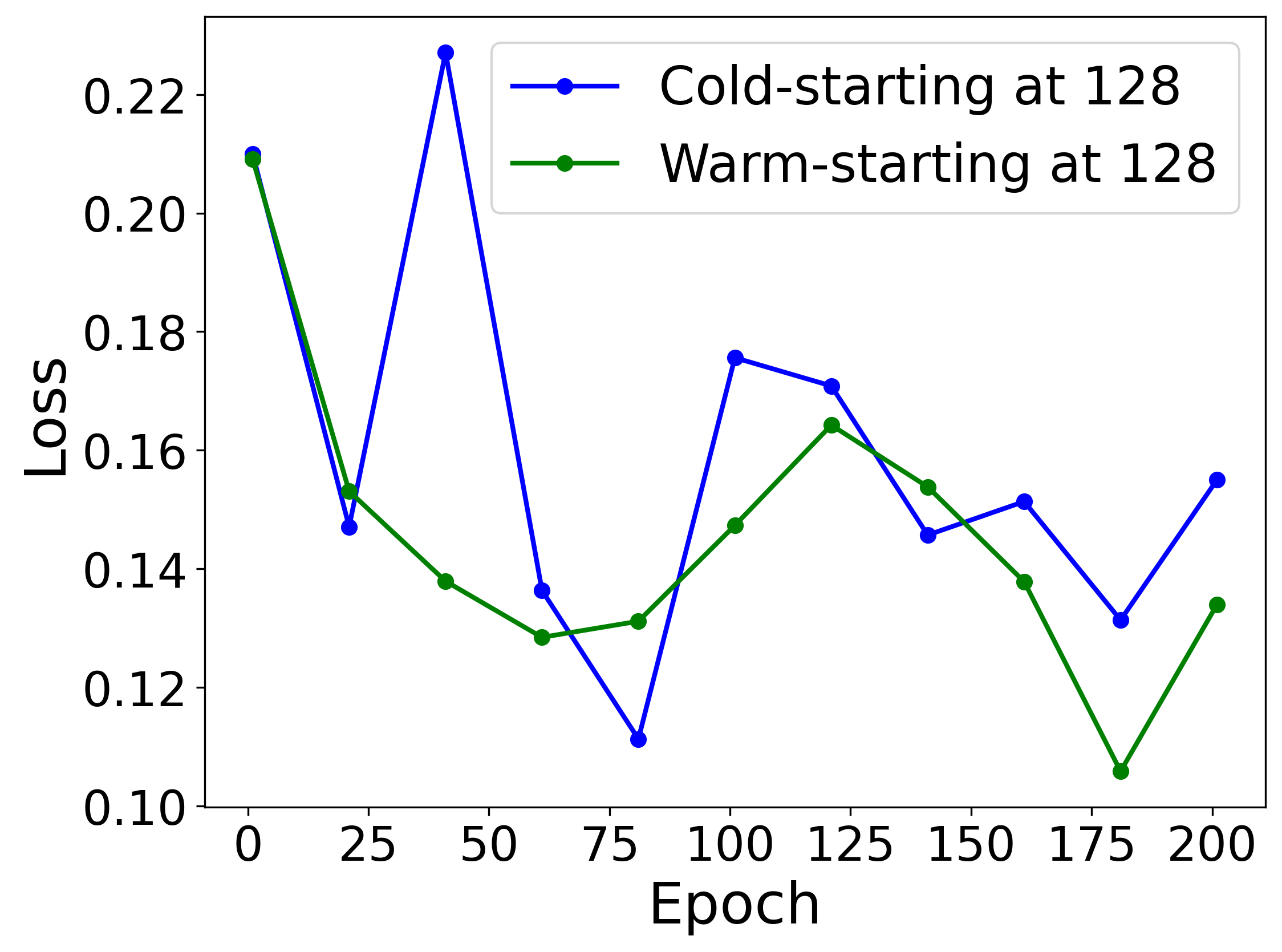}};
% \node[label, font=\sffamily] at (6.5,-5){64};
\node[] at (10.8,-3) {\includegraphics[width=.24\textwidth]{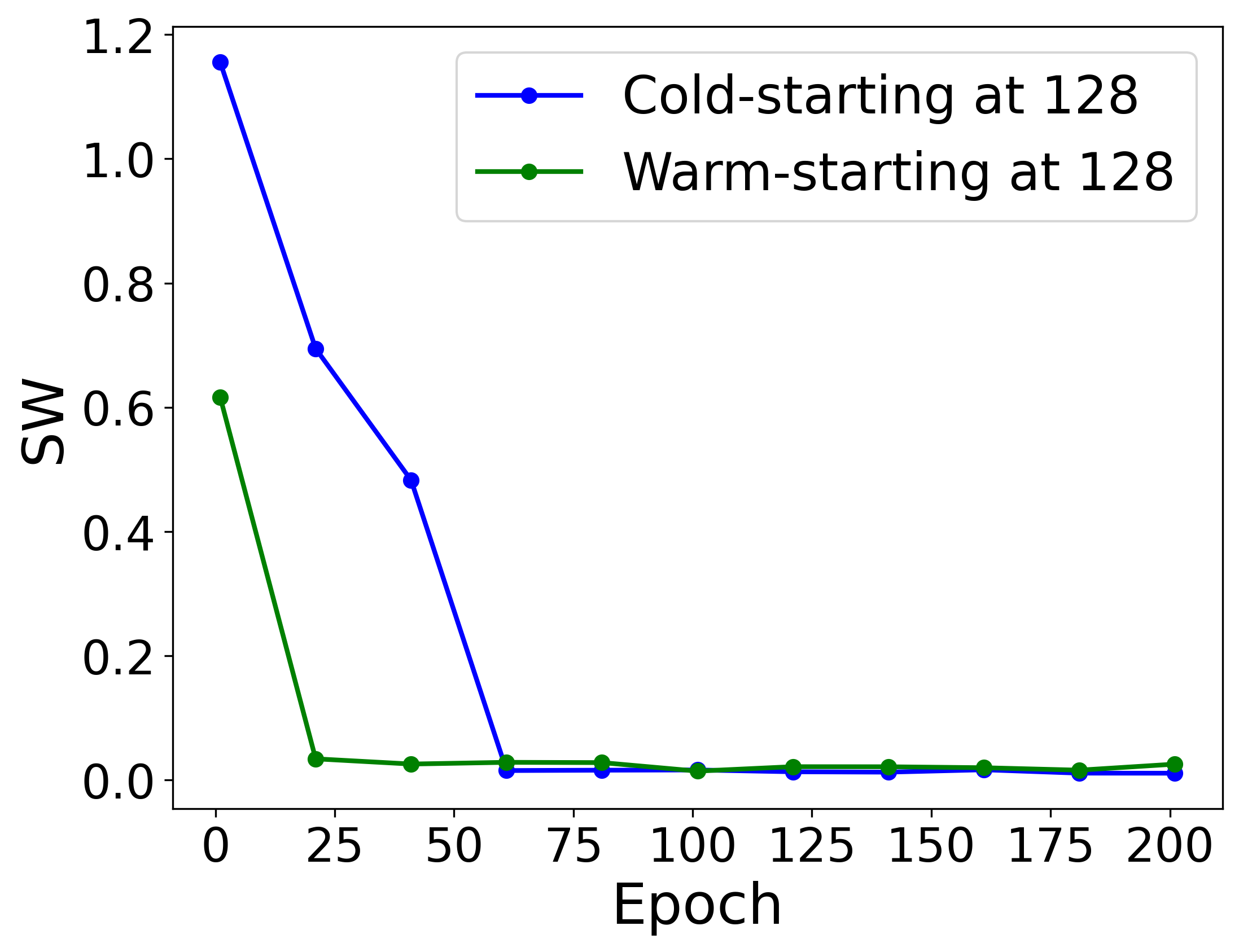}};
% \node[label, font=\sffamily] at (9.5,-5){128};

\end{tikzpicture}

\caption{Comparison of warm-starting training and cold-starting training (from scratch) in losses and sliced Wasserstein values. The warm-starting training at higher resolutions is initialized with the weights trained at $32 \times 32$, where the weights are saved according to the best-sliced Wasserstein distance. The prior used here is Bessel prior \eqref{eq: bessel} with $\gamma_2 = 8$, $k = 1.1$.
}
\label{fig:pdewarm}
% \end{center}
\end{figure} %of{figure}

\paragraph{Coarse-level hyperparameter search}
We demonstrate that formulating the diffusion model in function space enables one to use hyperparameters tuned on coarse images for fine-level training on the PDE dataset.
To this end, we train the FNO parametrization on resolution $32 \times 32$, $64 \times 64$, $128 \times 128$, with the same setup of hyperparameters, the same number of parameters (2.07M), and the same seed for random number generator. The hyperparameters are obtained from training the models for 500 epochs at resolution $32 \times 32$. In particular, we choose network mode cutoff level $k_{\max} = 12$ and 32 channels in the convolutional layers. The sampling with Euler-Maruyama method is done over the (pseudo-)time horizon $[0, 1]$, with 200 time steps. 
The generated images and the performance metrics are presented in \cref{fig:pde1}. The training losses are recorded for the FNO model trained at three different resolutions, and the sliced Wasserstein distances are evaluated for the test set on resolution $128 \times 128$, and are recorded in the plot every 50 epochs. The images presented in the bottom row are generated by the saved model at different resolutions with the smallest sliced Wasserstein distance. We also experimented U-Net architecture with 32 channels and 4 residual blocks, with a total of 2.4M parameters. When we use the same standard Gaussian prior and the same model setup to train on higher resolutions, the best sliced Wasserstein distance is 0.11 for resolution 64, and 1.42 for resolution 128. On the othere hand, we can use the same setup and a relatively small amount of hyperparameters tuned from coarse resolution to train high resolution models, justified by the quantitative evaluation in sliced Wasserstein distances and visual quality in \cref{fig:pde1}, even at a high resolution. 
\begin{figure}[t]
% \begin{center}
\centering
\begin{tikzpicture}[scale=1]

% \node[] at (1.2,0){Prior \quad Std \quad {Lap} \quad FNO \qaud Cmb. \\ }
\node[] at (5.5,1){\includegraphics[width=.34\textwidth]{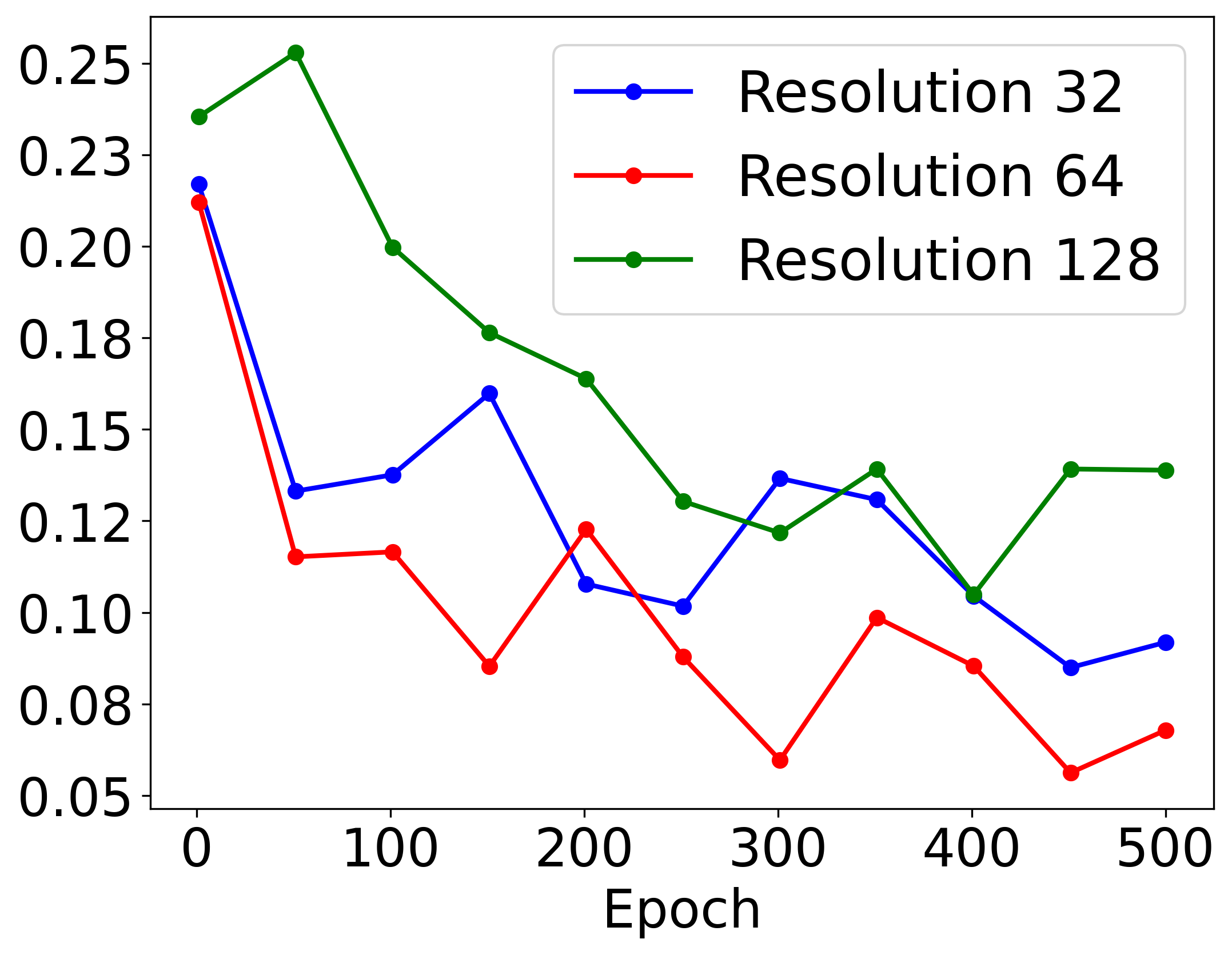}};
% \node[] at (6,-12) {\imageChar{d-2-level-2-iter-500-charFwd}};

% \node[label, font=\sffamily] at (4,0.2){Training loss cur};
% \node[label] at (6,-3.4){with HJB};
\node[label,rotate=90, font=\sffamily] at (2.5,1.5){Loss};
% \node[label] at (6,-17.0){$\delta G$};

\node[] at (12,1) {\includegraphics[width=.34\textwidth]{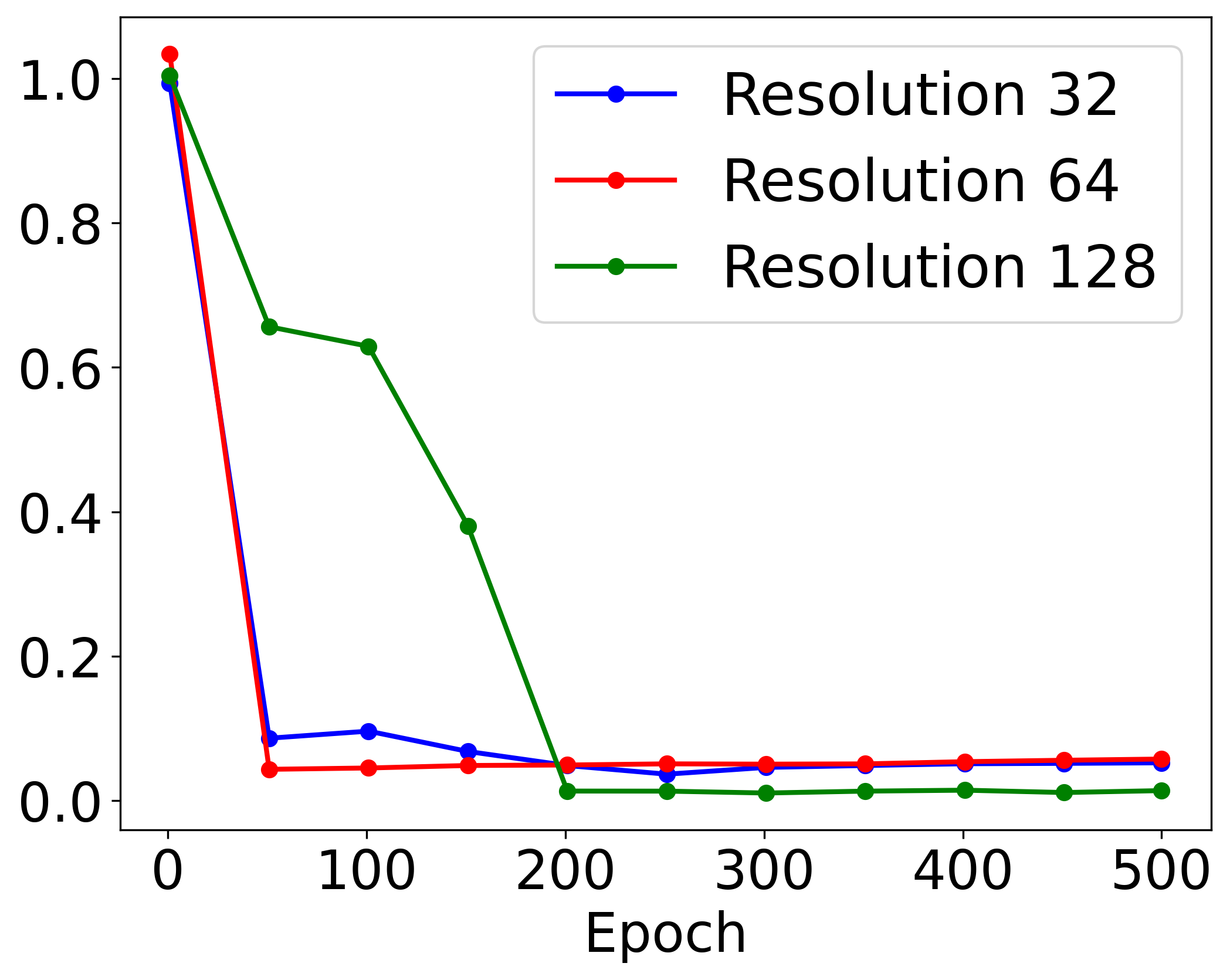}};
\node[label,rotate=90, font=\sffamily] at (9,1.5){S. Wasserstein};
% \node[] at (18,-12) {\imageChar{noHJB-nt-8-d-2-level-2-iter-500-charFwd}};
% \node[] at (18,-20.4) {\image{d-50-nSamples-128-iter-500-deltaG}};
% \node[label, font=\sffamily] at (10.6,0.2){Sliced Wasserstein};
% \node[label] at (18,-17.0){$\delta G$};

% \node[] at (1.2,0){Prior \quad Std \quad {Lap} \quad FNO \qaud Cmb. \\ }
\node[] at (3.5,-3){\includegraphics[trim={1cm 1cm 1cm 1cm},clip,width=.24\textwidth]{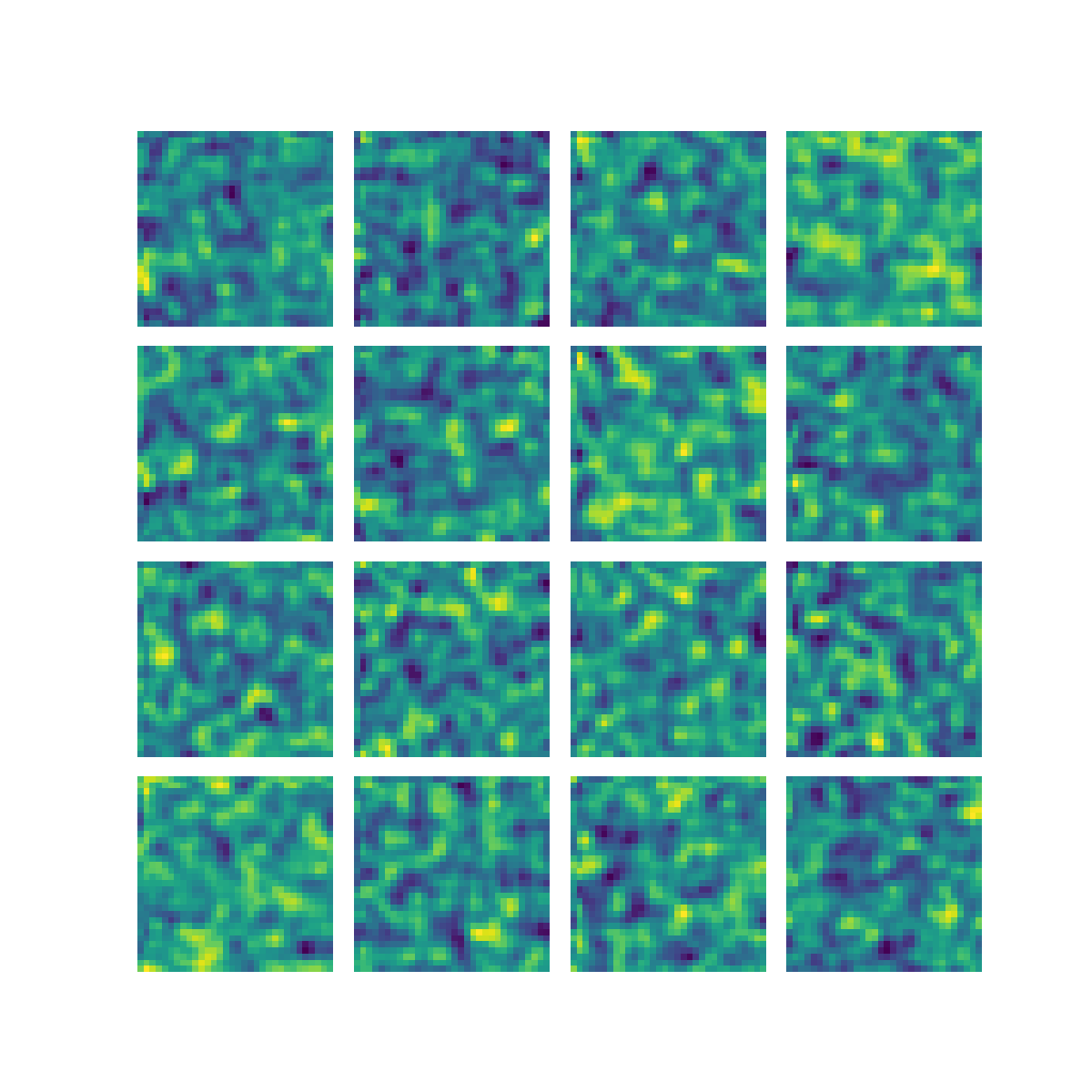}};
\node[label, font=\sffamily] at (3.5,-5){32};

\node[] at (7,-3) {\includegraphics[trim={1cm 1cm 1cm 1cm},clip, width=.24\textwidth]{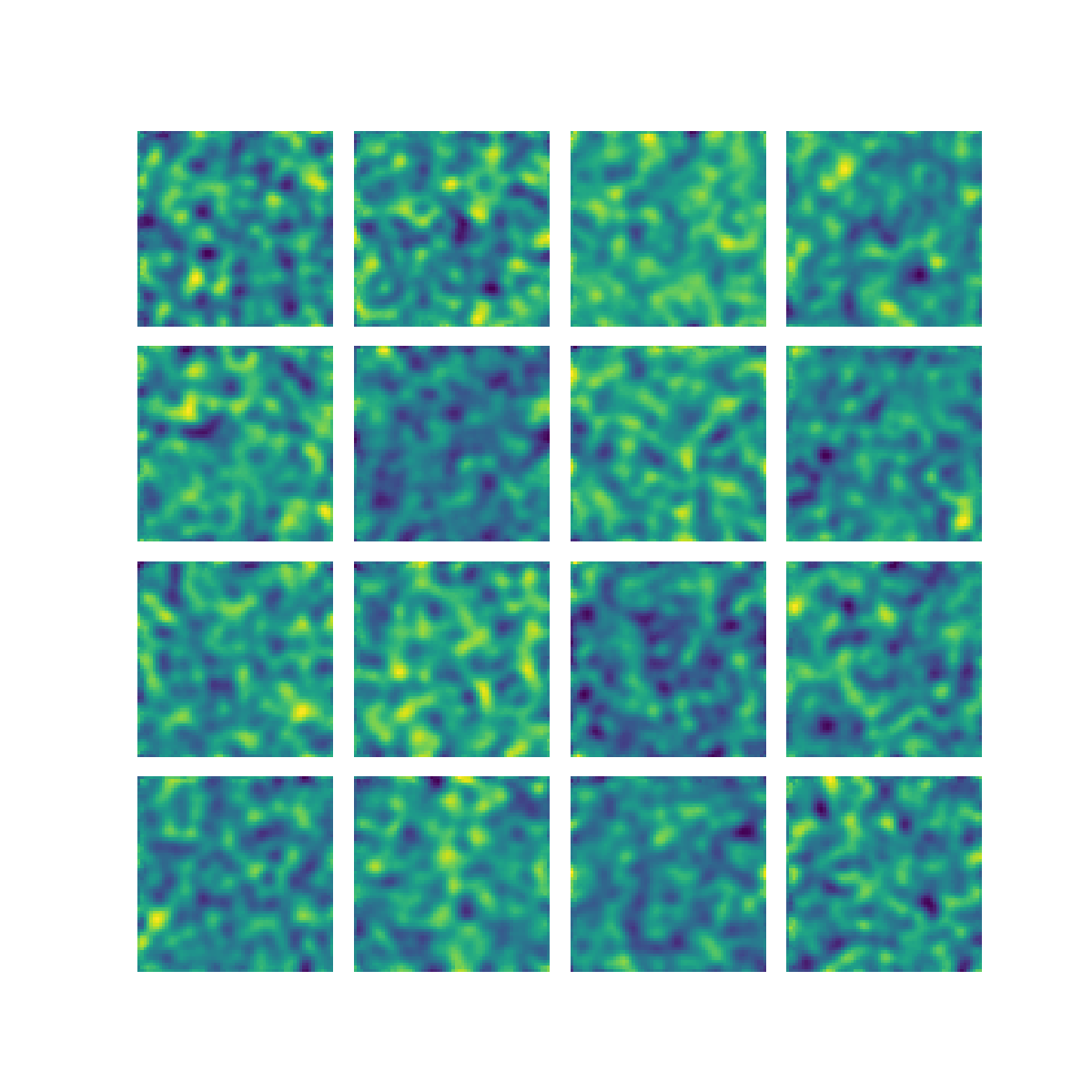}};
\node[label, font=\sffamily] at (7,-5){64};
\node[] at (10.5,-3) {\includegraphics[trim={1cm 1cm 1cm 1cm},clip, width=.24\textwidth]{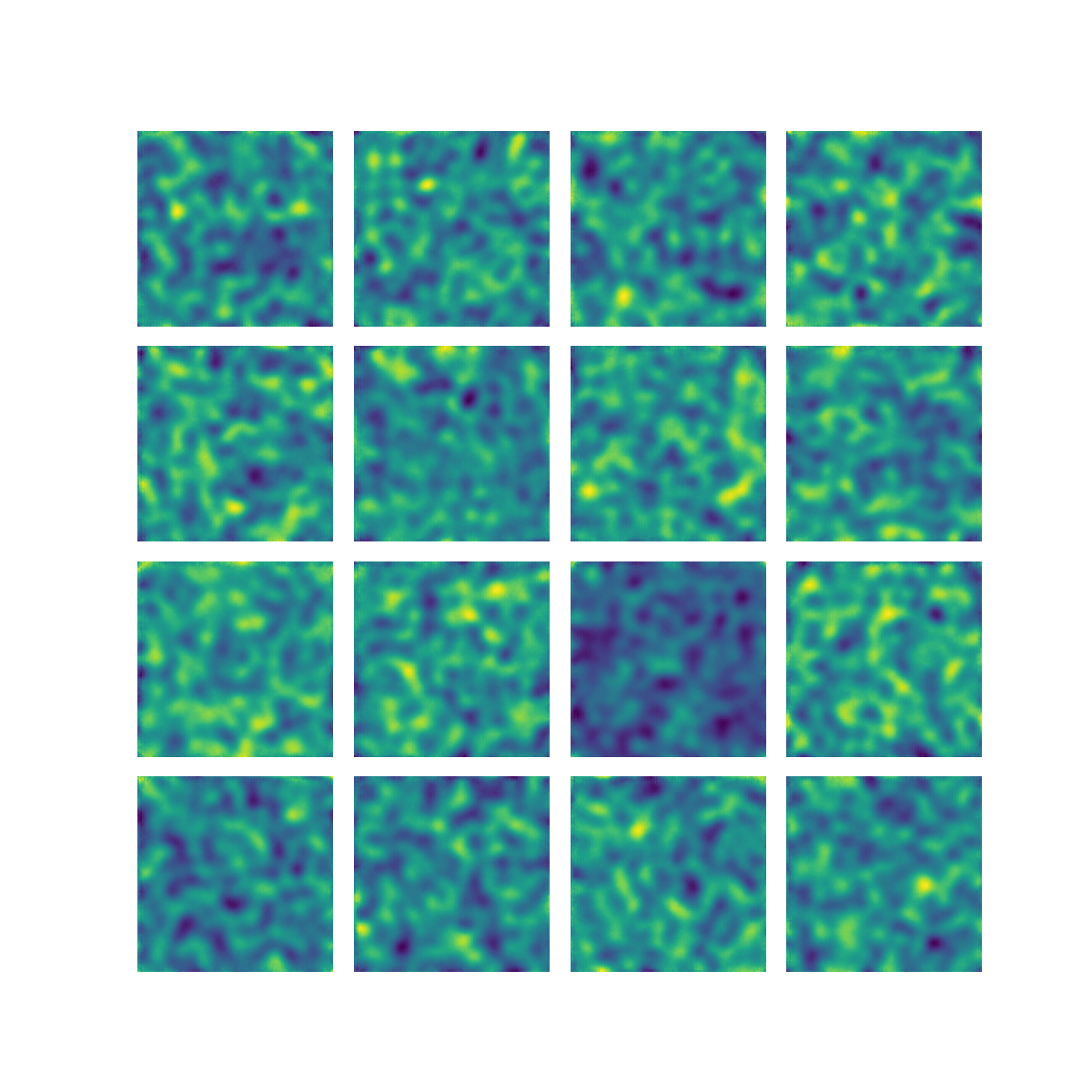}};
\node[label, font=\sffamily] at (10.5,-5){128};
\node[] at (14,-3) {\includegraphics[trim={1cm 1cm 1cm 1cm},clip, width=.24\textwidth]{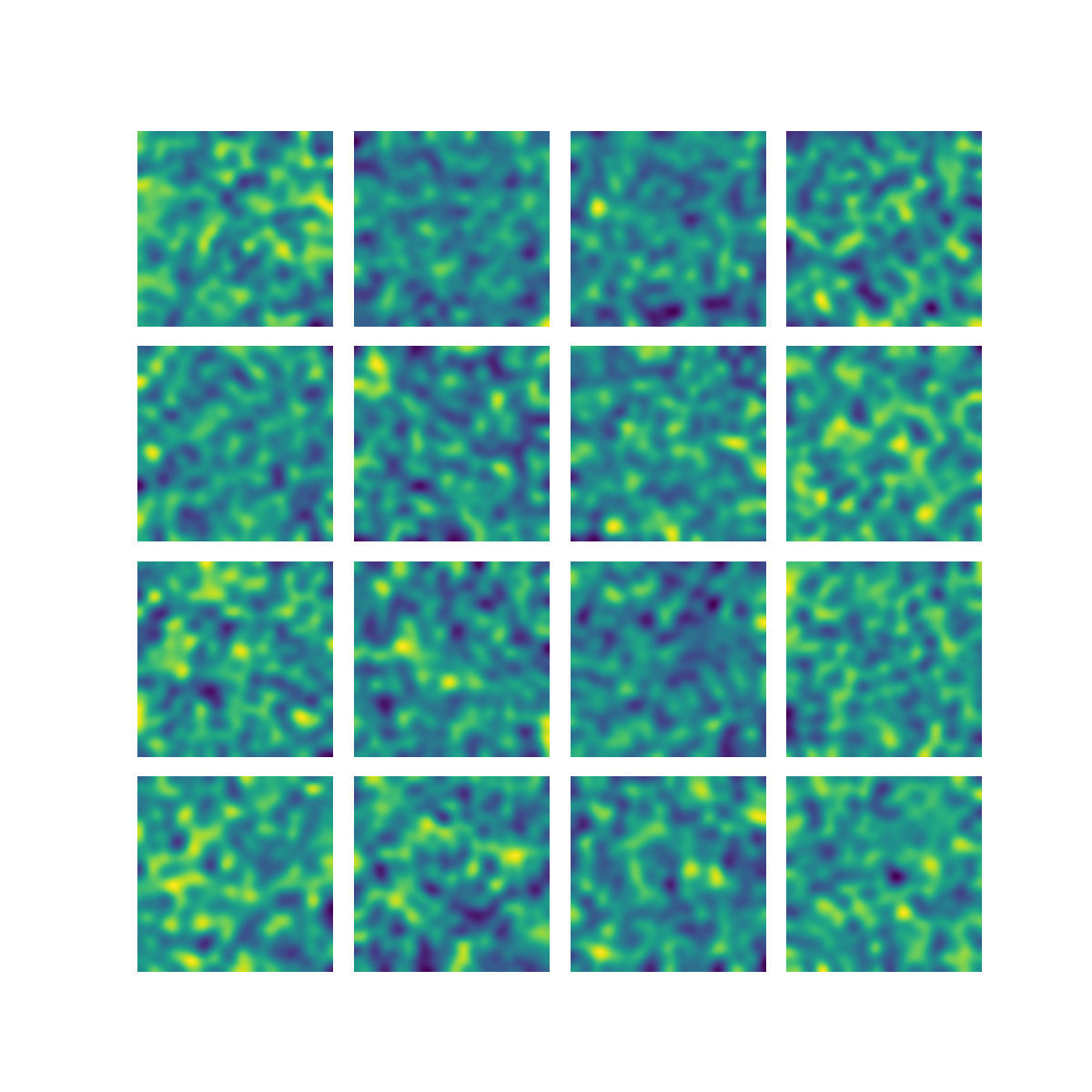}};
\node[label, font=\sffamily] at (14,-5){True};
\end{tikzpicture}

\caption{Results from FNO models with the same parameter count, trained at resolution $32 \times 32$, $64 \times 64$, $128 \times 128$, respectively. The prior used here is Bessel prior \eqref{eq: bessel} with $\gamma_2 = 8$, $k = 1.1$. Top: Training loss curves and Sliced Wasserstein distance curves from the FNO models trained at different resolutions, evaluated at resolution $128 \times 128$. Bottom: The generated image samples from the FNO models trained at different resolutions, and the true image samples.}

% In addition to \cref{fig:pde2} where the model is trained at resolution $32$, we also conduct the training at resolution $64$ and $128$, then evaluate at $32$, $64$, $128$, $256$. 
% We see that using FNO to parametrize the score operator, and using Gaussian prior with appropriate trace class covariance operators yield good generalization ability in these experiments. In particular, the same Bessel prior used in the previous experiment has the best visual perception, and we see consistent performances when using the same Bessel prior setup to train at different resolutions. The combined prior with $\gamma_0 > 0$ and FNO prior typically generalize to higher resolution well when trained at a fixed coarse resolution, but the number of modes used in the prior requires further fine-tuning when training at different resolutions. This maybe related to the fact that the covariance operator in FNO prior, although has finite trace, truncates its mode at a fixed level, and its eigenvalue decay is from a random initialization.
\label{fig:pde1}
% \end{center}
\end{figure} %of{figure}

\section{Discussion and Future Work}\label{eq:discuss}
We analyze infinite-dimensional score-based generative models (SBDMs) for image generation and develop multilevel algorithms.  Our theory underscores the importance of changing the latent distribution in existing SBDMs to trace class Gaussians.
With this choice, we show the well-posedness of the resulting forward and reverse processes. 
We show that the discretizations of both processes converge to their continuous counterparts.
We also present an explicit error estimate on the Wasserstein paths, which bounds the distance to the data distribution. In our numerical experiments, we compare different covariance operators and network architectures using the MNIST dataset.
While our work and related papers~\cite{ func_diff,kerrigan,nvidia_func_diff,jakiw_diffusion} are important first steps towards well-posed infinite-dimensional SBDMs, more work is needed to achieve state-of-the-art results.

% As our experiments and similar results in~\cite{func_diff} suggest, investigating more advanced neural operator architectures can be a future research direction. 
In our experiments, we outlined the need for function-valued operator architectures to achieve good multilevel training results. 
In our experience, the FNO model behaves more consistently across resolutions than the commonly-used U-Net. The low-frequency representation implied by the FNO does not necessarily need to be true for image distribution since it cannot perfectly capture sharp edges in images.
As possible alternatives, one may consider Wavelet neural operators \cite{gupta2021multiwaveletbased}, the U-Net neural operators \cite{UNO, havrilla2024dfu}, or modifying the U-Net architecture to enable multilevel training such as done in \cite{williams2023unified} to leverage the strong inductive bias of U-Nets. Furthermore,  \cite{Habring2023}  argues that CNNs imply at least continuity of infinite-dimensional images, which also opens the possibility of more specialized theoretical analysis. 

Even within our infinite-dimensional framework, training SBDMs remains a complex endeavor, and the performance relies on the interplay of architecture, downsampling, and prior distributions. Hence, we expect additional improvements in practice by better understanding and selecting each of these components. For instance, the FNO architecture has the mode cut off, which makes it viable for these experiments, but it also contains "skip" connections, which propagate more information so that it does not perfectly satisfy our theory. We see our paper as an important step in this direction and see tremendous potential if one gets all these things perfectly correct. 
Another open question in the training arises from the behavior of stochastic approximation methods, such as Adam, in the infinite-dimensional setting. Compared to the numerical optimization techniques in multilevel approaches for other variational problems (see, e.g., ~\cite{Modersitzki2009} for applications in image registration), those methods are not descent methods. Hence, depending on the learning rate, even a perfect initialization from coarse mesh training may not lead to convergence to a global minimum at the fine level. 

\section{Acknowledgements}
PH thanks Chin-Wei Huang, Rafael Orozco, Jakiw Pidstrigach, and Shifan Zhao, and NY thanks Tomoyuki Ichiba and Nils Detering for helpful discussions. We also thank the two anonymous referees for their constructive feedback and many helpful suggestions.

\bibliographystyle{abbrv}
\bibliography{main}

\section{Hilbert Space Valued Wiener Processes}
\label{sec:wiener}
%-----------------------------------------------------------------------
This section is based on 
\cite{da2014stochastic,hairer2009introduction,scheutzowLectureNotes,  liu2015stochastic}. 
Let $H$ be a separable Hilbert space and
$\mathcal B(H)$ its Borel $\sigma$-algebra.
By $L(H)$ we denote the space of bounded linear operators and 
by $K(H)$ the subspace of compact operators mapping from $H$ to $H$.
Further, let $H^*$ be the dual space of bounded linear functionals from $H$ to $\R$.
By Riesz' representation theorem, there exists an isomorphism between $H^*$ and $H$, more precisely
every linear bounded functional corresponds to an element $v \in H$ by
$\ell_v(u) = \langle u,v \rangle$ for all $u \in H$.
Let $(e_k)_{k\in \mathbb N}$ be an orthonormal basis of $H$.
A self-adjoint, positive semidefinite operator $Q \in L(H)$ which satisfies
$$
\text{tr} (Q) \coloneqq \sum_{k=1}^\infty \langle Q e_k,e_k \rangle < \infty
$$
is called \emph{trace class} (operator) with trace $\text{tr} (Q)$. The definition is independent of the chosen basis.
Trace class operators are in particular compact operators and Hilbert-Schmidt operators.
Recall that for two separable Hilbert spaces $H_1$ and $H_2$, a linear bounded operator $T:H_1 \to H_2$ is called a \emph{Hilbert-Schmidt operator} 
if 
$$
\|T \|_{L_2(H_1,H_2)} \coloneqq \Big( \sum_{k=1}^\infty \langle \|T e_k\|_{H_2}^2 \rangle \Big)^\frac12 < \infty,
$$
for an orthonormal basis $(e_k)_{k\in \mathbb N}$ of $H_1$. The space $L_2(H_1,H_2)$ of all such operators with the above norm is again a Hilbert space.

Let $(\Omega, \mathcal F,\mathbb P)$ be a probability space where $\Omega$ is a nonempty set and $\mathcal F$ is a $\sigma$-algebra of subsets of $\Omega$. For a Banach space $E$, we denote by $L^2(\Omega, \mathcal F,\mathbb P;E)$, the Bochner space of measurable functions $f:\Omega \to E$ with norm
$$
\|f\|_{L^2(\Omega, \mathcal F,\mathbb P;E)} \coloneqq \Big(\int_{\Omega} \|f\|_E^2 \, \rm{d} \mathbb P \Big)^\frac12<\infty .
$$ 
Often, we consider the case $E= \mathcal C([0,T],H)$,  which we always equip with the uniform norm.

A \emph{measure} $\mu$ on $(H,\mathcal B(H))$ 
is called \emph{Gaussian} if 
$\ell _\# \mu \coloneqq \mu \circ \ell^{-1}$
is a Gaussian measure on $\R$ for all $\ell \in H^*$.
An $H$-valued \emph{random variable} $X:\Omega \to H$ is called \emph{Gaussian} if 
$\mu = X_\# \mathbb P$ is a Gaussian measure on $H$.
By the following theorem \cite[Theorem 3.12]{scheutzowLectureNotes}, there is a one-to-one correspondence between Gaussian measures and pairs $(m,Q)$, where $m \in H$ and $Q \in K(H)$ is trace class.

\begin{theorem}\label{thm:Gau}
A \emph{measure} $\mu$ on $(H,\mathcal B(H))$  is Gaussian if and only if there exists some $m \in H$ and some trace class operator
$Q \in K(H)$ 
such that
\begin{equation} \label{star}
\hat \mu (u) \coloneqq \int _H {\rm e}^{{\rm i} \langle u,v\rangle} \, {\rm d}\mu(v) 
= 
{\rm e}^{{\rm i} \langle m,u\rangle - \frac12\langle Qu,u \rangle}, \quad u \in H.
\end{equation}
Then $m \in H$ is called the {\rm mean} of $\mu$ and $Q \in K(H)$ {\rm covariance} (operator) of $\mu$ and
we write $\mathcal N(m,Q)$ instead of $\mu$ and $X \sim \mathcal N(m,Q)$ for the corresponding random variable.
The Gaussian measure determines $m$ and $Q$ uniquely, and conversely, for each $(m,Q)$, $m\in H$, $Q\in K(H)$ trace class,
there exists a Gaussian measure which fulfills \eqref{star}. 
Furthermore, the following relations hold true for all $v,\widetilde v \in H$:
\begin{itemize}
\item[i)] 
$\int_H \langle u,v\rangle \, {\rm d} \mu(u) = \langle m,v \rangle$,
\item[ii)] 
$\int_H \left( \langle u,v\rangle - \langle m,v \rangle \right) \left( \langle u,\widetilde v\rangle - \langle m,\widetilde v \rangle \right)
\, {\rm d} \mu(u) = \langle Q \widetilde v, v \rangle$,
\item[iii)]
$\int_H  \|u-m\|^2\, {\rm d} \mu(u) = {\rm tr} (Q)$,
\end{itemize} 
and similarly for the corresponding random variable
\begin{itemize}
\item[iv)]
$\E(\langle X,v \rangle ) = \langle m,v \rangle$,
\item[v)]
${\rm Cov}\left(\langle X,v \rangle, \langle X, \widetilde v \rangle \right) = \langle Q \widetilde v,v \rangle$,
\item[vi)]
$\E(\|X-m\|^2) = {\rm tr} (Q)$.
\end{itemize} 
\end{theorem}
Now let $(e_k)_{k \in \mathbb N}$ be an orthonormal basis of eigenvectors of  a trace class operator $Q \in K(H)$ with eigenvalues
$\lambda_1 \ge \lambda_2 \ge \ldots \ge 0$. Then $X \sim \mathcal N(m,Q)$ if and only if
\begin{equation}\label{spect_decomp}
X = m + \sum_{k \in \mathbb N_Q} \sqrt{\lambda_k} \beta_k e_k, 
\end{equation}
where $\beta_k$, $k \in \mathbb N_Q \coloneqq \{j \in \mathbb N: \lambda_j >0\}$, 
are independent $\mathcal N(0,1)$ distributed random variables. 
The series converges in $L^2(\Omega,\mathcal F,\mathbb P;H)$ \cite[Theorem 3.15]{scheutzowLectureNotes}.

For a trace class operator $Q \in K(H)$ and $T>0$, a $H$-valued random process
$W^Q = (W_t^Q)_{t \in [0,T]}$ is called a $Q$-\emph{Wiener process} if 
$W_0^Q = 0$, $W$ has continuous trajectories and independent increments, 
and fulfills for $t,s \in [0,T]$ the relation
$$W^Q_t - W^Q_s \sim \mathcal N\left(0,(t-s)Q \right) \quad \text{for all } 
0\le s\le t \le T.$$
%  Let $(e_k)_{k\in\mathbb{N}}$ be an orthonormal basis of eigenfunctions of $Q$ corresponding to eigenvalues $(\lambda_k)_{k\in\mathbb{N}}$. By \cite[Theorem 3.17]{scheutzowLectureNotes} an $H$-valued random process $W^Q$ on $(\Omega, \mathcal F, \mathbb P)$ is a $Q$-Wiener process if and only if there exist mutually independent standard Wiener processes $(\beta_k)_{k \in \mathbb{N}}$ on $(\Omega, \mathcal F, \mathbb P)$ such that 
% \begin{equation}\label{eq:q}
%     W^{Q}_t = \sum_{k \in \mathbb{N}} \sqrt{\lambda_k} \, \beta_k(t) \, e_k,
% \end{equation} 
% where the series on the right-hand side, the so-called \emph{Karhunen–Lo\`eve decomposition} of $W^Q$, is convergent in $L^2\left(\Omega,\mathcal F,\mathbb P;\mathcal C([0,T],H) \right)$.

For fixed $T>0$, a \emph{filtration} 
$\mathbb F = (\mathcal F_t)_{t \in [0,T]}$ on $(\Omega,\mathcal F, \mathbb P)$
is called  \emph{complete}, if $\mathcal F_0$ contains all $A \subset \Omega$ such that $A \subset B$ for some $B \in \mathcal F$ such with $\mathbb P(B) = 0$. The \emph{completion} of a filtration $\mathbb F$ is the smallest complete filtration containing $\mathbb F$. The filtration is called \emph{right-continuous}, if $\mathcal F_t = \cap_{s>t} \mathcal F_s$ for all $t \in [0,T]$ and a complete and right-continuous filtration is called \emph{normal}. A $Q$-Wiener process $W^Q$
is a $Q$-Wiener process with respect to $\mathbb F$, if $W^Q$ is $\mathbb F$ adapted
and $W^Q_t-W^Q_s$ is independent of $\mathcal F_s$ for all $0 \le s\le t \le T$.
Every $Q$-Wiener process is such a process with respect to some normal filtration; 
e.g., the completion of the filtration generated by $W^Q$. Note that this filtration is always right-continuous due to the continuity of sample paths of $W^Q$.

Stochastic integrals with respect to $W^Q$ can be introduced
in a very similar way to the classical finite-dimensional case. 
First, one defines integrals for elementary functions with values in the space of Hilbert-Schmidt operators $ L_2(H_Q,K)$, where $H_Q\coloneqq Q^\frac12 H$ denotes the Cameron-Martin space of $Q$ and $K$ is some separable Hilbert space. Then one extends the stochastic integral via the It\^o isometry \cite[Eq.~(2.19)]{GawMan10}
$$\mathbb{E}\Big[ \Big\lVert \int_{0}^t \phi(s) \, {\rm d} W^Q_s\Big\rVert_K^2 \Big] = \mathbb{E} \left [ \int_0^t \left \Vert \phi(s)  \right \Vert_{L_2(\smash{Q^\frac12 H},K)}^2 \, {\rm d} s \right],$$
to adapted $L_2(H_Q,K)$-valued processes $\phi$ for which the right-hand side is finite. With this isometry at hand, the integral is finally defined for the larger class of adapted  $L_2(H_Q,K)$-valued processes with almost surely square integrable Hilbert-Schmidt norm. For more details, we refer to \cite{da2014stochastic,GawMan10,liu2015stochastic}. Note that one can also construct stochastic integration with respect to cylindrical Wiener process \cite[Section 4.4]{hairer2009introduction}, where the covariance operator is taken to be the identity operator. The cylindrical Wiener process is a trace class operator in an enlarged Hilbert space. 

\section{Complete Proofs}
This supplementary provides the proof of \cref{var} and
more detailed proofs of \cref{lem:reverse} and \cref{thm:main}.

\subsection{Proof of \cref{var}}\label{forward}

\begin{proof}
1.  First, we show the existence and uniqueness of a strong solution of \eqref{eq:SDE}. We verify that the assumptions of \cite[Thm.~3.3]{GawMan10} are satisfied. We start by noting that the measurability assumptions of the theorem are trivially satisfied in our setting since our drift and diffusion coefficients $f$ and $g$ are deterministic, i.e., they do not depend on the probability space $(\Omega,\mathcal F, \mathbb P)$, and continuous on $[0,T] \times H$ and $[0,T]$, respectively.
Furthermore, we have the linear growth bound  
\begin{equation*}
        \lVert f(t,x) \rVert + \lVert  g(t)\rVert_{L_2(H_Q,H)} 
				\le 
				\left( \frac{1}{2} \lVert \alpha\rVert_{\mathcal C([0,T],\mathbb R)}   + \sqrt{ \lVert \alpha \rVert_{\mathcal C([0,T],\mathbb R)} \, \mathrm{tr}(Q)} \right) \, (1+ \lVert x\rVert)
    \end{equation*}
    and the Lipschitz condition
    \begin{equation*}
        \lVert f(t,x_1)-f(t,x_2)  \rVert 
				\le \frac{1}{2} \lVert \alpha\rVert_{\mathcal C([0,T],\mathbb R)} \, \lVert x_1-x_2\rVert.
    \end{equation*}
Thus, we can apply \cite[Thm.~3.3]{GawMan10} and conclude that \eqref{eq:SDE} has a unique strong solution $(X_t)_{t \in [0,T]}$ which satisfies $\mathbb E \left[ \sup_{t \in [0,T]} \lVert X_t\rVert^2 \right]<\infty$ if $\mathbb E \left[ \lVert X_0 \rVert^2\right]<\infty$.
\\[1ex]
 2.   Next, we prove that $(X_t)_{t \in [0,T]}$ is given by \eqref{eq:var}. For $k \in \mathbb N$, we define 
    \begin{equation*}
        F^k \colon [0,T] \times H \to \mathbb R, \qquad  F^k(t,x) \coloneqq a_t \langle x,e_k \rangle.
    \end{equation*}
    Clearly, $F^k$ is continuous with Fr\'echet partial derivatives 
		$F^k_x(t,x)= a_t \, e_k$, $F^k_{xx}(t,x)=0$, and 
		$F^k_t(t,x)=\frac{1}{2}\alpha_t \,a_t \langle x,e_k\rangle$ that are continuous 
		and bounded on bounded subsets of $[0,T]\times H$. Let $(X_t)_{t \in [0,T]}$ be the unique strong solution of \eqref{eq:SDE}. 
		We obtain by It\^o's formula \cite[Thm.~2.9]{GawMan10} $\mathbb P$-almost surely for all $t \in [0,T]$ that 
    \begin{alignat}{3}
        & F^k(t,X_t) && = F^k(0,X_0) + \int_0^t \langle F^k_x(s,X_s), \sqrt{\alpha}_s \, {\rm d} W^Q_s\rangle 
				+ \int_0^t \Big( F^k_t&&(s,X_s)
				+ \langle F^k_x(s,X_s),-\frac{1}{2}\alpha_s X_s\rangle\\ 
        & && && +\frac{1}{2} \, \mathrm{tr}\big(F^k_{xx}(s,X_s)\sqrt{\alpha_s} Q\big)\Big) \, {\rm d} s\\
        & && = \langle X_0,e_k \rangle + \int_0^t \langle a_s \, e_k ,\sqrt{\alpha}_s \, {\rm d} W^Q_s\rangle, &&
    \end{alignat}
    with
  \begin{equation} \begin{aligned}
        \int_0^t \langle a_s \, e_k ,\sqrt{\alpha}_s \, {\rm d} W^Q_s\rangle 
				& \coloneqq 
				\int_0^t   a_s \, \sqrt{\alpha}_s \, \langle e_k , \cdot  \rangle\, {\rm d} W^Q_s \\
        &= 
				\sum_{k^\prime=1}^\infty \int_0^t a_s \, \sqrt{\alpha}_s \, \langle e_k , e_{k^\prime} \rangle
\, {\rm d} \langle W^Q_s,e_{k^\prime} \rangle\\
& = \sqrt{\lambda_k} \int_0^t   a_s \,  \sqrt{\alpha}_s \,{\rm d} \beta_k(s), 
   \end{aligned}\end{equation}
    where the first equality is the definition of the left-hand side and 
		the second equality follows from \cite[Lem.~2.8]{GawMan10}. 
		On the other hand, by \cite[Ex.~2.9]{GawMan10}, we have $\mathbb P$-almost surely
    \begin{equation*}
        \Big\langle \int_0^t  a_s \, \sqrt{\alpha_s} \, {\rm d} W^Q_s, e_k\Big\rangle 
				= \sqrt{\lambda_k} \int_0^t a_s \, \sqrt{\alpha_s} \,  {\rm d} \beta_k(s), \qquad t \in [0,T].
    \end{equation*}
    Overall, we obtain $\mathbb P$-almost surely for all $k \in \mathbb N$ that
    \begin{equation*}
        \langle X_t, e_k \rangle  
				= 
				a_t^{-1}  \Big(  \langle X_0,e_k \rangle 
				+ \Big\langle \int_0^t a_s \, \sqrt{\alpha_s} \, {\rm d} W^Q_s, e_k\Big\rangle \Big),
    \end{equation*}
    which proves \eqref{eq:var}.
\\[1ex]
3.     Finally, we consider $B_t$ in \eqref{eq:B}. 
By \cite[Lem.~2.8]{GawMan10}, we immediately obtain 
    \begin{equation*}
        B_t = \sum_{k \in \mathbb N} \sqrt{\lambda_k}\, \Big(\int_0^t a_s\, \sqrt{\alpha_s} \,   {\rm d}\beta_k(s)\Big) \, e_k
    \end{equation*}
    in  $L^2(\Omega,\mathcal F, \mathbb P; H)$
		for all $t \in [0,T]$. 
		By construction of the stochastic integral, the difference  
    \begin{equation*}
        \delta B_t^K \coloneqq B_t - \sum_{k =1}^K \sqrt{\lambda_k}\, \Big(\int_0^t a_s \, \sqrt{\alpha_s} \, \, {\rm d}\beta_k(s)\Big) \, e_k \in  L^2(\Omega,\mathcal F,\mathbb P;H)
    \end{equation*}
    is an $H$-valued square-integrable martingale for all $K \in \mathbb N$ and thus, by Doob's maximal inequality \cite[Thm.~2.2]{GawMan10},
    \begin{equation*}
        \mathbb E \left[ \sup_{t \in [0,T]} \left\lVert \delta B_t^K  \right\rVert^2 \right] \le 4 \, \mathbb E \left[  \left\lVert \delta B^K_T \right\rVert^2 \right] \to 0 \text{ as } K \to \infty,
    \end{equation*}
    which proves the second part of\eqref{eq:B}. 
		For $k \in \mathbb N$, we set 
    \begin{equation*}
        b_k(t)\coloneqq \int_0^t a_s \,\sqrt{\alpha_s} \,  {\rm d} \beta_k(s), \qquad t \in [0,T].
    \end{equation*}    
    In the finite-dimensional setting it is well-known that the stochastic integral of a deterministic function with respect to a Brownian motion is a centered Gaussian, see e.g. \cite[p.~108]{leGall}. 
		In particular $b_k(t)$ is a centered Gaussian for all $k \in \mathbb N$ and all $t \in [0,T]$. 
		By the It\^o isometry, we obtain
  \begin{equation} \begin{aligned}
        \mathrm{Var} \left[b_k(t)\right] = \mathbb E \left[ \Big(\int_0^t  a_s \,\sqrt{\alpha_s} \,  {\rm d} \beta_k(s) \Big)^2 \right]
				= \int_0^t  a_s^2 \,\alpha_s \, {\rm d} s
				= a_t^2 -1.
   \end{aligned}\end{equation}
   For $t>0$, we have $(a_t^2-1)^{-1/2} \, b_k(t) \sim \mathcal{N}(0,1)$ for all $k \in \mathbb N$ and by \eqref{eq:B} it holds 
  \begin{equation} \begin{aligned}
        B_t = \sum_{k \in \mathbb N} \sqrt{\lambda_k \, (a_t^2-1)} \, (a_t^2-1)^{-1/2} \, b_k(t) \, e_k
   \end{aligned}\end{equation} in $L^2(\Omega,\mathcal F,\mathbb P;H)$. By \cite[Thm.~3.15]{scheutzowLectureNotes}, we obtain that $B_t$ is a centered $(a_t^2 -1)Q$-Gaussian, since  $((a_t^2-1)^{-1/2} \, b_k(t))_{k \in \mathbb N}$ are mutually independent.
\end{proof}

%------------------------------------------------------------
\subsection{Proof of \cref{lem:reverse}}\label{3.4}

\begin{proof} 
1.  First, we show that $p^n_t$ has the form \eqref{eq:score} which then implies that $f^n$ in \eqref{reverse_disc} is well-defined
with the desired properties.
By \cref{var}, we know that $B_t^n$ in \eqref{eq:finvar} is a centered Gaussian in $H_n$ 
	with covariance $(a_t^2-1)Q_n$ and thus $\hat{B}_t^n$ is a centered Gaussian in $\mathbb R^n$ with covariance 
	$(a_t^2-1) \mathrm{diag}(\lambda_1,...,\lambda_n)$. In particular, for all $t \in (0,T]$,
    \begin{equation*}
        \hat{b}_t^n(z) \coloneqq \frac{(\lambda_1...\, \lambda_n)^{-1/2}}{\left(2\pi(a_t^2-1)\right)^{n/2}} \, b_t^n(z)
    \end{equation*}
    is the Lebesgue density of $\hat{B}_t^n$, cf. e.g. \cite[Thm~1.3]{leGall}. 
		By \eqref{eq:finvar}, we have
    \begin{equation*}
        \hat{X}^n_t =a_t^{-1} \, (\hat{X}_0^n+\hat{B}_t^n), \qquad t \in [0,T].
    \end{equation*}
    Since $X_0^n$ and $B_t^n$ are independent, $\hat X_0^n$ and 
		$\hat{B}_t^n$ are also independent and thus their sum $\hat{X}_0^n+\hat{B}_t^n$ 
		has density $p_0^n *\hat{b}_t^n$ for all $t \in (0,T]$. 
		By scaling we see that $\hat{X}_t^n$ has the density
    \begin{equation*}
        p^n_t(z) \coloneqq  \frac{a_t^n(\lambda_1...\, \lambda_n)^{-1/2}}{\left(2\pi(a_t^2-1) \right)^{n/2}} \, 
				(p^n_0*b^n_t)(a_t \, z).
    \end{equation*}
     For $t \in (0,T]$, we observe that $b_t^n$ is twice continuously differentiable 
		and it holds 
    \begin{equation*}
        \partial_k  b_t^n(z)= - (a_t^2-1)^{-1}(\lambda_k)^{-1} z_k  b_t^n(z), \qquad k=1,...,n,
    \end{equation*}
    and
    \begin{equation*}
        \partial_{\ell} \big(\partial_k  b_t^n\big)(z) = \begin{cases}
         (a_t^2-1)^{-2}(\lambda_\ell)^{-1}(\lambda_k)^{-1} \, z_\ell z_k  b_t^n(z), &\ell\not=k\\
         (a_t^2-1)^{-2}(\lambda_k)^{-2} \,  z_k^2  b_t^n(z) -(a_t^2-1)^{-1}(\lambda_k)^{-1} b_t^n(z), & \ell=k.
        \end{cases}
    \end{equation*}
    Clearly, we have $\partial_k b_t^n \in L^1(\mathbb R^n)$ 
		and $\partial_\ell \partial_k b_t^n \in L^1(\mathbb R^n)$ 
		and therefore $\partial_k (p_0^n*b_t^n)=p_0^n*(\partial_k b_t^n)$ 
		and 
		$\partial_\ell  \partial_k (p_0^n*b_t^n)=p_0^n*(\partial_\ell \partial_k b^n_t)$. 
		In particular, $p_t^n$ is twice continuously differentiable with
    \begin{equation*}
        \partial_k p^n_t(z) = 
				-\frac{a_t^{n+1} (\lambda_k)^{-1} (\lambda_1...\, \lambda_n)^{-1/2}}{{(a_t^2-1)}\left(2\pi(a_t^2-1) \right)^{n/2}} \, 
				\left(p^n_0* (z_k b^n_t) \right)(a_t \,z), \qquad k=1,...,n, 
    \end{equation*}
    and 
    \begin{equation*}
        \partial_\ell \partial_k p_t^n(z)=  -\frac{a_t^{n+2}
				(\lambda_1...\, \lambda_n)^{-1/2}}{\left(2\pi(a_t^2-1) \right)^{n/2}} \, (p^n_0* \partial_\ell \partial_k b^n_t)(a_t z), 
				\qquad \ell, \, k \in \{1,...,n\}.
    \end{equation*}
    As it is convolution of a probability density function and a strictly positive function, we have $(p^n_0*b_t^n)(z)>0$ and thus also $p_t^n(z)>0$ for all $z \in \mathbb R^n$. The chain rule yields
    \begin{equation*}
        \nabla \log p_t^n(z) =\frac{\nabla p_t^n(z)}{p_t^n(z)}=-\frac{a_t}{(a_t^2-1)}\Big(\frac{(\lambda_k)^{-1} 
				\, (p^n_0* \cdot_k b^n_t)(a_t \, z)}{(p^n_0* b^n_t)(a_t \, z)}\Big)_{k=1}^n, 
    \end{equation*}
    which is \eqref{eq:sco}, and
    \begin{equation*}
        \nabla^2 \log p_t^n(z)=\frac{\nabla^2 p_t^n(z)}{p_t^n(z)}-\frac{\nabla p_t^n(z)\nabla p_t^n(z)^\intercal}{p_t^n(z)^2}.
    \end{equation*}
    Since all terms are continuous in $z \in \mathbb R^n$ and $t \in (0,T]$, we conclude that $\nabla^2 \log p_t^n$ is bounded on $[\delta,T] \times \{z \in \mathbb R^n\mid \lVert z\rVert\le N\}$ for all $\delta>0$ and $N\in \mathbb N$. Thus, the function $f^n$ as in \eqref{eq:score} is well-defined and Lipschitz continuous in the second variable uniformly on $[0,T-\delta]\times \{x \in H_n\mid \lVert x\rVert\le N\}$. 
   \\[1ex] 
 2.   Next, we transfer \eqref{finiteforward} to an SDE on $\mathbb R^n$ 
to obtain a reverse equation. 
% Let $\beta_k(t)$, $k \in \mathbb N$ be as in \eqref{eq:q}.
Then the process 
$\bm{\beta}^n_t\coloneqq (\beta_k(t))_{k=1}^n$ 
is an $n$-dimensional Brownian motion and  $(\hat{X}^n_t)_{t \in [0,T]}$ is the unique strong solution of 
    \begin{equation*}
        {\rm d} \hat{X}^n_t = - \frac{1}{2} \alpha_t \, \hat{X}^n_t \, {\rm d} t + \sqrt{\alpha_t} \, \mathrm{diag}(\sqrt{\lambda_1},...,\sqrt{\lambda_n}) \, {\rm d} \bm{\beta}^n_t
    \end{equation*}
    with initial value $\hat{X}_0^n$. We can  verify that for all $\delta>0$ and all bounded open sets $U \subset \mathbb R^n$
    \begin{equation}\label{eq:fin}
        \int_{\delta}^T \int_U \lvert p_t^n(z)\rvert^2+\alpha_t \lambda_k\, \lvert  \partial_k p_t^n(z) \rvert^2 \, {\rm d}z \, {\rm d}t<\infty, 
				\qquad k=1,...,n.
    \end{equation}
    By \cite[Thm~2.1]{haussmann1986time} we conclude that the time reversal $(\hat{X}^n_{T-t})_{t \in [0,T)}$ 
		is a solution of the martingale problem, cf. \cite[Section~3.3]{scheutzowWT4}, associated to the reverse SDE
    \begin{equation}\label{eq:revC}
        {\rm d} \hat{Y}^n_t 
				=
				\Big( \frac{1}{2} \alpha_{T-t} \hat{Y}^n_t + \alpha_{T-t} \, 
				\mathrm{diag}(\lambda_1,...,\lambda_k) \, \nabla \log p^n_{T-t}(\hat{Y}^n_t)\Big) \, {\rm d}t + \sqrt{\alpha_{T-t}} \, \mathrm{diag}(\sqrt{\lambda_1},...,\sqrt{\lambda_n}) \, {\rm d} {\bm{\hat{\beta}}}^n_t
    \end{equation}
    with initial condition $\hat{Y}^n_0 \sim \mathbb P_{\hat{X}^n_T}$. 
		Let $\mathbb F$ be the completion of the natural filtration of $(X_{T-t}^n)_{t\in[0,T)}$. It is well-known that solutions of martingale problems correspond to weak solutions of the associated SDE. 
		In particular, by \cite[Thm~3.15]{scheutzowWT4}, 
		there is a weak solution 
		$((\hat{\Omega}, \hat{\mathcal F}, \hat{\mathbb F}, \hat{\mathbb P}),(\hat{Y}^n_t)_{t \in [0,T)},\hat{\bm{\beta}}^n)$ 
		of \eqref{eq:revC} such that 
    $(\hat{\Omega}, \hat{\mathcal F}, \hat{\mathbb F}, \hat{\mathbb P})$ 
		is an extension of $(\Omega, \mathcal F, \mathbb F, \mathbb P)$ and $(\hat{Y}^n_t)_{t \in [0,T)}$ 
		is the canonical extension of 
		$(\hat{X}_{T-t}^n)_{t \in [0,T)}$ to $(\hat{\Omega}, \hat{\mathcal F},\hat{\mathbb F}, \hat{\mathbb P})$, cf.~\cite[Def.~3.13]{scheutzowWT4}. Thus, $(\hat{Y}^n_t)_{t \in [0,T)}$ is equal to 
		$(\hat{X}^n_{T-t})_{t \in [0,T)}$ in distribution. 
		Since $\hat{\bm \beta}^n$ is an $n$-dimensional Brownian motion, 
		we have 
		$\hat{\bm \beta}_t^n = (\hat{\beta}^n_k(t))_{k=1}^n$ 
		for some mutually independent standard Brownian motions $(\hat{\beta}^n_k(t))_{k=1}^n$. 
		Therefore, the process 
    \begin{equation*}    
		\hat{W}^{Q_n}_t \coloneqq \sum_{k=1}^n \sqrt{\lambda_k} \,\hat{\beta}^n_k(t) \, e_k
		\end{equation*}
		is a $Q_n$-Wiener process on $(\hat{\Omega}, \hat{\mathcal F}, \hat{\mathbb F},\hat{\mathbb P})$. 
		We set $Y^n_t \coloneqq \iota_n^{-1}(\hat{Y}_t^n).$ 
		Clearly, $(Y_t^n)_{t \in [0,T)}$ and $(X_{T-t}^n)_{t \in [0,T)}$ 
		are equal in distribution. 
		Since $((\hat{\Omega}, \hat{\mathcal F}, \hat{\mathbb F}, \hat{\mathbb P}),(\hat{Y}^n_t)_{t \in [0,T)},\hat{\bm{\beta}}^n)$ 
		is a weak solution of \eqref{eq:revC}, the process $(Y^n_t)_{t \in [0,T)}$ 
		has continuous sample-paths and is adapted to $\hat{\mathbb F}$. 
		Further, since $Y^n_t$ fulfills \eqref{sol} for the setting  \eqref{eq:revC}, we have 
		$\hat{\mathbb P}$-almost surely for all $t \in [0,T)$ that 
\begin{equation} \begin{aligned}
        Y^n_t &=
        \sum_{k=1}^n \Big( (\hat{Y}^n_0)_k+\int_0^t \tfrac{1}{2}\alpha_{T-s}  
				(\hat{Y}^n_s)_k+\alpha_{T-s} \, \lambda_k \, \partial_k \log p^n_{T-s}(\hat{Y}^n_s) \, {\rm d} s \\
				&+\int_0^t \sqrt{\alpha_{T-s}} \sqrt{\lambda_k} \, {\rm d}  \hat{\beta}^n_k(s)\Big) \, e_k\\
                &=  \iota_n^{-1}(\hat{Y}^n_0)
				+ \int_0^t 
				\frac{1}{2}\alpha_{T-s}\,  \iota_n^{-1}(\hat{Y}^n_s )
				+ \alpha_{T-s} \, Q_n \Big(\sum_{k=1}^n  \partial_k \log p^n_{T-s}\big(\iota_n(\iota_n^{-1}(\hat{Y}^n_s))\big) e_k\Big)  {\rm d} s\\
				&+\int_0^t \sqrt{\alpha_{T-s}}  \, {\rm d}  \hat{W}^{Q_n}_s\\
                &=  Y^n_0 + \int_0^t \frac{1}{2} \alpha_{T-s} \, Y^n_s 
				+ \underbrace{\alpha_{T-s} \, Q_n \big(\iota_n^{-1} \circ \nabla \log p_{T-t}^n \circ \iota_n\big)(Y_s^n)}_{=f^n(s,Y^n_s)} \, {\rm d} s 
				+ \int_0^t \sqrt{\alpha_{T-s}}  \, {\rm d}  \hat{W}^{Q_n}_s,
   \end{aligned}\end{equation}
    and thus  $((\hat{\Omega}, \hat{\mathcal F}, \hat{\mathbb F}, \hat{\mathbb P}),(Y^n_t)_{t \in [0,T)},\hat{W}^{Q_n})$ 
		is a weak solution of \eqref{reverse_disc} with $Y_0^n \sim \mathbb P_{X^n_T}$. 
Starting with a weak solution  $((\hat{\Omega}, \hat{\mathcal F}, \hat{\mathbb F}, \hat{\mathbb P}),(Y^n_t)_{t \in [0,T)},\hat{W}^{Q_n})$ of \eqref{reverse_disc}, we can analogously show that $((\hat{\Omega}, \hat{\mathcal F}, \hat{\mathbb F}, \hat{\mathbb P}),(\hat{Y}^n_t)_{t \in [0,T)},\hat{\bm{\beta}}^n)$ with $\hat{Y}^n_t\coloneqq \iota_n(Y_t^n)$ and $\hat{\bm \beta}^n_t \coloneqq \big( (\sqrt{\lambda_k})^{-1} \langle\hat{W}^{Q_n}_t,e_k\rangle \big)_{k=1}^n$ yields a weak solution of \eqref{eq:revC}. Thus, showing uniqueness in law of \eqref{eq:revC} is sufficient to prove uniqueness in law of \eqref{reverse_disc}. Because the drift and diffusion coefficients of \eqref{eq:revC} are Lipschitz continuous in the second variable on sets of the form $[0,T-\delta] \times \{z \in \mathbb R^n\mid \lVert z\rVert \le N\}$, we know by \cite[Lem.~12.4]{russo22} and \cite[Definition~12.1]{russo22} that pathwise uniqueness holds on $[0,T-\delta]$, i.e., any two weak solutions on $[0,T-\delta]$ that are defined on the same filtered probability space with the same Brownian motion are indistinguishable. A limiting argument $\delta \to 0$ yields pathwise uniqueness on $[0,T)$. It is a classical result by Yamada and Watanabe \cite[Prop.~13.1]{russo22} that pathwise uniqueness implies uniqueness in law. 
    
    Finally, we note that by \cref{var} the forward process satisfies $\mathbb E \big[ \sup_{t \in [0,T]} \lVert X^n_t\rVert^2 \big]<\infty$. As we have proven that any weak solution $((\hat{\Omega}, \hat{\mathcal F},\hat{\mathbb F}, \hat{\mathbb P}),(Y^n_t)_{t \in [0,T)},\hat{W}^{Q_n})$ of \eqref{reverse_disc} is equal to $(X^n_{T-t})_{t \in [0,T)}$ in distribution, this immediately implies $\mathbb E \big[\sup_{t \in [0,T)} \lVert Y^n_t \rVert^2 \big]<\infty$.
\end{proof}

%--------------------------------------------------------------------
\subsection{Proof of \cref{thm:main}} \label{3.6}

We will need the fact that the Wasserstein-2 distance metricizes 
weak convergence in those spaces \cite[Thm 6.9]{villani2009optimal}.
More precisely, we have for a sequence $(\mu_n)_n$ that
$W_2(\mu_n,\mu) \to 0$ as $n \to \infty$ 
if and only if
$\int_K \varphi d\mu_n  \to \int_K \varphi d\mu$ for all $\varphi \in C_b(K)$ 
and $\int_K d(x_0,x)^2 \, \text{d} \mu_n \to \int_X d(x_0,x)^2 \, \rm{d} \mu$ for any $x_0 \in K$ as $n \to \infty$.

To prepare \cref{thm:main}, we need the following lemma.

\begin{lemma}[Gronwall's inequality \cite{Emmrich_2004,gronwall_lemma}] \label{gronwall}
Let $h \in L^{\infty}([0,t_0])$ for some $t_0>0$. Assume that
there exist $a \ge 0$ and $b > 0$ such that
$h(t) \leq a + b \int_0^t h(s) \, {\rm d} s$ for all $t \in [0,t_0]$. 
Then it holds 
\[
h(t) \leq a \, e^{bt}, \qquad t \in [0,t_0].
\]
\end{lemma}

Now we can prove the theorem.

\begin{proof}
1.  First, we observe that since by assumption $\tilde{f}^n$ is continuous $[0,t_0]\times H$
		and Lipschitz continuous in the second variable, 
		we can apply \cite[Thm~3.3]{gawarecki2010stochastic} to conclude that for any initial condition and $Q_n$-Wiener process independent of the initial condition the SDE \eqref{reverse_learned} has a unique strong solution on $[0,t_0]$. 
		Note that the existence of a unique strong solution implies that solutions are unique in law \cite[Remark~1.10]{watanabe}. 
		In particular, the measure $\mathbb P_{\smash{(\widetilde{Y}}_t^n)_{t_0}}$ is well-defined. 
		We use $\mathbb P_{\smash{(\breve{Y}_t^n)_{t_0}}}$ to denote the path measure induced by \eqref{reverse_learned}, 
		but with initial distribution $\mathbb P_{X_T^n}$.
    Then we obtain by the triangle inequality 
  \begin{align*}
        W_2(\mathbb{P}_{\smash{(X_{T-t}^n)_{t_0}}}, \mathbb{P}_{\smash{(\widetilde{Y}}_{t}^n)_{t_0}}) 
				&\leq  
				W_2(\mathbb{P}_{\smash{(X_{T-t}^n)_{t_0}}}, \mathbb{P}_{\smash{(Y_{t}^n)_{t_0}}}) 
				+ W_2(\mathbb{P}_{\smash{(Y_{t}^n)_{t_0}}}, \mathbb{P}_{\smash{(\breve{Y}_{t}^n)_{t_0}}})\\ &+W_2(\mathbb P_{\smash{(\breve{Y}_t^n)_{t_0}}},\mathbb P_{\smash{(\widetilde{Y}}_t^n)_{t_0}}),\\
				W_2^2(\mathbb{P}_{\smash{(X_{T-t}^n)_{t_0}}}, \mathbb{P}_{\smash{(\widetilde{Y}}_{t}^n)_{t_0}}) 
				&\leq  
				4	W_2^2(\mathbb{P}_{\smash{(X_{T-t}^n)_{t_0}}}, \mathbb{P}_{\smash{(Y_{t}^n)_{t_0}}}) 
				\\ &+ 2 W_2^2(\mathbb{P}_{\smash{(Y_{t}^n)_{t_0}}}, \mathbb{P}_{\smash{(\breve{Y}_{t}^n})_{t_0}})
				+ 4 W_2^2(\mathbb P_{\smash{(\breve{Y}_t^n)_{t_0}}},\mathbb P_{\smash{(\widetilde{Y}}_t^n)_{t_0}}).
				 \end{align*}
    By \cref{lem:reverse}, the first term is zero. 
    \\
    2.
		Next, we bound the third term. Let $\breve{Y}^n_0 $ and $\widetilde{Y}^n_0$ be any realizations of $\mathbb P_{X^n_T}$ and $\mathcal{N}(0,Q_n)$, respectively, that are defined on the same probability space. For some driving $Q_n$-Wiener process independent of  $\breve{Y}^n_0 $ and $\widetilde{Y}^n_0$ and defined on the same probability space (or possibly an extension of it), let $(\breve{Y}_t^n)_{t \in [0,t_0]}$ and $(\widetilde{Y}^n_t)_{t \in [0,t_0]}$ be the unique strong solutions of \eqref{reverse_learned} started from  $\breve{Y}^n_0 $ and $\widetilde{Y}^n_0$, respectively. By \eqref{sol} and using Jensen's inequality, we obtain for $s \in [0,t_0]$,
  \begin{align*}
    \big\lVert \breve{Y}^n_s - \widetilde{Y}^n_s \big\rVert^2 
		& \leq 
		 3 \Big( \big\lVert \breve{Y}_0^n - \widetilde{Y}^n_0 \big\rVert^2 +
		 \Big\lVert \int_0^s  \tfrac{1}{2} \alpha_{T-r} \, (\breve{Y}_r^n - \tilde{Y}_r^n)  {\rm d} r \Big\rVert^2 \\
		&+  \Big\lVert\int_0^s  \tilde{f}^n(r,\breve{Y}^n_r)  - \tilde{f}^n(r,\widetilde{Y}^n_r) \, {\rm d} r \Big\rVert^2 \Big)
		\\
    & \le 
		3 \Big( \big\lVert  \breve{Y}_0^n - \widetilde{Y}^n_0 \big\rVert^2
		+ s  \int_0^s \big\lVert  \frac12 \alpha_{T-r}  (\breve{Y}_r^n - \tilde{Y}_r^n)\big\rVert^2  {\rm d} r
		\\ &+s \int_0^s \big\lVert \tilde{f}^n(r,\breve{Y}^n_r)  - \tilde{f}^n(r,\widetilde{Y}^n_r) \big\rVert^2 {\rm d} r \Big)\\
    & \le 3 \, \big\lVert \breve{Y}_0^n - \widetilde{Y}^n_0 \big\rVert^2
		+ 
		3 t_0 \sup_{r \in [0,t_0]} \Big(\frac{\alpha_{T-r}^2}{4}+  (L_r^n)^2 \Big)  
		\int_0^s \big\lVert  \breve{Y}_r^n - \tilde{Y}_r^n\big\rVert^2 \, {\rm d} r.		
 \end{align*}
Setting $h_1(t) \coloneqq 
\mathbb E \big[ \sup_{s \in [0,t]} \big\lVert \breve{Y}^n_s-\widetilde{Y}^n_s\big\rVert^2\big]$ for $t \in [0,t_0]$, we consequently obtain
\begin{align*}
h_1(t) &\leq
3 \mathbb E \big[\big\lVert \breve{Y}_0^n - \widetilde{Y}^n_0 \big\rVert^2 \big]
+ 
3 t_0 \xi(t_0)
\mathbb E \big[\sup_{s \in [0,t]} \int_0^s  \big\lVert \breve{Y}^n_r-\widetilde{Y}^n_r \big\rVert^2 \, {\rm d} r \big]\\
&\leq
3 \mathbb E \big[\big\lVert \breve{Y}_0^n - \widetilde{Y}^n_0 \big\rVert^2 \big]
+ 
3 t_0 \xi(t_0)
\mathbb E \big[ \int_0^t  \big\lVert \breve{Y}^n_r-\widetilde{Y}^n_r \big\rVert^2 \, {\rm d} r \big]
\\
&\leq
3 \mathbb E \big[\big\lVert \breve{Y}_0^n - \widetilde{Y}^n_0 \big\rVert^2 \big]
+ 
3 t_0 \xi(t_0)
\int_0^t  h_1(r)\, {\rm d} r,
   \end{align*}
where we used Fubini's theorem for the last inequality. By \cref{var}, the discretized forward process satisfies $\mathbb E \big[ \sup_{t \in [T-t_0,T]} \lVert X^n_t\rVert^2\big]<\infty$ and in particular $\mathbb E [\lVert \breve{Y}^n_0\rVert^2]=\mathbb E [\lVert X^n_T\rVert^2]<\infty$. 
		By \cref{thm:Gau}(vi), we also have $\mathbb E [\lVert \smash{\widetilde{Y}}^n_0 \rVert^2]=\mathrm{tr}(Q_n)<\infty$. 
		Together with \cite[Thm~3.3]{gawarecki2010stochastic}, we can argue that for every $t \in [0,t_0]$ it holds
    \begin{equation*}
        h_1(t)
				=
				\mathbb E \big[ \sup_{s \in [0,t]} \big\lVert \breve{Y}^n_s-\smash{\widetilde{Y}}^n_s\big\rVert^2\big]
				\le 
				2 \, \mathbb E \big[ \sup_{s \in [0,t_0]} \big\lVert \breve{Y}^n_s\big\rVert^2\big] +2 \, \mathbb E \big[ \sup_{s \in [0,t_0]} \big\lVert \widetilde{Y}^n_s\big\rVert^2\big]<\infty,
    \end{equation*}
    so that $h_1 \in L^\infty([0,t_0])$. 
		Thus, we can apply Gronwall's lemma to $h_1$ and obtain for all $t \in [0,t_0]$ that
    \begin{equation*}
     h_1(t) 
				\le  
				3\mathbb E \big[\big\lVert \breve{Y}^n_0-\widetilde{Y}^n_0 \big\rVert^2 \big] \, 
				e^{3 \xi(t_0) t_0 \, t }.
    \end{equation*}
    In particular, the Wasserstein-2 distance can be estimated by
  \begin{align*}
         W_2^2( \mathbb P_{\smash{(\breve{Y}^n_t)_{t_0}}},\mathbb P_{\smash{(\widetilde{Y}^n_t)_{t_0}}} )  
				&\leq
				\mathbb E \Big[ \sup_{s \in [0,t_0]}  \big\lVert \breve{Y}_s^n - \widetilde{Y}^n_s \big\rVert^2 \Big] = h_1(t_0)			
				\\
				&\leq  
				3 \,e^{3 \xi(t_0)  t_0^2 } 
				 \inf_{\substack{\breve{Y}_0^n \sim \mathbb P_{X^n_T}\\ \smash{\widetilde{Y}}^n_0\sim \mathcal{N}(0,Q_n)}}  \mathbb E \Big[
				\big\lVert \smash{\breve{Y}}_0^n - \widetilde{Y}^n_0 \big\rVert^2 \Big] \\
				&= 3 \,e^{3 \xi(t_0)  t_0^2 } \, W_2^2\left(\mathbb P_{X^n_T},\mathcal{N}(0,Q_n) \right),
   \end{align*}
   where we used that the previous estimates are valid for an arbitrary realization of  $\breve{Y}^n_0 $ and $\widetilde{Y}^n_0$.
   \\
    3. It remains to bound $W_2(\mathbb P_{\smash{(Y^n_t)_{t_0}}},\mathbb P_{\smash{(\breve{Y}^n_t})_{t_0}})$. 
		We pick a coupling for the probabilistic notion of the Wasserstein distance. 
		By \cref{lem:reverse}, 
		there exists a weak solution $$((\hat{\Omega},\hat{\mathcal F},\hat{\mathbb F},\hat{\mathbb P}),(Y^n_t)_{t \in [0,t_0]}, \hat{W}^{Q_n})$$
		of the discretized reverse SDE \eqref{reverse_disc} with initial condition $Y^n_0 \sim \mathbb P_{X^n_T}$ such that $(\hat{\Omega},\hat{\mathcal F},\hat{\mathbb F},\hat{\mathbb P})$ is an extension of $(\Omega,\mathcal F,\mathbb F,\mathbb P)$ and $(Y^n_t)_{t \in [0,t_0]}$ is the canonical extension of $(X_{T-t}^n)_{t \in [0,t_0]}$ to this extended probability space. 
		We define $(\breve{Y}^n_t)_{t \in [0,t_0]}$ 
		as the strong solution of the approximate reverse  \eqref{reverse_learned} with initial condition 
		$\breve{Y}^n_0=Y^n_0$ and the same driving noise $\hat{W}^{Q_n}$.
		As before, we can bound the difference as follows for $s \in [0,t_0]$:
 \begin{align*}
\big\lVert Y^n_s - \breve{Y}^n_s \big\rVert^2 
&\leq 
\Big\lVert Y_0^n - \breve{Y}^n_0 + 
\int_0^s \frac{1}{2} \alpha_{T-r} \, (Y_r^n - \breve{Y}_r^n) \, {\rm d} r 
+ \int_0^s f^n(r,Y^n_r)  - \tilde{f}^n(r,\breve{Y}^n_r) \, {\rm d} r \Big\rVert^2\\
 &\leq 
2 t_0  
\sup_{r \in [0,t_0]} \frac{\alpha_{T-r}^2}{4} \, \int_0^s \big\lVert Y_r^n - \breve{Y}_r^n \big\rVert^2 \, {\rm d} r 
+ 2 t_0 \, \int_0^s \big\lVert f^n(r,Y^n_r)  - \tilde{f}^n(r,\breve{Y}^n_r) \big\rVert^2\, {\rm d} r .
 \end{align*}
For the second term we apply again the triangle inequality and obtain 
\begin{align*}
    & 2 t_0 \int_0^s \big\lVert f^n(r,Y^n_r)  - \tilde{f}^n(r,\breve{Y}^n_r)\big\rVert^2 \, {\rm d}r \\ 
    \le \ & 4 t_0 \int_0^s \big\lVert f^n(r,Y^n_r)  - \tilde{f}^n(r,Y^n_r)\big\rVert^2 \, {\rm d} r
		+ 4t_0 \, \int_0^s \big\lVert \tilde{f}^n(r,Y^n_r)  - \tilde{f}^n(r,\breve{Y}^n_r)\big\rVert^2 \, { \rm d} r \\
    \le \ & 4t_0 \int_0^s \big\lVert f^n(r,Y^n_r)  - \tilde{f}^n(r,Y^n_r)\big\rVert^2 \, {\rm d} r
		+  4t_0 \,  \sup_{r \in [0,t_0]} (L_r^n)^2   \,\int_0^s \big\lVert Y_r^n - \breve{Y}^n_r \big\rVert^2 \, {\rm d} r.
 \end{align*}

Since $(Y^n_t)_{t \in [0,t_0]}$ is an extension of $(X^n_{T-t})_{t \in [0,t_0]}$, we obtain from assumption \eqref{xxx} that
$$
    \mathbb E \Big[ \sup_{t \in [0,s]} \int_0^s \big\lVert f^n(r,Y^n_r)-\tilde{f}^n(r,Y^n_r)\big\rVert^2 \, {\rm d}r \Big] \le \mathbb E \Big[\int_0^{t_0} \big\lVert f^n(r,Y^n_r)-\tilde{f}^n(r,Y^n_r)\big\rVert^2 \, {\rm d}r \Big]\le \varepsilon.
$$
We set $ h_2(t) \coloneqq 
		\mathbb E \big[ \sup_{s \in [0,t]} \big\lVert Y^n_s -\breve{Y}_s^n \big\rVert^2\big]$ for $t \in [0,t_0]$ and get similarly as in the previous calculations 
$$ 
h_2(t) \le 
4t_0 \, \xi(t_0) \, \int_0^t h_2(s) \, {\rm d} s + 4\, t_0 \varepsilon.
$$
Since $(Y^n_t)_{t \in [0,t_0]}$ is equal to $(X^n_{T-t})_{t \in [0,t_0]}$ in distribution 
and $\mathbb E \big[\sup_{t \in [0,T]} \lVert X^n_t\rVert^2 \big]<\infty$ by \cref{var},
we conclude
$$
    h_2(t) =
		\mathbb E \big[ \sup_{s \in [0,t]} \big\lVert Y^n_s -\breve{Y}_s^n \big\rVert^2\big] 
		\le 
		2\ \mathbb E \big[ \sup_{s \in [0,t_0]} \big\lVert Y^n_s \big\rVert^2\big]
		+2\ \mathbb E \big[ \sup_{s \in [0,t_0]} \big\lVert \breve{Y}_s^n \big\rVert^2\big] <\infty, \qquad t\in [0,t_0]
$$
and thus $h_2 \in L^\infty([0,t_0])$. Now Gronwall's lemma applied to $h_2$ yields 
$$
W_2^2(\mathbb{P}_{\smash{(Y_{t}^n)_{t_0}}}, \mathbb{P}_{\smash{(\breve{Y}_{t}^n)_{t_0}}})
\leq h_2(t_0)\le
4 \, t_0 \varepsilon \, e^{4 \xi(t_0)   t_0^2}.
$$
In summary, we obtain
\begin{align*}
W_2^2(\mathbb{P}_{(X_{T-t}^n)_{t_0}}, \mathbb{P}_{\smash{(\widetilde{Y}}_{t}^n)_{t_0}}) 
				\leq  12
				\left(t_0 \varepsilon \, e^{4 \xi(t_0)   t_0^2} + e^{3 \xi(t_0)  t_0^2 } \, W_2^2\left(\mathbb P_{X^n_T},\mathcal{N}(0,Q_n) \right)
				\right).
\end{align*}
\end{proof}

\section{Experiment details}
\label{sec:smt}

\subsection{GMM Example}
For the Gaussian Mixture Example, we take the Adam optimizer \cite{kingma2014adam} with learning rate 6e-4, the combined prior with $k = 1.1$ and (14,7) modes, (14,14) modes for the FNO with 4 layers and width 32.  We take a batch size of 128, and train the models for 300 epochs, where one epoch consists of 10 optimizer steps. 

\subsection{MNIST Example}
For the MNIST example, we take FNO with 4 layers, width 32, (14,14) modes. For the combined and FNO prior take (32,16) modes. For the Laplacian convolution we take the scaling equal to 20. For the combined prior we take the scaling to $\gamma_0 = 0.5$ and $\gamma_1 = 10$. We train the FNO models using the Adam optimizer with learning rate 6e-4 for 200 epochs, tracking the sliced Wasserstein and (time-rescaled) DSM loss on a validation set. We save the best performing validation loss (on the coarse resolution), with which we create the images and calculate the Vendi diversity. We use Fourier downsampling with a consistent scaling.

\subsection{Reaction Diffusion Equation}
We include \cref{tab:hpset} for detailed hyperparameter setups of the 2D Reaction-Diffusion Equation example.

\begin{table}[]
\centering
\begin{tabular}{|c|cc|}
\hline
\textbf{Category}                                 & \multicolumn{2}{c|}{\textbf{Hyperparameters}}                                                                                                                                                                              \\ \hhline{|=|=|=|}
\multicolumn{1}{|c|}{\textbf{Prior}} & \multicolumn{2}{c|}{Standard, FNO, \textbf{Combined, Bessel}}                                                                                                                              \\ \hline
\textbf{Network}                                  & \multicolumn{1}{c|}{U-Net}                                                                                     & \textbf{FNO}                                                                                                        \\ \hline
\textbf{Configuration}                           & \multicolumn{1}{c|}{\begin{tabular}[c]{@{}c@{}}Channel = \{32\}, \\ Residual block count=\{2, 4\}\end{tabular}} & \begin{tabular}[c]{@{}c@{}} cutoff $k_{\max}$ =\{8, \textbf{12}, 14, 15\}, \\ Layer 1 width = 32, \\ Layer 2 width = 64 \end{tabular} \\ \hline
\textbf{Batch size}                               & \multicolumn{2}{c|}{16}                                                                                                                                                                                       \\ \hline
\textbf{Learning rate}                            & \multicolumn{2}{c|}{\{\textbf{1e-3}, 1e-4\}}                                                                                                                            \\ \hline
\end{tabular}
    \caption{Hyperparameter space used in grid search of 2D Reaction-Diffusion Equation. The best choices are marked in bold. Note that there are also different choices in the scale $\gamma_0$, $\gamma_1$, $\gamma_2$ and power $k$ for the Combined priors and Bessel priors, which we explain the rationale in \cref{sub:latent} and in the end of subsection 5.2. When using FNO prior and combine prior, the prior modes $(32,16)$ give the best results.}
    \label{tab:hpset}
\end{table}

\end{document}

% --- supplement: images/Supplementary.tex ---

\maketitle

\begin{tcbverbatimwrite}{tmp_\jobname_abstract.tex}
\begin{abstract}
Score-based diffusion models (SBDM) have recently emerged as state-of-the-art approaches for image generation. 
Existing SBDMs are typically formulated in a finite-dimensional setting, where images are considered as tensors of finite size.
This paper develops SBDMs in the infinite-dimensional setting, that is, we model the training data as functions supported on a rectangular domain. 
Besides the quest for generating images at ever higher resolution, our primary motivation is to create a well-posed infinite-dimensional learning problem so that we can discretize it consistently on multiple resolution levels.
We thereby intend to obtain diffusion models that generalize across different resolution levels and improve the efficiency of the training process.
We demonstrate how to overcome two shortcomings of current SBDM approaches in the infinite-dimensional setting. 
First, we modify the forward process to ensure that the latent distribution is well-defined in the infinite-dimensional setting using the notion of trace class operators. We derive the reverse processes for finite approximations.
Second, we illustrate that approximating the score function with an operator network is beneficial for multilevel training. After deriving the convergence of the discretization and the approximation of multilevel training, we implement an infinite-dimensional SBDM approach and show the first promising results on MNIST and Fashion-MNIST, underlining our developed theory.
% After deriving the forward process in the infinite-dimensional setting and reverse processes for finite approximations, we show some controls on the data distribution and show that for well-behaved score functions, multilevel gives good warm-starts. We provide the design of prior distributions and the first promising results on MNIST and Fashion-MNIST, underlining our developed theory.
\end{abstract}

\begin{keywords}
Generative modeling, score-based diffusion models, infinite-dimensional SDEs, neural operator 
\end{keywords}

\begin{MSCcodes}
60H10, 65D18
\end{MSCcodes}
\end{tcbverbatimwrite}
\input{tmp_\jobname_abstract.tex}

\section{Supplementary A: Hilbert Space Valued Wiener Processes}
\label{sec:wiener}
%-----------------------------------------------------------------------
This section is based on 
\cite{da2014stochastic,hairer2009introduction,scheutzowLectureNotes,  liu2015stochastic}. 
Let $H$ be a separable Hilbert space and
$\mathcal B(H)$ its Borel $\sigma$-algebra.
By $L(H)$ we denote the space of bounded linear operators and 
by $K(H)$ the subspace of compact operators mapping from $H$ to $H$.
Further, let $H^*$ be the dual space of bounded linear functionals from $H$ to $\R$.
By Riesz' representation theorem, there exists an isomorphism between $H^*$ and $H$, more precisely
every linear bounded functional corresponds to an element $v \in H$ by
$\ell_v(u) = \langle u,v \rangle$ for all $u \in H$.
Let $(e_k)_{k\in \mathbb N}$ be an orthonormal basis of $H$.
A self-adjoint, positive semidefinite operator $Q \in L(H)$ which satisfies
$$
\text{tr} (Q) \coloneqq \sum_{k=1}^\infty \langle Q e_k,e_k \rangle < \infty
$$
is called \emph{trace class} (operator) with trace $\text{tr} (Q)$. The definition is independent of the chosen basis.
Trace class operators are in particular compact operators and Hilbert-Schmidt operators.
Recall that for two separable Hilbert spaces $H_1$ and $H_2$, a linear bounded operator $T:H_1 \to H_2$ is called a \emph{Hilbert-Schmidt operator} 
if 
$$
\|T \|_{L_2(H_1,H_2)} \coloneqq \Big( \sum_{k=1}^\infty \langle \|T e_k\|_{H_2}^2 \rangle \Big)^\frac12 < \infty,
$$
for an orthonormal basis $(e_k)_{k\in \mathbb N}$ of $H_1$. The space $L_2(H_1,H_2)$ of all such operators with the above norm is again a Hilbert space.

Let $(\Omega, \mathcal F,\mathbb P)$ be a probability space where $\Omega$ is a nonempty set and $\mathcal F$ is a $\sigma$-algebra of subsets of $\Omega$. For a Banach space $E$, we denote by $L^2(\Omega, \mathcal F,\mathbb P;E)$, the Bochner space of measurable functions $f:\Omega \to E$ with norm
$$
\|f\|_{L^2(\Omega, \mathcal F,\mathbb P;E)} \coloneqq \Big(\int_{\Omega} \|f\|_E^2 \, \rm{d} \mathbb P \Big)^\frac12<\infty .
$$ 
Often, we consider the case $E= \mathcal C([0,T],H)$,  which we always equip with the uniform norm.

A \emph{measure} $\mu$ on $(H,\mathcal B(H))$ 
is called \emph{Gaussian} if 
$\ell _\# \mu \coloneqq \mu \circ \ell^{-1}$
is a Gaussian measure on $\R$ for all $\ell \in H^*$.
An $H$-valued \emph{random variable} $X:\Omega \to H$ is called \emph{Gaussian} if 
$\mu = X_\# \mathbb P$ is a Gaussian measure on $H$.
By the following theorem \cite[Theorem 3.12]{scheutzowLectureNotes}, there is a one-to-one correspondence between Gaussian measures and pairs $(m,Q)$, where $m \in H$ and $Q \in K(H)$ is trace class.

\begin{theorem}\label{thm:Gau}
A \emph{measure} $\mu$ on $(H,\mathcal B(H))$  is Gaussian if and only if there exists some $m \in H$ and some trace class operator
$Q \in K(H)$ 
such that
\begin{equation} \label{star}
\hat \mu (u) \coloneqq \int _H {\rm e}^{{\rm i} \langle u,v\rangle} \, {\rm d}\mu(v) 
= 
{\rm e}^{{\rm i} \langle m,u\rangle - \frac12\langle Qu,u \rangle}, \quad u \in H.
\end{equation}
Then $m \in H$ is called the {\rm mean} of $\mu$ and $Q \in K(H)$ {\rm covariance} (operator) of $\mu$ and
we write $\mathcal N(m,Q)$ instead of $\mu$ and $X \sim \mathcal N(m,Q)$ for the corresponding random variable.
The Gaussian measure determines $m$ and $Q$ uniquely, and conversely, for each $(m,Q)$, $m\in H$, $Q\in K(H)$ trace class,
there exists a Gaussian measure which fulfills \eqref{star}. 
Furthermore, the following relations hold true for all $v,\widetilde v \in H$:
\begin{itemize}
\item[i)] 
$\int_H \langle u,v\rangle \, {\rm d} \mu(u) = \langle m,v \rangle$,
\item[ii)] 
$\int_H \left( \langle u,v\rangle - \langle m,v \rangle \right) \left( \langle u,\widetilde v\rangle - \langle m,\widetilde v \rangle \right)
\, {\rm d} \mu(u) = \langle Q \widetilde v, v \rangle$,
\item[iii)]
$\int_H  \|u-m\|^2\, {\rm d} \mu(u) = {\rm tr} (Q)$,
\end{itemize} 
and similarly for the corresponding random variable
\begin{itemize}
\item[iv)]
$\E(\langle X,v \rangle ) = \langle m,v \rangle$,
\item[v)]
${\rm Cov}\left(\langle X,v \rangle, \langle X, \widetilde v \rangle \right) = \langle Q \widetilde v,v \rangle$,
\item[vi)]
$\E(\|X-m\|^2) = {\rm tr} (Q)$.
\end{itemize} 
\end{theorem}
Now let $(e_k)_{k \in \mathbb N}$ be an orthonormal basis of eigenvectors of  a trace class operator $Q \in K(H)$ with eigenvalues
$\lambda_1 \ge \lambda_2 \ge \ldots \ge 0$. Then $X \sim \mathcal N(m,Q)$ if and only if
\begin{equation}\label{spect_decomp}
X = m + \sum_{k \in \mathbb N_Q} \sqrt{\lambda_k} \beta_k e_k, 
\end{equation}
where $\beta_k$, $k \in \mathbb N_Q \coloneqq \{j \in \mathbb N: \lambda_j >0\}$, 
are independent $\mathcal N(0,1)$ distributed random variables. 
The series converges in $L^2(\Omega,\mathcal F,\mathbb P;H)$ \cite[Theorem 3.15]{scheutzowLectureNotes}.

For a trace class operator $Q \in K(H)$ and $T>0$, a $H$-valued random process
$W^Q = (W_t^Q)_{t \in [0,T]}$ is called a $Q$-\emph{Wiener process} if 
$W_0^Q = 0$, $W$ has continuous trajectories and independent increments, 
and fulfills for $t,s \in [0,T]$ the relation
$$W^Q_t - W^Q_s \sim \mathcal N\left(0,(t-s)Q \right) \quad \text{for all } 
0\le s\le t \le T.$$
 Let $(e_k)_{k\in\mathbb{N}}$ be an orthonormal basis of eigenfunctions of $Q$ corresponding to eigenvalues $(\lambda_k)_{k\in\mathbb{N}}$. By \cite[Theorem 3.17]{scheutzowLectureNotes} an $H$-valued random process $W^Q$ on $(\Omega, \mathcal F, \mathbb P)$ is a $Q$-Wiener process if and only if there exist mutually independent standard Wiener processes $(\beta_k)_{k \in \mathbb{N}}$ on $(\Omega, \mathcal F, \mathbb P)$ such that 
\begin{equation}\label{eq:q}
    W^{Q}_t = \sum_{k \in \mathbb{N}} \sqrt{\lambda_k} \, \beta_k(t) \, e_k,
\end{equation} 
where the series on the right-hand side, the so-called \emph{Karhunen–Lo\`eve decomposition} of $W^Q$, is convergent in $L^2\left(\Omega,\mathcal F,\mathbb P;\mathcal C([0,T],H) \right)$.

For fixed $T>0$, a \emph{filtration} 
$\mathbb F = (\mathcal F_t)_{t \in [0,T]}$ on $(\Omega,\mathcal F, \mathbb P)$
is called  \emph{complete} if $\mathcal F_0$ contains all $A \subset \Omega$ such that $A \subset B$ for some $B \in \mathcal F$ such with $\mathbb P(B) = 0$. The \emph{completion} of a filtration $\mathbb F$ is the smallest complete filtration containing $\mathbb F$. The filtration is called \emph{right-continuous} if $\mathcal F_t = \cap_{s>t} \mathcal F_s$ for all $t \in [0,T]$ and a complete and right-continuous filtration is called \emph{normal}. A $Q$-Wiener process $W^Q$
is a $Q$-Wiener process with respect to $\mathbb F$, if $W^Q$ is $\mathbb F$ adapted
and $W^Q_t-W^Q_s$ is independent of $\mathcal F_s$ for all $0 \le s\le t \le T$.
Every $Q$-Wiener process is such a process with respect to some normal filtration; 
e.g., the completion of the filtration generated by $W^Q$. Note that this filtration is always right-continuous due to the continuity of sample paths of $W^Q$.

Stochastic integrals with respect to $W^Q$ can be introduced
in a very similar way to the classical finite-dimensional case. 
First, one defines integrals for elementary functions with values in the space of Hilbert-Schmidt operators $ L_2(H_Q,K)$, where $H_Q\coloneqq Q^\frac12 H$ denotes the Cameron-Martin space of $Q$ and $K$ is some separable Hilbert space. Then one extends the stochastic integral via the It\^o isometry \cite[Eq.~(2.19)]{GawMan10}
$$\mathbb{E}\Big[ \Big\lVert \int_{0}^t \phi(s) \, {\rm d} W^Q_s\Big\rVert_K^2 \Big] = \mathbb{E} \left [ \int_0^t \left \Vert \phi(s)  \right \Vert_{L_2(\smash{Q^\frac12 H},K)}^2 \, {\rm d} s \right],$$
to adapted $L_2(H_Q,K)$-valued processes $\phi$ for which the right-hand side is finite. With this isometry at hand, the integral is finally defined for the larger class of adapted  $L_2(H_Q,K)$-valued processes with almost surely square integrable Hilbert-Schmidt norm. For more details, we refer to \cite{da2014stochastic,GawMan10,liu2015stochastic}.

\section{Supplementary B}
This supplementary provides the proof of \cref{var} and
more detailed proofs of \cref{lem:reverse} and \cref{thm:main}.

\subsection{Proof of \cref{var}}\label{forward}

\begin{proof}
1.  First, we show the existence and uniqueness of a strong solution of \eqref{eq:SDE}. We verify that the assumptions of \cite[Thm.~3.3]{GawMan10} are satisfied. We start by noting that the measurability assumptions of the theorem are trivially satisfied in our setting since our drift and diffusion coefficients $f$ and $g$ are deterministic, i.e., they do not depend on the probability space $(\Omega,\mathcal F, \mathbb P)$, and continuous on $[0,T] \times H$ and $[0,T]$, respectively.
Furthermore, we have the linear growth bound  
\begin{equation*}
        \lVert f(t,x) \rVert + \lVert  g(t)\rVert_{L_2(H_Q,H)} 
				\le 
				\left( \frac{1}{2} \lVert \alpha\rVert_{\mathcal C([0,T],\mathbb R)}   + \sqrt{ \lVert \alpha \rVert_{\mathcal C([0,T],\mathbb R)} \, \mathrm{tr}(Q)} \right) \, (1+ \lVert x\rVert)
    \end{equation*}
    and the Lipschitz condition
    \begin{equation*}
        \lVert f(t,x_1)-f(t,x_2)  \rVert 
				\le \frac{1}{2} \lVert \alpha\rVert_{\mathcal C([0,T],\mathbb R)} \, \lVert x_1-x_2\rVert.
    \end{equation*}
Thus, we can apply \cite[Thm.~3.3]{GawMan10} and conclude that \eqref{eq:SDE} has a unique strong solution $(X_t)_{t \in [0,T]}$ which satisfies $\mathbb E \left[ \sup_{t \in [0,T]} \lVert X_t\rVert^2 \right]<\infty$ if $\mathbb E \left[ \lVert X_0 \rVert^2\right]<\infty$.
\\[1ex]
 2.   Next, we prove that $(X_t)_{t \in [0,T]}$ is given by \eqref{eq:var}. For $k \in \mathbb N$, we define 
    \begin{equation*}
        F^k \colon [0,T] \times H \to \mathbb R, \qquad  F^k(t,x) \coloneqq a_t \langle x,e_k \rangle.
    \end{equation*}
    Clearly, $F^k$ is continuous with Fr\'echet partial derivatives 
		$F^k_x(t,x)= a_t \, e_k$, $F^k_{xx}(t,x)=0$, and 
		$F^k_t(t,x)=\frac{1}{2}\alpha_t \,a_t \langle x,e_k\rangle$ that are continuous 
		and bounded on bounded subsets of $[0,T]\times H$. Let $(X_t)_{t \in [0,T]}$ be the unique strong solution of \eqref{eq:SDE}. 
		We obtain by It\^o's formula \cite[Thm.~2.9]{GawMan10} $\mathbb P$-almost surely for all $t \in [0,T]$ that 
    \begin{alignat}{3}
        & F^k(t,X_t) && = F^k(0,X_0) + \int_0^t \langle F^k_x(s,X_s), \sqrt{\alpha}_s \, {\rm d} W^Q_s\rangle 
				+ \int_0^t \Big( F^k_t&&(s,X_s)
				+ \langle F^k_x(s,X_s),-\frac{1}{2}\alpha_s X_s\rangle\\ 
        & && && +\frac{1}{2} \, \mathrm{tr}\big(F^k_{xx}(s,X_s)\sqrt{\alpha_s} Q\big)\Big) \, {\rm d} s\\
        & && = \langle X_0,e_k \rangle + \int_0^t \langle a_s \, e_k ,\sqrt{\alpha}_s \, {\rm d} W^Q_s\rangle, &&
    \end{alignat}
    with
  \begin{equation} \begin{aligned}
        \int_0^t \langle a_s \, e_k ,\sqrt{\alpha}_s \, {\rm d} W^Q_s\rangle 
				& \coloneqq 
				\int_0^t   a_s \, \sqrt{\alpha}_s \, \langle e_k , \cdot  \rangle\, {\rm d} W^Q_s \\
        &= 
				\sum_{k^\prime=1}^\infty \int_0^t a_s \, \sqrt{\alpha}_s \, \langle e_k , e_{k^\prime} \rangle
\, {\rm d} \langle W^Q_s,e_{k^\prime} \rangle\\
& = \sqrt{\lambda_k} \int_0^t   a_s \,  \sqrt{\alpha}_s \,{\rm d} \beta_k(s), 
   \end{aligned}\end{equation}
    where the first equality is the definition of the left-hand side and 
		the second equality follows from \cite[Lem.~2.8]{GawMan10}. 
		On the other hand, by \cite[Ex.~2.9]{GawMan10}, we have $\mathbb P$-almost surely
    \begin{equation*}
        \Big\langle \int_0^t  a_s \, \sqrt{\alpha_s} \, {\rm d} W^Q_s, e_k\Big\rangle 
				= \sqrt{\lambda_k} \int_0^t a_s \, \sqrt{\alpha_s} \,  {\rm d} \beta_k(s), \qquad t \in [0,T].
    \end{equation*}
    Overall, we obtain $\mathbb P$-almost surely for all $k \in \mathbb N$ that
    \begin{equation*}
        \langle X_t, e_k \rangle  
				= 
				a_t^{-1}  \Big(  \langle X_0,e_k \rangle 
				+ \Big\langle \int_0^t a_s \, \sqrt{\alpha_s} \, {\rm d} W^Q_s, e_k\Big\rangle \Big),
    \end{equation*}
    which proves \eqref{eq:var}.
\\[1ex]
3.     Finally, we consider $B_t$ in \eqref{eq:B}. 
By \cite[Lem.~2.8]{GawMan10}, we immediately obtain 
    \begin{equation*}
        B_t = \sum_{k \in \mathbb N} \sqrt{\lambda_k}\, \Big(\int_0^t a_s\, \sqrt{\alpha_s} \,   {\rm d}\beta_k(s)\Big) \, e_k
    \end{equation*}
    in  $L^2(\Omega,\mathcal F, \mathbb P; H)$
		for all $t \in [0,T]$. 
		By construction of the stochastic integral, the difference  
    \begin{equation*}
        \delta B_t^K \coloneqq B_t - \sum_{k =1}^K \sqrt{\lambda_k}\, \Big(\int_0^t a_s \, \sqrt{\alpha_s} \, \, {\rm d}\beta_k(s)\Big) \, e_k \in  L^2(\Omega,\mathcal F,\mathbb P;H)
    \end{equation*}
    is an $H$-valued square-integrable martingale for all $K \in \mathbb N$ and thus, by Doob's maximal inequality \cite[Thm.~2.2]{GawMan10},
    \begin{equation*}
        \mathbb E \left[ \sup_{t \in [0,T]} \left\lVert \delta B_t^K  \right\rVert^2 \right] \le 4 \, \mathbb E \left[  \left\lVert \delta B^K_T \right\rVert^2 \right] \to 0 \text{ as } K \to \infty,
    \end{equation*}
    which proves the second part of\eqref{eq:B}. 
		For $k \in \mathbb N$, we set 
    \begin{equation*}
        b_k(t)\coloneqq \int_0^t a_s \,\sqrt{\alpha_s} \,  {\rm d} \beta_k(s), \qquad t \in [0,T].
    \end{equation*}    
    In the finite-dimensional setting it is well-known that the stochastic integral of a deterministic function with respect to a Brownian motion is a centered Gaussian, see e.g. \cite[p.~108]{leGall}. 
		In particular $b_k(t)$ is a centered Gaussian for all $k \in \mathbb N$ and all $t \in [0,T]$. 
		By the It\^o isometry, we obtain
  \begin{equation} \begin{aligned}
        \mathrm{Var} \left[b_k(t)\right] = \mathbb E \left[ \Big(\int_0^t  a_s \,\sqrt{\alpha_s} \,  {\rm d} \beta_k(s) \Big)^2 \right]
				= \int_0^t  a_s^2 \,\alpha_s \, {\rm d} s
				= a_t^2 -1.
   \end{aligned}\end{equation}
   For $t>0$, we have $(a_t^2-1)^{-1/2} \, b_k(t) \sim \mathcal{N}(0,1)$ for all $k \in \mathbb N$ and by \eqref{eq:B} it holds 
  \begin{equation} \begin{aligned}
        B_t = \sum_{k \in \mathbb N} \sqrt{\lambda_k \, (a_t^2-1)} \, (a_t^2-1)^{-1/2} \, b_k(t) \, e_k
   \end{aligned}\end{equation} in $L^2(\Omega,\mathcal F,\mathbb P;H)$. By \cite[Thm.~3.15]{scheutzowLectureNotes}, we obtain that $B_t$ is a centered $(a_t^2 -1)Q$-Gaussian, since  $((a_t^2-1)^{-1/2} \, b_k(t))_{k \in \mathbb N}$ are mutually independent.
\end{proof}

%------------------------------------------------------------
\subsection{Proof of \cref{lem:reverse}}\label{3.4}

\begin{proof} 
1.  First, we show that $p^n_t$ has the form \eqref{eq:score} which then implies that $f^n$ in \eqref{reverse_disc} is well-defined
with the desired properties.
By \cref{var}, we know that $B_t^n$ in \eqref{eq:finvar} is a centered Gaussian in $H_n$ 
	with covariance $(a_t^2-1)Q_n$ and thus $\hat{B}_t^n$ is a centered Gaussian in $\mathbb R^n$ with covariance 
	$(a_t^2-1) \mathrm{diag}(\lambda_1,...,\lambda_n)$. In particular, for all $t \in (0,T]$,
    \begin{equation*}
        \hat{b}_t^n(z) \coloneqq \frac{(\lambda_1...\, \lambda_n)^{-1/2}}{\left(2\pi(a_t^2-1)\right)^{n/2}} \, b_t^n(z)
    \end{equation*}
    is the Lebesgue density of $\hat{B}_t^n$, cf. e.g. \cite[Thm~1.3]{leGall}. 
		By \eqref{eq:finvar}, we have
    \begin{equation*}
        \hat{X}^n_t =a_t^{-1} \, (\hat{X}_0^n+\hat{B}_t^n), \qquad t \in [0,T].
    \end{equation*}
    Since $X_0^n$ and $B_t^n$ are independent, $\hat X_0^n$ and 
		$\hat{B}_t^n$ are also independent and thus their sum $\hat{X}_0^n+\hat{B}_t^n$ 
		has density $p_0^n *\hat{b}_t^n$ for all $t \in (0,T]$. 
		By scaling we see that $\hat{X}_t^n$ has the density
    \begin{equation*}
        p^n_t(z) \coloneqq  \frac{a_t^n(\lambda_1...\, \lambda_n)^{-1/2}}{\left(2\pi(a_t^2-1) \right)^{n/2}} \, 
				(p^n_0*b^n_t)(a_t \, z).
    \end{equation*}
     For $t \in (0,T]$, we observe that $b_t^n$ is twice continuously differentiable 
		and it holds 
    \begin{equation*}
        \partial_k  b_t^n(z)= - (a_t^2-1)^{-1}(\lambda_k)^{-1} z_k  b_t^n(z), \qquad k=1,...,n,
    \end{equation*}
    and
    \begin{equation*}
        \partial_{\ell} \big(\partial_k  b_t^n\big)(z) = \begin{cases}
         (a_t^2-1)^{-2}(\lambda_\ell)^{-1}(\lambda_k)^{-1} \, z_\ell z_k  b_t^n(z), &\ell\not=k\\
         (a_t^2-1)^{-2}(\lambda_k)^{-2} \,  z_k^2  b_t^n(z) -(a_t^2-1)^{-1}(\lambda_k)^{-1} b_t^n(z), & \ell=k.
        \end{cases}
    \end{equation*}
    Clearly, we have $\partial_k b_t^n \in L^1(\mathbb R^n)$ 
		and $\partial_\ell \partial_k b_t^n \in L^1(\mathbb R^n)$ 
		and therefore $\partial_k (p_0^n*b_t^n)=p_0^n*(\partial_k b_t^n)$ 
		and 
		$\partial_\ell  \partial_k (p_0^n*b_t^n)=p_0^n*(\partial_\ell \partial_k b^n_t)$. 
		In particular, $p_t^n$ is twice continuously differentiable with
    \begin{equation*}
        \partial_k p^n_t(z) = 
				-\frac{a_t^{n+1} (\lambda_k)^{-1} (\lambda_1...\, \lambda_n)^{-1/2}}{{(a_t^2-1)}\left(2\pi(a_t^2-1) \right)^{n/2}} \, 
				\left(p^n_0* (z_k b^n_t) \right)(a_t \,z), \qquad k=1,...,n, 
    \end{equation*}
    and 
    \begin{equation*}
        \partial_\ell \partial_k p_t^n(z)=  -\frac{a_t^{n+2}
				(\lambda_1...\, \lambda_n)^{-1/2}}{\left(2\pi(a_t^2-1) \right)^{n/2}} \, (p^n_0* \partial_\ell \partial_k b^n_t)(a_t z), 
				\qquad \ell, \, k \in \{1,...,n\}.
    \end{equation*}
    As it is convolution of a probability density function and a strictly positive function, we have $(p^n_0*b_t^n)(z)>0$ and thus also $p_t^n(z)>0$ for all $z \in \mathbb R^n$. The chain rule yields
    \begin{equation*}
        \nabla \log p_t^n(z) =\frac{\nabla p_t^n(z)}{p_t^n(z)}=-\frac{a_t}{(a_t^2-1)}\Big(\frac{(\lambda_k)^{-1} 
				\, (p^n_0* \cdot_k b^n_t)(a_t \, z)}{(p^n_0* b^n_t)(a_t \, z)}\Big)_{k=1}^n, 
    \end{equation*}
    which is \eqref{eq:sco}, and
    \begin{equation*}
        \nabla^2 \log p_t^n(z)=\frac{\nabla^2 p_t^n(z)}{p_t^n(z)}-\frac{\nabla p_t^n(z)\nabla p_t^n(z)^\intercal}{p_t^n(z)^2}.
    \end{equation*}
    Since all terms are continuous in $z \in \mathbb R^n$ and $t \in (0,T]$, we conclude that $\nabla^2 \log p_t^n$ is bounded on $[\delta,T] \times \{z \in \mathbb R^n\mid \lVert z\rVert\le N\}$ for all $\delta>0$ and $N\in \mathbb N$. Thus, the function $f^n$ as in \eqref{eq:score} is well-defined and Lipschitz continuous in the second variable uniformly on $[0,T-\delta]\times \{x \in H_n\mid \lVert x\rVert\le N\}$. 
   \\[1ex] 
 2.   Next, we transfer \eqref{finiteforward} to an SDE on $\mathbb R^n$ 
to obtain a reverse equation. Let $\beta_k(t)$, $k \in \mathbb N$ be as in \eqref{eq:q}.
Then the process 
$\bm{\beta}^n_t\coloneqq (\beta_k(t))_{k=1}^n$ 
is an $n$-dimensional Brownian motion and  $(\hat{X}^n_t)_{t \in [0,T]}$ is the unique strong solution of 
    \begin{equation*}
        {\rm d} \hat{X}^n_t = - \frac{1}{2} \alpha_t \, \hat{X}^n_t \, {\rm d} t + \sqrt{\alpha_t} \, \mathrm{diag}(\sqrt{\lambda_1},...,\sqrt{\lambda_n}) \, {\rm d} \bm{\beta}^n_t
    \end{equation*}
    with initial value $\hat{X}_0^n$. We can  verify that for all $\delta>0$ and all bounded open sets $U \subset \mathbb R^n$
    \begin{equation}\label{eq:fin}
        \int_{\delta}^T \int_U \lvert p_t^n(z)\rvert^2+\alpha_t \lambda_k\, \lvert  \partial_k p_t^n(z) \rvert^2 \, {\rm d}z \, {\rm d}t<\infty, 
				\qquad k=1,...,n.
    \end{equation}
    By \cite[Thm~2.1]{haussmann1986time} we conclude that the time reversal $(\hat{X}^n_{T-t})_{t \in [0,T)}$ 
		is a solution of the martingale problem, cf. \cite[Section~3.3]{scheutzowWT4}, associated to the reverse SDE
    \begin{equation}\label{eq:revC}
        {\rm d} \hat{Y}^n_t 
				=
				\Big( \frac{1}{2} \alpha_{T-t} \hat{Y}^n_t + \alpha_{T-t} \, 
				\mathrm{diag}(\lambda_1,...,\lambda_k) \, \nabla \log p^n_{T-t}(\hat{Y}^n_t)\Big) \, {\rm d}t + \sqrt{\alpha_{T-t}} \, \mathrm{diag}(\sqrt{\lambda_1},...,\sqrt{\lambda_n}) \, {\rm d} {\bm{\hat{\beta}}}^n_t
    \end{equation}
    with initial condition $\hat{Y}^n_0 \sim \mathbb P_{\hat{X}^n_T}$. 
		Let $\mathbb F$ be the completion of the natural filtration of $(X_{T-t}^n)_{t\in[0,T)}$. It is well-known that solutions of martingale problems correspond to weak solutions of the associated SDE. 
		In particular, by \cite[Thm~3.15]{scheutzowWT4}, 
		there is a weak solution 
		$((\hat{\Omega}, \hat{\mathcal F}, \hat{\mathbb F}, \hat{\mathbb P}),(\hat{Y}^n_t)_{t \in [0,T)},\hat{\bm{\beta}}^n)$ 
		of \eqref{eq:revC} such that 
    $(\hat{\Omega}, \hat{\mathcal F}, \hat{\mathbb F}, \hat{\mathbb P})$ 
		is an extension of $(\Omega, \mathcal F, \mathbb F, \mathbb P)$ and $(\hat{Y}^n_t)_{t \in [0,T)}$ 
		is the canonical extension of 
		$(\hat{X}_{T-t}^n)_{t \in [0,T)}$ to $(\hat{\Omega}, \hat{\mathcal F},\hat{\mathbb F}, \hat{\mathbb P})$, cf.~\cite[Def.~3.13]{scheutzowWT4}. Thus, $(\hat{Y}^n_t)_{t \in [0,T)}$ is equal to 
		$(\hat{X}^n_{T-t})_{t \in [0,T)}$ in distribution. 
		Since $\hat{\bm \beta}^n$ is an $n$-dimensional Brownian motion, 
		we have 
		$\hat{\bm \beta}_t^n = (\hat{\beta}^n_k(t))_{k=1}^n$ 
		for some mutually independent standard Brownian motions $(\hat{\beta}^n_k(t))_{k=1}^n$. 
		Therefore, the process 
    \begin{equation*}    
		\hat{W}^{Q_n}_t \coloneqq \sum_{k=1}^n \sqrt{\lambda_k} \,\hat{\beta}^n_k(t) \, e_k
		\end{equation*}
		is a $Q_n$-Wiener process on $(\hat{\Omega}, \hat{\mathcal F}, \hat{\mathbb F},\hat{\mathbb P})$. 
		We set $Y^n_t \coloneqq \iota_n^{-1}(\hat{Y}_t^n).$ 
		Clearly, $(Y_t^n)_{t \in [0,T)}$ and $(X_{T-t}^n)_{t \in [0,T)}$ 
		are equal in distribution. 
		Since $((\hat{\Omega}, \hat{\mathcal F}, \hat{\mathbb F}, \hat{\mathbb P}),(\hat{Y}^n_t)_{t \in [0,T)},\hat{\bm{\beta}}^n)$ 
		is a weak solution of \eqref{eq:revC}, the process $(Y^n_t)_{t \in [0,T)}$ 
		has continuous sample-paths and is adapted to $\hat{\mathbb F}$. 
		Further, since $Y^n_t$ fulfills \eqref{sol} for the setting  \eqref{eq:revC}, we have 
		$\hat{\mathbb P}$-almost surely for all $t \in [0,T)$ that 
\begin{equation} \begin{aligned}
        Y^n_t &=
        \sum_{k=1}^n \Big( (\hat{Y}^n_0)_k+\int_0^t \tfrac{1}{2}\alpha_{T-s}  
				(\hat{Y}^n_s)_k+\alpha_{T-s} \, \lambda_k \, \partial_k \log p^n_{T-s}(\hat{Y}^n_s) \, {\rm d} s \\
				&+\int_0^t \sqrt{\alpha_{T-s}} \sqrt{\lambda_k} \, {\rm d}  \hat{\beta}^n_k(s)\Big) \, e_k\\
                &=  \iota_n^{-1}(\hat{Y}^n_0)
				+ \int_0^t 
				\frac{1}{2}\alpha_{T-s}\,  \iota_n^{-1}(\hat{Y}^n_s )
				+ \alpha_{T-s} \, Q_n \Big(\sum_{k=1}^n  \partial_k \log p^n_{T-s}\big(\iota_n(\iota_n^{-1}(\hat{Y}^n_s))\big) e_k\Big)  {\rm d} s\\
				&+\int_0^t \sqrt{\alpha_{T-s}}  \, {\rm d}  \hat{W}^{Q_n}_s\\
                &=  Y^n_0 + \int_0^t \frac{1}{2} \alpha_{T-s} \, Y^n_s 
				+ \underbrace{\alpha_{T-s} \, Q_n \big(\iota_n^{-1} \circ \nabla \log p_{T-t}^n \circ \iota_n\big)(Y_s^n)}_{=f^n(s,Y^n_s)} \, {\rm d} s 
				+ \int_0^t \sqrt{\alpha_{T-s}}  \, {\rm d}  \hat{W}^{Q_n}_s,
   \end{aligned}\end{equation}
    and thus  $((\hat{\Omega}, \hat{\mathcal F}, \hat{\mathbb F}, \hat{\mathbb P}),(Y^n_t)_{t \in [0,T)},\hat{W}^{Q_n})$ 
		is a weak solution of \eqref{reverse_disc} with $Y_0^n \sim \mathbb P_{X^n_T}$. 
Starting with a weak solution  $((\hat{\Omega}, \hat{\mathcal F}, \hat{\mathbb F}, \hat{\mathbb P}),(Y^n_t)_{t \in [0,T)},\hat{W}^{Q_n})$ of \eqref{reverse_disc}, we can analogously show that $((\hat{\Omega}, \hat{\mathcal F}, \hat{\mathbb F}, \hat{\mathbb P}),(\hat{Y}^n_t)_{t \in [0,T)},\hat{\bm{\beta}}^n)$ with $\hat{Y}^n_t\coloneqq \iota_n(Y_t^n)$ and $\hat{\bm \beta}^n_t \coloneqq \big( (\sqrt{\lambda_k})^{-1} \langle\hat{W}^{Q_n}_t,e_k\rangle \big)_{k=1}^n$ yields a weak solution of \eqref{eq:revC}. Thus, showing uniqueness in law of \eqref{eq:revC} is sufficient to prove uniqueness in law of \eqref{reverse_disc}. Because the drift and diffusion coefficients of \eqref{eq:revC} are Lipschitz continuous in the second variable on sets of the form $[0,T-\delta] \times \{z \in \mathbb R^n\mid \lVert z\rVert \le N\}$, we know by \cite[Lem.~12.4]{russo22} and \cite[Definition~12.1]{russo22} that pathwise uniqueness holds on $[0,T-\delta]$, i.e., any two weak solutions on $[0,T-\delta]$ that are defined on the same filtered probability space with the same Brownian motion are indistinguishable. A limiting argument $\delta \to 0$ yields pathwise uniqueness on $[0,T)$. It is a classical result by Yamada and Watanabe \cite[Prop.~13.1]{russo22} that pathwise uniqueness implies uniqueness in law. 
    
    Finally, we note that by \cref{var} the forward process satisfies $\mathbb E \big[ \sup_{t \in [0,T]} \lVert X^n_t\rVert^2 \big]<\infty$. As we have proven that any weak solution $((\hat{\Omega}, \hat{\mathcal F},\hat{\mathbb F}, \hat{\mathbb P}),(Y^n_t)_{t \in [0,T)},\hat{W}^{Q_n})$ of \eqref{reverse_disc} is equal to $(X^n_{T-t})_{t \in [0,T)}$ in distribution, this immediately implies $\mathbb E \big[\sup_{t \in [0,T)} \lVert Y^n_t \rVert^2 \big]<\infty$.
\end{proof}

%--------------------------------------------------------------------
\subsection{Proof of \cref{thm:main}} \label{3.6}

We will need the fact that the Wasserstein-2 distance metricizes 
weak convergence in those spaces \cite[Thm 6.9]{villani2009optimal}.
More precisely, we have for a sequence $(\mu_n)_n$ that
$W_2(\mu_n,\mu) \to 0$ as $n \to \infty$ 
if and only if
$\int_K \varphi d\mu_n  \to \int_K \varphi d\mu$ for all $\varphi \in C_b(K)$ 
and $\int_K d(x_0,x)^2 \, \text{d} \mu_n \to \int_X d(x_0,x)^2 \, \rm{d} \mu$ for any $x_0 \in K$ as $n \to \infty$.

To prepare \cref{thm:main}, we need the following lemma.

\begin{lemma}[Gronwall's inequality \cite{Emmrich_2004,gronwall_lemma}] \label{gronwall}
Let $h \in L^{\infty}([0,t_0])$ for some $t_0>0$. Assume that
there exist $a \ge 0$ and $b > 0$ such that
$h(t) \leq a + b \int_0^t h(s) \, {\rm d} s$ for all $t \in [0,t_0]$. 
Then it holds 
\[
h(t) \leq a \, e^{bt}, \qquad t \in [0,t_0].
\]
\end{lemma}

Now we can prove the theorem.

\begin{proof}
1.  First, we observe that since by assumption $\tilde{f}^n$ is continuous $[0,t_0]\times H$
		and Lipschitz continuous in the second variable, 
		we can apply \cite[Thm~3.3]{gawarecki2010stochastic} to conclude that for any initial condition and $Q_n$-Wiener process independent of the initial condition the SDE \eqref{reverse_learned} has a unique strong solution on $[0,t_0]$. 
		Note that the existence of a unique strong solution implies that solutions are unique in law \cite[Remark~1.10]{watanabe}. 
		In particular, the measure $\mathbb P_{\smash{(\widetilde{Y}}_t^n)_{t_0}}$ is well-defined. 
		We use $\mathbb P_{\smash{(\breve{Y}_t^n)_{t_0}}}$ to denote the path measure induced by \eqref{reverse_learned}, 
		but with initial distribution $\mathbb P_{X_T^n}$.
    Then we obtain by the triangle inequality 
  \begin{align*}
        W_2(\mathbb{P}_{\smash{(X_{T-t}^n)_{t_0}}}, \mathbb{P}_{\smash{(\widetilde{Y}}_{t}^n)_{t_0}}) 
				&\leq  
				W_2(\mathbb{P}_{\smash{(X_{T-t}^n)_{t_0}}}, \mathbb{P}_{\smash{(Y_{t}^n)_{t_0}}}) 
				+ W_2(\mathbb{P}_{\smash{(Y_{t}^n)_{t_0}}}, \mathbb{P}_{\smash{(\breve{Y}_{t}^n)_{t_0}}})\\ &+W_2(\mathbb P_{\smash{(\breve{Y}_t^n)_{t_0}}},\mathbb P_{\smash{(\widetilde{Y}}_t^n)_{t_0}}),\\
				W_2^2(\mathbb{P}_{\smash{(X_{T-t}^n)_{t_0}}}, \mathbb{P}_{\smash{(\widetilde{Y}}_{t}^n)_{t_0}}) 
				&\leq  
				4	W_2^2(\mathbb{P}_{\smash{(X_{T-t}^n)_{t_0}}}, \mathbb{P}_{\smash{(Y_{t}^n)_{t_0}}}) 
				\\ &+ 2 W_2^2(\mathbb{P}_{\smash{(Y_{t}^n)_{t_0}}}, \mathbb{P}_{\smash{(\breve{Y}_{t}^n})_{t_0}})
				+ 4 W_2^2(\mathbb P_{\smash{(\breve{Y}_t^n)_{t_0}}},\mathbb P_{\smash{(\widetilde{Y}}_t^n)_{t_0}}).
				 \end{align*}
    By \cref{lem:reverse}, the first term is zero. 
    \\
    2.
		Next, we bound the third term. Let $\breve{Y}^n_0 $ and $\widetilde{Y}^n_0$ be any realizations of $\mathbb P_{X^n_T}$ and $\mathcal{N}(0,Q_n)$, respectively, that are defined on the same probability space. For some driving $Q_n$-Wiener process independent of  $\breve{Y}^n_0 $ and $\widetilde{Y}^n_0$ and defined on the same probability space (or possibly an extension of it), let $(\breve{Y}_t^n)_{t \in [0,t_0]}$ and $(\widetilde{Y}^n_t)_{t \in [0,t_0]}$ be the unique strong solutions of \eqref{reverse_learned} started from  $\breve{Y}^n_0 $ and $\widetilde{Y}^n_0$, respectively. By \eqref{sol} and using Jensen's inequality, we obtain for $s \in [0,t_0]$,
  \begin{align*}
    \big\lVert \breve{Y}^n_s - \widetilde{Y}^n_s \big\rVert^2 
		& \leq 
		 3 \Big( \big\lVert \breve{Y}_0^n - \widetilde{Y}^n_0 \big\rVert^2 +
		 \Big\lVert \int_0^s  \tfrac{1}{2} \alpha_{T-r} \, (\breve{Y}_r^n - \tilde{Y}_r^n)  {\rm d} r \Big\rVert^2 \\
		&+  \Big\lVert\int_0^s  \tilde{f}^n(r,\breve{Y}^n_r)  - \tilde{f}^n(r,\widetilde{Y}^n_r) \, {\rm d} r \Big\rVert^2 \Big)
		\\
    & \le 
		3 \Big( \big\lVert  \breve{Y}_0^n - \widetilde{Y}^n_0 \big\rVert^2
		+ s  \int_0^s \big\lVert  \frac12 \alpha_{T-r}  (\breve{Y}_r^n - \tilde{Y}_r^n)\big\rVert^2  {\rm d} r
		\\ &+s \int_0^s \big\lVert \tilde{f}^n(r,\breve{Y}^n_r)  - \tilde{f}^n(r,\widetilde{Y}^n_r) \big\rVert^2 {\rm d} r \Big)\\
    & \le 3 \, \big\lVert \breve{Y}_0^n - \widetilde{Y}^n_0 \big\rVert^2
		+ 
		3 t_0 \sup_{r \in [0,t_0]} \Big(\frac{\alpha_{T-r}^2}{4}+  (L_r^n)^2 \Big)  
		\int_0^s \big\lVert  \breve{Y}_r^n - \tilde{Y}_r^n\big\rVert^2 \, {\rm d} r.		
 \end{align*}
Setting $h_1(t) \coloneqq 
\mathbb E \big[ \sup_{s \in [0,t]} \big\lVert \breve{Y}^n_s-\widetilde{Y}^n_s\big\rVert^2\big]$ for $t \in [0,t_0]$, we consequently obtain
\begin{align*}
h_1(t) &\leq
3 \mathbb E \big[\big\lVert \breve{Y}_0^n - \widetilde{Y}^n_0 \big\rVert^2 \big]
+ 
3 t_0 \xi(t_0)
\mathbb E \big[\sup_{s \in [0,t]} \int_0^s  \big\lVert \breve{Y}^n_r-\widetilde{Y}^n_r \big\rVert^2 \, {\rm d} r \big]\\
&\leq
3 \mathbb E \big[\big\lVert \breve{Y}_0^n - \widetilde{Y}^n_0 \big\rVert^2 \big]
+ 
3 t_0 \xi(t_0)
\mathbb E \big[ \int_0^t  \big\lVert \breve{Y}^n_r-\widetilde{Y}^n_r \big\rVert^2 \, {\rm d} r \big]
\\
&\leq
3 \mathbb E \big[\big\lVert \breve{Y}_0^n - \widetilde{Y}^n_0 \big\rVert^2 \big]
+ 
3 t_0 \xi(t_0)
\int_0^t  h_1(r)\, {\rm d} r,
   \end{align*}
where we used Fubini's theorem for the last inequality. By \cref{var}, the discretized forward process satisfies $\mathbb E \big[ \sup_{t \in [T-t_0,T]} \lVert X^n_t\rVert^2\big]<\infty$ and in particular $\mathbb E [\lVert \breve{Y}^n_0\rVert^2]=\mathbb E [\lVert X^n_T\rVert^2]<\infty$. 
		By \cref{thm:Gau}(vi), we also have $\mathbb E [\lVert \smash{\widetilde{Y}}^n_0 \rVert^2]=\mathrm{tr}(Q_n)<\infty$. 
		Together with \cite[Thm~3.3]{gawarecki2010stochastic}, we can argue that for every $t \in [0,t_0]$ it holds
    \begin{equation*}
        h_1(t)
				=
				\mathbb E \big[ \sup_{s \in [0,t]} \big\lVert \breve{Y}^n_s-\smash{\widetilde{Y}}^n_s\big\rVert^2\big]
				\le 
				2 \, \mathbb E \big[ \sup_{s \in [0,t_0]} \big\lVert \breve{Y}^n_s\big\rVert^2\big] +2 \, \mathbb E \big[ \sup_{s \in [0,t_0]} \big\lVert \widetilde{Y}^n_s\big\rVert^2\big]<\infty,
    \end{equation*}
    so that $h_1 \in L^\infty([0,t_0])$. 
		Thus, we can apply Gronwall's lemma to $h_1$ and obtain for all $t \in [0,t_0]$ that
    \begin{equation*}
     h_1(t) 
				\le  
				3\mathbb E \big[\big\lVert \breve{Y}^n_0-\widetilde{Y}^n_0 \big\rVert^2 \big] \, 
				e^{3 \xi(t_0) t_0 \, t }.
    \end{equation*}
    In particular, the Wasserstein-2 distance can be estimated by
  \begin{align*}
         W_2^2( \mathbb P_{\smash{(\breve{Y}^n_t)_{t_0}}},\mathbb P_{\smash{(\widetilde{Y}^n_t)_{t_0}}} )  
				&\leq
				\mathbb E \Big[ \sup_{s \in [0,t_0]}  \big\lVert \breve{Y}_s^n - \widetilde{Y}^n_s \big\rVert^2 \Big] = h_1(t_0)			
				\\
				&\leq  
				3 \,e^{3 \xi(t_0)  t_0^2 } 
				 \inf_{\substack{\breve{Y}_0^n \sim \mathbb P_{X^n_T}\\ \smash{\widetilde{Y}}^n_0\sim \mathcal{N}(0,Q_n)}}  \mathbb E \Big[
				\big\lVert \smash{\breve{Y}}_0^n - \widetilde{Y}^n_0 \big\rVert^2 \Big] \\
				&= 3 \,e^{3 \xi(t_0)  t_0^2 } \, W_2^2\left(\mathbb P_{X^n_T},\mathcal{N}(0,Q_n) \right),
   \end{align*}
   where we used that the previous estimates are valid for an arbitrary realization of  $\breve{Y}^n_0 $ and $\widetilde{Y}^n_0$.
   \\
    3. It remains to bound $W_2(\mathbb P_{\smash{(Y^n_t)_{t_0}}},\mathbb P_{\smash{(\breve{Y}^n_t})_{t_0}})$. 
		We pick a coupling for the probabilistic notion of the Wasserstein distance. 
		By \cref{lem:reverse}, 
		there exists a weak solution $$((\hat{\Omega},\hat{\mathcal F},\hat{\mathbb F},\hat{\mathbb P}),(Y^n_t)_{t \in [0,t_0]}, \hat{W}^{Q_n})$$
		of the discretized reverse SDE \eqref{reverse_disc} with initial condition $Y^n_0 \sim \mathbb P_{X^n_T}$ such that $(\hat{\Omega},\hat{\mathcal F},\hat{\mathbb F},\hat{\mathbb P})$ is an extension of $(\Omega,\mathcal F,\mathbb F,\mathbb P)$ and $(Y^n_t)_{t \in [0,t_0]}$ is the canonical extension of $(X_{T-t}^n)_{t \in [0,t_0]}$ to this extended probability space. 
		We define $(\breve{Y}^n_t)_{t \in [0,t_0]}$ 
		as the strong solution of the approximate reverse  \eqref{reverse_learned} with initial condition 
		$\breve{Y}^n_0=Y^n_0$ and the same driving noise $\hat{W}^{Q_n}$.
		As before, we can bound the difference as follows for $s \in [0,t_0]$:
 \begin{align*}
\big\lVert Y^n_s - \breve{Y}^n_s \big\rVert^2 
&\leq 
\Big\lVert Y_0^n - \breve{Y}^n_0 + 
\int_0^s \frac{1}{2} \alpha_{T-r} \, (Y_r^n - \breve{Y}_r^n) \, {\rm d} r 
+ \int_0^s f^n(r,Y^n_r)  - \tilde{f}^n(r,\breve{Y}^n_r) \, {\rm d} r \Big\rVert^2\\
 &\leq 
2 t_0  
\sup_{r \in [0,t_0]} \frac{\alpha_{T-r}^2}{4} \, \int_0^s \big\lVert Y_r^n - \breve{Y}_r^n \big\rVert^2 \, {\rm d} r 
+ 2 t_0 \, \int_0^s \big\lVert f^n(r,Y^n_r)  - \tilde{f}^n(r,\breve{Y}^n_r) \big\rVert^2\, {\rm d} r .
 \end{align*}
For the second term we apply again the triangle inequality and obtain 
\begin{align*}
    & 2 t_0 \int_0^s \big\lVert f^n(r,Y^n_r)  - \tilde{f}^n(r,\breve{Y}^n_r)\big\rVert^2 \, {\rm d}r \\ 
    \le \ & 4 t_0 \int_0^s \big\lVert f^n(r,Y^n_r)  - \tilde{f}^n(r,Y^n_r)\big\rVert^2 \, {\rm d} r
		+ 4t_0 \, \int_0^s \big\lVert \tilde{f}^n(r,Y^n_r)  - \tilde{f}^n(r,\breve{Y}^n_r)\big\rVert^2 \, { \rm d} r \\
    \le \ & 4t_0 \int_0^s \big\lVert f^n(r,Y^n_r)  - \tilde{f}^n(r,Y^n_r)\big\rVert^2 \, {\rm d} r
		+  4t_0 \,  \sup_{r \in [0,t_0]} (L_r^n)^2   \,\int_0^s \big\lVert Y_r^n - \breve{Y}^n_r \big\rVert^2 \, {\rm d} r.
 \end{align*}

Since $(Y^n_t)_{t \in [0,t_0]}$ is an extension of $(X^n_{T-t})_{t \in [0,t_0]}$, we obtain from assumption \eqref{xxx} that
$$
    \mathbb E \Big[ \sup_{t \in [0,s]} \int_0^s \big\lVert f^n(r,Y^n_r)-\tilde{f}^n(r,Y^n_r)\big\rVert^2 \, {\rm d}r \Big] \le \mathbb E \Big[\int_0^{t_0} \big\lVert f^n(r,Y^n_r)-\tilde{f}^n(r,Y^n_r)\big\rVert^2 \, {\rm d}r \Big]\le \varepsilon.
$$
We set $ h_2(t) \coloneqq 
		\mathbb E \big[ \sup_{s \in [0,t]} \big\lVert Y^n_s -\breve{Y}_s^n \big\rVert^2\big]$ for $t \in [0,t_0]$ and get similarly as in the previous calculations 
$$ 
h_2(t) \le 
4t_0 \, \xi(t_0) \, \int_0^t h_2(s) \, {\rm d} s + 4\, t_0 \varepsilon.
$$
Since $(Y^n_t)_{t \in [0,t_0]}$ is equal to $(X^n_{T-t})_{t \in [0,t_0]}$ in distribution 
and $\mathbb E \big[\sup_{t \in [0,T]} \lVert X^n_t\rVert^2 \big]<\infty$ by \cref{var},
we conclude
$$
    h_2(t) =
		\mathbb E \big[ \sup_{s \in [0,t]} \big\lVert Y^n_s -\breve{Y}_s^n \big\rVert^2\big] 
		\le 
		2\ \mathbb E \big[ \sup_{s \in [0,t_0]} \big\lVert Y^n_s \big\rVert^2\big]
		+2\ \mathbb E \big[ \sup_{s \in [0,t_0]} \big\lVert \breve{Y}_s^n \big\rVert^2\big] <\infty, \qquad t\in [0,t_0]
$$
and thus $h_2 \in L^\infty([0,t_0])$. Now Gronwall's lemma applied to $h_2$ yields 
$$
W_2^2(\mathbb{P}_{\smash{(Y_{t}^n)_{t_0}}}, \mathbb{P}_{\smash{(\breve{Y}_{t}^n)_{t_0}}})
\leq h_2(t_0)\le
4 \, t_0 \varepsilon \, e^{4 \xi(t_0)   t_0^2}.
$$
In summary, we obtain
\begin{align*}
W_2^2(\mathbb{P}_{(X_{T-t}^n)_{t_0}}, \mathbb{P}_{\smash{(\widetilde{Y}}_{t}^n)_{t_0}}) 
				\leq  12
				\left(t_0 \varepsilon \, e^{4 \xi(t_0)   t_0^2} + e^{3 \xi(t_0)  t_0^2 } \, W_2^2\left(\mathbb P_{X^n_T},\mathcal{N}(0,Q_n) \right)
				\right).
\end{align*}
\end{proof}

\section{Supplement C: Hyperparameters}
\subsection{MNIST}
In \cref{tab:bestmmd}, we report the MMD values achieved by the generated images from the best setup for each combination of networks and prior distributions. The test set uses 10000 test samples in $28 \times 28$ and $56 \times 56$ resolutions. As explained in MNIST experiment setup, the MMD values are used as an initial screening, and the final setup for the eight models is determined by the image quality. The minimum MMD values can differ from the image quality we see in \cref{fig:unetvsfno}.

\subsection{Fashion-MNIST}
There are 10 classes in Fashion-MNIST images. Class 0-9 represents t-shirt, trousers, pullover, dress, coat, sandal, shirt, sneaker, bag, ankle boot, respectively. 

We simplify the grid search of hyperparameters in Fashion-MNIST by utilizing the knowledge we get in MNIST's grid search. We emphasize on searching the scale $\gamma$ and the mode cutoff $k_{\max}$, as can be seen in \cref{tab:hpset-f}. The best hyperparameters value is similar to what we select for MNIST experiment. Here, the mode cutoff is generally higher than MNIST. $k_{\max} = 15$ works the best for the combined prior, and $k_{\max} = 14$ works better for the rest. The scale factor $\gamma = 10$ in trace class operator remains to be the most suitable level, while $\gamma = 5$ and $\gamma = 12$ also behave roughly on par in terms of the image quality.

\begin{table}[]
\centering
\begin{tabular}{|c|cc|}
\hline
\textbf{Category}                                 & \multicolumn{2}{c|}{\textbf{Hyperparameters}}                                                                                                                                                                              \\ \hhline{|=|=|=|}
\multicolumn{1}{|c|}{\textbf{Prior}} & \multicolumn{2}{c|}{Standard, Laplacian, FNO, Combined (scale $\gamma$ = \{5, 10, 12, 15\})}                                                                                                                              \\ \hline
\textbf{Network}                                  & \multicolumn{1}{c|}{U-Net}                                                                                     & FNO                                                                                                        \\ \hline
\textbf{Configuration}                           & \multicolumn{1}{c|}{\begin{tabular}[c]{@{}c@{}}Channel = \{32, 64\}, \\ Residual block count=\{2, 4\}\end{tabular}} & \begin{tabular}[c]{@{}c@{}} cutoff $k_{\max}$ =\{8, 12, 14, 15\}, \\ Layer 1 width = 64, \\ Layer 2 width = 128 \end{tabular} \\ \hline
\textbf{Batch size}                               & \multicolumn{2}{c|}{256}                                                                                                                                                                                       \\ \hline
\textbf{Learning rate}                            & \multicolumn{2}{c|}{\{1e-3, 1e-4\}}                                                                                                                            \\ \hline
\end{tabular}
    \caption{Hyperparameter space used in grid search of Fashion-MNIST experiment.}
    \label{tab:hpset-f}
\end{table}